\definecolor{red}{HTML}{E51400}  %red
\definecolor{blue}{HTML}{0050EF} %cobalt
\definecolor{green}{HTML}{008A00} %emerald
\definecolor{purple}{HTML}{AA00FF} %violet
\definecolor{dark-red}{rgb}{0.4, 0.15, 0.15}
\definecolor{dark-blue}{rgb}{0.15, 0.15, 0.4}
\definecolor{medium-red}{rgb}{0.5, 0, 0}
\definecolor{medium-blue}{rgb}{0, 0, 0.5}
\definecolor{light-red}{rgb}{0.7, 0, 0}
\definecolor{light-blue}{rgb}{0, 0, 0.7}
\newtheorem{theorem}{\bf Theorem}
\newtheorem{lemma}{\bf Lemma}
\newtheorem{condition}{Condition}
\newtheorem{proposition}{\bf Proposition}
\newtheorem{assumption}{Assumption}
\theoremstyle{definition}
\newtheorem{remark}{\bf Remark}
\definecolor{red}{HTML}{E51400} %red
\definecolor{blue}{HTML}{0050EF} %cobalt
\definecolor{green}{HTML}{008A00} %emerald
\definecolor{purple}{HTML}{AA00FF} %violet
\definecolor{orange}{HTML}{FF7F00}
\definecolor{gray}{HTML}{848482}
\definecolor{Gray}{gray}{0.85}
\definecolor{LightGray}{gray}{0.96}
\newcommand{\rev}[1]{{{#1}}}
\newcommand{\ccmab}{{C$^2$MAB}}
\DeclareMathOperator*{\argmin}{argmin}
\DeclareMathOperator*{\argmax}{argmax}
\newcommand{\norm}[1]{\left\lVert#1\right\rVert}
\newcommand{\cS}{\mathcal{S}}
\newcommand{\abs}[1]{\left| #1 \right|}
\newcommand{\R}{\mathbb{R}}
\newcommand{\E}{\mathbb{E}}
\newcommand{\Var}{{\rm Var}}
\newcommand{\I}{\mathbb{I}}
\newcommand{\bA}{\boldsymbol{A}}
\newcommand{\bB}{\boldsymbol{B}}
\newcommand{\bI}{\boldsymbol{I}}
\newcommand{\bL}{\boldsymbol{L}}
\newcommand{\bM}{\boldsymbol{M}}
\newcommand{\bH}{\boldsymbol{H}}
\newcommand{\bG}{\boldsymbol{G}}
\newcommand{\bg}{\boldsymbol{g}}
\newcommand{\bS}{\boldsymbol{S}}
\newcommand{\bV}{\boldsymbol{V}}
\newcommand{\bX}{\boldsymbol{X}}
\newcommand{\bx}{\boldsymbol{x}}
\newcommand{\bZ}{\boldsymbol{Z}}
\newcommand{\bmu}{\boldsymbol{\mu}}
\newcommand{\btheta}{\boldsymbol{\theta}}
\newcommand{\boldeta}{\boldsymbol{\eta}}
\newcommand{\boldzeta}{\boldsymbol{\zeta}}
\newcommand{\bphi}{\boldsymbol{\phi}}
\newcommand{\cA}{\mathcal{A}}
\newcommand{\cB}{\mathcal{B}}
\newcommand{\cC}{\mathcal{C}}
\newcommand{\cD}{\mathcal{D}}
\newcommand{\cE}{\mathcal{E}}
\newcommand{\cF}{\mathcal{F}}
\newcommand{\cH}{\mathcal{H}}
\newcommand{\cL}{\mathcal{L}}
\newcommand{\cM}{\mathcal{M}}
\newcommand{\cQ}{{\mathcal{Q}}}
\newcommand{\defeq}{\vcentcolon=}
\newcommand*{\rom}[1]{\expandafter\@slowromancap\romannumeral #1@}
\newcommand{\lbd}{d\log \left(\nicefrac{4(1+TK)}{\delta}\right)}
\newcommand{\lbdt}{d\log \left(\nicefrac{4(1+tK)}{\delta}\right)}
\newcommand{\lbdd}{d\log \left(\nicefrac{4(1+(T+1)K)}{\delta}\right)}
\newcommand{\lbddt}{d\log \left({4T(1+(T+1)K)}\right)}
\newcommand{\ts}[1]{}
\newcommand{\compilefullversion}{true}%SHOW full version
	\newcommand{\OnlyInFull}[1]{}
	\newcommand{\OnlyInShort}[1]{#1}
	\newcommand{\OnlyInFull}[1]{#1}%
	\newcommand{\OnlyInShort}[1]{}%
\newcommand{\compilehidecomments}{false}%HIDE comments
	\newcommand{\wei}[1]{}
	\newcommand{\xutong}[1]{}
	\newcommand{\jinhang}[1]{}
	\newcommand{\siwei}[1]{}
        \newcommand{\carlee}[1]{}
\newcommand{\wei}[1]{{\color{blue}{[Wei: #1]}}}
\newcommand{\xutong}[1]{{\color{green} [Xutong: #1]}}
\newcommand{\jinhang}[1]{{\color{orange} [\text{Jinhang:} #1]}}
\newcommand{\siwei}[1]{{\color{red} [\text{Siwei:} #1]}}
\newenvironment{talign*}
 {\csname align*\endcsname}
 {\endalign}
\begin{document}

%%
%% The "title" command has an optional parameter,
%% allowing the author to define a "short title" to be used in page headers.
% \title[Constant Communication Distributed Bandits Algorithms]{Constant Communication for Distributed (Private) Bandit Algorithms in Asynchronous Multi-Agent }
\title[Combinatorial Logistic Bandits]{Combinatorial Logistic Bandits}
%%
%% The "author" command and its associated commands are used to define
%% the authors and their affiliations.
%% Of note is the shared affiliation of the first two authors, and the
%% "authornote" and "authornotemark" commands
%% used to denote shared contribution to the research.
% \author{Anonymous Authors}

\author{Xutong Liu}
\affiliation{
\institution{The Chinese University of Hong Kong}
% \streetaddress{1 Th{\o}rv{\"a}ld Circle}
\city{Hong Kong}
\country{China}}
\email{liuxt@cse.cuhk.edu.hk}

\author{Xiangxiang Dai}
\affiliation{%
    \institution{The Chinese University of Hong Kong}
    \city{Hong Kong}
    \country{China}
}
\email{xxdai23@cse.cuhk.edu.hk}

\author{Xuchuang Wang}
\affiliation{%
  \institution{University of Massachusetts Amherst}
    % \streetaddress{30 Shuangqing Rd}
    \city{Amherst}
    \state{MA}
    \country{USA}}
    \email{xuchuangw@gmail.com}

\author{Mohammad Hajiesmaili}
\affiliation{%
    \institution{University of Massachusetts Amherst}
    % \streetaddress{30 Shuangqing Rd}
    \city{Amherst}
    \state{MA}
    \country{USA}}
     \email{hajiesmaili@cs.umass.edu}

\author{John C.S. Lui}
\affiliation{%
    \institution{The Chinese University of Hong Kong}
    % \streetaddress{8600 Datapoint Drive}
    \city{Hong Kong}
    % \state{Texas}
    \country{China}}
    \email{cslui@cse.cuhk.edu.hk}

% \author{John Smith}
% \affiliation{%
% \institution{The Th{\o}rv{\"a}ld Group}
% \streetaddress{1 Th{\o}rv{\"a}ld Circle}
% \city{Hekla}
% \country{Iceland}}
% \email{jsmith@affiliation.org}

% \author{Julius P. Kumquat}
% \affiliation{%
%     \institution{The Kumquat Consortium}
%     \city{New York}
%     \country{USA}}
% \email{jpkumquat@consortium.net}

%%
%% By default, the full list of authors will be used in the page
%% headers. Often, this list is too long, and will overlap
%% other information printed in the page headers. This command allows
%% the author to define a more concise list
%% of authors' names for this purpose.
\renewcommand{\shortauthors}{Xutong Liu et al.}

%%
%% The abstract is a short summary of the work to be presented in the
%% article.
\begin{abstract}
We introduce a novel framework called combinatorial logistic bandits (CLogB), where in each round, a subset of base arms (called the super arm) is selected, with the outcome of each base arm being binary and its expectation following a logistic parametric model. The feedback is governed by a general arm triggering process. Our study covers CLogB with reward functions satisfying two smoothness conditions, capturing application scenarios such as online content delivery, online learning to rank, and dynamic channel allocation.
We first propose a simple yet efficient algorithm, CLogUCB, utilizing a variance-agnostic exploration bonus. Under the 1-norm triggering probability modulated (TPM) smoothness condition, CLogUCB achieves a regret bound of $\tilde{O}(d\sqrt{\kappa KT})$, where $\tilde{O}$ ignores logarithmic factors, $d$ is the dimension of the feature vector, $\kappa$ represents the nonlinearity of the logistic model, and $K$ is the maximum number of base arms a super arm can trigger. This result improves on prior work by a factor of $\tilde{O}(\sqrt{\kappa})$.
We then enhance CLogUCB with a variance-adaptive version, VA-CLogUCB, which attains a regret bound of $\tilde{O}(d\sqrt{KT})$ under the same 1-norm TPM condition, improving another $\tilde{O}(\sqrt{\kappa})$ factor. VA-CLogUCB shows even greater promise under the stronger triggering probability and variance modulated (TPVM) condition, achieving a leading $\tilde{O}(d\sqrt{T})$ regret, thus removing the additional dependency on the action-size $K$.
Furthermore, we enhance the computational efficiency of VA-CLogUCB by eliminating the nonconvex optimization process when the context feature map is time-invariant while maintaining the tight $\tilde{O}(d\sqrt{T})$ regret. Finally, experiments on synthetic and real-world datasets demonstrate the superior performance of our algorithms compared to benchmark algorithms. The code is accessible at the following link: https://github.com/xiangxdai/Combinatorial-Logistic-Bandit.

\end{abstract}

%
% The code below is generated by the tool at http://dl.acm.org/ccs.cfm.
% Please copy and paste the code instead of the example below.
%
\begin{CCSXML}
<ccs2012>
   <concept>
       <concept_id>10003752.10010070.10010071.10010079</concept_id>
       <concept_desc>Theory of computation~Online learning theory</concept_desc>
       <concept_significance>500</concept_significance>
       </concept>
   <concept>
       <concept_id>10010147.10010178.10010199.10010201</concept_id>
       <concept_desc>Computing methodologies~Planning under uncertainty</concept_desc>
       <concept_significance>300</concept_significance>
       </concept>
   <concept>
       <concept_id>10010147.10010257.10010282.10010292</concept_id>
       <concept_desc>Computing methodologies~Learning from implicit feedback</concept_desc>
       <concept_significance>500</concept_significance>
       </concept>
   <concept>
       <concept_id>10003033.10003079.10011672</concept_id>
       <concept_desc>Networks~Network performance analysis</concept_desc>
       <concept_significance>500</concept_significance>
       </concept>
 </ccs2012>
\end{CCSXML}

\ccsdesc[500]{Theory of computation~Online learning theory}
\ccsdesc[300]{Computing methodologies~Planning under uncertainty}
\ccsdesc[500]{Computing methodologies~Learning from implicit feedback}
\ccsdesc[500]{Networks~Network performance analysis}

%
% Keywords. The author(s) should pick words that accurately describe
% the work being presented. Separate the keywords with commas.
\keywords{Multi-armed bandits, combinatorial multi-armed bandits, logistic bandits, nonlinear environments, variance-adaptive, regret}

\received{August 2024}
\received[revised]{September 2024}
\received[accepted]{October 2024}

%%
%% This command processes the author and affiliation and title
%% information and builds the first part of the formatted document.
\maketitle

\section{Introduction}\label{sec:intro}
The stochastic multi-armed bandit (MAB) problem~\cite{robbins1952some,auer2002finite} is a fundamental sequential decision-making problem that has been widely studied (cf.~\citet{slivkins2019introduction,lattimore2020bandit}).
As a noteworthy extension of MAB, the combinatorial multi-armed bandit (CMAB) problem has drawn considerable attention due to their rich applications in recommendation systems \cite{kveton2015cascading,li2016contextual,lattimore2018toprank,agrawal2019mnl}, social networks~\cite{vaswani2015influence,wen2017online,wang2017improving}, and cyber-physical systems \cite{gyorgy2007line,kveton2015combinatorial,li2019combinatorial,liu2023variance}.
In CMAB, the learning agent chooses a subset of base arms (often referred to as the super arm) to be pulled simultaneously in each round.
Once base arms are pulled, each pulled arm, following some unknown distribution, will return a random outcome that can be observed as feedback (typically known as semi-bandit feedback).
Then, the learner receives a reward, which can be a general function of the pulled arms' outcomes, with the summation as the most common function.
The agent's goal is to minimize the expected \textit{regret}, which quantifies the difference in expected cumulative rewards between always selecting the best action (i.e., the super arm with the highest expected reward) and following the agent's own policy. CMAB poses the challenge of balancing exploration and exploitation while dealing with the possible combinatorial explosion of the action space.

Motivated by large-scale applications with a huge number of base arms, recent advances in the combinatorial multi-armed bandit (CMAB) model have led to the development of contextual combinatorial bandits (\ccmab) ~\cite{qin2014contextual,li2016contextual,takemura2021near}. This model integrates contextual information and the well-established linear bandit parameterization into CMAB, allowing for scalability and providing regret bounds independent of the number of base arms $m$.
Building on this, \citet{liu2023contextual} introduced the framework of \ccmab~with probabilistically triggered arms (\ccmab-T), which extends the deterministic semi-bandit feedback model of \ccmab. This framework is designed for scenarios where selected arms may not always yield feedback, while unselected arms might provide feedback, stochastically based on the outcomes of other arms. By incorporating various smoothness conditions for the nonlinear reward function, \ccmab-T covers a broader range of applications, including contextual cascading bandits \cite{kveton2015cascading,li2016contextual,vial2022minimax,liu2022batch} and contextual influence maximization~\cite{lei2015online,vaswani2015influence,vaswani2017diffusion,wen2017online}.

Despite the scalability and generality achieved by the above \ccmab~frameworks, they rely on a linear parametric model, which typically assumes that arm outcomes are continuous (e.g., sales revenue, network delay) and that the mean value is linear in relation to its corresponding feature vector. This assumption prevents this framework from being applied to many real-world scenarios where arm outcomes are \textit{binary}, such as ad clicks \cite{zoghi2017online}, user purchases \cite{trovo2015multi}, video streaming success or failure \cite{liu2023variance}, or any other binary success/failure outcomes.
Moreover, in complex and nonlinear application scenarios, the relationship between the mean value (e.g., purchase probability) and the feature vector (e.g., product rating, price, quality) is not necessarily linear. For instance, the probability of a user purchase may increase dramatically as the rating rises from 3 to 4 but shows little change from 1 to 2 or 4 to 5.
To address these limitations, we pose the following question:

\textit{Can we design a contextual CMAB framework with a nonlinear parametric model that is suitable for large-scale and \textbf{nonlinear} environments with \textbf{binary} outcomes?}

\subsection{Our Contributions}

To answer the above question, we propose a new combinatorial logistic bandit (CLogB) framework, where each base arm outcome is modeled using a logistic parametric model. This model, also utilized in logistic bandits (LogB) for single-arm selection scenarios \cite{faury2020improved,abeille2021instance,faury2022jointly}, provides a rigorous theoretical framework to analyze the nonlinearity of parametric bandits and has been empirically demonstrated to outperform linear bandits in cases involving binary outcomes \cite{li2012unbiased}.
In the CLogB framework, the contextual information at each round \( t \in [T] \) is captured through a time-varying feature map \( \phi_t \). The mean outcome of each arm \( i \in [m] \) is determined by a sigmoid function applied to the inner product of the feature vector \( \bphi_t(i) \in \R^d \) and an unknown vector \( \btheta^* \in \R^d \) (with \( d \ll m \) to manage large-scale applications). At the super arm level, the framework inherits the arm triggering process and focuses on nonlinear reward functions that satisfy various smoothness conditions. This approach can accommodate a diverse range of application scenarios, including online content delivery \cite{liu2023variance}, online learning to rank \cite{li2016contextual}, dynamic channel allocation \cite{gai2010learning}, and online packet routing \cite{kveton2015cascading}.
With this formulation, CLogB combines the nonlinear parameterization capabilities of LogB with the scalability, diverse reward functions, and general feedback models of C$^2$MAB-T. The detailed comparison with the related works is postponed to \cref{apdx_sec:related_work}. 

\begin{table*}[t]
	\caption{Summary of the main results for CLogB and the additional results for CLogB with time-invariant feature maps (CLogB-TI). }	\label{tab:clogb_res}	
	\centering
	\resizebox{1.\columnwidth}{!}{
	\centering
	\begin{threeparttable}
	\begin{tabular}{|cccccc|}
 \hline
 % \multicolumn{5}{}{}
 % & \textbf{Summary }\\
 % \hline\hline
		\textbf{CLogB}&\textbf{Algorithm}&\textbf{Condition}& \textbf{Coefficient} & \textbf{Regret Bound} & \textbf{Per-round Cost}\\
  \hline
    & CLUB-cascade$^{*}$   \cite{li2018online}  & - & -   & $\tilde{O}\left( { d \kappa\sqrt{KT}} \right)$ & $\tilde{O}\left(dK^2T^2 + T_{\text{nc}}+T_{\alpha}\right)^{**}$ \\
        %   \rowcolor{Gray}
        % & C$^2$-UCB-T  (\Cref{alg: C2UCB}) & 1-norm TPM& $B_1$  & $O(B_1d\sqrt{T}\cdot K^{1/2}\cdot\log (KT))$\\ 
                  \rowcolor{Gray}
       \textbf{(Main Result \ref{thm:var_ag_thm})} & CLogUCB  (\Cref{alg:CLogUCB}) & 1-norm TPM& $B_1$  & $\tilde{O}\left(B_1 d\sqrt{\kappa KT}\right)$ & $\tilde{O}\left(dK^2T^2 +T_{\alpha} \right)$\\
                         \rowcolor{Gray}
       \textbf{(Main Result \ref{thm:var_ad_thm1})} & VA-CLogUCB  (\Cref{alg:VA_CLogB}) & 1-norm TPM& $B_1$  & $\tilde{O}\left(B_1 d\sqrt{KT}+B_1\kappa d^2\right)$ & $\tilde{O}\left(dK^2T^2 +T_{\text{nc}} +T_{\alpha}\right)$\\
%                      \rowcolor{Gray}
% \textbf{(Main Result \ref{thm:var_ad_thm2})}& VACLogUCB-T (\Cref{alg:VA_CLogB})  & VM & $B_v$$^{\dagger}$& $O(B_v d\sqrt{T} \cdot \log (KT)/\sqrt{p_{\min}})$\\
 \rowcolor{Gray}
	\textbf{(Main Result \ref{thm:var_ad_thm2})}& 	VA-CLogUCB 
 (\Cref{alg:VA_CLogB})  & TPVM  & $B_v$ $^\dagger$, $\lambda\ge1$$^{\ddagger}$ & $\tilde{O}\left(B_v d\sqrt{T} +B_1\kappa d^2\right)$ & $O\left(dK^2T^2 +T_{\text{nc}} +T_{\alpha}\right)$\\
          % \rowcolor{Gray}
	% & 	VAC$^2$-UCB-K 
 % (Appendix XXX)  & TPVM, known variance bound   & $B_v$, $\lambda\ge1$ & $O(B_v d\sqrt{T} \cdot \log T)$\\
    \hline
  %   \hline
  %   \textbf{CMAB-T}&\textbf{Algorithm}&\textbf{Condition}& \textbf{Coefficient} & \textbf{Regret Bound}\\
  % \hline
		% &BCUCB-T \cite{liu2022batch} & TPVM& $B_v$, $\lambda\ge 1$  & $O(B_v \sqrt{m(\log K)T}\cdot\log T)$\\
  % \rowcolor{Gray}
  % % &BCUCB-T (Our new results)  & TPVM& $B_v$, $\lambda\ge 1$   & $O(B_v \sqrt{mT}\cdot(\log K )^{1/2}\cdot(\log T)^{1/2}\cdot p_{\max}^{(\lambda-1)/2})$$^{\ddagger}$ \\
  %  \textbf{(Additional Result 1)} &BCUCB-T (Our new analysis)  & TPVM& $B_v$, $\lambda\ge 1$   & $O(B_v \sqrt{m(\log K )T}\cdot(\log T)^{1/2})^{**}$ \\
		% \hline
  \hline
    \textbf{CLogB-TI} &\textbf{Algorithm}&\textbf{Condition}& \textbf{Coefficient} & \textbf{Regret Bound}\\
    \hline
  %  &	C$^2$UCB~\cite{qin2014contextual}&  2-norm & $B_2$$^{\S}$  & $O(B_2d\sqrt{T}\log T)$ \\
  % &	C$^2$UCB~\cite{takemura2021near}&  1-norm & $B_1$  & $O(B_1d\sqrt{KT}\log T)$ \\
       \rowcolor{Gray}
   \textbf{(Additional Result 1)}&	EVA-CLogUCB~(\Cref{alg:EVA_CLogB})& 1-norm TPM & $B_1$  & $\tilde{O}\left(B_1 d\sqrt{KT}+B_1\kappa K d^2\right)$ & $O\left(dK^2T^2 +T_{\alpha}\right)$ \\
     \rowcolor{Gray}
   \textbf{(Additional Result 2)}&	EVA-CLogUCB~(\Cref{alg:EVA_CLogB})& TPVM & $B_v, \lambda\ge 1$  & $\tilde{O}\left(B_vd\sqrt{T}+B_1\kappa Kd^2\right)$ &$\tilde{O}\left(dK^2T^2 +T_{\alpha}\right)$ \\
	\hline
	\end{tabular}
	  \begin{tablenotes}[para, online,flushleft]
	\footnotesize%\smallskip
 This table assumes $T\gg m \ge K \gg d$.
 \item[]\hspace*{-\fontdimen2\font}$^{*}$ This work is specified for cascading bandits without considering the general nonlinear reward functions that satisfy smoothness conditions.
  \item[]\hspace*{-\fontdimen2\font}$^{**}$ $T_{\text{nc}}$ and $T_{\alpha}$ are the time to solve a nonconvex projection problem and an $\alpha$-approximation for the combinatorial optimization problem, respectively.
 \item[]\hspace*{-\fontdimen2\font}$^\dagger$ Generally,  
 coefficient $B_v=O(B_1\sqrt{K})$ and the existing regret bound is improved when $B_v=o(B_1\sqrt{K})$
 \item[]\hspace*{-\fontdimen2\font}$^{\ddagger}$ $\lambda$ is a coefficient in TPVM condition: when $\lambda$ is larger, the condition is stronger with smaller regret but can include fewer applications.
 % \item[]\hspace*{-\fontdimen2\font}$^{\S}$ Almost all applications satisfy $B_2=\Theta(B_1\sqrt{K})$.
 
	\end{tablenotes}
			\end{threeparttable}
	}
 \vspace{-0.1in}
\end{table*}

% To answer the above question, we propose a new combinatorial logistic bandit (CLogB) framework where the base arm outcome adopts the logistic parametric model. This parametric model has also been used for logistic bandits (LogB) in the single-arm selection setting, which in theory offers a rigorous framework to study the nonlinearity of the parametric bandits, and in practice is shown to outperform linear bandits for binary outcomes. 
% At the base arm level, CLogB uses a time-varying feature map $\bphi_t$ to model the contextual information at each round $t \in [T]$, and the mean outcome of each arm $i \in [m]$ is the output of a sigmoid function applying to the linear product of the feature vector $\bphi_t(i) \in \R^d$ and an unknown vector $\theta^* \in \R^d$ (where $d\ll m$ to handle large-scale applications). At the super arm level, we inherit the arm triggering process and focus on non-linear reward function satisfying different smoothness conditions that can cover a variety of application scenarios, including dynamic channel allocation, online learning to rank, online packet routing, and network content delivery. With this formulation, CLogB retains LogB's nonlinear parameterization while also enjoying C$^2$MAB-T's  scalability, rich reward functions, and general feedback models.

Our main technical results are summarized in \cref{tab:clogb_res}. First, we study CLogB, whose reward function satisfies the 1-norm triggering probability modulated (TPM) smoothness condition with coefficient $B_1$, a general condition originally proposed in the foundational CMAB framework~\cite{wang2017improving}. In this context,
we develop the variance-agnostic CLogUCB algorithm, which achieves a regret upper bound of \(\tilde{O}(B_1d\sqrt{\kappa KT})\) with per-round time complexity \(O(dK^2T^2 + T_{\alpha})\), where \(T_{\alpha}\) represents the time required to solve an \(\alpha\)-approximate solution to the underlying combinatorial optimization problem (Main Result 1 in \cref{tab:clogb_res}). 
Unlike the linear parametric C$^2$MAB framework, which can easily construct an exploration bonus using a closed-form solution from linear regression for each base arm, CLogB faces the challenge of efficiently creating an estimator without closed-form solutions and taking into account the nonlinearity to construct the exploration bonus.
To address this, we employ the maximum likelihood estimation (MLE) for the unknown parameter $\btheta^*$ and establish a novel concentration inequality with an enlarged confidence radius tailored for the CLogB setting. 
Our approach diverges from previous works \cite{li2018online, faury2020improved} by constructing the exploration bonus directly around the MLE estimator, thus avoiding nonconvex projections.
With these techniques, our result improves the regret bound over previous work \cite{li2018online} by a factor of $O(\sqrt{\kappa})$ and eliminates the additional computation cost $T_{\text{nc}}$ associated with the nonconvex projection.
Despite the above improvement, CLogUCB still suffers from a nonlinearity term $\tilde{O}(\sqrt{\kappa})$ and a action-size term $\tilde{O}(\sqrt{K})$ due to its conservative variance-agnostic exploration. 

Second, to further improve the results obtained by CLogUCB, we propose a new variance-adaptive VA-CLogUCB algorithm, in light of the recent variance-adaptive principle that helps to reduce either the nonlinearity dependency $\kappa$ for LogB \cite{abeille2021instance,faury2022jointly} or the action-size dependency $K$ for C$^2$MAB \cite{merlis2019batch,liu2023contextual}.
Under the same 1-norm TPM smoothness condition, VA-CLogUCB achieves a regret of $\tilde{O}(B_1 d\sqrt{KT}+B_1\kappa d^2)$, which eliminates the $O(\sqrt{\kappa})$ dependency from the leading $O(\sqrt{T})$ term compared with CLogUCB (Main Result 2 in \cref{tab:clogb_res}).
Furthermore, under the variance-aware TPVM smoothness condition with coefficients $(B_v,B_1,\lambda)$ \cite{merlis2019batch,liu2022batch}, VA-CLogUCB achieves the "best-of-both-worlds" result, which achieves a regret of $\tilde{O}(B_v d\sqrt{T}+B_1 \kappa d^2)$, removing both $O(\sqrt{\kappa})$ and $O(\sqrt{K})$ dependencies in the leading term compared to CLogUCB (Main Result 3 in \cref{tab:clogb_res}). 
The main challenge is that the variance-adaptive exploration bonus, associated with the MLE estimator \(\hat{\btheta}_t\), may not decrease steadily as more feedback is collected. Inspired by~\cite{faury2020improved}, we project MLE $\hat{\btheta}_t$ to a bonus-vanishing region to control the growth of the exploration bonus, which secures the desired regret guarantee at the cost of an additional $T_{\text{nc}}$ time complexity. Although this projection only occurs when the MLE does not lie within the bonus-vanishing region, it is nonconvex and could be NP-hard to solve \cite{faury2020improved}.

Third, to improve computational efficiency, we propose the EVA-CLogUCB algorithm under the mild assumption that the feature map is time-invariant. EVA-CLogUCB achieves a regret of $\tilde{O}(B_1 d\sqrt{KT}+B_1 \kappa Kd^2)$ under the 1-norm TPM condition and $\tilde{O}(B_v d\sqrt{T}+B_1 \kappa Kd^2)$  under the TPVM condition, with only $\tilde{O}(dK^2T^2 + T_{\alpha})$ computation cost (Additional Results 1 \& 2 in \cref{tab:clogb_res}). These results eliminate the \(T_{\text{nc}}\) computation cost while maintaining the same leading regret as VA-CLogUCB. 
The key challenge is to control the growth of the bonus without projecting the MLE estimator onto the nonconvex bonus-vanishing region. 
Our solution involves a burn-in stage with \(O(\log T)\) rounds to construct a \textit{convex} nonlinearity-restricted region. After this stage, we directly optimize the MLE within this region, improving the computational efficiency.
% \mh{state explicitly you achieve the same regret with more efficient algorithm. Also, maybe we can slightly rearrange the statements in three paragraph, each including the results and their significance (already there) AND their weakness (which is now the next paragraph). Then start the next paragraph saying that to improve upon the XYZ issue of the previous result, we do xyz.}

Finally, we validate our theoretical results through experiments on both synthetic and real-world datasets, demonstrating at least 53\% regret improvement compared to baseline algorithms.

\textbf{Notations.}
We use $[n]$ to represent set $\{1,...,n\}$. We use boldface lowercase letters and boldface CAPITALIZED letters for column vectors and matrices, respectively. $\norm{\bx}_p$ denotes the $\ell_p$ norm of vector $\bx$. For any symmetric positive semi-definite (PSD) matrix $\bM$ (i.e., $\bx^{\top} \bM \bx \ge 0, \forall \bx$), $\norm{\bx}_{\bM}=\sqrt{\bx^{\top} \bM \bx}$ denotes the matrix norm of $\bx$ regarding matrix $\bM$. The notation $\dot{f}$ (resp. $\ddot{f}$) denotes the first (resp. second) derivative of $f$.

\section{Problem Setting}\label{sec: problem setting}
% \begin{figure}[t]
%     \centering
%     \begin{subfigure}[t]{0.6\linewidth}
%     \includegraphics[width=\linewidth]{}
%     \caption{Logistic and linear functions}
%     \label{fig:logistic}
%     \end{subfigure}
%     \begin{subfigure}[t]{0.3\linewidth}
%     \includegraphics[width=\linewidth]{}
%     \caption{Online Content Delivery}
%     \label{fig:cdn}
%     \end{subfigure}
%     \caption{Cumulative regrets in different applications. We show the average regrets with standard deviations over 5 experiments.}
%     \label{fig:regret}
% \end{figure}

\subsection{Combinatorial Logistic Bandit Model}\label{sec:CLogB_model}
In this section, we introduce our model for the combinatorial logistic bandit (CLogB) problem.  
CLogB is a learning game between a player (the learner) and the environment in $T$ rounds, with the following components introduced below. 
% \rev{Our model is mainly based on \citet{wang2017improving,liu2023contextual}, which handles the nonlinear reward function, efficient offline approximation, and probabilistic arm triggering feedback for combinatorial decision-making. 
% But different from \citet{wang2017improving,liu2023contextual}, at the base arm level, we introduce a nonlinear parameterization for the binary outcome of each base arm based on the logistic bandit model~\cite{filippi2010parametric,faury2020improved}, generalizing the non-parametric CMAB \cite{wang2017improving} and the linear parametric CMAB~\cite{liu2023contextual}, respectively.} \mh{What about directly introducing the model first and then adding a subsection/paragraph to explain the connections with prior models? Up to this point, the reader may not be familiar with some technical terms to understand the differences.}
% CLogB is a learning game between a player (the learner) and the environment in $T$ rounds, with the following components introduced below. 

\begin{figure}[t]
    \centering
    % \begin{subfigure}[t]{0.28\linewidth}
    % \includegraphics[width=\linewidth]{}
    % \caption{Linear function}
    % \label{fig:logistic}
    % \end{subfigure}
    \begin{subfigure}[t]{0.355\linewidth}
    \includegraphics[width=\linewidth]{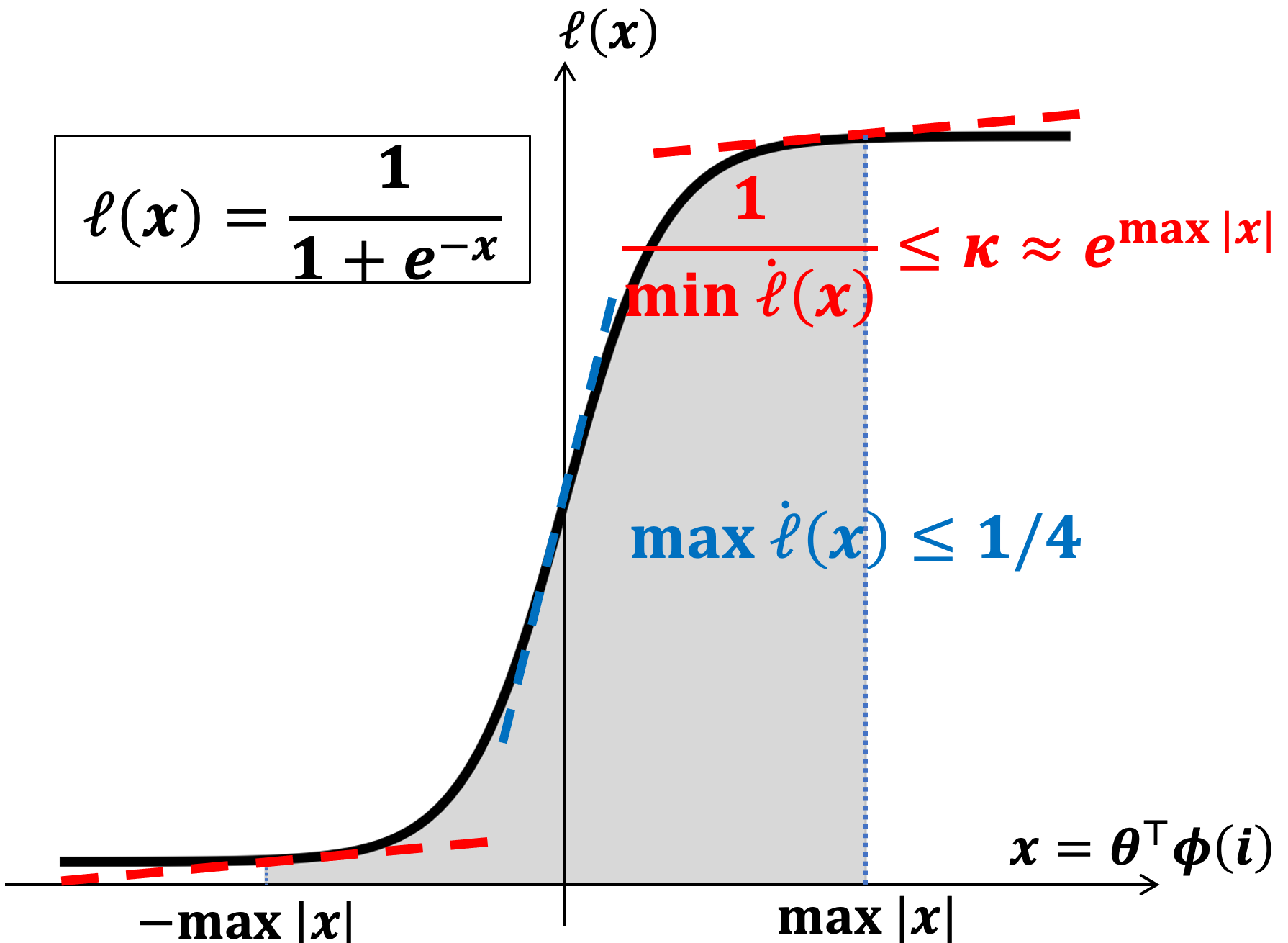}
    \caption{Sigmoid function}
    \label{fig:logistic}
    \end{subfigure}
    \begin{subfigure}[t]{0.453\linewidth}
    \includegraphics[width=\linewidth]{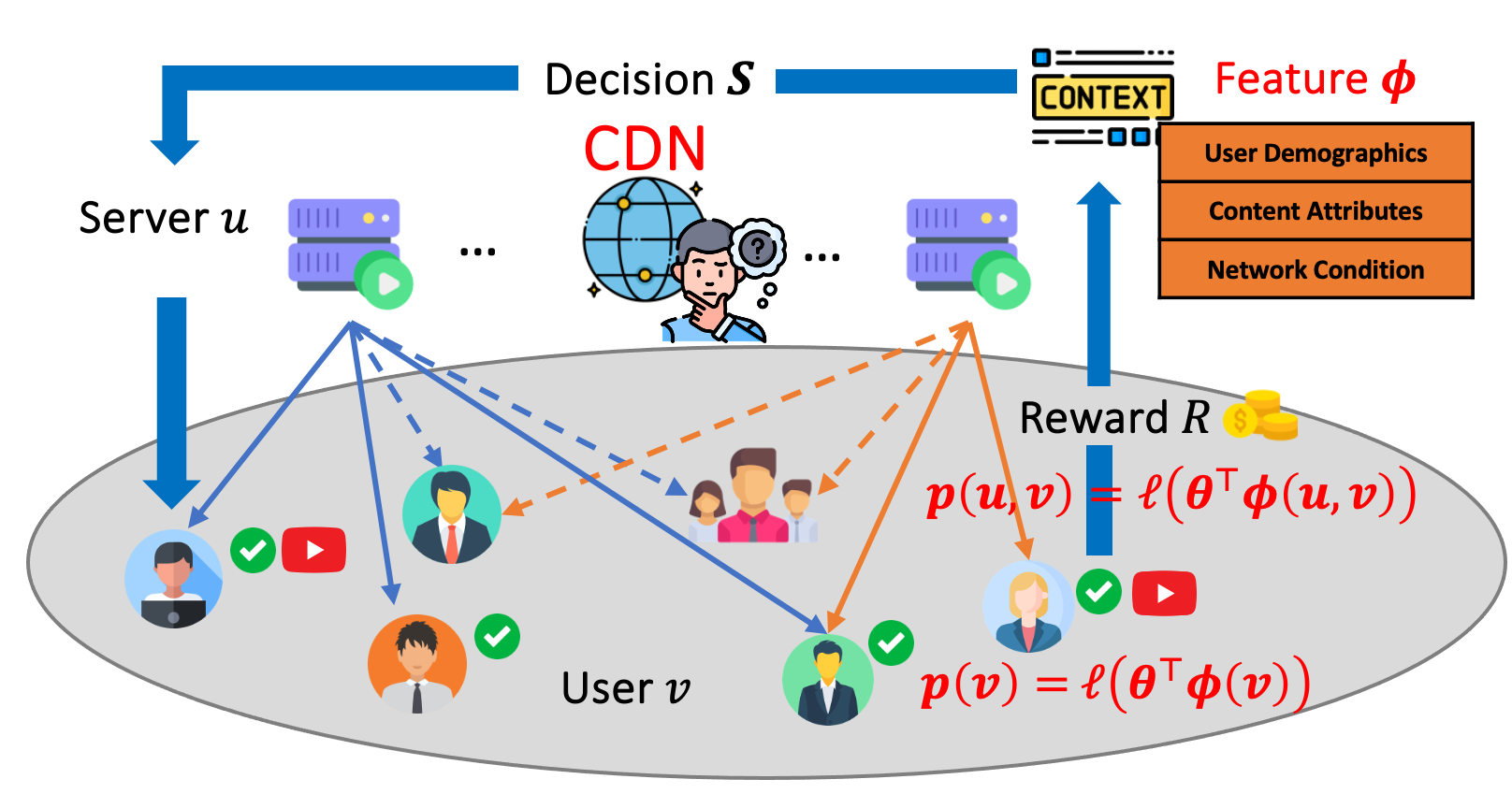}
    \caption{Content delivery network}
    \label{fig:cdn}
    \end{subfigure}
    \caption{\textbf{Left:} illustration of a sigmoid function with linear predictor $x=\btheta^{\top} \bphi(i)$ as input. The larger the $|x|$, the flatter the curve is, and the higher the nonlinearity level $\kappa$, where $\kappa$ grows exponentially fast w.r.t $|x|$. \textbf{Right:} CLogB for content delivery networks, the decision maker chooses servers based on contextual features, successfully covers users (green check marks) via edges (solid lines) with probability $p(u,v)$, and gains rewards if the user consumes the content (red play buttons) with probability $p(v)$.  }
    \label{fig:regret}
    \vspace{-0.1in}
\end{figure}

\textbf{Base arms.}
The environment has a set of $[m]=\{1,2,...,m\}$ base arms. 
Before the game starts, the environment chooses an unknown parameter $\btheta^* \in \Theta$, where $\Theta$ is the set of feasible parameters.
At each round $t \in [T]$, the environment first reveals a feature map $\phi_t \in \Phi$ to the learner, where $\phi_t$ is a function $[m]\rightarrow \R^d$ that maps an arm $i$ to a $d$-dimensional feature vector and $\Phi$ is the set of feasible feature maps.
Therefore, the leaner can observe feature vectors $\bphi_t(1), ..., \bphi_t(m) \in \R^d$ for all arms.  
Given $\phi_t$, the environment then draws Bernoulli outcomes $\bX_t=(X_{t,1},...X_{t,m})\in \{0,1\}^m$ with mean $\mu_{t,i}\defeq\E[X_{t,i} | \cH_t]=\ell(\btheta^{*\top}\bphi_{t}(i))$ for each base arm $i$. 
Here, $\btheta^{*\top}\bphi_{t}(i)$ is the linear predictor, $\ell: \R\rightarrow \R_{+}$ is the \textit{sigmoid} function $\ell(x)\defeq (1+e^{-x})^{-1}$  as shown in \cref{fig:logistic} that links the linear predictor and the mean $\mu_{t,i}$ in a nonlinear manner, and $\cH_t$ denotes the history information before the learner chooses the next round's action, which will be specified shortly after. 
For notational convenience, we use $\bmu_t\defeq(\mu_{t,1},...,\mu_{t,m})$ to denote the mean vector at round $t$ and ${\cM\defeq\{\ell\left(\btheta^{\top} \bphi(i)\right)_{i \in [m]}: \phi \in \Phi, \btheta \in \Theta\}}$ to denote all possible mean vectors generated by $\Phi$ and $\Theta$. 

\textbf{Combinatorial actions.}
At each round $t$, after observing the feature vectors $\bphi_t(1), ..., \bphi_t(m)$, the learner selects a combinatorial action $S_t\in \cS$, where $\cS$ is the set of feasible actions.
Typically, $S_t$ is composed of a set of individual base arms $S\subseteq [m]$, which we refer to as a super arm. 
However, $S_t$ can be more general than the super arm, possibly continuous, such as resource allocations \cite{zuo2021combinatorial}, which we will emphasize if needed.

\textbf{Probablistically triggering arm feedback.}
Motivated by the properties of real-world applications that will be introduced in detail in~\cref{sec:repr_app}, we consider a feedback process that involves scenarios where each base arm in a super arm $S_t$ does not always reveal its outcome, even probabilistically. For example, a user might leave the system randomly at some point before examining the entire recommended list $S_t$,  resulting in unobserved feedback for the unexamined items.
% can be probabilistic in nature, where the learner may not be able to observe all arms in a super arm $S_t$ (e.g., the user leaves the system before examining the entire recommended list $S_t$) or the learner can observe additional arms beyond $S_t$ (e.g., the influence of seed users $S_t$ spread over the social network).
To handle such probabilistic feedback, we assume that after the action $S_t$ is selected, the base arms in a random set $\tau_t \sim D_{\text{trig}}(S_t, \bX_t)$ are triggered depending on the outcome $\bX_t$, where $D_{\text{trig}}(S, \bX)$ is the probabilistic triggering function on the subsets $2^{[m]}$ given $S$ and $\bX$.
This means that the outcomes of the arms in $\tau_t$, i.e., $(X_{t,i})_{i\in \tau_t}$ are revealed as feedback to the learner, which could also be involved in determining the reward of action $S_t$ as we introduce later. 
We let $p_i^{\bmu, S}$ denote the probability that base arm $i$ is triggered when the action is $S$, the mean vector is $\bmu$.
To allow the algorithm to estimate the underlying parameter $\btheta^*$ directly from samples, we assume the outcome does not depend on whether the arm $i$ is triggered, i.e., $\E_{\bX \sim D, \tau \sim D_{\text{trig}}(S,\bX)}[X_i | i\in \tau]=\E_{\bX\sim D}[X_i]$.

Note that triggering probabilities $p_i^{\bmu,S}$ are crucial for the triggering probability modulated bounded smoothness conditions to be defined in \cref{sec:cond}.
We denote \textit{{action size}} $K$ as the maximum number of arms that can be triggered, i.e., $K=\max_{S \in \cS,\bmu \in \cM}\sum_{i\in [m]}\I\{p_i^{\bmu, S}>0\}$, which will appear as important coefficients in our regret bounds.
For online learning to rank applications in \cref{sec:app_ranking}, for example, $p_i^{\bmu, S}$ corresponds to the probability that the user will examine item $i$ in a ranked list $S$ and action-size $K$ corresponds to the length of the ranked list. To this end, we can give the formal definition of history $\cH_t=(\phi_s, S_s, \tau_s, (X_{s,i})_{i\in \tau_s})_{s<t} \bigcup \phi_t$ , which contains all information before round $t$, as well as the feature map $\phi_t$ at round $t$.

% We denote $p_i^{\bmu, D_{\text{trig}}, S}$ as the probability that base arm $i$ is triggered when the action is $S$, the mean vector is $\bmu$, and the probabilistic triggering function is $D_{\text{trig}}$.
% For online learning to rank applications in \cref{sec:app_ranking}, for example, $p_i^{\bmu, D_{\text{trig}}, S}$ corresponds to the probability that the user will examine item $i$ in a list $S$.
% Since $D_{\text{trig}}$ is always fixed in a given application context, we drop it in the notation for simplicity and use $p_{i}^{\bmu,S}$ henceforth. 
% Note that triggering probabilities $p_i^{\bmu,S}$ are crucial for the triggering probability modulated bounded smoothness conditions to be defined in \cref{sec:cond}.

% For any action $S$, we denote $\tilde{S}$ as the set of base arms that can be triggered by $S$,  i.e., $\tilde{S} = \{i \in [m]: p_i^{\bmu,S} > 0, \text{ for any }\bmu \in \cM\}$.
% We denote \textit{{action size}} $K$ as the maximum expected number of arms that can be triggered, i.e., $K=\max_{S \in \cS,\bmu \in \cM}\sum_{i\in [m]}p_i^{\bmu, S}$, and $p_{\min}$ as the minimum positive triggering probability, i.e., $p_{\min}=\min_{i \in [m], \bmu \in \cM, S \in \cS, p_i^{\bmu,S}>0}{p_i^{\bmu,S}}$, repectively. 
% Quantities $K,p_{\min}$ will appear as important coefficients in our regret bounds.

\textbf{Reward function.}
At the end of round $t$, the learner receives a nonnegative reward $R(S_t, \bX_t, \tau_t)$, determined by action $S_t$, outcome
$\bX_t$, and triggered arm set $\tau_t$. 
Similarly to~\cite{wang2017improving}, we assume the expected reward to be $r(S_t;\bmu_t)\defeq \E[R(S_t,\bX_t,\tau_t)]$, a function of
the unknown mean vector $\bmu_t$, where the expectation is taken over the randomness of $\bX_t$ and $\tau_t \sim D_{\text{trig}}(S_t,\bX_t)$. \rev{In this paper, we assume that the \textit{expected} reward function, $r: \cS \times [0,1]^m \rightarrow \R_{\ge 0}$, is known. That is, given any action $S$ and any base arm mean vector $\bmu$, the learner can compute the expected reward $r(S; \bmu)$. However, the random reward function $R:\cS \times [0,1]^m \times [m]\rightarrow \R_{\ge 0}$ remains unknown to the learner.}

\textbf{Offline approximation oracle and approximate regret.}
The goal of CLogB is to accumulate as much reward as possible over $T$ rounds by learning the underlying parameter $\btheta^*$.
The performance of an online learning algorithm $A$ is measured by its \textit{regret}, defined as the difference in the expected cumulative reward between always playing the best action $S^*_t \defeq \argmax_{S \in \cS}r(S;\bmu_t)$ in each round $t$ and playing the actions chosen by algorithm $A$.
%Let $S^*=\argmax_{S \in \cS}r(S;\bmu)$ be the optimal action. 
For many reward functions, it is NP hard to compute the exact $S^*_t$ even when $\bmu_t$ is known, so similar to~\cite{chen2013combinatorial,wang2017improving,liu2022batch,liu2023contextual}, we assume that algorithm $A$ has access to an offline $\alpha$-approximation oracle ORACLE, which takes any mean vector $\bmu \in [0,1]^m$ as input, and outputs an $\alpha$-approximate solution $S\in \cS$, i.e., 
\begin{align}\label{eq:def_oracle}
    S=\text{ORACLE}(\bmu) \text{ s.t. } r(S;\bmu)\ge \alpha \cdot \max_{S'\in \cS}r(S';\bmu)
\end{align}
% $S$ such that $r(S;\bmu)\ge \alpha \cdot r(S^*;\bmu)$. 
% \jinhang{$\beta$}
The $T$-round $\alpha$-approximate regret is then defined as
% \resizebox{1.0\columnwidth}{!}{
% \begin{minipage}{\columnwidth}
\begin{equation}\label{eq:def_reg}\textstyle
    \text{Reg}(T)=\E\left[\sum_{t=1}^T \left(\alpha  \cdot r(S^*_t;\bmu_t)-r(S_t;\bmu_t)\right)\right],
\end{equation}
% \end{minipage}}
where the expectation is taken over the randomness of outcomes $\bX_1, ..., \bX_T$, the triggered sets $\tau_1, ..., \tau_T$, as well as the randomness of the algorithm itself.

To summarize, we can specify a CLogB instance by a tuple $([m], \Phi, \Theta, \cS, D_{\text{trig}}, R)$, where $[m]$ are base arms, $\cS$ are combinatorial actions,
$\Phi$ are feature maps, $\Theta$ is the parameter space,  
$D_{\text{trig}}$ is the probabilistic triggering function, and $R$ is the reward function.

\subsection{Connections to Existing Models}\label{sec:connect}
Our model is mainly based on \citet{wang2017improving,liu2023contextual}, which handles the nonlinear reward function, efficient offline approximation, and probabilistic arm triggering feedback for combinatorial decision-making. 
But different from \citet{wang2017improving,liu2023contextual}, at the base arm level, we introduce a nonlinear parameterization for the binary outcome of each base arm based on the logistic bandit model~\cite{filippi2010parametric,faury2020improved}, generalizing the non-parametric CMAB \cite{wang2017improving} and the linear parametric C$^2$MAB~\cite{liu2023contextual}, respectively.

\noindent\textbf{Comparison between CLogB and C$^2$MAB \cite{liu2023contextual}}.
The main difference between the current work and C$^2$MAB \cite{liu2023contextual} is the parameterization of the base arm. 
If we set the sigmoid link function $\ell(x)=(1+e^{-x})^{-1}$ to be the linear function $\ell(x)=x$, then CLogB reproduces the setting of \citet{liu2023contextual}. 
While linear parameterization is good at handling continuous reward outcomes, it falls short in handling binary outcomes (e.g., click or not, network function works or not, and generally any success/failure outcomes) whose mean reward (i.e., success probability) lies within $[0,1]$. 
Moreover, C$^2$MAB simply assumes the linear relationship between the mean and the feature vectors, which is less meaningful when dealing with probabilities (e.g., the success probability grows from 50\% to 100\% when the value of the feature vector doubles). 
In contrast, the sigmoid link function naturally has a range of $[0,1]$ and captures the nonlinear relationships between the feature vectors and the success probability, thus resulting in significantly improved performance in real-world applications that are complex and nonlinear~\cite{li2012unbiased}.

\noindent\textbf{Extension to generalized linear model}.
% \label{rmk:ext2glb}
In this work, we consider the logistic model in which the outcome of the base arm $X_{t,i}$ follows the Bernoulli distribution with the sigmoid link function $\ell(x)=(1+e^{-x})^{-1}$. Essentially, we utilize two important properties of this logistic model for efficient learning and decision-making: (1) the \textit{self-concordant} property of the sigmoid link function \cite{bach2010self}, i.e., a smoothness property where the second-order derivative is bounded by its first-order derivative $|\ddot{\ell}(x)|\le \dot{\ell}(x)$ for any $x\in \R$, which enables a tight confidence region that uses local nonlinearity information instead of the loose global nonlinearity coefficient; and (2) the variance property of the distribution, i.e. $\Var[X_{t,i}|\cH_t]=\dot{\ell}({\btheta^*}^{\top} \bphi_t(i))$, which links the local nonlinearity information with the variance so that enables variance-adaptive algorithm design. Our result carries over to any model that satisfies the above properties. In fact, those two properties are very mild and hold for many instances of the generalized linear model whose outcome distribution belongs to canonical exponential families, such as the Poisson distribution with $\ell(x)=e^{x}$.
% , binomial distribution with $\ell(x)=(1+e^{-x})^{-1}$.

\noindent\rev{\textbf{Overview of the technical challenges.} There are several unique challenges in combining two prominent research directions: the logistic bandit (LogB) and combinatorial bandits (CMAB). 
First, unlike previous contextual CMAB works~\cite{qin2014contextual,takemura2021near,liu2023contextual} using the linear regression that has a straightforward closed-form solution, we utilize the maximum likelihood estimation (MLE) to estimate the underlying parameter $\btheta^*$ and establish a new concentration inequality for MLE. While similar MLE-based techniques are used in the LogB literature, their concentration inequalities are not applicable when multiple arms are pulled simultaneously and the feedback follows a probabilistic triggering process. This necessitated our new concentration inequality with non-trivial analysis, as detailed in \cref{apdx_sec:proof_concen_ineq}.
Second, the combinatorial action space introduces significant computational challenges and regret analysis challenges to standard LogB algorithms. Most LogB algorithms employ the optimism-in-the-face-of-uncertainty (OFU) approach, which jointly searches the action and the parameter space within the confidence region and leads to exponentially high time complexity. The UCB-based method, though less explored in the LogB literature, is central to our approach. We introduce a variance-adaptive UCB-based algorithm, which provides a "best-of-both-worlds" result, eliminating both the $O(\sqrt{\kappa})$ and $O(\sqrt{K})$ compared to directly using the existing techniques from LogB.
Finally, we further enhance computational efficiency by designing a novel burn-in stage to remove the NP-hard projection step. This component is also novel for UCB-based LogB methods and significantly improves their efficiency. The key challenge we faced was determining how to design this burn-in stage to balance the trade-off between regret minimization and computational efficiency.
}

\subsection{Key Assumptions and Conditions}\label{sec:cond}

Due to the nonlinear parameterization of base arms and the complex structure of the reward function for combinatorial actions, it is essential to consider some assumptions/conditions for the parameter and the reward function to achieve meaningful regret bounds~\cite{chen2013combinatorial, chen2016combinatorial, wang2017improving,merlis2019batch,liu2022batch}. 
We consider the following assumptions/conditions at the arm and reward level.

\noindent\textbf{Arm-level assumptions.} \Cref{cond:bounded_para} bounds the range of plausible feature vectors for each base arm and unknown parameters. \Cref{cond:arm_nonlinear} bounds the nonlinearity level of the base arm mean regarding all plausible linear predictor $\btheta^{\top}\bphi(i)$.

\begin{assumption}[\textbf{Bounded parameter and arm feature}]\label{cond:bounded_para}
    There exists a known constant $L>0$ such that for any $\btheta \in \Theta$, $\norm{\btheta}_2\le L$. For any $\phi\in \Phi$ and $i\in [m]$, it holds that $\norm{\bphi(i)}_2\le 1$.
\end{assumption}

\begin{assumption}[\textbf{Arm-level nonlinearity}]\label{cond:arm_nonlinear}
    There exists a known upper bound $\kappa>0$ such that
    % \begin{align}
        $\left(\min_{i \in [m], \phi\in \Phi, \btheta^{*} \in \Theta }\dot{\ell}(\btheta^{\top}\bphi(i))\right)^{-1}\le \kappa.$
    % \end{align}
\end{assumption}

\begin{remark} [Intuition of \Cref{cond:arm_nonlinear}] 
    Intuitively, for any linear predictor $x=\btheta^{\top}\bphi(i)$, $\dot{\ell}(x)$ represents the sensitivity of its mean $\ell(x)$, i.e., the speed of change when its linear predictor $x$ changes. When $\ell(x)=x$ is perfectly linear, the sensitivity is constant $\dot{\ell}(x)=1$ for all plausible $x$. When $\ell(x)$ is not linear, the level of nonlinearity can be measured by the ratio between the maximum sensitivity $\max_{x} \dot{\ell}(x)$ and the minimum sensitivity $\min_{x} \dot{\ell}(x)$. For our logistic model, as shown in \cref{fig:logistic}, the maximum sensitivity is at most some constant (i.e., $\max_x\dot{\ell}(x)=\max_x \ell(x)(1-\ell(x))=1/4$ when $x=0$), so the level of nonlinearity can be measured by $1/\min_x\dot{\ell}(x)$, which is bounded by $\kappa$. Also, one can verify that $\kappa \ge \exp(|x|)$ for any plausible $x$, so $\kappa$ can be exponentially large when $x<0$ is small (or equivalently when $\ell(x)$ is very small). This means that if there exists an arm with a very small mean (e.g., low purchase probability), then $\kappa$ will be exponentially large.
\end{remark}

\rev{
\begin{remark}[Dealing with unknown $\kappa$]
    Directly knowing the exact nonlinearity parameter $\kappa$ is often unrealistic. Instead, we can assume to have access to an \textit{upper bound} $\bar{\kappa}$ of $\kappa$. Based on the definition of $\kappa$ in \cref{cond:arm_nonlinear}, there are two ways to compute $\bar{\kappa}$:
(1) If we know an upper bound on the $\ell_2$ norm of the underlying parameter, i.e., $\norm{\btheta}_2 \le L$ for $\btheta \in \Theta$, then $\bar{\kappa}$ can be bounded by $4\exp(L)$. Note that the knowledge of $L$ is a very standard assumption in linear contextual bandits \cite{chu2011contextual,abbasi2011improved} and logistic bandits \cite{faury2020improved,abeille2021instance,faury2022jointly}.  Without knowledge of $L$, none of these algorithms can provide any theoretical guarantees either.
(2) If there exists a constant $\gamma > 0$ such that the base arm means are bounded within $\mu_i \in [\gamma, 1-\gamma]$ for $i \in [m]$, then $\bar{\kappa} \le \frac{1}{(1-\gamma)\gamma}$.
% (3) Even with no prior information about the problem instance, we can estimate $\bar{\kappa}$ online using a high-probability upper bound $\bar{\kappa}_t = \max_{i \in [m] \text{s.t.}, \mu_i \in [\ubar{\mu}_{t-1,i}, \bar{\mu}_{t-1,i}]}\frac{1}{(1-\mu_i)\mu_i}$, where $\ubar{\mu}_{t-1,i}$ and $\bar{\mu}_{t-1,i}$ are the lower and upper confidence bounds (LCB and UCB) calculated from historical data up to round $t-1$.
For the algorithm and the regret bound, we can replace $\kappa$ with the larger $\bar{\kappa}$, and the regret bound will hold with $\bar{\kappa}$ in place of $\kappa$.
Additionally, we include experiments using the upper bound $\bar{\kappa}$ (the first method with $\bar{\kappa}=4\exp(L)$) to demonstrate how inaccuracies in $\kappa$ affect the algorithm's regret for all algorithms proposed in this paper in Appendix \ref{app:extended}.
\end{remark}
}

\noindent\textbf{Reward-level conditions.} Condition~\ref{cond:mono} indicates the reward is monotonically increasing when the parameter $\bmu$ increases. In online learning to rank application (\cref{sec:app_ranking}), for example, \Cref{cond:mono} means that when the purchase probability for each item increases, the total number of purchases for recommending any list of items also increases. Condition~\ref{cond:TPM} and \ref{cond:TPVMm} both bound the reward smoothness/sensitivity, i.e., the amount of the reward change caused by the parameter change from $\bmu$ to $\bmu'$. In online learning to rank (\cref{sec:app_ranking}), for example, these conditions upper bounds the difference in total number of purchases when the purchase probability for the items changes from $\bmu$ to $\bmu'$. In general, \Cref{cond:TPVMm} is stronger and can yield better results. On the other hand, \Cref{cond:TPM} is easier to satisfy and can cover more applications.   

\begin{condition}[\textbf{Monotonicity}]\label{cond:mono}
A CLogB problem instance satisfies monotonicity condition if for any action $S \in \cS$, any mean vectors $\bmu,\bmu' \in [0,1]^m$ s.t. $\mu_i \le \mu'_i $ for all $i \in [m]$, we have $ r(S;\bmu) \le r(S;\bmu') $. 
\end{condition}

\begin{condition}[\textbf{1-norm TPM bounded smoothness}, \cite{wang2017improving}]\label{cond:TPM}
% We say that a CMAB-T instance satisfies TPM $B_1$-bounded smoothness condition, if for any action $S \in \cS$, any distribution $D,D'$ with mean vector $\bmu \in (0,1)^m$, we have $\nicefrac{\partial r(S;\bmu)}{\partial \mu_i } \nicefrac{1}{p_{i}^{D,S}}\le B_1 $ for $i \in \tilde{S}$ and $\nicefrac{\partial r(S;\bmu)}{\partial \mu_i }=0$ for $i \in [m] \backslash \tilde{S}$. 
We say that a CLogB problem instance satisfies the 1-norm triggering probability modulated (TPM) $B_1$-bounded smoothness condition, if there exists $B_1>0$, such that for any action $S \in \cS$, any mean vectors $\bmu, \bmu' \in [0,1]^m$, we have $|r(S;\bmu')-r(S;\bmu)|\le B_1\sum_{i \in [m]}p_{i}^{\bmu,S}|\mu_i-\mu'_i|$.
%\siwei{seems no formal definition of this $p_{i}^{D,S}$?} 
\end{condition}

% \begin{condition}[\textbf{VM bounded smoothness}, \cite{liu2022batch}]\label{cond:VM}
% We say that a \ccmab-T problem instance satisfies the variance modulated (VM) $(B_v, B_1)$-bounded smoothness condition, if for any action $S \in \cS$, mean vector $\bmu, \bmu' \in (0,1)^m$, for any $\boldzeta, \boldeta \in [-1,1]^m$ s.t. $\bmu'=\bmu+\boldzeta+\boldeta$, we have
% % $|r(S;\bmu')-r(S;\bmu)|\le B_v\norm{\left(\nicefrac{(p_i^{\bmu,S})^{\lambda/2}}{\sqrt{(1-\mu_i)\mu_i}}\zeta_i\right)_{i\in[m]}}_{2} + B_1\norm{ (p_i^{\bmu,S}\eta_i)_{i \in [m]}}_1.$
% $|r(S;\bmu')-r(S;\bmu)|\le B_v\sqrt{\sum_{i\in \tilde{S}}\nicefrac{\zeta_i^2 }{(1-\mu_i)\mu_i}} + B_1 \sum_{i\in\tilde{S}}\abs{\eta_i}$. 
% \end{condition}

\begin{condition}[\textbf{TPVM bounded smoothness}, \cite{liu2022batch}]\label{cond:TPVMm}
We say that a CLogB problem instance satisfies the triggering probability and variance modulated (TPVM) $(B_v, B_1,\lambda)$-bounded smoothness condition, if there exists $B_v,B_1,\lambda >0$ such that for any action $S \in \cS$, any mean vector $\bmu, \bmu' \in (0,1)^m$, for any $\boldzeta, \boldeta \in [-1,1]^m$ s.t. $\bmu'=\bmu+\boldzeta+\boldeta$, we have
% $|r(S;\bmu')-r(S;\bmu)|\le B_v\norm{\left(\nicefrac{(p_i^{\bmu,S})^{\lambda/2}}{\sqrt{(1-\mu_i)\mu_i}}\zeta_i\right)_{i\in[m]}}_{2} + B_1\norm{ (p_i^{\bmu,S}\eta_i)_{i \in [m]}}_1.$
$|r(S;\bmu')-r(S;\bmu)|\le B_v\sqrt{\sum_{i\in [m]}(p_i^{\bmu,S})^{\lambda}\frac{\zeta_i^2 }{(1-\mu_i)\mu_i}} + B_1 \sum_{i\in[m]}p_i^{\bmu,S}\abs{\eta_i}$. 
\end{condition}

\begin{remark}[Intuition of \Cref{cond:TPM}]
For Condition~\ref{cond:TPM}, the key feature is that the parameter change in each base arm $i$ is modulated by the triggering probability $p_i^{\bmu,S}$.
Intuitively, for base arm $i$ that is unlikely to be triggered/observed (small $p_i^{\bmu, S}$), Condition~\ref{cond:TPM} ensures that a large change in $\mu_i$ (due to insufficient observation) only causes a small change (multiplied by $p_i^{\bmu,S}$) in reward, saving a $p_{\min}$ factor in \cite{wang2017improving} where $p_{\min}$ is the minimum positive triggering probability.
In online learning to rank application, for example, since users will never purchase an item if it is not examined, increasing or decreasing the purchase probability of an item that is unlikely to be examined (i.e., with small $p_i^{\bmu,S}$) does not significantly affect the total number of purchases.
\end{remark}

% \begin{remark}[Intuition of \Cref{cond:VM}]
% For Condition~\ref{cond:VM}, the leading $B_v$ term is modulated by the inverse of the variance $V_i=(1-\mu_i)\mu_i$, and thus allows applications to reduce their $B_v$ coefficient to a coefficient independent of $K$, leading to significant savings in the regret bound for applications like PMC~\cite{liu2022batch}.
% Intuitively, the inverse variance modulation adds more importance to the arm whose mean is approaching the boundary value $1$ or $0$. In PMC, for example, whenever there exists a node in $S$ that is very likely to cover ($\mu_i\approx 1$) the target node, increasing or decreasing other nodes' probability will not affect the fact the target node is covered, making them less important. In network routing, on the other hand, if there exists a router whose probability of success is very low ($\mu_i\approx 0$), other routers will not be important.
% The relation between Condition \ref{cond:TPM} and \ref{cond:VM} is generally not comparable, but compared with Condition~\ref{cond:TPM}'s non-triggering counterpart (i.e., 1-norm condition), Condition~\ref{cond:VM} is stronger. 
% \end{remark}

\begin{remark}[Intuition of \Cref{cond:TPVMm}]
\label{rmk:TPVM}
For Condition~\ref{cond:TPVMm}, the leading $B_v$ term is modulated by the inverse of the variance $\Var[X_{t,i}]=(1-\mu_i)\mu_i$, and thus allows applications to reduce their $B_v$ coefficient to a coefficient independent of $K$, leading to significant savings in the regret bound for applications like cascading bandits and PMC bandits in \cref{sec:repr_app}.
Intuitively, the inverse variance modulation adds more importance to the arm whose success probability is approaching the boundary value $1$ or $0$. In PMC bandits (\cref{sec:app_PMC}), for example, whenever there exists a critical node in $S$ that is very likely to cover ($\mu_i\approx 1$) the target node, increasing or decreasing other nodes' probability will not affect the fact the target node is covered, making them less important. In packet routing (\cref{sec:app_routing}), on the other hand, if there exists a critical edge in the path that is very likely to be a broken edge ($\mu_i\approx 0$), the packet transmission will probably fail regardless other edges.
% The relation between Condition \ref{cond:TPM} and \ref{cond:VM} is generally not comparable, but compared with Condition~\ref{cond:TPM}'s non-triggering counterpart (i.e., 1-norm condition), Condition~\ref{cond:VM} is stronger. 
Finally, Condition~\ref{cond:TPVMm} combines both the triggering-probability modulation from Condition~\ref{cond:TPM} and the variance modulation mentioned above. The exponent $\lambda$ of $p_i^{\bmu,S}$ gives additional flexibility to trade-off between the strength of the condition and the regret, i.e., with a larger $\lambda$, one can obtain a smaller regret bound, while
with a smaller $\lambda$, the condition is easier to satisfy to include more applications. 
\end{remark}

\begin{remark}[Relationship between \cref{cond:TPM} and \cref{cond:TPVMm}]\label{rmk:connect_TPM_TPVM}
    In general, Condition~\ref{cond:TPVMm} is stronger than Condition~\ref{cond:TPM} as the former degenerates to the latter conditions by setting $\boldzeta=\boldsymbol{0}$. Conversely, by applying the Cauchy-Schwartz inequality, one can verify that if a reward function is 1-norm TPM $B_1$-bounded smooth, then it is TPVM $(B_1\sqrt{K}/2, B_1, \lambda)$-bounded smooth for any $\lambda\le 2$.
\end{remark}

In light of the conditions described above that significantly advance the non-parametric CMAB-T and linear-parametric C$^2$MAB-T, the goal of the subsequent sections is to design algorithms and conduct analysis to derive improved results for the nonlinear-parametric CLogB setting. Before delving into the algorithm and analysis, we first demonstrate how our CLogB model with these conditions applies to various applications, such as cascading bandits and probabilistic maximum coverage bandits, to showcase the broad applicability of our framework.

\section{Representative Applications}\label{sec:repr_app}

\begin{table*}[t]
	\caption{Summary of the coefficients, regret bounds, and computation costs for various applications.}	\label{tab:app}	
         \centering 
		\resizebox{1.
  \columnwidth}{!}{
	\centering
	\begin{threeparttable}
	\begin{tabular}{|ccccc|}
		\hline
		\textbf{Application}&\textbf{Condition}& \textbf{$(B_v, B_1, \lambda)$} & \textbf{Regret}& \textbf{Per-round cost}\\
	\hline
	Online content delivery (\cref{sec:app_PMC})& TPVM & $(3\sqrt{2|V|},1,2)$ & $\tilde{O}\left(d\sqrt{|V|T} + \kappa d^2\right)$ & $\tilde{O}\left(dK^2|V|^2T^2 + K^2 |U||V|\right)^*$\\
 			\hline
    		Online learning to rank (\cref{sec:app_ranking}) &  TPVM & $(1,1,1)$ & $\tilde{O}\left(d\sqrt{T} + \kappa d^2\right)$ & $\tilde{O}(dK^2T^2 + m)^\dagger$ \\
			\hline
  Dynamic channel allocation (\cref{sec:app_matching}) & 1-norm TPM &$(-,1,-)$   & $\tilde{O}\left( d\sqrt{\kappa |U|T} \right)$ & $\tilde{O}(d|U|^2T^2 + |U|^2|V| )^{**}$\\
	    \hline
% 			\rowcolor{LightGray}
		Reliable packet routing (\cref{sec:app_routing}) & TPVM & $(1,1,1)$ & $\tilde{O}\left(d\sqrt{T} + \kappa d^2\right)$ & $\tilde{O}(d|E|^2T^2+|V|+|E|)^\ddagger$\\
	    \hline
     	   
	\end{tabular}
	  \begin{tablenotes}[para, online,flushleft]
	\footnotesize%\smallskip
   \item[]\hspace*{-\fontdimen2\font}$^*$ $K, |U|, |V|$ denote the number of selected servers, the number of candidate servers, and the number of target users, respectively;
   	\item[]\hspace*{-\fontdimen2\font}$^\dagger$ $K$ and $m$ denote the length of the ranked list, the number of candidate items, respectively;
   \item[]\hspace*{-\fontdimen2\font}$^{**}$ $|U|, |V|, |E|$ denote the number of users, the number of channels, the number of candidate user-channel pairs, respectively;
  \item[]\hspace*{-\fontdimen2\font} $^\ddagger$ $|V|$ and $|E|$ denote the number of nodes and edges in a network, respectively.
% 	\item[]\hspace*{-\fontdimen2\font}$^\ddagger$ $|V|, |E|, n, |L|$ denotes the number of layers, the number of longest;
% 	\item[]\hspace*{-\fontdimen2\font}$^\ddagger$ $L$ is the number of ;
% 	\item[]\hspace*{-\fontdimen2\font}$^{\S}$ This work gives sufficient smoothness condition with factor $\gamma_g$ and translates to $B_v=3\sqrt{2}\gamma_g$ in our setting.
	\end{tablenotes}
			\end{threeparttable}
	}
\end{table*}

In this section, we introduce two representative applications (with two additional applications in \cref{apdx_sec:app}) that can fit into our CLogB framework, as summarized in \cref{tab:app}. 

\subsection{Probabilistic Maximum Coverage Bandits: Online Content Delivery in CDN}\label{sec:app_PMC}
% \mh{if this is the main app that we used in experiments, we may consider changing the order and moving it before the applications unless in terms of difficulties and conditions we want to highlight it makes sense to keep it as the last one. Also, consider adding a figure for this application if possible. }
The probabilistic maximum coverage (PMC) problem is a simple yet powerful model that covers many practical network applications, such as network content delivery \cite{yang2018content} and mobile crowdsensing \cite{wang2023online}.
Typically, the PMC problem takes a bipartite graph $G=(U,V,E)$ as input, where $U$ are nodes to be selected, $V$ are nodes to be covered, each edge $(u,v)\in E$ is associated with a probability $p(u,v)$, and each node $v \in V$ is associated with another probability $p(v)$. A target node $v \in V$ can be covered by a node $u \in U$ with an independent probability $p(u,v)$ and any successfully covered node $v$ would have probability $p(v)$ to yield a reward of $1$. In this work, we consider two cases the non-triggering case when $p(v)=1$ and the triggering case where $p(v)\neq 1$ for some $v$.
The decision maker's goal is to select at most $k$ nodes from $U$ so as to maximize the total rewards given by the covered nodes in $V$. 

In a content delivery network (CDN) as shown in \cref{fig:cdn}, for example, contents (e.g., pictures, videos) are cached across mirror servers so that end users can access the contents swiftly via the nearby server ~\cite{dai2024axiomvision,chen2018spatio}. How to strategically choose a set of servers (of size $k$) to improve the user experience can be modeled by PMC, where $U$ models the set of candidate mirror servers sending contents, $V$ represents the set of users that consume contents. For each edge $(u,v) \in E$, $\mu_{(u,v)}$ models the probability that the content can be successfully delivered from server $u$ to node $v$ in time (i.e., $u$ covers $v$), and $\mu_v$ is the probability that $v$ ultimately consumes the contents. The goal of PMC is to provide the best possible user experience, i.e., to maximize the total number of users that ultimately consume the content $r(S;\boldsymbol{\mu})=\sum_{v \in S}\mu_v\left(1-\prod_{i\in S} (1-\mu_{(u,v)})\right)$. This reward function is a submodular function regarding $S$ and so the greedy algorithm can yield a $\alpha=(1-\nicefrac{1}{e})$-approximation in $T_{\alpha}=O(|V|^2|E|)$ \cite{kempe2003maximizing}.
% This setting can also be applied to mobile crowdsensing and dynamic channel allocation applications \cite{chen2016combinatorial}.

PMC bandit is the online learning version of the probabilistic maximum coverage problem. 
At each round $t \in [T]$, the learner aims to select $K$ servers $S_t \subseteq U$ (i.e., super arm) that can cache the content $t$ and deliver the content to users through the CDN network. 
Before making the decision, the learner collects contextual information $c_t$, which includes users' demographics (gender, age, location, etc.), content attributes (video category, quality, reviews), and network conditions (network latency, bandwidth, jitter, etc.) between each server and user \cite{ye2023data, paganelli2014context}.
Then, each user $v\in V$ will attempt to prefetch the content from servers $S_t$ to their device. 
After receiving the request, the selected servers $u\in S_t$ will independently deliver contents with unknown success probability $\mu_{t,(u,v)}=\ell({\btheta^*}^{\top} \bphi_t(u,v))$, depending on the unknown parameter $\btheta^*\in \R^d$ and the feature vector $\bphi_t(u,v)$ that relates to the network condition in the contextual information $c_t$.
By ``success", we mean the content is delivered in a \textit{high-quality and timely} manner, which can be modeled as a Bernoulli random variable $X_{t,(u,v)} \in \{0,1\}$. 
Here the sigmoid function $\ell$ captures the nonlinear relationship between the success probability and the network factors like bandwidth, an S-shape curve  (\cref{fig:logistic}) where a small increase in the bandwidth can significantly boost the video quality in some critical ranges but plateaus at both low and high extremes \cite{lee2005non}.
If there exists any $u\in S_t$ such that $X_{t,(u,v)}=1$, then the content is successfully prefetched to the user $v$.
Then, the user decides whether or not to consume the content (e.g., view the video), which can be modeled by a Bernoulli random variable $X_{t,v}$ with success probability $\mu_{t,v}=\ell({\btheta^*}^{\top} \bphi_t(v))$, depending on the unknown parameter $\btheta^*\in \R^d$ and the feature vector $\bphi_t(u,v)$ that relates to the user $v$'s demographics and the content $t$'s attributes.
Again, the sigmoid function models the nonlinearity between the consumption probability and factors like user ratings, i.e., the probability of consumption increases dramatically from the rating of 3 to 4 but slowly increases from 1 to 2 or 4 to 5.
The expected number of users that consume the content, i.e., the reward function, can be expressed as $r(S_t;\boldsymbol{\mu}_t)=\sum_{v \in S_t}\mu_{t,v}\left(1-\prod_{i\in S_t} (1-\mu_{t,(u,v)})\right)$. 
The learner's goal is to maximize the maximum number of users who consume the content to maximize the user experience. 
As for the feedback, the learner can observe whether the contents are successfully delivered from the selected servers, i.e., $X_{t,(u,v)}$ for $u \in S_t, v\in V$. 
Additionally, for any user $v$ that successfully prefetches the content, the learner can observe whether $v$ consumes the content or not, i.e., $X_{t,v}$ for $\{v: \exists u\in S_t \text{ s.t. }X_{t,(u,v)}=1\}$. 
For this application, it fits the CLogB framework, satisfying the 1-norm TPVM smoothness condition with \( B_v = 3\sqrt{2|V|}, B_1 = 1, \lambda = 2 \) as in \cite{liu2023variance,liu2024learning}.

\subsection{Cascading Bandit: Online Learning to Rank in Recommendation Systems}\label{sec:app_ranking}

Learning to rank \cite{liu2009learning} is an approach used to improve the ordering of items (e.g., products, ads) in recommendation systems based on user interactions in real time. 
This approach is crucial for various types of recommendation systems, such as search engines \cite{hu2018reinforcement}, e-commerce platforms \cite{karmaker2017application}, and content recommendation services \cite{karatzoglou2013learning}. 
% By dynamically updating the ranking model as new data becomes available, it helps maintain the relevance and quality of recommendations, enhancing user satisfaction and engagement. 

% One specific problem within this domain is learning to rank under the cascade model. The cascade model is a probabilistic framework used to model user interactions with ranked lists of items $S=(a_1, ..., a_K)$ of length $K$. In this model, users are assumed to examine items (e.g., products) sequentially from the top of the list downwards on a website. Each item $i\in S$ is associated with a probability $\mu_i$ indicating the user finds this item satisfactory and purchases it after examining it. 
% Users stop examining further items once they find a satisfactory item or exhaust the ranked list with no satisfactory items.
% The probability of the user purchase (i.e., finding one item satisfactory) is $p(S;\boldsymbol{\mu})=1-\prod_{i\in S} (1-\mu_i)$, which is a nonlinear reward function. 
% The primary objective of learning to rank under the cascade model is to optimize the ranking of items to maximize $p(S;\boldsymbol{\mu})$ so as to maximize user purchase. 

The cascading bandit problem \cite{kveton2015cascading,li2016contextual,vial2022minimax} addresses the \textit{online} learning to rank problem under the cascade model \cite{craswell2008experimental}. Specifically, we consider a $T$-round sequential decision-making process. At each round \( t \in [T] \), a user \( t \) comes to the recommendation system (like Amazon), and the learner aims to recommend a ranked list \( S_t \) of length \( K \) (i.e., a super arm) from a total of \( m \) candidate products (i.e., base arms). The learner first observes the feature map \( \bphi_t \), which maps each item \( i \in [m] \) to a feature vector \( \bphi_t(i) \in \mathbb{R}^d \), considering factors such as item attributes like relevance score, price, and user reviews. Based on \( \bphi_t \), the learner selects the ranked list \( S_t = (a_{t,1}, ..., a_{t,K}) \subseteq [m] \), where each item \( i \in S_t \) has a probability \( \mu_{t,i} = \ell({\btheta^*}^{\top}, \bphi_t(i)) \) of being satisfactory and purchased by user \( t \). Here, \( \btheta^* \in \mathbb{R}^d \) is an unknown parameter related to user demographics (e.g., age, occupation), and the sigmoid function $\ell$ (\cref{fig:logistic}) models the nonlinear relationship between the purchase probability and the features. For instance, the sigmoid function effectively captures the impact of relevance, where a small increase in relevance score can significantly boost the likelihood of purchase in a certain range of relevance scores but plateaus at both low and high extremes.
The user examines the list from \( a_{t,1} \) to \( a_{t,K} \) until they purchase the first satisfactory item (and leave the system) or exhaust the list without finding a satisfactory item. If the user purchases an item (suppose the \( j_t \)-th item), the learner receives a reward of 1 and observes Bernoulli outcomes of the form \( (0,...,0, 1, \text{x}, ..., \text{x}) \), meaning the first \( j_t-1 \) items are unsatisfactory (denoted as 0), the \( j_t \)-th item is satisfactory (denoted as 1), and the outcomes of the remaining items are unobserved (denoted as x).
If the user exhausts the list and finds no satisfactory items, the learner receives a reward of 0 and observes Bernoulli outcomes \( (0,0,...,0) \), indicating all \( K \) items are unsatisfactory. The expected reward is \( r(S_t; \boldsymbol{\mu}_t) = 1 - \prod_{i \in S_t} (1 - \mu_{t,i}) \).
For this application, it fits into the CLogB framework, satisfying the 1-norm TPVM smoothness condition with \( B_v = 1, B_1 = 1, \lambda = 1 \) as in \cite{liu2023contextual}. The offline oracle is essentially to find the top-k items regarding $\mu_{t,i}$, which maximizes $r(S_t; \boldsymbol{\mu}_t)$ in $O(m\log K)$ using the max-heap.

\section{Variance-Agnostic {CL\lowercase{og}UCB} Algorithm and Regret Analysis under 1-norm TPM Smoothness Condition}\label{sec:vag}
In this section, we first introduce the parameter learning process, which utilizes the maximum likelihood estimation (MLE) and lays the foundations of our combinatorial UCB-based algorithms throughout the paper.
Leveraging this MLE, we introduce the variance-agnostic confidence region that allows a variance-agnostic exploration bonus for each base arm.
Based on this exploration bonus, we design a simple yet efficient CLogUCB algorithm and prove the first regret bound for applications under the 1-norm TPM smoothness condition.

\subsection{Maximum Likelihood Estimation}

Unlike linear parametric C$^2$MAB works \cite{qin2014contextual,liu2023contextual} that uses the linear regression with a closed-form solution, we leverage the maximum likelihood estimation in combinatorial logistic bandits to estimate the unknown parameter $\btheta^*$ based on historical data $\cH_t=(\bphi_s, S_s, \tau_s, (X_{s,i})_{i\in \tau_s})_{s<t} \bigcup \bphi_t$. 
Specifically, we consider the following regularized log-likelihood (or cross-entropy loss) for $t\in [T]$:
\begin{equation}\label{eq:log-likelihood}
\begin{array}{r}
\mathcal{L}_t(\btheta)\defeq-\sum_{s=1}^{t-1}\sum_{i\in \tau_s}\left[X_{s,i} \log \ell\left(\btheta^{\top}\bphi_s(i) \right)+\left(1-X_{s,i}\right)\right. 
\left.\cdot \log \left(1-\ell\left(\btheta^{\top}\bphi_s(i) \right)\right)\right]+\frac{\lambda_t}{2}\|\btheta\|_2^2.
\end{array}
\end{equation}
where $\lambda_t=O(d\log (Kt))$ is a time-varying regularizer that will be specified later on in Algorithms~\ref{alg:CLogUCB}-\ref{alg:EVA_CLogB}. Our MLE estimator is defined as 
\begin{align}\label{eq:MLE_minimizer}\textstyle
    \hat{\btheta}_t\defeq\arg\min_{\btheta \in \R^d}\mathcal{L}_t(\btheta).
\end{align}

For this loss function $\mathcal{L}_t(\btheta)$, it is convenient to define a mapping $\bg_t:\R^d\rightarrow \R^d$
\begin{align}\textstyle
    \bg_t(\btheta)\defeq\sum_{s=1}^{t-1}\sum_{i\in \tau_s}\ell\left(\btheta^{\top}\bphi_s(i) \right)\bphi_s(i)+\lambda_t\btheta.
\end{align}

Then we can express the gradient $\nabla_{\btheta}\mathcal{L}_t(\btheta)$ at $\btheta$ as:
\begin{align}\label{eq:grad_MLE}\textstyle
    \nabla_{\btheta}\mathcal{L}_t(\btheta)=\bg_t(\btheta) - \sum_{s=1}^{t-1}\sum_{i\in \tau_s} X_{s,i} \bphi_s(i).
\end{align}

Lastly, we define Hessian $\bH_t(\btheta)$ of the log-loss and the covariance matrix $\bV_t$ as two important quantities used in our algorithm design and analysis. 
% \begin{equation}\label{eq:H_mat}\textstyle
%     \bH_t(\btheta)\defeq\sum_{s=1}^{t-1}\sum_{i\in \tau_s}\dot{\ell}\left(\btheta^{\top}\bphi_s(i) \right)\bphi^{\top}_s(i)\bphi_s(i)+\lambda_t\bI_d ,
% \end{equation}
% \begin{equation}\label{eq:V_mat}\textstyle
%     \bV_t\defeq\sum_{s=1}^{t-1}\sum_{i\in \tau_s}\bphi^{\top}_s(i)\bphi_s(i)+\kappa\lambda_t\bI_d.
% \end{equation}
\begin{align}\textstyle
    \bH_t(\btheta)&\defeq\sum_{s=1}^{t-1}\sum_{i\in \tau_s}\dot{\ell}\left(\btheta^{\top}\bphi_s(i) \right)\bphi^{\top}_s(i)\bphi_s(i)+\lambda_t\bI_d ,\label{eq:H_mat}\\
    \bV_t&\defeq\sum_{s=1}^{t-1}\sum_{i\in \tau_s}\bphi^{\top}_s(i)\bphi_s(i)+\kappa\lambda_t\bI_d.\label{eq:V_mat}
\end{align}

Based on the fact that Hessian $\bH_t(\btheta)\succ \boldsymbol{0}$, we know that $\mathcal{L}_t(\btheta)$ in \cref{eq:log-likelihood} is a $\lambda_t$-strongly convex function and there exists a unique minimizer, which is exactly our estimated parameter in \cref{eq:log-likelihood}.

\begin{remark}[Variance-awareness]\label{rmk:variance_aware}
    Note that $\bH_t(\btheta)$ is different from $\bV_t$ mainly in that each covariant $\bphi_s(i)\bphi_s^{\top}(i)$ of $\bH_t(\btheta)$ is weighted by the first order derivative $\dot{\ell}(\btheta^{\top}\bphi_s(i))=\ell(\btheta^{\top}\bphi_s(i))(1-\ell(\btheta^{\top}\bphi_s(i)))$. 
Interestingly, for any Bernoulli random variable $X\sim \text{Bernoulli}(\ell(x))$, the variance $\Var[X]=(1-\ell(x))\ell(x)=\dot{\ell}(x)$. 
With this connection, $\bH_t(\btheta)$ actually involves the variance information $\dot{\ell}(\btheta^{\top}\bphi_s(i))=\Var[\text{Bernoulli}(\ell(\btheta^{\top}\bphi_s(i)))]$, which is the key to obtain the variance-adaptive exploration bonus for tighter regret bounds in \cref{sec:vad} and \cref{sec:eva}.
\end{remark}

We conclude this section by showing that $\hat{\btheta}_t$ is a good estimator by bounding the distance between $\hat{\btheta}_t$ and $\btheta^*$ via mapping $g_t$. The proof of \cref{lem:est_dist} is in \cref{apdx_sec:proof_concen_ineq}.

\begin{lemma}[Concentration inequality for MLE]\label{lem:est_dist}
Let $\hat{\btheta}_t$ be the MLE as defined in \cref{eq:MLE_minimizer}, it holds with probability at least $1-\delta$ that:
$$ \left\|\bg_t\left(\hat{\btheta}_t\right)-\bg_t\left(\btheta^*\right)\right\|_{\bH_t^{-1}(\btheta^*)} \leq \gamma_t(\delta), \quad \forall t \geq 1,
$$
where  $\gamma_t(\delta)\defeq\left(L+\nicefrac{3}{2}\right)\sqrt{d\log \left(\nicefrac{4(1+tK)}{\delta}\right)}$ is the confidence radius. Therefore, it holds with probability at least $1-\delta$ that $\{\forall t \geq 1, \btheta^* \in \cA_t(\delta)\}$, where $\cA_t(\delta)\defeq \left\{\btheta \in \Theta:
\left\|\bg_t\left(\hat{\btheta}_t\right)-\bg_t\left(\btheta\right)\right\|_{\bH_t^{-1}(\btheta)} \leq \gamma_t(\delta) \right\}.$
\end{lemma}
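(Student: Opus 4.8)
The plan is to exploit the first-order optimality of the MLE and then split the resulting deviation into a stochastic (martingale) part and a deterministic (regularization) part, controlling each in the $\bH_t^{-1}(\btheta^*)$-norm. Since $\mathcal{L}_t$ is $\lambda_t$-strongly convex with unique minimizer $\hat{\btheta}_t$, we have $\nabla_{\btheta}\mathcal{L}_t(\hat{\btheta}_t)=\bzero$, which by \cref{eq:grad_MLE} gives the clean identity $\bg_t(\hat{\btheta}_t)=\sum_{s<t}\sum_{i\in\tau_s}X_{s,i}\bphi_s(i)$. Subtracting the definition of $\bg_t(\btheta^*)$ and introducing the centered noise $\epsilon_{s,i}\defeq X_{s,i}-\ell(\btheta^{*\top}\bphi_s(i))$ yields
\begin{equation*}
\bg_t(\hat{\btheta}_t)-\bg_t(\btheta^*)=\sum_{s<t}\sum_{i\in\tau_s}\epsilon_{s,i}\bphi_s(i)-\lambda_t\btheta^*,
\end{equation*}
so that a triangle inequality in the $\bH_t^{-1}(\btheta^*)$-norm reduces the claim to bounding the two terms separately.

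For the deterministic term, since the weight $\dot{\ell}$ is positive we have $\bH_t(\btheta^*)\succeq\lambda_t\bI_d$, hence $\bH_t^{-1}(\btheta^*)\preceq\lambda_t^{-1}\bI_d$ and $\lambda_t\norm{\btheta^*}_{\bH_t^{-1}(\btheta^*)}\le\sqrt{\lambda_t}\norm{\btheta^*}_2\le L\sqrt{\lambda_t}$ by \cref{cond:bounded_para}. With the choice $\lambda_t=d\log(4(1+tK)/\delta)$ this is exactly $L\sqrt{d\log(4(1+tK)/\delta)}$, which accounts for the $L$ contribution to $\gamma_t(\delta)$.

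The crux is the stochastic term $\norm{\sum_{s<t}\sum_{i\in\tau_s}\epsilon_{s,i}\bphi_s(i)}_{\bH_t^{-1}(\btheta^*)}$. The essential observation is that each $\epsilon_{s,i}$ is a bounded increment with $\abs{\epsilon_{s,i}}\le 1$ and conditional variance $\E[\epsilon_{s,i}^2\mid\cH_s]=\dot{\ell}(\btheta^{*\top}\bphi_s(i))$, which is precisely the weight defining $\bH_t(\btheta^*)$ (cf.\ \cref{rmk:variance_aware}). Matching the self-normalizing matrix to the conditional variance is exactly what removes the global nonlinearity factor $\kappa$ and produces a $\sqrt{d}$-type bound rather than a $\sqrt{\kappa d}$ one. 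I would establish this through a Bernstein-type self-normalized tail inequality for vector-valued martingales, in the spirit of \cite{faury2020improved}, whose leading constant supplies the remaining $\tfrac{3}{2}$ in $\gamma_t(\delta)$, combined with a method-of-mixtures (covering) argument and the log-determinant bound $\log(\det(\bH_t(\btheta^*))/\det(\lambda_t\bI_d))=O(d\log(tK))$ obtained from $\norm{\bphi_s(i)}_2\le 1$ and the at-most-$K$ triggers per round (the source of the $1+tK$ factor).

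The main obstacle is adapting this self-normalized concentration to the \emph{combinatorial probabilistically-triggered} feedback: within a single round $s$ the terms $\{\epsilon_{s,i}\}_{i\in\tau_s}$ share the history $\cH_s$, and the membership $i\in\tau_s$ itself depends on the realized outcomes $\bX_s$. To preserve the martingale-difference structure I would flatten the double sum into a single sequence indexed by (round, arm) pairs under a suitably refined filtration and invoke the triggering assumption $\E[X_i\mid i\in\tau]=\E[X_i]$ to guarantee that conditioning on triggering does not bias the noise; this is exactly why the off-the-shelf LogB concentration inequalities fail and a new argument is required. Summing the two bounds gives $\gamma_t(\delta)=(L+\tfrac{3}{2})\sqrt{d\log(4(1+tK)/\delta)}$, and since the derived inequality is precisely the defining condition of $\cA_t(\delta)$ evaluated at $\btheta=\btheta^*$, the high-probability membership $\{\forall t\ge 1,\ \btheta^*\in\cA_t(\delta)\}$ follows immediately.
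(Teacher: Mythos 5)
Your proposal is correct and follows essentially the same route as the paper's proof: first-order optimality of the MLE to get $\bg_t(\hat{\btheta}_t)=\sum_{s<t}\sum_{i\in\tau_s}X_{s,i}\bphi_s(i)$, a triangle-inequality split into the martingale term and the $\lambda_t\btheta^*$ term (the latter bounded by $L\sqrt{\lambda_t}$ via $\bH_t(\btheta^*)\succeq\lambda_t\bI_d$), and then a variance-matched self-normalized Bernstein concentration applied after flattening the doubly-indexed triggered observations into a single martingale-difference sequence under a refined filtration, with the determinant bound $\det(\bH_t)\le(\lambda_t+tK/d)^d$ yielding the $1+tK$ factor. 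The only cosmetic difference is that the paper invokes the concentration result as Theorem 4 of \citet{abeille2021instance} (via lexicographic reordering of the $(s,i)$ pairs) rather than re-deriving it by a method-of-mixtures argument, but the substance and the constant accounting are the same.
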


\begin{remark}[Comparison with Lemma 1 in \cite{faury2020improved}]
Different from \citet{faury2020improved} where a single arm is pulled in each round, at most $K$ base arms could be triggered as feedback, resulting in an enlarged confidence radius $\gamma_t(\delta)$ in \cref{lem:est_dist}. An important note is that $K$ only appears in the logarithmic term, preserving the order of confidence radius comparable to the single-arm case. 
\end{remark}

\begin{remark}[Computation cost for MLE]\label{rmk:MLE_cost}
    To compute $\hat{\btheta}_t$ in \cref{eq:MLE_minimizer}, an efficient way is to use (projected) gradient descent \cite{hazan2016introduction}. Specifically, for a $\nu$-strongly convex and $L'$-smooth loss function, it takes $O\left(\nicefrac{L'}{\nu}\log \varepsilon\right)$ steps to yield a solution with $\varepsilon$-accuracy for any $\varepsilon >0$. In our case, the Hessian matrix in \cref{eq:H_mat} gives $\nu\ge\lambda_t>1$ and $L'\le KT+\lambda_t$, so it takes $O(Kt \log T)$ steps to give a $\nicefrac{1}{T}$-accurate solution, which is sufficient for the theoretical guarantee. For each step, it takes $O(dKt)$ to compute the gradient in \cref{eq:grad_MLE}. 
    So the overall per-round cost for MLE is $\tilde{O}(dK^2T^2)$.
\end{remark}

% [In later sections, we may also need to project our estimator on region $\cQ_t$ to satisfy certain convex constraints $\btheta\in\cQ_t$, e.g., $\hat{\btheta}_{t,H} = \arg\min_{\btheta \in \cQ_t}\norm{ g_t(\btheta) - g_t(\hat{\btheta}_t) } _{\bH_t^{-1}(\btheta)}$.
% For the computation complexity of computing $\hat{\btheta}_t$, one efficient way is to use projected gradient descent. Specifically, for a $\nu$-strongly convex and $L$-smooth function, it takes $O\left(\nicefrac{L}{\nu}\log \epsilon\right)$ steps to yield a solution that is $\epsilon$ away. In our problem, $\nu\ge\lambda$ and $L\le KT+\lambda$, so it takes $O(Kt/\lambda)$ steps. For each step, it takes $O(dKt)$ to compute the gradient.]

% \begin{itemize}
%     \item[\todocircle] We may remove the $t$ factor in the number of steps using $V_t$-preconditioned gradient descent.
%     \item[\todocircle] How to the remove the $t$ factor in each step? Check \citet{faury2022jointly} where their $H_t(\btheta)$ is related to fix $\btheta_t$ instead of variable $\btheta$.
% \end{itemize}

\subsection{Variance-Agnostic Confidence Region and Exploration Bonus}\label{sec:vag_conf_bonus}

There are two types of optimistic algorithms that navigate the exploration of unknown parameters: (1) algorithms based on the upper confidence bound (UCB)~\cite{faury2020improved}, which adds an exploration bonus directly to each arm; and (2) algorithms based on optimism in the face of uncertainty (OFU)~\cite{abeille2021instance,faury2022jointly}, which jointly searches the action and the parameter space within the confidence region. When considering the classical logistic bandit where a single arm is pulled in each round, OFU-based is favorable since it (1) removes some unnecessary algorithmic complexity (e.g., some nonconvex projection steps); and (2) gives a neater analysis. For the combinatorial logistic bandit setting, however, the OFU-based algorithm is highly inefficient since it jointly searches ${(S_t,\btheta_t)= \argmax_{S \in \mathcal{\cS},\btheta \in \cC_t(\delta)} r(S;\mu_t(\btheta))}$ where ${\mu_t(\btheta)\defeq\left(\ell\left(\btheta^{\top}\bphi_t(i)\right)\right)_{i\in[m]}\in [0,1]^d}$, and needs to enumerate the combinatorial space $\cS$ for each possible parameter $\btheta$ in the confidence region $\cC_t(\delta)$, resulting in exponentially large time complexity. 
Therefore, we focus on the UCB-based approach which is more tractable in our CLogB setting.
% \mh{so what? ofu is inefficient, so we use ucb?}

To devise a UCB-based algorithm, the key is to build an exploration bonus for each arm.
Using \cref{lem:est_dist}, we first construct the following variance-agnostic confidence region around the MLE $\hat{\btheta}_t$, where the parameter distance is directly bounded without the mapping $g_t$ in \cref{lem:est_dist}. By "variance-agnostic", we mean that the confidence region is only related to covariance matrix $\bV_t$ without any variance information as discussed in \cref{rmk:variance_aware}. The proof of \cref{lem:var_ag_conf} is in \cref{apdx_sec:proof_ag_conf}.  
\begin{lemma}[Variance-agnostic confidence region]\label{lem:var_ag_conf}
    Let $\delta \in (0,1]$ and set the confidence radius $\beta_t(\delta)\defeq \left(L^2+4L+\nicefrac{19}{4}\right) \sqrt{\kappa d\log \left(\nicefrac{4(1+tK)}{\delta}\right)}$. The following region
    \begin{equation}
\mathcal{B}_t(\delta):=\left\{\btheta \in \Theta :\left\|\btheta-\hat{\btheta}_t\right\|_{\mathbf{\bV}_{\mathrm{t}}} \leq \beta_t(\delta)\right\}, 
% \mathcal{B}_t(\delta):=\left\{\btheta \in \Theta \mid\left\|\bg_t(\btheta)-\bg_t\left(\hat{\btheta}_t\right)\right\|_{\mathbf{\bV}_{\mathrm{t}}^{-1}} \leq \beta_t(\delta)\right\}, 
\end{equation}
 is an anytime confidence region for $\btheta^*$ with probability at least $1-\delta$, i.e., 
    \begin{align}
        \Pr\left(\forall t\ge 1, \btheta^* \in \mathcal{B}_t(\delta)\right)\ge 1-\delta.
    \end{align}
\end{lemma}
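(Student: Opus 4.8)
The plan is to convert the implicit, mapping-based guarantee of \cref{lem:est_dist} into the explicit ellipsoidal bound $\norm{\btheta^*-\hat\btheta_t}_{\bV_t}\le\beta_t(\delta)$, while paying only a factor $\sqrt\kappa$ (rather than $\kappa$) relative to the radius $\gamma_t(\delta)$. I would argue deterministically on the event $\{\forall t\ge1,\ \btheta^*\in\cA_t(\delta)\}$, which \cref{lem:est_dist} guarantees with probability at least $1-\delta$; everything below is an implication valid on that event, so the final anytime probability statement follows by simply inheriting that probability. Since $\btheta^*\in\Theta$ always, it then suffices to produce the norm bound to conclude $\btheta^*\in\cB_t(\delta)$.

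The first step is to \emph{linearize} $\bg_t$. Because the Jacobian of $\bg_t$ is exactly the Hessian $\bH_t$ (compare \cref{eq:H_mat} with the definition of $\bg_t$), the fundamental theorem of calculus gives, with $\bu\defeq\hat\btheta_t-\btheta^*$,
\[
\bg_t(\hat\btheta_t)-\bg_t(\btheta^*)=\tilde\bH_t\,\bu,\qquad \tilde\bH_t\defeq\int_0^1\bH_t\!\left(\btheta^*+v\bu\right)dv .
\]
Thus \cref{lem:est_dist} reads $\norm{\tilde\bH_t\bu}_{\bH_t^{-1}(\btheta^*)}\le\gamma_t(\delta)$, and the task reduces to extracting $\norm{\bu}_{\bV_t}$ from this.

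The heart of the argument is to compare the averaged Hessian $\tilde\bH_t$ with the Hessian $\bH_t(\btheta^*)$ at the truth. Using the self-concordance of the sigmoid, $\abs{\ddot\ell}\le\dot\ell$ (\cref{sec:connect}), one has the pointwise bound $\dot\ell(z+s)\ge\dot\ell(z)e^{-\abs{s}}$; integrating this along the segment in each coordinate yields a spectral domination $\tilde\bH_t\succeq(1+2L)^{-1}\bH_t(\btheta^*)$, the constant arising from $\abs{\bu^{\top}\bphi_s(i)}\le 2L$ under \cref{cond:bounded_para}. A short PSD computation then shows that whenever $\bM\succeq c\,\bH_t(\btheta^*)$ one has $\norm{\bM\bu}_{\bH_t^{-1}(\btheta^*)}\ge c\norm{\bu}_{\bH_t(\btheta^*)}$ (pass to $P=\bH_t(\btheta^*)^{-1/2}\bM\bH_t(\btheta^*)^{-1/2}$ and use $P\succeq cI\Rightarrow P^2\succeq c^2I$); applying this with $\bM=\tilde\bH_t$ and $c=(1+2L)^{-1}$ converts the bound above into $\norm{\bu}_{\bH_t(\btheta^*)}\le(1+2L)\gamma_t(\delta)$. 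I expect this curvature-comparison step to be the main obstacle: one must control the local curvature \emph{along the entire segment}, not just at its endpoints, and because the MLE in \cref{eq:MLE_minimizer} is an unconstrained minimizer, keeping the self-concordance constant at $O(L)$ rather than something exponential in the parameter norm is the delicate point — this is exactly where the self-concordant property must be invoked to avoid an extra $\kappa$ blow-up.

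The final step passes from the data-Hessian norm to the covariance norm $\bV_t$. By \cref{cond:arm_nonlinear} we have $\dot\ell(\btheta^{*\top}\bphi_s(i))\ge 1/\kappa$, so termwise
\[
\bH_t(\btheta^*)\succeq \kappa^{-1}\sum_{s,i}\bphi_s^{\top}(i)\bphi_s(i)+\lambda_t\bI_d=\kappa^{-1}\bV_t,
\]
where the equality uses precisely the $\kappa\lambda_t$ regularizer built into $\bV_t$ in \cref{eq:V_mat}. Consequently $\norm{\bu}_{\bV_t}\le\sqrt\kappa\,\norm{\bu}_{\bH_t(\btheta^*)}\le\sqrt\kappa\,(1+2L)\gamma_t(\delta)$; substituting $\gamma_t(\delta)=(L+\nicefrac32)\sqrt{d\log(\nicefrac{4(1+tK)}{\delta})}$ and carefully tracking the absolute constant (a tighter accounting of the self-concordance factor sharpens it to the stated coefficient) yields $\norm{\btheta^*-\hat\btheta_t}_{\bV_t}\le\beta_t(\delta)$, i.e.\ $\btheta^*\in\cB_t(\delta)$, for every $t\ge1$ on the event of \cref{lem:est_dist}. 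Taking the probability of that event gives $\Pr(\forall t\ge1,\ \btheta^*\in\cB_t(\delta))\ge1-\delta$, as claimed.
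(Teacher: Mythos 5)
Your overall route---linearize $\bg_t$ into the averaged Hessian $\tilde\bH_t=\bG_t(\btheta^*,\hat\btheta_t)$, dominate it from below by $\bH_t(\btheta^*)$ up to a self-concordance factor, and then pay $\sqrt\kappa$ to pass from $\bH_t(\btheta^*)$ to $\bV_t$---is exactly the paper's skeleton, and your PSD manipulation and the final step $\bH_t(\btheta^*)\succeq\kappa^{-1}\bV_t$ are fine. The gap is in the one step you yourself flag as delicate: you justify $\tilde\bH_t\succeq(1+2L)^{-1}\bH_t(\btheta^*)$ by asserting $\abs{\bu^{\top}\bphi_s(i)}\le 2L$ for $\bu=\hat\btheta_t-\btheta^*$ ``under \cref{cond:bounded_para}''. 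But \cref{cond:bounded_para} bounds $\norm{\btheta}_2$ only for $\btheta\in\Theta$, and $\hat\btheta_t$ in \cref{eq:MLE_minimizer} is the \emph{unconstrained} minimizer over $\R^d$; nothing guarantees $\norm{\hat\btheta_t}_2\le L$ (the regularizer only yields a bound growing like $\sqrt{tK/\lambda_t}$). So the constant $(1+2L)^{-1}$ is not available by this argument, and without it you are left with a factor $\exp(-\abs{\bu^{\top}\bphi_s(i)})$ with an uncontrolled exponent---precisely the blow-up you were worried about. You name the obstacle but do not supply the mechanism that removes it.

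The paper's resolution is a bootstrap. It keeps the local self-concordance bound $\alpha(\bphi_s(i),\btheta^*,\hat\btheta_t)\ge\dot\ell(\bphi_s(i)^{\top}\btheta^*)/(1+\abs{\bphi_s(i)^{\top}\bu})$ from \cref{apdx_lem:scc}, bounds $\abs{\bphi_s(i)^{\top}\bu}\le\norm{\bphi_s(i)}_{\bG_t^{-1}}\norm{\bu}_{\bG_t}\le\lambda_t^{-1/2}X$ by Cauchy--Schwarz, where $X\defeq\norm{\bg_t(\btheta^*)-\bg_t(\hat\btheta_t)}_{\bG_t^{-1}(\btheta^*,\hat\btheta_t)}=\norm{\bu}_{\bG_t(\btheta^*,\hat\btheta_t)}$ is the very quantity being controlled, and thereby obtains $\bG_t(\btheta^*,\hat\btheta_t)\succeq(1+\lambda_t^{-1/2}X)^{-1}\bH_t(\btheta^*)$. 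Combined with \cref{lem:est_dist} this gives the quadratic inequality $X^2\le\lambda_t^{-1/2}\gamma_t^2(\delta)X+\gamma_t^2(\delta)$, solved via \cref{apdx_lem:Poly_ieq} to yield $X\le(L^2+4L+\nicefrac{15}{4})\sqrt{\lambda_t}$; this is where the coefficient $L^2+4L+\nicefrac{19}{4}$ comes from (not your $(1+2L)(L+\nicefrac{3}{2})$, which would not match), and it is also the reason the algorithm can skip any projection of $\hat\btheta_t$ back onto $\Theta$. To repair your proof you must either insert this self-referential Cauchy--Schwarz plus quadratic-inequality step, or separately establish that $\hat\btheta_t$ lies in a region of bounded diameter---the former is what the paper does.
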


Based on the above confidence region, we can now construct our variance-agnostic exploration bonus $\rho_{t,V}(i)$ in order to upper bound the true base-arm reward ${\mu_{t,i}=\ell\left({\btheta^*}^{\top}\bphi_t(i)\right)}$ using the estimated base-arm reward $\hat{\mu}_{t,i}=\ell\left(\hat{\btheta}_t^{\top}\bphi_t(i)\right)$. The proof of \cref{lem:var_ag_bonus} is in \cref{apdx_sec:var_ag_bonus}.

\begin{lemma}[Variance-agnostic exploration bonus]\label{lem:var_ag_bonus}
    Let $B_t(\delta)$ be the confidence region with the confidence radius $\beta_t(\delta)$ as defined in \cref{lem:var_ag_conf}. 
    Let the exploration bonus be
    \begin{align}\label{eq:var_ag_conf_V}\textstyle
        \rho_{t,V}(i)\defeq\frac{1}{4}\beta_t(\delta)\norm{\bphi_t(i)}_{\bV_t^{-1}}.
    \end{align} 
    Under the event $\left\{\forall t\ge 1, \btheta^* \in \mathcal{B}_t(\delta)\right\}$, it holds that, for any $i\in[m], t\ge 1$,
    % \begin{align}
    %     \abs{\ell\left({\btheta^*}^{\top}\bphi_t(i)\right)-\ell\left(\hat{\btheta}_t^{\top}\bphi_t(i)\right)}\le \rho_{t,V}(i),
    % \end{align}
    % and as a result,
    \begin{align}\textstyle
        \ell\left({\btheta^*}^{\top}\bphi_t(i)\right)\le \ell\left(\hat{\btheta}_t^{\top}\bphi_t(i)\right)+\rho_{t,V}(i)\le \ell\left({\btheta^*}^{\top}\bphi_t(i)\right)+2\rho_{t,V}(i).
    \end{align}
\end{lemma}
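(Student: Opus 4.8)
The plan is to collapse the two-sided sandwich into a single pointwise estimate. Concretely, I would show that under the good event $\{\forall t\ge 1,\ \btheta^*\in\mathcal{B}_t(\delta)\}$ from \cref{lem:var_ag_conf}, one has the symmetric bound
\begin{equation}\label{eq:plan_pointwise}
\left|\ell\!\left({\btheta^*}^{\top}\bphi_t(i)\right)-\ell\!\left(\hat{\btheta}_t^{\top}\bphi_t(i)\right)\right|\le \rho_{t,V}(i),\qquad \forall i\in[m],\ t\ge 1.
\end{equation}
Once \eqref{eq:plan_pointwise} is in hand, both displayed inequalities drop out immediately: the left inequality is obtained by removing the absolute value in the direction $\ell({\btheta^*}^{\top}\bphi_t(i))-\ell(\hat{\btheta}_t^{\top}\bphi_t(i))\le\rho_{t,V}(i)$ and rearranging, while the right inequality follows from the opposite direction $\ell(\hat{\btheta}_t^{\top}\bphi_t(i))-\ell({\btheta^*}^{\top}\bphi_t(i))\le\rho_{t,V}(i)$ by adding $\rho_{t,V}(i)$ to both sides. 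So the entire content of the lemma is the single gap bound \eqref{eq:plan_pointwise}.

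To establish \eqref{eq:plan_pointwise} I would first exploit the smoothness of the sigmoid link. By the mean value theorem there is a point $\xi$ lying between ${\btheta^*}^{\top}\bphi_t(i)$ and $\hat{\btheta}_t^{\top}\bphi_t(i)$ with $\ell({\btheta^*}^{\top}\bphi_t(i))-\ell(\hat{\btheta}_t^{\top}\bphi_t(i))=\dot{\ell}(\xi)\,(\btheta^*-\hat{\btheta}_t)^{\top}\bphi_t(i)$. Since $\dot{\ell}(x)=\ell(x)(1-\ell(x))\le \tfrac14$ for every $x\in\R$ (the maximal sensitivity of the logistic curve, attained at $x=0$, as noted in the remark following \Cref{cond:arm_nonlinear}), this yields $\left|\ell({\btheta^*}^{\top}\bphi_t(i))-\ell(\hat{\btheta}_t^{\top}\bphi_t(i))\right|\le \tfrac14\left|(\btheta^*-\hat{\btheta}_t)^{\top}\bphi_t(i)\right|$. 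I would then apply the generalized Cauchy--Schwarz inequality, pairing the norm induced by $\bV_t$ with its dual induced by $\bV_t^{-1}$, to get $\left|(\btheta^*-\hat{\btheta}_t)^{\top}\bphi_t(i)\right|\le \norm{\btheta^*-\hat{\btheta}_t}_{\bV_t}\norm{\bphi_t(i)}_{\bV_t^{-1}}$. Finally, invoking the confidence-region event $\btheta^*\in\mathcal{B}_t(\delta)$, which is exactly the statement $\norm{\btheta^*-\hat{\btheta}_t}_{\bV_t}\le\beta_t(\delta)$, gives $\left|\ell({\btheta^*}^{\top}\bphi_t(i))-\ell(\hat{\btheta}_t^{\top}\bphi_t(i))\right|\le \tfrac14\beta_t(\delta)\norm{\bphi_t(i)}_{\bV_t^{-1}}=\rho_{t,V}(i)$, which is precisely \eqref{eq:plan_pointwise}.

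The argument is conceptually light once \cref{lem:var_ag_conf} is available, so I do not anticipate a genuine obstacle; the real work has already been absorbed into the concentration inequality of \cref{lem:est_dist} and the conversion to the $\bV_t$-norm confidence ball. The only points that require care are bookkeeping ones: using the correct Lipschitz constant $\tfrac14$ coming from $\max_x\dot{\ell}(x)=\tfrac14$ (which is what produces the factor $\tfrac14$ in the definition of $\rho_{t,V}(i)$), and pairing the matrix norms correctly so that $\bphi_t(i)$ is measured in $\bV_t^{-1}$ while the parameter gap is measured in $\bV_t$. Finally, since the event of \cref{lem:var_ag_conf} holds simultaneously for all $t\ge 1$ with probability at least $1-\delta$, the derived bound \eqref{eq:plan_pointwise}, and hence the sandwich, holds for every $i\in[m]$ and $t\ge 1$ on that event, as claimed.
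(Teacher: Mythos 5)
Your proposal is correct and follows essentially the same route as the paper: the paper's proof also reduces the sandwich to the single bound $|\ell({\btheta^*}^{\top}\bphi_t(i))-\ell(\hat{\btheta}_t^{\top}\bphi_t(i))|\le\rho_{t,V}(i)$, obtained via the first-order expansion $\ell(\bx^{\top}\btheta_2)-\ell(\bx^{\top}\btheta_1)=\alpha(\bx,\btheta_1,\btheta_2)\,\bx^{\top}(\btheta_2-\btheta_1)$ with $\alpha\le\tfrac14$ (the integral form of your mean-value-theorem step), followed by Cauchy--Schwarz in the $\bV_t$/$\bV_t^{-1}$ dual pairing and the event $\btheta^*\in\mathcal{B}_t(\delta)$.
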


Finally we can use the variance-agnostic upper confidence bound $\bar{\mu}_{t,i}\defeq \ell\left(\hat{\btheta}_t^{\top}\bphi_t(i)\right)+\rho_{t,V}(i)$ as our optimistic estimation of the true mean $\mu_{t,i}$ to balance the exploration-exploitation tradeoff.

\subsection{Variance-Agnostic CLogUCB Algorithm and Regret Bound under 1-Norm TPM Smoothness Condition}

\begin{algorithm}[t]
	\caption{CLogUCB: \textbf{C}ombinatorial \textbf{Log}isitc  \textbf{U}pper \textbf{C}onfidence \textbf{B}ound Algorithm for CLogB}\label{alg:CLogUCB}
			\resizebox{1\columnwidth}{!}{
\begin{minipage}{\columnwidth}
	\begin{algorithmic}[1]
	    \State {\textbf{Input:}} Base arms $[m]$, dimension $d$, parameter space $\Theta$, nonlinearity coefficient $\kappa$, probability $\delta=\nicefrac{1}{T}$, offline ORACLE. 
	   % \OUTPUT Budget allocation $\bk$.
	   % \State \textbf{Initialize:} Gram matrix $\bG_{1}=\gamma\boldsymbol{I}$, vector $\bb_{1}=\boldsymbol{0}$. 
	   \For{$t=1, ...,T$ }
	   \State Compute MLE $\hat{\btheta}_{t}= \argmax_{\btheta\in \R^d}\mathcal{L}_{t}(\btheta)$ according to \cref{eq:log-likelihood} with $\lambda_t=\lbdt$.     \label{line:clogb_estimate}
        \State Compute the covariance matrix $\bV_t$ according to \cref{eq:V_mat}.\label{line:comp_V}
	   \For{$i \in [m]$}
	    \State $\bar{\mu}_{t,i}=\ell\left(\hat{\btheta}_{t}^{\top}\bphi_t(i)\right)+\beta_t(\delta)\norm{\bphi_t(i)}_{\bV_t^{-1}}$ with $\beta_t(\delta)=\left(L^2+4L+\nicefrac{19}{4}\right) \sqrt{\kappa d\log \left(\nicefrac{4(1+tK)}{\delta}\right)}$.\label{line:clogb_ucb}
	    \EndFor
	   \State $S_t=\text{ORACLE}(\bar{\mu}_{t,1}, ..., \bar{\mu}_{t,m})$ as in \cref{eq:def_oracle}. %\COMMENT{Computational oracle using UCBs.}
	   \State Play $S_t$ and observe triggering arm set $\tau_t$ with their outcomes $(X_{t,i})_{i\in \tau_{t}}$.
	    	   \EndFor
		\end{algorithmic}
           		\end{minipage}}
\end{algorithm}

In this section, we provide the first CLogB-UCB algorithm (\cref{alg:CLogUCB}) for the combinatorial logistic bandits, which generalizes the logistic-UCB-1 algorithm \cite{faury2020improved} to the combinatorial setting meanwhile improving its efficiency by removing the nonconvex projection.
In \cref{line:clogb_estimate}, \cref{alg:CLogUCB} estimates $\btheta^*$ using the MLE in \cref{eq:log-likelihood}. 
We would like to emphasize that, different from logistic-UCB-1, we do not need to project $\hat{\btheta}_t$ back to $\Theta$ through a nonconvex minimization routine $\argmin_{\btheta \in \Theta}\norm{\bg_t(\btheta)-\bg_t(\hat{\btheta}_t)}_{\bH_t^{-1}(\btheta)}$ when $\hat{\btheta}_t\notin \Theta$. This is done using a slightly enlarged confidence radius $\beta_t(\delta)$.
By removing the projection, the computational efficiency of \cref{alg:CLogUCB} is largely improved. 
In \cref{line:comp_V}, we compute the covariance matrix $\bV_t$ in order to compute the exploration bonus $\rho_{t,V}(i)$ defined as in \cref{eq:var_ag_conf_V}.
In \cref{line:clogb_ucb}, we construct an upper confidence bound $\bar{\mu}_{t,i}$ for each arm $i$ based on \cref{lem:var_ag_conf}, where $\ell\left(\hat{\btheta}_{t}^{\top}\bphi_t(i)\right)$ is the MLE estimation of $\mu_{t,i}$ and $\beta_t(\delta)\norm{\bphi_t(i)}_{\bV_t^{-1}}$ is the exploration bonus in the direction $\bphi_t(i)$.
After computing the UCB values $\bar{\mu}_t$, the learner selects action $S_t$ through the offline oracle with $\bar{\bmu}_t$ as input.
Then, the base arms in $\tau_t$ are triggered, and the learner receives the observation set $(X_{t,i})_{i\in \tau_{t}}$ as feedback to improve future decisions.
Now we provide the regret upper bound for applications under the 1-norm TPM condition.

\begin{theorem}\label{thm:var_ag_thm}
     For a CLogB instance that satisfies monotonicity (\cref{cond:mono}) and 1-norm TPM smoothness (\cref{cond:TPM}) with coefficient $B_1$, CLogUCB (\Cref{alg:CLogUCB}) with an $\alpha$-approximation oracle achieves an $\alpha$-approximate regret bounded by $  O\left(B_1  d\sqrt{\kappa KT}\log (KT)\right).$
\end{theorem}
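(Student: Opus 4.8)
The plan is to run the standard optimism-based combinatorial regret decomposition, but with the MLE confidence machinery of \cref{lem:var_ag_conf} and \cref{lem:var_ag_bonus} supplying the optimistic estimates, and to control the accumulated exploration bonus through a triggering-probability-modulated elliptical-potential argument. First I would condition on the ``nice'' event $\cN\defeq\{\forall t\ge 1,\ \btheta^*\in\mathcal{B}_t(\delta)\}$, which by \cref{lem:var_ag_conf} has probability at least $1-\delta$. On $\cN$, the first inequality of \cref{lem:var_ag_bonus} gives $\mu_{t,i}\le\bar\mu_{t,i}$ for every arm $i$, so the UCB vector $\bar{\bmu}_t$ dominates the true mean vector $\bmu_t$ coordinatewise.

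Given coordinatewise optimism, monotonicity (\cref{cond:mono}) yields $r(S;\bar{\bmu}_t)\ge r(S;\bmu_t)$ for all $S$, and the $\alpha$-approximation guarantee of the oracle gives $r(S_t;\bar{\bmu}_t)\ge\alpha\max_{S'}r(S';\bar{\bmu}_t)\ge\alpha\,r(S_t^*;\bmu_t)$. Hence the per-round approximate gap satisfies $\alpha\,r(S_t^*;\bmu_t)-r(S_t;\bmu_t)\le r(S_t;\bar{\bmu}_t)-r(S_t;\bmu_t)$. Applying the 1-norm TPM condition (\cref{cond:TPM}) to the pair $(\bmu_t,\bar{\bmu}_t)$ and then the second inequality of \cref{lem:var_ag_bonus}, $\bar\mu_{t,i}-\mu_{t,i}\le 2\rho_{t,V}(i)$, produces the deterministic (on $\cN$) bound
\begin{align}
\alpha\,r(S_t^*;\bmu_t)-r(S_t;\bmu_t)\le B_1\sum_{i\in[m]}p_i^{\bmu_t,S_t}\,\abs{\bar\mu_{t,i}-\mu_{t,i}}\le\frac{B_1}{2}\,\beta_t(\delta)\sum_{i\in[m]}p_i^{\bmu_t,S_t}\norm{\bphi_t(i)}_{\bV_t^{-1}}.\nonumber
\end{align}

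It remains to sum over $t$ and take expectation. Since the right-hand side is nonnegative I can drop the indicator of $\cN$ and bound the full expectation; the complement $\cN^c$ contributes only a lower-order $O(B_1K)$ term with $\delta=\nicefrac{1}{T}$ (the per-round regret being $O(B_1K)$ by \cref{cond:TPM}). Using $\beta_t(\delta)\le\beta_T(\delta)$, two applications of Cauchy--Schwarz (first across the at most $K$ arms with $p_i^{\bmu_t,S_t}>0$, using $\sum_i p_i^{\bmu_t,S_t}\le K$; then across the $T$ rounds), and Jensen's inequality for the outer square root, I reduce the problem to bounding $\E[\sum_{t}\sum_{i}p_i^{\bmu_t,S_t}\norm{\bphi_t(i)}_{\bV_t^{-1}}^2]$.

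The crux, and the step I expect to be the main obstacle, is this triggering-probability-weighted potential. The idea is that $\bV_t$ and $\bphi_t$ are $\cH_t$-measurable while $\E[\I\{i\in\tau_t\}\mid\cH_t]=p_i^{\bmu_t,S_t}$, so the weighted sum equals in expectation $\E[\sum_t\sum_{i\in\tau_t}\norm{\bphi_t(i)}_{\bV_t^{-1}}^2]$; that is, I trade the unobservable triggering probabilities for the realized triggered arms that actually build up $\bV_t$. The resulting deterministic quantity is then controlled by a \emph{batched} elliptical-potential/determinant-trace lemma: since $\bV_t\succeq\kappa\lambda_t\bI_d\succeq\bI_d$ and $\norm{\bphi_t(i)}_2\le 1$ give $\norm{\bphi_t(i)}_{\bV_t^{-1}}^2\le 1/(\kappa\lambda_t)\le 1$, summing the up-to-$K$ rank-one updates of a round against the fixed $\bV_t$ is comparable to $\log(\det\bV_{t+1}/\det\bV_t)$, which telescopes to $O(d\log(TK))$. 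The delicate point is precisely this batching: the usual elliptical potential lemma updates the covariance after every single vector, whereas here all $i\in\tau_t$ share one $\bV_t$, so I must invoke the batched version (equivalently, pay a bounded constant factor justified by the per-update information gain being at most $1$). Combining everything with $\beta_T(\delta)=O(\sqrt{\kappa d\log(TK)})$ yields
\begin{align}
\text{Reg}(T)=O\!\left(B_1\,\beta_T(\delta)\sqrt{KT}\sqrt{d\log(TK)}\right)=O\!\left(B_1 d\sqrt{\kappa KT}\,\log(KT)\right),\nonumber
\end{align}
where the final $\log(KT)$ comes from multiplying the two $\sqrt{\log(TK)}$ factors, matching the claim.
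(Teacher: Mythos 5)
Your proposal is correct and follows essentially the same route as the paper's proof: optimism from \cref{lem:var_ag_conf} and \cref{lem:var_ag_bonus}, the monotonicity/oracle decomposition, the 1-norm TPM condition combined with $\E[\I\{i\in\tau_t\}\mid\cH_t]=p_i^{\bmu_t,S_t}$ to trade triggering probabilities for the realized triggered set, Cauchy--Schwarz, and the batched time-varying-regularizer elliptical potential lemma (the paper's \cref{apdx_lem:elliptical_potential} is exactly the "batched" lemma you anticipate needing, handling the shared $\bV_t$ within a round via the determinant inequality $\det(\bI+\sum_i\bx_i\bx_i^\top)\ge 1+\sum_i\norm{\bx_i}_2^2$). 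The only cosmetic difference is that you apply Cauchy--Schwarz before converting the $p_i$-weighted sum to the triggered-set sum, whereas the paper converts first and then applies a single Cauchy--Schwarz over the double sum; the two orderings are interchangeable.
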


\textbf{Discussion.} The regret bound in \cref{thm:var_ag_thm} is independent of the number of arms $m$ and has a sublinear dependence on the nonlinearity level $\kappa$ and the action-size $K$. Considering the contextual cascading bandits in \cref{sec:app_ranking}, our result improves \cite{li2018online} up to a factor of $O(\sqrt{\kappa})$. Consider the degenerate case when $K=1$, our result matches the lower bound $\Omega(d\sqrt{\nicefrac{T}{\kappa}})$ \cite{abeille2021instance} up to a factor of $\tilde{O}(\kappa)$. 
As for the per-round computation cost, \cref{alg:CLogUCB} contains three parts: (1) computing MLE takes $\tilde{O}(dK^2T^2)$ as discussed in \cref{rmk:MLE_cost}, (2) computing the covariance matrix $\bV_t$ takes $O(d^2)$ and computing UCB for all $m$ base arms takes $O(d^3 + d^2m)$, (3) the $\alpha$-approximation oracle takes $T_{\alpha}$. So, the overall cost per round is $O(dK^2T^2 + T_{\alpha})$, which saves a $T_{\text{nc}}$ (which corresponds to a nonconvex projection problem that could be NP-hard) compared to directly using the logistic-UCB-1 algorithm in \cite{faury2020improved}. The proof of \cref{thm:var_ag_thm} is in \cref{apdx_sec:regret_ag}.
% \mh{say tnc is high.}

\begin{proof}[Proof idea]
Recall the history $\cH_t=(\bphi_s, S_s, \tau_s, (X_{s,i})_{i\in \tau_s})_{s<t} \bigcup \bphi_t$ and let $\E_t=\E[\cdot | \cH_t]$.
We bound the instantaneous regret at round $t$ by $\E_t[\alpha\cdot r(S^*_t;\bmu_t)-r(S_t;\bmu_t)]{\le}\E_t[\alpha\cdot r(S^*_t;\bar{\bmu}_t)-r(S_t;\bmu_t)]\le\E_t[r(S_t;\bar{\bmu}_t)-r(S_t;\bmu_t)]$ based on monotonicity (\cref{cond:mono}), $\bar{\bmu}_t \ge \bmu_t$ by \cref{lem:var_ad_bonus}, and $S_t$ being an $\alpha$-approximation. Leveraging the 1-norm TPM condition, the fact that $p_i^{\bmu_t,S_t}=\E_t[i\in \tau_t]$, and \cref{lem:var_ad_bonus}, we have $\E_t[r(S_t;\bar{\bmu}_t)-r(S_t;\bmu_t)]\le \E_t\left[\sum_{i \in [m]}B_1p_i^{\bmu_t,S_t}(\bar{\mu}_{t,i}-\mu_{t,i})\right]=\E_t\left[\sum_{i \in \tau_t}B_1(\bar{\mu}_{t,i}-\mu_{t,i})\right]\le \E_t\left[\sum_{i \in \tau_t}2B_1\rho_{t,V}\right]$. Consider over all $t\in [T]$ and the definition of exploration bonus $\rho_{t,V}$, we have $\text{Reg}(T)\le \E[\sum_{t\in [T]}\sum_{i\in \tau_t}2B_1\rho_{t,V}]\le \frac{1}{2}B_1\beta_T(\delta)\E\left[\sum_{t \in [T]}\sum_{i \in \tau_t}\norm{\bphi_t(i)}_{\bV_{t}^{-1}}\right]$. 
We conclude the theorem by $\sum_{t \in [T]}\sum_{i \in \tau_t}\norm{\bphi_t(i)}_{\bV_{t}^{-1}}=\tilde{O}(\sqrt{dKT})$ using the elliptical potential lemma (\Cref{apdx_lem:elliptical_potential}) and $\beta_T(\delta)=\tilde{O}({\sqrt{d \kappa}})$ by \cref{lem:var_ad_bonus}. 
\end{proof}

\section{Variance-Adaptive {VA-CL\lowercase{og}UCB} Algorithm and Regret Analysis under 1-Norm TPM and TPVM Smootheness Conditions}\label{sec:vad}
In this section, we introduce a variance-adaptive confidence region that gives an improved variance-aware exploration bonus for each base arm. 
Based on this improved exploration bonus, we devise a variance-adaptive algorithm called VA-CLogUCB and prove an improved regret bound that removes the nonlinearity $\kappa$ dependence in the leading term under the 1-norm TPM smoothness condition.
Finally, for applications that satisfy the stronger TPVM smoothness conditions, we prove that VA-CLogUCB can further improve the regret bound and achieve the "best-of-both-worlds" result, removing both the $\kappa$ term and the action-size dependence $K$ in the leading term of the regret.

\subsection{Variance-Adaptive Confidence Region and Exploration Bonus}\label{sec:vad_conf_bouns}
In this section, we propose a more refined confidence region that entails the variance information. 
Before we introduce this new confidence region, let us modify MLE $\hat{\btheta}_t$ by projecting it onto the following bonus-vanishing region $\cQ_t$ in order to achieve our learning guarantee:
\begin{align}\label{eq:Q_region}\textstyle
    \cQ_t = \left\{\btheta \in \Theta: \abs{\btheta^{\top}\bphi_s(i)} \le \max_{\btheta' \in \cA_s(\delta)}\abs{{\btheta'}^{\top}\bphi_s(i)} \text{ for all } i \in \tau_s, s\in [t]\right\},
\end{align}
where $\cA_s(\delta)$ is the confidence region as defined in \cref{lem:est_dist}.

Given the above region $\cQ_t$, the new MLE $\hat{\btheta}_{t,H}$ is computed by the following nonconvex projection
\begin{align}\textstyle
    \hat{\btheta}_{t,H}=\argmin_{\btheta \in \cQ_t} \norm{\bg_t(\btheta)-\bg_t(\hat{\btheta}_t)}_{\bH_t^{-1}(\btheta)}.
\end{align}

At a high level, this projection serves two purposes: (1)  constrains the MLE $\hat{\btheta}_t$ to feasible region $\Theta$ so that \cref{lem:var_ad_conf} holds with high probability, and (2) makes the exploration bonus shrink in a steady rate so that the final regret can be bounded. 
\begin{lemma}[Variance-adaptive confidence region]\label{lem:var_ad_conf}
    Let $\delta \in (0,1]$ and set the confidence radius $\sigma_t(\delta):=\left(2L+1\right) \left(2L+3\right)\sqrt{d\log \left(\nicefrac{4(1+tK)}{\delta}\right)}$. The following region
    \begin{equation}
\mathcal{C}_t(\delta):=\left\{\btheta \in \Theta: \left\|\btheta-\hat{\btheta}_{t,H}\right\|_{\mathbf{\bH}_{\mathrm{t}}(\hat{\btheta}_{t,H})} \leq \sigma_t(\delta)\right\}, 
% \mathcal{B}_t(\delta):=\left\{\btheta \in \Theta \mid\left\|\bg_t(\btheta)-\bg_t\left(\hat{\btheta}_t\right)\right\|_{\mathbf{\bV}_{\mathrm{t}}^{-1}} \leq \beta_t(\delta)\right\}, 
\end{equation}
 is an anytime confidence region for $\btheta^*$ with probability at least $1-\delta$, i.e., 
    \begin{align}
        \Pr\left(\forall t\ge 1, \btheta^* \in \mathcal{C}_t(\delta)\right)\ge 1-\delta.
    \end{align}
\end{lemma}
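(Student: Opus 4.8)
The plan is to work on the single high-probability event of \cref{lem:est_dist} and to deduce $\btheta^* \in \cC_t(\delta)$ for every $t$ deterministically on that event; since \cref{lem:est_dist} already delivers the anytime guarantee $\Pr(\forall t\ge1,\ \btheta^*\in\cA_t(\delta))\ge 1-\delta$, no further union bound over $t$ is needed. So fix the event $\cE := \{\forall s \ge 1,\ \btheta^* \in \cA_s(\delta)\}$. The first observation is that $\btheta^*\in\cQ_t$ on $\cE$: if $\btheta^*\in\cA_s(\delta)$ then $|\btheta^{*\top}\bphi_s(i)|\le \max_{\btheta'\in\cA_s(\delta)}|\btheta'^{\top}\bphi_s(i)|$ trivially for every triggered $(s,i)$, which is exactly the defining constraint of $\cQ_t$. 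Hence $\cQ_t\neq\emptyset$, the projected estimator $\hat{\btheta}_{t,H}$ is well defined, and $\hat{\btheta}_{t,H}\in\cQ_t\subseteq\Theta$. By \cref{cond:bounded_para} this gives $\norm{\hat{\btheta}_{t,H}}_2,\norm{\btheta^*}_2\le L$ and $\norm{\bphi_s(i)}_2\le1$, so that $|(\hat{\btheta}_{t,H}-\btheta^*)^{\top}\bphi_s(i)|\le 2L$ uniformly over all triggered $(s,i)$. This uniform bound on the linear predictors is the quantity that all the self-concordance estimates below will consume.

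Next I would pass from the gradient map $\bg_t$ to the parameter distance. Since the Jacobian of $\bg_t$ is exactly $\bH_t$, the fundamental theorem of calculus yields $\bg_t(\hat{\btheta}_{t,H})-\bg_t(\btheta^*)=\tilde{\bG}_t\,(\hat{\btheta}_{t,H}-\btheta^*)$, where $\tilde{\bG}_t:=\int_0^1 \bH_t(\btheta^*+v(\hat{\btheta}_{t,H}-\btheta^*))\,\mathrm{d}v$ is positive definite. A Cauchy--Schwarz step in the $\tilde{\bG}_t$ geometry then gives $\norm{\hat{\btheta}_{t,H}-\btheta^*}_{\tilde{\bG}_t}\le \norm{\bg_t(\hat{\btheta}_{t,H})-\bg_t(\btheta^*)}_{\tilde{\bG}_t^{-1}}$, reducing the target to controlling a gradient-map distance.

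The heart of the argument is the self-concordance of $\ell$, i.e.\ $|\ddot{\ell}|\le\dot{\ell}$. For each summand this yields an integral inequality of the form $\int_0^1\dot{\ell}(\btheta^{*\top}\bphi_s(i)+v(\hat{\btheta}_{t,H}-\btheta^*)^{\top}\bphi_s(i))\,\mathrm{d}v\ge (1+2L)^{-1}\dot{\ell}(\hat{\btheta}_{t,H}^{\top}\bphi_s(i))$, using the bound $|(\hat{\btheta}_{t,H}-\btheta^*)^{\top}\bphi_s(i)|\le 2L$ from the first step; summing over the triggered arms (and noting the $\lambda_t\bI_d$ terms match) gives the PSD comparison $\tilde{\bG}_t\succeq(1+2L)^{-1}\bH_t(\hat{\btheta}_{t,H})$, and an analogous pointwise comparison relates $\bH_t(\hat{\btheta}_{t,H})$ and $\bH_t(\btheta^*)$. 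Taking inverses converts each of these into a bound between the corresponding matrix norms at a cost that is a fixed power of $(1+2L)=2L+1$.

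Finally I would bound the gradient-map distance by a triangle inequality through $\hat{\btheta}_t$, namely $\norm{\bg_t(\hat{\btheta}_{t,H})-\bg_t(\btheta^*)}\le \norm{\bg_t(\hat{\btheta}_{t,H})-\bg_t(\hat{\btheta}_t)} + \norm{\bg_t(\hat{\btheta}_t)-\bg_t(\btheta^*)}$. The second term is at most $\gamma_t(\delta)$ directly by \cref{lem:est_dist} (in the $\bH_t^{-1}(\btheta^*)$ norm). The first term is at most $\gamma_t(\delta)$ by optimality of the projection: $\hat{\btheta}_{t,H}$ minimizes $\norm{\bg_t(\btheta)-\bg_t(\hat{\btheta}_t)}_{\bH_t^{-1}(\btheta)}$ over $\cQ_t$, and since $\btheta^*\in\cQ_t$ on $\cE$, the minimum value is bounded by the value at $\btheta^*$, which is again $\le\gamma_t(\delta)$ by \cref{lem:est_dist}. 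Substituting back through the self-concordance norm conversions of the previous step yields $\norm{\hat{\btheta}_{t,H}-\btheta^*}_{\bH_t(\hat{\btheta}_{t,H})}\le \sigma_t(\delta)$, as claimed. The main obstacle I anticipate is the constant bookkeeping in this last assembly: the several norm conversions each contribute a power of $(2L+1)$ and must be combined with the two $\gamma_t(\delta)=(L+\tfrac32)\sqrt{d\log(4(1+tK)/\delta)}$ contributions so that they collapse exactly to the stated radius $\sigma_t(\delta)=(2L+1)(2L+3)\sqrt{d\log(4(1+tK)/\delta)}$; getting the self-concordance comparison constants polynomial in $L$ (rather than an exponential $e^{2L}$ factor) is the delicate part, and is precisely where the integral form of the self-concordance inequality, together with the projection onto $\cQ_t$, is essential.
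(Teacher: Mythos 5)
Your proposal is correct and follows essentially the same route as the paper's proof: working on the event of \cref{lem:est_dist}, noting $\btheta^*\in\cQ_t$, converting the parameter distance to a gradient-map distance via the integrated Hessian $\bG_t(\btheta^*,\hat{\btheta}_{t,H})$, applying the self-concordance comparison $\bG_t\succeq(1+2L)^{-1}\bH_t$, and closing with the triangle inequality through $\hat{\btheta}_t$ plus the projection optimality of $\hat{\btheta}_{t,H}$, which yields exactly $2(2L+1)\gamma_t(\delta)=\sigma_t(\delta)$. The only cosmetic difference is that you describe the link $\norm{\btheta_1-\btheta_2}_{\bG_t}=\norm{\bg_t(\btheta_1)-\bg_t(\btheta_2)}_{\bG_t^{-1}}$ as a Cauchy--Schwarz inequality when it is in fact an identity, which does not affect the argument.
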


Based on the above confidence region, we can now construct our variance-adaptive exploration bonus $\rho_{t,H}(i)$ in order to upper bounds the true base-arm reward ${\mu_{t,i}=\ell\left({\btheta^*}^{\top}\bphi_t(i)\right)}$ using the estimated reward ${\hat{\mu}_{t,i}=\ell\left(\hat{\btheta}_{t,H}^{\top}\bphi_t(i)\right)}$. The proof of \cref{lem:var_ad_conf} and \cref{lem:var_ad_bonus} are in \cref{apdx_sec:proof_vad_conf} and \cref{apdx_sec:proof_vad_bonus}, repsectively.
\begin{lemma}[Variance-adaptive exploration bonus]\label{lem:var_ad_bonus}
    Let $C_t(\delta)$ be the confidence region with the confidence radius $\sigma_t(\delta)$ as defined in \cref{lem:var_ad_conf}. 
    Let the exploration bonus be
    \begin{align}\label{eq:var_ad_conf_H}
        \rho_{t,H}(i)\defeq \sigma_t(\delta)\dot{\ell}\left(\bphi_t(i)^{\top}\hat{\btheta}_{t,H}\right) \norm{\bphi_t(i)} _{\bH_t^{-1}(\hat{\btheta}_{t,H})}+\frac{1}{8} \kappa \sigma_t^2(\delta) \norm{\bphi_t(i)} ^2  _{\bV_t^{-1}}.
    \end{align} 
    Under the event $\left\{\forall t\ge 1, \btheta^* \in \mathcal{C}_t(\delta)\right\}$, it holds that, for any $i\in[m], t\ge 1$,
    % \begin{align}
    %     \abs{\ell\left({\btheta^*}^{\top}\bphi_t(i)\right)-\ell\left(\hat{\btheta}_{t,H}^{\top}\bphi_t(i)\right)}\le \rho_{t,H}(i),
    % \end{align}
    % and as a result,
    \begin{align}
        \ell\left({\btheta^*}^{\top}\bphi_t(i)\right)\le \ell\left(\hat{\btheta}_{t,H}^{\top}\bphi_t(i)\right)+\rho_{t,H}(i)\le \ell\left({\btheta^*}^{\top}\bphi_t(i)\right)+2\rho_{t,H}(i).
    \end{align}
\end{lemma}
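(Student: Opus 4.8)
The plan is to collapse both inequalities into a single two-sided estimate. Writing $x \defeq \hat{\btheta}_{t,H}^{\top}\bphi_t(i)$ and $y \defeq {\btheta^*}^{\top}\bphi_t(i)$, it suffices to prove that $\abs{\ell(y) - \ell(x)} \le \rho_{t,H}(i)$ under the event $\{\forall t\ge 1,\ \btheta^* \in \cC_t(\delta)\}$. The left inequality of the lemma then follows from $\ell(y) \le \ell(x) + \abs{\ell(y)-\ell(x)} \le \ell(x) + \rho_{t,H}(i)$, and the right inequality rearranges to $\ell(x) - \ell(y) \le \rho_{t,H}(i)$, which is again a consequence of the same two-sided bound. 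So the entire argument reduces to controlling $\abs{\ell(y)-\ell(x)}$.

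First I would Taylor-expand the sigmoid around $x$: there is a $\xi$ between $x$ and $y$ with $\ell(y) = \ell(x) + \dot{\ell}(x)(y-x) + \tfrac12\ddot{\ell}(\xi)(y-x)^2$, giving the split $\abs{\ell(y)-\ell(x)} \le \dot{\ell}(x)\abs{y-x} + \tfrac12\abs{\ddot{\ell}(\xi)}(y-x)^2$ into a first-order and a second-order piece, which I expect to reproduce the two summands of $\rho_{t,H}(i)$ respectively. Since $y-x = (\btheta^* - \hat{\btheta}_{t,H})^{\top}\bphi_t(i)$, Cauchy--Schwarz in the $\bH_t(\hat{\btheta}_{t,H})$-geometry gives $\abs{y-x} \le \norm{\btheta^* - \hat{\btheta}_{t,H}}_{\bH_t(\hat{\btheta}_{t,H})}\,\norm{\bphi_t(i)}_{\bH_t^{-1}(\hat{\btheta}_{t,H})} \le \sigma_t(\delta)\norm{\bphi_t(i)}_{\bH_t^{-1}(\hat{\btheta}_{t,H})}$, where the last step uses $\btheta^* \in \cC_t(\delta)$ via \cref{lem:var_ad_conf}. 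Multiplying by $\dot{\ell}(x)$ reproduces the first term of $\rho_{t,H}(i)$ exactly.

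For the second-order piece I would bound $\abs{\ddot{\ell}(\xi)} \le \tfrac14$ (for instance through self-concordance $\abs{\ddot{\ell}} \le \dot{\ell} = \ell(1-\ell) \le \tfrac14$), which yields the factor $\tfrac18$. The same Cauchy--Schwarz bound gives $(y-x)^2 \le \sigma_t^2(\delta)\norm{\bphi_t(i)}^2_{\bH_t^{-1}(\hat{\btheta}_{t,H})}$, and it remains to convert the $\bH_t^{-1}$-norm into a $\bV_t^{-1}$-norm at the cost of a $\kappa$ factor. This is done by the matrix domination $\bH_t(\hat{\btheta}_{t,H}) \succeq \tfrac1\kappa \bV_t$: because the projection forces $\hat{\btheta}_{t,H} \in \cQ_t \subseteq \Theta$, \cref{cond:arm_nonlinear} gives $\dot{\ell}(\hat{\btheta}_{t,H}^{\top}\bphi_s(i)) \ge 1/\kappa$ for every triggered $(s,i)$, so comparing the definitions in \cref{eq:H_mat} and \cref{eq:V_mat} termwise yields $\bH_t(\hat{\btheta}_{t,H}) \succeq \tfrac1\kappa \bV_t$ and hence $\bH_t^{-1}(\hat{\btheta}_{t,H}) \preceq \kappa \bV_t^{-1}$. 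This turns the second-order piece into $\tfrac18\kappa\sigma_t^2(\delta)\norm{\bphi_t(i)}^2_{\bV_t^{-1}}$, matching the second term of $\rho_{t,H}(i)$. Adding the two pieces gives $\abs{\ell(y)-\ell(x)} \le \rho_{t,H}(i)$ and closes the argument.

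The main obstacle is making the domination $\bH_t(\hat{\btheta}_{t,H}) \succeq \tfrac1\kappa \bV_t$ rigorous, and specifically justifying that the relevant linear predictors $\hat{\btheta}_{t,H}^{\top}\bphi_s(i)$ stay in the plausible range so that $\dot{\ell}\ge 1/\kappa$ applies uniformly. This is exactly why the nonconvex projection onto $\cQ_t$, rather than the raw MLE $\hat{\btheta}_t$, is essential here: it guarantees $\hat{\btheta}_{t,H}\in\Theta$ and keeps the predictors bounded, so the Hessian is evaluated at a feasible parameter and its comparison with $\bV_t$ is valid. Everything else is a routine Taylor-plus-Cauchy--Schwarz computation.
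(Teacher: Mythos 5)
Your proposal is correct and follows essentially the same route as the paper's proof: a second-order Taylor expansion of $\ell$ around $\hat{\btheta}_{t,H}^{\top}\bphi_t(i)$, Cauchy--Schwarz in the $\bH_t(\hat{\btheta}_{t,H})$-geometry combined with $\btheta^*\in\cC_t(\delta)$ for the first-order term, the self-concordance bound $\abs{\ddot{\ell}}\le\dot{\ell}\le\tfrac14$ to get the factor $\tfrac18$, and the domination $\bH_t(\hat{\btheta}_{t,H})\succeq\kappa^{-1}\bV_t$ for the second-order term (the paper uses the integral form of the remainder rather than the Lagrange form, which is immaterial). If anything, your justification of the matrix domination via $\hat{\btheta}_{t,H}\in\cQ_t\subseteq\Theta$ is more careful than the paper's, which asserts it for arbitrary $\btheta\in\R^d$.
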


Finally, we can use the variance-adaptive upper confidence bound $\bar{\mu}_{t,i}\defeq \ell\left(\hat{\btheta}_{t,H}^{\top}\bphi_t(i)\right)+\rho_{t,H}(i)$ as our optimistic estimation of the true mean $\mu_{t,i}$. Compared with the variance-agonistic exploration bonus $\rho_{t,E}(i)$, the first term of $\rho_{t,H}(i)$ is weighted by the estimated variance $\dot{\ell}\left(\bphi_t(i)^{\top}\hat{\btheta}_{t,H}\right)$, which more aggressively explores those arms that have low variance, resulting improved regret results.

\subsection{Variance-Adaptive VA-CLogUCB Algorithm and Improved Regret Bound Under 1-norm TPM and TPVM Conditions}

% In this section, we propose a new variance-adaptive algorithm VACLogUCB (\cref{alg:VA_CLogB}) that can improve the $O(\sqrt{\kappa})$ factor and achieve $\tilde{O}(B_1d\sqrt{KT})$ regret bound for applications that satisfy the 1-norm TPM smoothness conditions.
% Furthermore, we show that \cref{alg:VA_CLogB} can further improve the $O(\sqrt{K})$ factor and achieve $\tilde{O}(B_v d\sqrt{T})$ for applications that satisfy the stronger VM/TPVM smoothness conditions.

\begin{algorithm}[t]
	\caption{VA-CLogUCB: \textbf{V}ariance-\textbf{A}daptive \textbf{C}ombinatorial \textbf{Log}isitc  \textbf{U}pper \textbf{C}onfidence \textbf{B}ound Algorithm for CLogB}\label{alg:VA_CLogB}
			\resizebox{\columnwidth}{!}{
\begin{minipage}{\columnwidth}
	\begin{algorithmic}[1]
	    \State {\textbf{Input:}} Base arms $[m]$, dimension $d$, parameter space $\Theta$, probability $\delta=\nicefrac{1}{T}$,  offline ORACLE. 
     \State Set bonus-vanishing region $\cQ_1=\Theta$.
	   % \OUTPUT Budget allocation $\bk$.
	   % \State \textbf{Initialize:} Gram matrix $\bG_{1}=\gamma\boldsymbol{I}$, vector $\bb_{1}=\boldsymbol{0}$. 
	   \For{$t=1, ...,T$ }
	   \State Compute MLE $\hat{\btheta}_{t}\defeq \argmax_{\btheta\in \R^d}\mathcal{L}_{t}(\btheta)$ according to \cref{eq:log-likelihood} with $\lambda_t=\lbdt$. \label{line:va_clogb_estimate}%\COMMENT{Compute the least-square estimator.}
	   %\State Construct the confidence set, one for decision, $C^{\btheta}_t=\{\bnu: \norm{\bnu-\hat{\btheta}_t}_{\bG_{t}}^2 \le \beta\}$.
    \State If $\hat{\btheta}_t\notin \cQ_t$, then compute projected MLE $\hat{\btheta}_{t,H} = \arg\min_{\btheta \in \cQ_t}\norm{ g_t(\btheta) - g_t(\hat{\btheta}_t) } _{\bH_t^{-1}(\btheta)}$.\label{line:proj_Q}
	   \For{$i \in [m]$}
	    \State $\bar{\mu}_{t,i}=\ell\left(\hat{\btheta}_{t,H}^{\top}\bphi_t(i)\right)+\rho_{t,H}(i)$ according to  \cref{eq:var_ad_conf_H}.\label{line:va_clogb_ucb}
	    %\COMMENT{UCB value for base arm $i$.}
	    % \State $\bar{\mu}_{t,i} =\min\{\bar{\mu}_{t,i},1\} $
	    \EndFor
	   %\State $G_t=\gamma \boldsymbol{I} + \sum_{\tau < t}\bphi(i)\bphi(i)^{\top}$.
	   %$\rho_{t,i}=\sqrt{\nicefrac{6\hat{V}_{t-1, i} \log t}{T_{t-1, i}}} + \nicefrac{9\log t}{T_{t-1, i}}$.
	   %\State Set UCB value $\bar{\mu}_{t,i}=\min \{\hat{\mu}_{t-1, i}+\rho_{t, i},1\}$.
	   %\State $S_t=\text{ORACLE}(C_t^{\btheta}, \bphi_t)$.  \qquad\qquad\qquad\qquad\qquad//$S_t=\arg\max_{S \in \cS, \bnu \in C_t^{\btheta}} r(S;\bnu, \bphi_t)$.
	   \State $S_t=\text{ORACLE}(\bar{\mu}_{t,1}, ..., \bar{\mu}_{t,m})$ as in \cref{eq:def_oracle}. %\COMMENT{Computational oracle using UCBs.}
	   \State Play $S_t$ and observe triggering arm set $\tau_t$ with outcomes $(X_{t,i})_{i\in \tau_{t}}$.
    \State Update bonus-vanishing region $\cQ_{t+1}=\cQ_t \cap_{i\in \tau_t} \left\{\btheta: \abs{\btheta^{\top}\bphi_{t}(i)}\le \sup_{\btheta \in \cA_t(\delta)}\abs{\btheta^{\top}\bphi_{t}(i)}\right\}$.\label{line:update}
	   % \For{$ i \in \tau_t$ } 
	   %\State $v_{t,i}=\max_{\bnu \in C^{v}_{t}}\inner{\bphi_t(i), \bnu}(1-\inner{\bphi_t(i), \bnu})$.
	   % \State $\bG_{t+1}=\bG_{t}+\sum_{i \in \tau_t} \bphi_{t}(i)\bphi_{t}(i)^{\top}$.\label{line:c2_G}
	   % \State $\bb_{t+1}=\bb_{t}+\sum_{i \in \tau_t} \bphi_{t}(i) X_{t,i}$.\label{line:c2_b} 
	    	   \EndFor
		\end{algorithmic}
           		\end{minipage}}
\end{algorithm}

In this section, we provide our new variance-adaptive algorithm VA-CLogUCB in \cref{alg:VA_CLogB}.
Compared with \cref{alg:CLogUCB}, VA-CLogUCB has two key differences: 
First, in \cref{line:va_clogb_ucb}, we use the variance-adaptive exploration bonus $\rho_{t,H}(i)$ where the leading $\dot{\ell}(\hat{\btheta}_{t,H}^{\top}\bphi_t(i))\norm{\bphi_t(i)}_{\bH_t^{-1}(\hat{\btheta}_{t,H})}$ term is weighted by the estimated variance $\Var[X_{t,i}]=\dot{\ell}(\hat{\btheta}_{t,H}^{\top}\bphi_t(i))$ as mentioned in \cref{rmk:variance_aware}, saving unnecessary explorations towards arms with low variance.
Second, in \cref{line:proj_Q}, we obtain the projected MLE $\hat{\btheta}_{t,H}$ by projecting the original MLE $\hat{\btheta}$ from \cref{line:va_clogb_estimate} onto the bonus-vanishing region $\cQ_t$. The intuition for the projection is that if we directly use the original MLE $\hat{\btheta}$ to replace \cref{line:va_clogb_estimate}, then the Hessian matrix $\bH_t(\hat{\btheta}_{t,H})$ could be very small, resulting in large exploration bonus $\rho_{t,H}(i)$ even at the very end of the learning process. 
To tackle this, we use the bonus-vanishing region $\cQ_t$, which is equivalent to $ \cQ_t = \left\{\btheta \in \Theta: \dot\ell(\btheta^{\top}\bphi_s(i)) \ge \min_{\btheta' \in \cC_s(\delta)}\dot{\ell}({\btheta'}^{\top}\bphi_s(i)) \text{ for all } i \in \tau_s, s\in [t]\right\}$, so that the Hessian matrix $\bH_t(\hat{\btheta}_{t,H})=\sum_{s=1}^{t-1}\sum_{i\in \tau_s}\dot{\ell}\left(\btheta^{\top}\bphi_s(i) \right)\bphi^{\top}_s(i)\bphi_s(i)+\lambda_t\bI_d$ can be lower bounded by some matrix that steadily increases over time, resulting in steadily vanishing exploration bonus.

Given the projected MLE $\hat{\btheta}_{t,H}$ and the variance-adaptive exploration bonus $\rho_{t,H}(i)$, \cref{alg:VA_CLogB} computes the UCB $\bar{\mu}_{t,i}$ for each arm $i$ in \cref{line:va_clogb_ucb}.
After computing the UCB values $\bar{\bmu}_t$, the learner selects action $S_t$ via the offline oracle with $\bar{\bmu}_t$ as input.
By playing $S_t$, the base arms in $\tau_t$ are triggered, and the learner receives observation set $(X_{t,i})_{i\in \tau_{t}}$ as feedback to improve decisions for future rounds.

Now, we give the improved regret bound that removes the $\kappa$ dependence under the 1-norm TPM smoothness condition (\cref{cond:TPM}). 
\begin{theorem}\label{thm:var_ad_thm1}
     For a CLogB instance that satisfies monotonicity (Condition~\ref{cond:mono}) and 1-norm TPM smoothness (Condition~\ref{cond:TPM}) with coefficient $B_1$, VA-CLogUCB (\Cref{alg:VA_CLogB}) with an $\alpha$-approximation oracle achieves an $\alpha$-approximate regret bounded by $ O\left(B_1 d\sqrt{KT}\log (KT)+ B_1 \kappa d^2\log ^2(KT)\right).$
\end{theorem}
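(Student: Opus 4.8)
The plan is to follow the same regret-decomposition skeleton used in the proof idea of \cref{thm:var_ag_thm}, but now with the sharper variance-adaptive bonus $\rho_{t,H}(i)$ from \cref{lem:var_ad_bonus}. Writing $\E_t[\cdot]\defeq\E[\cdot\mid\cH_t]$ and working on the high-probability event $\{\forall t,\ \btheta^*\in\cC_t(\delta)\}$ (whose complement contributes only $O(1)$ to the regret since $\delta=1/T$), I would first bound the instantaneous regret. By monotonicity (\cref{cond:mono}), the UCB property $\bar{\bmu}_t\ge\bmu_t$ of \cref{lem:var_ad_bonus}, and the fact that $S_t$ is an $\alpha$-approximate maximizer for the optimistic means $\bar{\bmu}_t$, we obtain
\begin{align*}
\E_t[\alpha\cdot r(S^*_t;\bmu_t)-r(S_t;\bmu_t)]\le\E_t[r(S_t;\bar{\bmu}_t)-r(S_t;\bmu_t)].
\end{align*}
Applying the 1-norm TPM condition (\cref{cond:TPM}), then the identity $p_i^{\bmu_t,S_t}=\E_t[\I\{i\in\tau_t\}]$ and the bound $\bar{\mu}_{t,i}-\mu_{t,i}\le 2\rho_{t,H}(i)$ from \cref{lem:var_ad_bonus}, the instantaneous regret is at most $\E_t[\sum_{i\in\tau_t}2B_1\rho_{t,H}(i)]$. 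Summing over $t\in[T]$ and taking total expectation yields $\text{Reg}(T)\le 2B_1\,\E[\sum_{t\in[T]}\sum_{i\in\tau_t}\rho_{t,H}(i)]$.

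The core of the proof is to bound $\sum_{t}\sum_{i\in\tau_t}\rho_{t,H}(i)$ by splitting $\rho_{t,H}(i)$ into its two summands. For the leading variance-adaptive term $\sigma_t(\delta)\dot{\ell}(\bphi_t(i)^{\top}\hat{\btheta}_{t,H})\|\bphi_t(i)\|_{\bH_t^{-1}(\hat{\btheta}_{t,H})}$, write $w_{t,i}\defeq\dot{\ell}(\bphi_t(i)^{\top}\hat{\btheta}_{t,H})$ and apply Cauchy--Schwarz over the index pair $(t,i)$:
\begin{align*}
\sum_{t}\sum_{i\in\tau_t}\sigma_t(\delta)\,w_{t,i}\,\|\bphi_t(i)\|_{\bH_t^{-1}(\hat{\btheta}_{t,H})}\le\sigma_T(\delta)\sqrt{\sum_{t}\sum_{i\in\tau_t}w_{t,i}}\;\sqrt{\sum_{t}\sum_{i\in\tau_t}w_{t,i}\|\bphi_t(i)\|^2_{\bH_t^{-1}(\hat{\btheta}_{t,H})}}.
\end{align*}
The first radical is $O(\sqrt{KT})$ because $\dot{\ell}\le 1/4$ and at most $K$ arms are triggered each round; the second radical is a weighted elliptical potential $\sum_{t,i\in\tau_t}\|\sqrt{w_{t,i}}\bphi_t(i)\|^2_{\bH_t^{-1}(\hat{\btheta}_{t,H})}$, which I would bound by $\tilde{O}(d)$ via the elliptical potential lemma (\cref{apdx_lem:elliptical_potential}). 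Since $\sigma_T(\delta)=\tilde{O}(\sqrt d)$, the leading term is $\tilde{O}(d\sqrt{KT})$, contributing $B_1 d\sqrt{KT}\log(KT)$.

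For the correction term $\tfrac18\kappa\sigma_t^2(\delta)\|\bphi_t(i)\|^2_{\bV_t^{-1}}$, I would directly invoke the elliptical potential lemma on $\bV_t=\sum_{s<t}\sum_{i\in\tau_s}\bphi_s(i)\bphi_s(i)^{\top}+\kappa\lambda_t\bI_d$ to get $\sum_{t,i\in\tau_t}\|\bphi_t(i)\|^2_{\bV_t^{-1}}=\tilde{O}(d)$; together with $\sigma_T^2(\delta)=\tilde O(d)$ this contributes $\tilde{O}(\kappa d^2)$, i.e.\ $B_1\kappa d^2\log^2(KT)$. Adding the two pieces gives the claimed $O(B_1 d\sqrt{KT}\log(KT)+B_1\kappa d^2\log^2(KT))$.

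The main obstacle is the subtlety that $\bH_t(\hat{\btheta}_{t,H})$ re-weights \emph{all} past covariates by derivatives evaluated at the round-$t$ estimate $\hat{\btheta}_{t,H}$, whereas the elliptical potential lemma requires a matrix that accumulates a \emph{fixed} family of vectors whose weights do not change with $t$. This is precisely what the projection onto the bonus-vanishing region $\cQ_t$ in \cref{line:proj_Q} resolves: by construction $\cQ_t$ forces $\dot{\ell}(\hat{\btheta}_{t,H}^{\top}\bphi_s(i))\ge\min_{\btheta'\in\cC_s(\delta)}\dot{\ell}({\btheta'}^{\top}\bphi_s(i))$ for every previously triggered $(s,i)$, which—under the confidence-region event $\btheta^*\in\cC_s(\delta)$ and the self-concordance of $\ell$—lower-bounds each past weight by a constant multiple of the \emph{fixed} true variance $\dot{\ell}({\btheta^*}^{\top}\bphi_s(i))$. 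Hence $\bH_t(\hat{\btheta}_{t,H})\succeq c\,\tilde{\bH}_t$ for the steadily growing fixed-weight Hessian $\tilde{\bH}_t=\sum_{s<t}\sum_{i\in\tau_s}\dot\ell({\btheta^*}^{\top}\bphi_s(i))\bphi_s(i)\bphi_s(i)^{\top}+(\lambda_t/c)\bI_d$, letting me transfer the weighted elliptical potential bound to a genuinely accumulating matrix and guaranteeing the bonus vanishes at a steady rate. The remaining care is relating the current-round estimated variance $w_{t,i}$ to the true variance through self-concordance; the residual gap is exactly what the second-order $\kappa\sigma_t^2(\delta)\|\bphi_t(i)\|^2_{\bV_t^{-1}}$ term of the bonus absorbs, which is why the two parts of the regret bound are coupled.
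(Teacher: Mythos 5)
Your proposal is correct and follows essentially the same route as the paper's proof: the same monotonicity/oracle/TPM decomposition, the same split of $\rho_{t,H}$ into a variance-weighted leading term and a $\kappa\|\bphi_t(i)\|^2_{\bV_t^{-1}}$ correction, the same use of the bonus-vanishing region to lower-bound $\bH_t(\hat{\btheta}_{t,H})$ by a fixed-weight matrix so that the elliptical potential lemma applies, and the same absorption of the current-round variance mismatch into the $\kappa d^2$ term via a first-order Taylor/self-concordance argument. The one substantive difference is your choice of comparison matrix: you take $\tilde{\bH}_t$ weighted by the true variances $\dot{\ell}({\btheta^*}^{\top}\bphi_s(i))$ and invoke self-concordance to relate the $\cQ_t$-guaranteed lower bound $\min_{\btheta'\in\cA_s(\delta)}\dot{\ell}({\btheta'}^{\top}\bphi_s(i))$ to $\dot{\ell}({\btheta^*}^{\top}\bphi_s(i))$, which costs a factor $e^{-2L}$ (since the deviation $|\bphi_s(i)^{\top}(\btheta'-\btheta^*)|$ can only be bounded by $2L$ in general); the paper instead defines $\bL_t$ directly with the per-sample minimizer weights $\dot{\ell}(\tilde{\btheta}_{s,i}^{\top}\bphi_s(i))$, so that $\bH_t(\hat{\btheta}_{t,H})\succeq\bL_t$ holds with no constant loss and the accumulated vectors match the current-round weights exactly. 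Your variant still yields the claimed $O(B_1 d\sqrt{KT}\log(KT)+B_1\kappa d^2\log^2(KT))$ since the $O(\cdot)$ suppresses $L$-dependence, but it is exponentially worse in $L$ in the hidden constant, whereas the paper keeps all constants polynomial in $L$.
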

Under the stronger 1-norm TPVM condition (\cref{cond:TPVMm}), we can further improve the regret guarantee by removing the batch-size dependence $K$. The detailed proofs of \cref{thm:var_ad_thm1} and \cref{thm:var_ad_thm2} are in \cref{apdx_sec:proof_thm_ad}.
\begin{theorem}\label{thm:var_ad_thm2}
    For a CLogB instance that satisfies monotonicity (Condition~\ref{cond:mono}) and the TPVM smoothness (Condition~\ref{cond:TPVMm}) with coefficient $(B_v,B_1, \lambda)$, if $\lambda \ge 1$, then VA-CLogUCB with an $\alpha$-approximation oracle achieves an $\alpha$-approximate regret bounded by $ O\left(B_v d\sqrt{T}\log (KT)+ B_1 \kappa d^2\log ^2(KT)\right)$.
\end{theorem}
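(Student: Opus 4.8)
The plan is to follow the same optimism-and-decomposition skeleton as in the proof of \cref{thm:var_ag_thm}, but to route the per-round error through the \emph{variance-modulated} TPVM condition (\cref{cond:TPVMm}) rather than the 1-norm TPM condition, so that the leading term inherits the variance weighting that ultimately cancels the $\kappa$ and $\sqrt{K}$ factors. First I would condition on the good event $\{\forall t\ge 1,\ \btheta^*\in\cC_t(\delta)\}$ of \cref{lem:var_ad_conf}, which holds with probability $1-\delta$; with $\delta=\nicefrac{1}{T}$ the complementary event contributes only an $O(1)$ additive term since the per-round reward is bounded. On this event, monotonicity (\cref{cond:mono}), the optimism guarantee $\bar\bmu_t\ge\bmu_t$ of \cref{lem:var_ad_bonus}, and $S_t$ being an $\alpha$-approximation reduce the instantaneous regret to $\E_t[r(S_t;\bar\bmu_t)-r(S_t;\bmu_t)]$, where $\E_t\defeq\E[\cdot\mid\cH_t]$. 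Writing $\bar\bmu_t-\bmu_t=\boldzeta_t+\boldeta_t$, I would assign the variance-weighted first term of $2\rho_{t,H}(i)$, namely $\zeta_{t,i}=2\sigma_t(\delta)\dot\ell(\bphi_t(i)^\top\hat\btheta_{t,H})\norm{\bphi_t(i)}_{\bH_t^{-1}(\hat\btheta_{t,H})}$, to $\boldzeta_t$, and the quadratic second term $\eta_{t,i}=\tfrac14\kappa\sigma_t^2(\delta)\norm{\bphi_t(i)}^2_{\bV_t^{-1}}$ to $\boldeta_t$; \cref{lem:var_ad_bonus} guarantees $0\le\bar\mu_{t,i}-\mu_{t,i}\le\zeta_{t,i}+\eta_{t,i}$, so \cref{cond:TPVMm} applies.

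The heart of the argument is the variance term $B_v\sqrt{\sum_i (p_i^{\bmu_t,S_t})^\lambda\,\zeta_{t,i}^2/((1-\mu_{t,i})\mu_{t,i})}$. Using $\lambda\ge1$ and $p_i^{\bmu_t,S_t}\le1$, I would replace $(p_i^{\bmu_t,S_t})^\lambda$ by $p_i^{\bmu_t,S_t}$. Then, since $(1-\mu_{t,i})\mu_{t,i}=\dot\ell(\btheta^{*\top}\bphi_t(i))$ (the Bernoulli variance identity of \cref{rmk:variance_aware}), the ratio $\zeta_{t,i}^2/((1-\mu_{t,i})\mu_{t,i})$ equals $4\sigma_t^2(\delta)\,[\dot\ell(\bphi_t(i)^\top\hat\btheta_{t,H})^2/\dot\ell(\btheta^{*\top}\bphi_t(i))]\,\norm{\bphi_t(i)}^2_{\bH_t^{-1}(\hat\btheta_{t,H})}$. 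By self-concordance together with the confidence region, $\dot\ell(\bphi_t(i)^\top\hat\btheta_{t,H})$ and $\dot\ell(\btheta^{*\top}\bphi_t(i))$ are within a constant factor whenever the local bonus $\norm{\bphi_t(i)}_{\bH_t^{-1}(\hat\btheta_{t,H})}$ is small, so this ratio is $O(\dot\ell(\bphi_t(i)^\top\hat\btheta_{t,H}))$. Substituting $p_i^{\bmu_t,S_t}=\E_t[\I\{i\in\tau_t\}]$ converts the sum into $\E_t[\sum_{i\in\tau_t}\dot\ell(\bphi_t(i)^\top\hat\btheta_{t,H})\norm{\bphi_t(i)}^2_{\bH_t^{-1}(\hat\btheta_{t,H})}]$ up to constants and $\sigma_t^2(\delta)$. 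Summing over $t$, Cauchy--Schwarz pulls out a $\sqrt{T}$ and Jensen moves the expectation inside the square root, leaving $\E[\sum_t\sum_{i\in\tau_t}\dot\ell(\bphi_t(i)^\top\hat\btheta_{t,H})\norm{\bphi_t(i)}^2_{\bH_t^{-1}(\hat\btheta_{t,H})}]$, which a variance-weighted form of the elliptical potential lemma bounds by $O(d\log(KT))$. With $\sigma_T(\delta)=O(\sqrt{d\log(KT)})$ this yields the leading term $O(B_v d\sqrt{T}\log(KT))$.

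The remaining $\boldeta_t$ contribution $B_1\sum_i p_i^{\bmu_t,S_t}\eta_{t,i}$ is handled directly: substituting $\eta_{t,i}$ and $p_i^{\bmu_t,S_t}=\E_t[\I\{i\in\tau_t\}]$, summing over $t$, and applying the unweighted elliptical potential lemma (\cref{apdx_lem:elliptical_potential}) to $\sum_t\sum_{i\in\tau_t}\norm{\bphi_t(i)}^2_{\bV_t^{-1}}=O(d\log(KT))$ gives $O(B_1\kappa d^2\log^2(KT))$; this same term also absorbs the early rounds where the self-concordance ratio above is not yet $O(1)$. Collecting the two contributions proves the bound of \cref{thm:var_ad_thm2}. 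I expect the main obstacle to be the variance-weighted elliptical potential step: because $\bH_t(\hat\btheta_{t,H})$ is evaluated at the \emph{current}-round projected estimator while aggregating feedback from all past rounds, its per-observation weights $\dot\ell(\hat\btheta_{t,H}^\top\bphi_s(i))$ drift with $t$, and one must exploit the bonus-vanishing region $\cQ_t$ of \cref{eq:Q_region}---which forces $\dot\ell(\hat\btheta_{t,H}^\top\bphi_s(i))\ge\min_{\btheta'\in\cC_s(\delta)}\dot\ell(\btheta'^\top\bphi_s(i))$ for every triggered pair $(s,i)$---to lower-bound $\bH_t(\hat\btheta_{t,H})$ by a monotonically growing reference matrix on which the potential argument is valid, while simultaneously matching the current-round numerator $\dot\ell(\bphi_t(i)^\top\hat\btheta_{t,H})$ to that reference weight.
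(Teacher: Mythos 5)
Your proposal is correct and follows essentially the same route as the paper's proof: optimism plus monotonicity, splitting the bonus $2\rho_{t,H}$ into the variance-weighted part $\boldzeta$ and the $\kappa$-quadratic part $\boldeta$ before invoking the TPVM condition, using $\lambda\ge 1$ and the identity $(1-\mu_{t,i})\mu_{t,i}=\dot{\ell}({\btheta^*}^{\top}\bphi_t(i))$ to cancel one power of the derivative, then Cauchy--Schwarz, Jensen, and a weighted elliptical potential. The only cosmetic difference is ordering: the paper first replaces $\dot{\ell}(\hat{\btheta}_{t,H}^{\top}\bphi_t(i))$ by $\dot{\ell}(\tilde{\btheta}_{t,i}^{\top}\bphi_t(i))$ with $\tilde{\btheta}_{t,i}=\argmin_{\btheta\in\cA_t(\delta)}\dot{\ell}(\bphi_t(i)^{\top}\btheta)$ (pushing the correction into the $\kappa$-term) and uses $\cQ_t$ to get $\bH_t(\hat{\btheta}_{t,H})\succeq\bL_t$, which is exactly the resolution of the "main obstacle" you correctly identified.
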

\textbf{Discussion.} 
In \cref{thm:var_ad_thm1}, the leading regret term is $\tilde{O}(B_1 d\sqrt{KT})$. 
Compared with the variance-agnostic CLogUCB algorithm (\cref{alg:CLogUCB}), the VA-CLogUCB algorithm improves the leading regret of CLogUCB up to a factor of $\tilde{O}(\sqrt{\kappa})$.
Looking at \cref{thm:var_ad_thm2}, the leading regret is $\tilde{O}(B_v d\sqrt{T})$, which is totally independent of the nonlinearity level $\kappa$ and the action-size $K$.
% Furthermore, this is strictly better than \cref{thm:var_ad_thm1}. This is because, for any application that satisfies the 1-norm TPM condition with coefficient $B_1$, it satisfies the TPVM condition with $(\sqrt{K}B_1, B_1, \lambda)$ as discussed in \cref{rmk:connect_TPM_TPVM}.
When $B_v,B_1=O(1)$, VA-CLogUCB improves the leading regret of CLogUCB up to a factor of $\tilde{O}(\sqrt{\kappa K})$.
Consider the cascading bandit application in \cref{sec:app_ranking} where $B_v=B_1=1$, VA-CLogUCB achieves a regret bound of $\tilde{O}(d\sqrt{T}+\kappa d^2)$, which improves the result of \citet{li2018online} up to a factor of $
\tilde{O}(\kappa \sqrt{K})$.
Consider the degenerate case when $K=1$, we have $B_v=B_1=1$ and thus VA-CLogUCB matches the lower bound in~\cite{abeille2021instance} up to a factor of $\tilde{O}(\sqrt{\kappa})$.

\rev{For the dependency on $\kappa$, in the degenerate LogB problem with $K=1$, \citet{abeille2021instance} provide a lower bound of $\tilde{O}(d\sqrt{T/\kappa})$. Intuitively, as $T \to \infty$, the algorithm predominantly selects arms near the optimal arm $i^* = \argmax_{i \in [m]} \ell({\btheta^*}^{\top} \bphi(i))$, where the feature vector $\bphi(i^*)$ lies in the top-right region of \cref{fig:logistic}. When $\kappa$ is large, as shown in \cref{fig:logistic}, the reward curve around $i^*$ becomes relatively flat (with slope $\kappa^{-1}$). Therefore, the regret from selecting suboptimal arms near $i^*$ scales proportionally with $\kappa^{-1}$. 
For our variance-adaptive algorithms, VA-CLogUCB and EVA-CLogUCB, the current dependency on $\kappa$ for the leading regret term is $O(1)$, rather than the expected $\tilde{O}(\sqrt{1/\kappa})$. We hypothesize that this is primarily due to a technical artifact in the analysis. For instance, in \cref{apdx_eq:not_tight_1} to \cref{apdx_eq:not_tight_2} of our current proof, we take a step that upper-bounds $\dot{\ell}\left(\tilde{\btheta}_{t,i}^{\top} \bphi_t(i)\right)$ by $\sqrt{\dot{\ell}\left(\tilde{\btheta}_{t,i}^{\top} \bphi_t(i)\right)}$ for each base arm, potentially discarding a factor of the order $O(\sqrt{1/\kappa})$. However, this term $\sqrt{\dot{\ell}\left(\tilde{\btheta}_{t,i}^{\top} \bphi_t(i)\right)}$ could be $O(1)$, particularly in the early time slots. If we could fully exploit this term in the analysis, it might lead to a tighter regret upper bound with a dependency of $O(\sqrt{1/\kappa})$. This remains a challenging direction for future research.}

\rev{For the dependency on $K$, while it is true that with larger $K$, we gather more observations, allowing for more accurate estimation of each arm (since MLE is more accurate), the regret incurred by selecting any single base arm should theoretically decrease. However, in combinatorial bandits where $K$ base arms are selected and assume the reward function is $r(S; \bmu) = \sum_{i \in S} \mu_i$, the total regret per round is the summation of the regret from all $K$ base arms, which can be $K$ times larger. Consequently, even though individual arm accuracy improves with larger $K$, the total regret does not necessarily decrease. This is further validated by the lower bound of $\Omega(\sqrt{mKT})$ for classical combinatorial multi-armed bandits with a linear reward function, as shown in \citet{kveton2015tight}.}

As for the per-round computation cost, \cref{alg:VA_CLogB} contains four parts: (1) computing MLE takes $\tilde{O}(dK^2T^2)$ as discussed in \cref{rmk:MLE_cost}, (2) computing the nonconvex projection takes $T_{\text{nc}}$ if $\hat{\btheta}_{t,H} \notin \cQ_t$ (3) computing the Hessian matrix $\bH_t(\hat{\btheta}_{t,H})$ takes $O(d^2KT)$ and computing UCB for all $m$ base arms takes $O(d^3 +d^2m)$, (4) the $\alpha$-approximation oracle takes $T_{\alpha}$. So the overall per-round cost is $O(dK^2T^2 + T_{\text{nc}}+T_{\alpha})$, which has an additional $T_{\text{nc}}$ compared to CLogUCB algorithm (\cref{alg:CLogUCB}).

\begin{proof}[Proof idea]
For \cref{thm:var_ad_thm1}, we use the proof of \cref{thm:var_ag_thm}: $\text{Reg}(T)\le \E[\sum_{t\in [T]}\sum_{i\in \tau_t}2B_1 \rho_{t,H}]$ $\lesssim\E\left[\sum_{t \in [T]}\sum_{i \in \tau_t}B_1\left( \sigma_t(\delta)\dot{\ell}\left(\hat{\btheta}_{t,H}^{\top}\bphi_t(i)\right)\|\bphi_t(i)\|_{\mathbf{H}_t^{-1}(\hat{\btheta}_{t,H})} +\kappa \sigma_t^2(\delta)\|\bphi_t(i)\|_{\mathbf{V}_t^{-1}}^2\right)\right] $. The main challenge is that ${\mathbf{H}_t^{-1}(\hat{\btheta}_{t,H})}$ could be very large but thanks to that fact that $\hat{\btheta}_{t,H}\in \cQ_t$, where $\cQ_t$ is the bonus-vanishing region in \cref{eq:Q_region}, we have the lower bound of ${\mathbf{H}_t^{-1}(\hat{\btheta}_{t,H})} \gtrsim {\mathbf{H}_t^{-1}(\tilde{\btheta}_{t,i})}$, where  $\tilde{\btheta}_{t,i}\defeq \arg\min_{\btheta \in \mathcal{A}_t(\delta)}\dot{\ell}(\bphi_t(i)^{\top}\btheta)$. Let $\tilde{\bphi}_t(i)\defeq\dot{\ell}^{1/2}\left(\tilde{\btheta}_{t,i}^\top\bphi_{t}(i)\right)\bphi_t(i)$ and $\bL_t\defeq\sum_{s=1}^{t-1}\sum_{i\in\tau_s}\tilde{\bphi}_s(i)\tilde{\bphi}_s(i)^{\top}+\lambda_t \mathbf{I}_d$, we have $\text{Reg}(T)\lesssim\E\left[\sum_{t \in [T]}\sum_{i \in \tau_t}B_1\left( \sigma_t(\delta)\|\tilde{\bphi}_t(i)\|_{\mathbf{L}_t^{-1}} +\kappa \sigma_t^2(\delta)\|\bphi_t(i)\|_{\mathbf{V}_t^{-1}}^2\right)\right] $. We conclude the \cref{thm:var_ad_thm1}  by $\sum_{t \in [T]}\sum_{i \in \tau_t}\norm{\tilde{\bphi}_t(i)}_{\bL_{t}^{-1}}=\tilde{O}(\sqrt{dKT})$ using the elliptical potential lemma and $\sigma_T(\delta)=\tilde{O}({\sqrt{d}})$ by \cref{lem:var_ad_bonus} (saving a $\sqrt{\kappa}$ factor compared to $\beta_t(\delta)$ in \cref{thm:var_ag_thm}). 

For \cref{thm:var_ad_thm2}, we use the stronger TPVM condition (\cref{cond:TPVMm}) and obtain $\text{Reg}(T)\lesssim \sum_{t=1}^T B_v\sigma_t(\delta) \sqrt{\E\left[\sum_{i \in [\tau_t]} \frac{\dot{\ell}^2\left( \bphi_t(i)^{\top} \tilde{\btheta}_{t,i} \right)} {\mu_{t,i}(1-\mu_{t,i})} \norm{\bphi_t(i)}^2 _{\bL_t^{-1}}\right]}$ $+ \E\left[\sum_{t\in [T]}\sum_{i\in [\tau_t]} B_1 \kappa \sigma_t^2(\delta) \norm{\bphi_t(i)} ^2  _{\bV_t^{-1}}\right]$. Since the variance $\Var[X_{t,i}]=\mu_{t,i}(1-\mu_{t,i})=\dot{\ell}\left(\bphi_t(i)^{\top}\btheta^*\right)$ and given the fact $\btheta^*\in \cQ_t$ meaning $\dot{\ell}\left(\bphi_t(i)^{\top}\btheta^*\right)\ge \dot{\ell} \left(\bphi_t(i)^{\top}\tilde{\btheta}_{t,i}\right)$, we have $\text{Reg}(T)\lesssim  B_v\sqrt{T}\sigma_t(\delta) \sqrt{\E\left[\sum_{t=1}^T\sum_{i \in [\tau_t]} \|\tilde{\bphi}_t(i)\|^2 _{\bL_t^{-1}}\right]}+\\ \E\left[\sum_{t\in [T]}\sum_{i\in [\tau_t]} B_1 \kappa \sigma_t^2(\delta) \norm{\bphi_t(i)} ^2  _{\bV_t^{-1}}\right]$. The proof is concluded by $\sum_{t \in [T]}\sum_{i \in \tau_t}\norm{\tilde{\bphi}_t(i)}^2_{\bL_{t}^{-1}}=\tilde{O}(\sqrt{d})$ using the elliptical potential lemma (\Cref{apdx_lem:elliptical_potential}) and $\sigma_T(\delta)=\tilde{O}({\sqrt{d}})$ by \cref{lem:var_ad_bonus}. 
\end{proof}

\section{Improving Computation Efficiency for {VA-CL\lowercase{og}UCB} Algorithm}\label{sec:eva}
In this section, we consider the CLogB-TI setting where the feature maps $\bphi_t=\bphi$ are time-invariant. 
For this setting, we are able to modify the variance-adaptive algorithm by introducing a burn-in stage that restricts the learning space around $\btheta^*$ whose reward sensitivity is bounded.
After this burn-in stage, we can remove the time-consuming nonconvex projection in the previous section, improving the computation efficiency while maintaining the tight regret bound. The proposed EVA-CLogUCB algorithm is provided in \cref{alg:EVA_CLogB}.

\begin{algorithm}[t]
	\caption{EVA-CLogUCB: \textbf{E}fficient \textbf{V}ariance-\textbf{A}daptive \textbf{C}ombinatorial \textbf{Log}isitc  \textbf{U}pper \textbf{C}onfidence \textbf{B}ound Algorithm for CLogB}\label{alg:EVA_CLogB}
			\resizebox{1.\columnwidth}{!}{
\begin{minipage}{\columnwidth}
	\begin{algorithmic}[1]
	    \State {\textbf{Input:}} Base arms $[m]$, dimension $d$, parameter space $\Theta$, time-invariant feature map $\bphi$, probability $\delta=\nicefrac{1}{T}$, offline ORACLE.
     \State \textbf{Initialize:} $T_0=\left(4L^2+16L+19\right)^2 \kappa d^2 \log^2\left( \nicefrac{4(2+T)}{\delta}\right)$, $\lambda_{0}=d \log\left( \nicefrac{4(2+T_0)}{\delta}\right)$.
     \For{$t= 1,..., T_0$} \Comment{Burn-in stage}\label{line:burn-in_start}
    \State Compute $\bV_{t}=\sum_{s=1}^{t-1}\bphi(i_s) \bphi(i_s)^{\top} + \kappa\lambda_0 \bI_d$. \label{line:eva_Vt}
     \State Play $S_t \in \cS$ such that $i_t\in S_t$ where $i_t=\argmax_{i\in [m]} \norm{\bphi(i)}_{\bV_{t}^{-1}}$ and observe $X_{t,i_t}$.\label{line:pure_explore}
     \EndFor
         \State Compute $\bV_{T_0+1}=\sum_{s=1}^{T_0}\bphi(i_s) \bphi(i_s)^{\top} + \kappa\lambda_0 \bI_d$. \label{line:compute_V_T0}
     \State Compute $\hat{\btheta}_{T_0+1}=\argmin_{\btheta\in \R^d}-\sum_{s=1}^{T_0}\left[X_{s,i_s} \log \ell\left(\btheta^{\top}\bphi(i_s) \right)+\left(1-X_{s,i_s}\right)\right. 
\left. \log \left(1-\ell\left(\btheta^{\top}\bphi(i_s) \right)\right)\right]+\nicefrac{\lambda_0}{2}\|\btheta\|_2^2$.\label{line:compute_MLE_T0}
     \State Construct nonlinearity-restricted region $\cQ=\left\{\btheta\in \R^d: \norm{\btheta-\hat{\btheta}_{T_0+1}}_{\bV_{T_0+1}}\le \left(L^2+4L+\nicefrac{19}{4}\right)\sqrt{\kappa\lambda_{0}} \right\}$. \label{line:nonlinear_region}
    % \State Set $D=\max_{\btheta\in \cQ, i,j\in[m]}\abs{\btheta^T(\bphi_t(i)-\bphi_t(j))}$, $\bW_{T_0+1}=\lambda_t\bI$, $\cC_{T_0+1}(\delta)=\cQ$. \Comment{$D\le 1$ when $\kappa T_0= O(\kappa d^2 \log T)$, $\dot{\ell}(\btheta_1^{\top}\bphi_t(i))\le \dot{\ell}
     % (\btheta_2^{\top}\bphi_t(i))\exp(D)$ for any $\btheta_1, \btheta_2 \in \cQ$, and $\btheta^*\in \cQ$}.
	   \For{$t=T_0+1, ...,T$ } \Comment{Learning stage}
    \State $\hat{\btheta}_{t}\defeq \argmax_{\btheta\in \cQ}\mathcal{L}_{t}(\btheta)$ in \cref{eq:log-likelihood} with $\lambda_t=\lbdt$. \label{line:constrained_opt}
	   \For{$i \in [m]$}
    % \State $\rho_{t,E}(\bphi_t(i),\hat{\btheta}_{t})=\dot{\ell}(\hat{\btheta}_t^{\top}\bphi_t(i))\gamma'_t(\delta)\norm{\bphi_t(i)}_{\bH_t^{-1}(\hat{\btheta}_t)}+\gamma'_t(\delta)^2\norm{\bphi_t(i)}_{\bV_t^{-1}}^2$
	    \State $\bar{\mu}_{t,i}=\ell\left(\hat{\btheta}_{t}^{\top}\bphi_t(i)\right)+\rho_{t,E}(\bphi_t(i),\hat{\btheta}_{t,E})$ according to \cref{eq:eva_conf_H}. \label{line:eva_ucb}
	    %\COMMENT{UCB value for base arm $i$.}
	    % \State $\bar{\mu}_{t,i} =\min\{\bar{\mu}_{t,i},1\} $
	    \EndFor
	   \State $S_t=\text{ORACLE}(\bar{\mu}_{t,1}, ..., \bar{\mu}_{t,m})$ as in \cref{eq:def_oracle}.\label{line:eva_oracle} %\COMMENT{Computational oracle using UCBs.}
	   \State Play $S_t$ and observe triggering arm set $\tau_t$ with outcomes $(X_{t,i})_{i\in \tau_{t}}$.\label{line:eva_obs}
     % \label{line:c2mab_estimate}%\COMMENT{Compute the least-square estimator.}
	    	   \EndFor
		\end{algorithmic}
               		\end{minipage}}
\end{algorithm}

\subsection{Burn-In Stage}

To remove nonconvex projection that is NP-hard to solve, we introduce a burn-in stage of length $T_0$ that produces a nonlinearity-restricted region $\cQ$ in Lines~\ref{line:burn-in_start}-\ref{line:nonlinear_region}. 
Specifically, in each round $t=1, ..., T_0$, the learner selects any super arm $S_t$ that contains the arm $i_t$ that has the largest uncertainty in \cref{line:pure_explore}. 
Then at the end of $T_0$, we can compute the covariance matrix $\bV_{T_0+1}$ in \cref{line:compute_V_T0} and the MLE $\hat{\btheta}_{T_0+1}$ of the data from these $T_0$ rounds. 
Based on $\bV_{T_0+1}$ and $\hat{\btheta}_{T_0+1}$, we can construct a nonlinearity-restricted region in \cref{line:nonlinear_region} that has the following property.

\begin{lemma}[Nonlinearity-restricted region]\label{lem:Q_region}
Let $T_0=\left(4L^2+16L+19\right)^2 \kappa d^2 \log^2\left( \nicefrac{4(2+T)}{\delta}\right)$. The nonlinearity-restricted region
% \begin{align}
    $\cQ=\left\{\btheta\in \R^d: \norm{\btheta-\hat{\btheta}_{T_0+1}}_{\bV_{T_0+1}}^2\le \left(L^2+4L+\nicefrac{19}{4}\right)\sqrt{\kappa d \log\left( \nicefrac{4(2+T_0)}{\delta}\right)}\right\}$
% \end{align}
is a confidence region such that with probability at least $1-\delta$, we have $\btheta^* \in \cQ$ and
% \begin{align}
    % $\Pr\left(\btheta^* \in \cQ\right)\ge 1-\delta$
% \end{align}
the diameter $\text{diam}(\cQ)\defeq\max_{\btheta_1, \btheta_2 \in \cQ, i\in[m]}\abs{\bphi(i)^{\top}(\btheta_1-\btheta_2)}$ is bounded, i.e., 
% \begin{align}
    $\text{diam}(\cQ) \le 1$.
% \end{align}
\end{lemma}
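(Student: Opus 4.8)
The statement has two independent parts, and I would dispatch them separately. For the \emph{confidence region} claim, the cleanest route is to notice that the burn-in stage pulls exactly one arm $i_t$ per round, so the data $(\bphi(i_s),X_{s,i_s})_{s\le T_0}$ together with $\hat{\btheta}_{T_0+1}$ and $\bV_{T_0+1}$ form precisely the single-arm (effectively $K=1$) instance of the variance-agnostic confidence region of \cref{lem:var_ag_conf}. Evaluating the radius $\beta_{T_0+1}(\delta)$ there gives $(L^2+4L+\nicefrac{19}{4})\sqrt{\kappa d\log(\nicefrac{4(2+T_0)}{\delta})}=(L^2+4L+\nicefrac{19}{4})\sqrt{\kappa\lambda_0}$, which is exactly the radius defining $\cQ$ in \cref{line:nonlinear_region}. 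Since dropping the $\Theta$ constraint only enlarges the set, $\cB_{T_0+1}(\delta)\subseteq\cQ$, so $\Pr(\btheta^*\in\cQ)\ge 1-\delta$ follows at once with no new concentration argument.

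For the \emph{diameter} bound I would first reduce the geometric question to a bound on the worst-case uncertainty. For any $\btheta_1,\btheta_2\in\cQ$ and any $i$, Cauchy--Schwarz in the $\bV_{T_0+1}$-inner product together with the triangle inequality yields
\begin{align}
\abs{\bphi(i)^{\top}(\btheta_1-\btheta_2)} \le \norm{\bphi(i)}_{\bV_{T_0+1}^{-1}}\norm{\btheta_1-\btheta_2}_{\bV_{T_0+1}} \le 2\left(L^2+4L+\tfrac{19}{4}\right)\sqrt{\kappa\lambda_0}\,\max_{j\in[m]}\norm{\bphi(j)}_{\bV_{T_0+1}^{-1}}.
\end{align}
It therefore suffices to prove $\max_j\norm{\bphi(j)}_{\bV_{T_0+1}^{-1}}^2 \le \bigl(4(L^2+4L+\tfrac{19}{4})^2\kappa\lambda_0\bigr)^{-1}$, i.e.\ that the burn-in has driven the largest remaining uncertainty below a prescribed threshold.

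The heart of the argument, and the step I expect to be the main obstacle, is converting the greedy exploration rule into a bound on this \emph{final} worst-case uncertainty. Since $\bV_{T_0+1}\succeq\bV_t$ for every $t\le T_0$, we have $\bV_{T_0+1}^{-1}\preceq\bV_t^{-1}$, hence $\max_j\norm{\bphi(j)}_{\bV_{T_0+1}^{-1}}^2\le\max_j\norm{\bphi(j)}_{\bV_t^{-1}}^2=\norm{\bphi(i_t)}_{\bV_t^{-1}}^2$, where the last equality is exactly the greedy choice $i_t=\argmax_j\norm{\bphi(j)}_{\bV_t^{-1}}$. Averaging over $t=1,\dots,T_0$ and invoking the elliptical potential lemma (\cref{apdx_lem:elliptical_potential}) --- the truncation is inactive because $\bV_t\succeq\kappa\lambda_0\bI_d$ and $\norm{\bphi(i_t)}_2\le1$ give $\norm{\bphi(i_t)}_{\bV_t^{-1}}^2\le\tfrac{1}{\kappa\lambda_0}\le1$ --- yields
\begin{align}
\max_j\norm{\bphi(j)}_{\bV_{T_0+1}^{-1}}^2 \le \frac{1}{T_0}\sum_{t=1}^{T_0}\norm{\bphi(i_t)}_{\bV_t^{-1}}^2 \le \frac{2d\log(1+T_0)}{T_0}.
\end{align}
The subtlety is that the elliptical potential lemma controls only the \emph{sum} of per-round uncertainties; it is the PSD monotonicity of $\bV_t^{-1}$ \emph{combined} with the greedy pull that converts this into a uniform bound on the final maximum over all $m$ arms.

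It then remains to check that the prescribed $T_0$ makes the two displays fit together. Substituting $\lambda_0=d\log(\nicefrac{4(2+T_0)}{\delta})$, the sufficient condition reads $T_0\ge 8(L^2+4L+\tfrac{19}{4})^2\kappa d^2\log(\nicefrac{4(2+T_0)}{\delta})\log(1+T_0)$. Writing $4L^2+16L+19=4(L^2+4L+\tfrac{19}{4})$, the chosen value $T_0=(4L^2+16L+19)^2\kappa d^2\log^2(\nicefrac{4(2+T)}{\delta})=16(L^2+4L+\tfrac{19}{4})^2\kappa d^2\log^2(\nicefrac{4(2+T)}{\delta})$ carries a factor $16$ against the required $8$; since $T_0\le T$ in the regime of interest gives $\log(1+T_0)\le\log(\nicefrac{4(2+T)}{\delta})$ and $\log(\nicefrac{4(2+T_0)}{\delta})\le\log(\nicefrac{4(2+T)}{\delta})$, the inequality holds with room to spare. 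Chaining the two displays then gives $\text{diam}(\cQ)\le1$, completing the plan.
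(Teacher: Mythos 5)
Your proposal is correct and follows essentially the same route as the paper's proof: part one reduces the confidence claim to the variance-agnostic region of \cref{lem:var_ag_conf} specialized to the single-arm burn-in data, and part two chains Cauchy--Schwarz and the triangle inequality, uses the greedy choice $i_t=\argmax_j\norm{\bphi(j)}_{\bV_t^{-1}}$ together with $\bV_{T_0+1}\succeq\bV_t$ to dominate the final worst-case uncertainty by the average per-round uncertainty, and closes with the elliptical potential lemma and the choice of $T_0$. The only differences are cosmetic (you carry squared norms where the paper carries unsquared norms plus one extra Cauchy--Schwarz, and you write the potential bound as $2d\log(1+T_0)$ rather than $2d\log(\lambda_{T_0+1}+T_0)$, which does not affect the final accounting).
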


At a high level, the above lemma ensures that $\cQ$ contains the true parameter $\btheta^*$. Additionally, for any two feasible parameters $\btheta_1,\btheta_2 \in \cQ$, their corresponding reward sensitivity for any arm $i$ is bounded by each other, i.e., $\dot{\ell}\left(\btheta_1^{\top}\bphi(i)\right)\le \exp(1) \dot{\ell}\left(\btheta_2^{\top}\bphi(i)\right)$. Therefore, we can restrict our learning space to $\cQ$ where any $\btheta\in \cQ$ has bounded first-order derivative $e^{-1}\dot{\ell}({\btheta^*}^\top \bphi_t(i))\le \dot{\ell}(\btheta^\top \bphi_t(i)) \le  e\dot{\ell}({\btheta^*}^\top \bphi_t(i))$. That is, any Hessian matrix $\bH_t(\btheta)\succeq e^{-1}\bH_t(\btheta^*)$ for $\btheta \in \cQ$, which guarantees the shrinkage of the exploration bonus without the nonconvex projection in VA-CLogUCB algorithm.

\subsection{Learning Stage}\label{sec:eva_conf_bonus}
After the burn-in stage, we can replace the projected MLE $\hat{\btheta}_{t,H}$ that involves a nonconvex projection with the constrained MLE $\hat{\btheta}_{t,E}$ that only involves the \textit{convex} projection in~\cref{line:constrained_opt}. 
For this constrained MLE, we have the following variance-adaptive confidence region. The proof of \cref{lem:eva_conf} is in \cref{apdx_sec:proof_eva_conf}.

\begin{lemma}[Variance-adaptive confidence region after the burn-in stage]\label{lem:eva_conf}
    Let $\delta \in (0,1]$ and set the confidence radius $\nu_t(\delta):=3(L+\nicefrac{3}{2})\sqrt{\lbdt}$. The following region
    \begin{equation}\textstyle
\mathcal{D}_t(\delta):=\left\{\btheta \in \Theta: \left\|\btheta-\hat{\btheta}_{t,E}\right\|_{\mathbf{\bH}_{\mathrm{t}}(\btheta)} \leq \nu_t(\delta)\right\}, 
% \mathcal{B}_t(\delta):=\left\{\btheta \in \Theta \mid\left\|\bg_t(\btheta)-\bg_t\left(\hat{\btheta}_t\right)\right\|_{\mathbf{\bV}_{\mathrm{t}}^{-1}} \leq \beta_t(\delta)\right\}, 
\end{equation}
 is an anytime confidence region after the burn-in stage for $\btheta^*$ with probability at least $1-\delta$, i.e., 
    \begin{align}
        \Pr\left(\forall t\ge T_0+1, \btheta^* \in \mathcal{D}_t(\delta)\right)\ge 1-\delta.
    \end{align}
\end{lemma}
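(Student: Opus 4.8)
The plan is to show that the true parameter lies in $\mathcal{D}_t(\delta)$, i.e.\ that $\norm{\btheta^*-\hat{\btheta}_{t,E}}_{\bH_t(\btheta^*)}\le \nu_t(\delta)$ for all $t\ge T_0+1$, since evaluating the matrix norm in the definition of $\mathcal{D}_t(\delta)$ at the point $\btheta=\btheta^*$ reduces membership to exactly this inequality. I would work on the intersection of the anytime MLE concentration event of \cref{lem:est_dist} and the event $\btheta^*\in\cQ$ of \cref{lem:Q_region}, which can be made to hold simultaneously with probability at least $1-\delta$ (absorbing a constant factor into $\delta$ if needed). The starting point is the first-order optimality of the \emph{convex}-constrained MLE $\hat{\btheta}_{t,E}=\argmin_{\btheta\in\cQ}\mathcal{L}_t(\btheta)$: since $\btheta^*\in\cQ$ and $\cQ$ is convex, $\inner{\nabla\mathcal{L}_t(\hat{\btheta}_{t,E}),\,\btheta^*-\hat{\btheta}_{t,E}}\ge 0$. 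Writing $\nabla\mathcal{L}_t(\btheta)=\bg_t(\btheta)-\sum_{s}\sum_{i\in\tau_s}X_{s,i}\bphi_s(i)=\bg_t(\btheta)-\bg_t(\hat{\btheta}_t)$, where $\hat{\btheta}_t$ is the (analysis-only) unconstrained MLE satisfying $\bg_t(\hat{\btheta}_t)=\sum_{s}\sum_{i\in\tau_s}X_{s,i}\bphi_s(i)$, this optimality condition becomes $\inner{\bg_t(\hat{\btheta}_{t,E})-\bg_t(\hat{\btheta}_t),\,\btheta^*-\hat{\btheta}_{t,E}}\ge 0$.

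Next I would linearize $\bg_t$ via the integral mean-value identity $\bg_t(\btheta_1)-\bg_t(\btheta_2)=\tilde{\bH}_t(\btheta_1,\btheta_2)(\btheta_1-\btheta_2)$, where $\tilde{\bH}_t(\btheta_1,\btheta_2)\defeq\int_0^1\bH_t(\btheta_2+v(\btheta_1-\btheta_2))\,dv$. Setting $\Delta\defeq\hat{\btheta}_{t,E}-\btheta^*$ and $\bar{\bH}\defeq\tilde{\bH}_t(\hat{\btheta}_{t,E},\btheta^*)$, so that $\bg_t(\hat{\btheta}_{t,E})-\bg_t(\btheta^*)=\bar{\bH}\Delta$, the optimality inequality rearranges to $\Delta^\top\bar{\bH}\Delta\le\inner{\bg_t(\hat{\btheta}_t)-\bg_t(\btheta^*),\,\Delta}$. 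Applying the generalized Cauchy--Schwarz inequality with respect to $\bH_t(\btheta^*)$ and then \cref{lem:est_dist} gives $\inner{\bg_t(\hat{\btheta}_t)-\bg_t(\btheta^*),\,\Delta}\le\norm{\bg_t(\hat{\btheta}_t)-\bg_t(\btheta^*)}_{\bH_t^{-1}(\btheta^*)}\norm{\Delta}_{\bH_t(\btheta^*)}\le\gamma_t(\delta)\norm{\Delta}_{\bH_t(\btheta^*)}$, where crucially the regularization term $\lambda_t\btheta$ inside $\bg_t$ is already folded into the concentration radius $\gamma_t(\delta)$.

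The decisive step, and the one the burn-in stage is designed to enable, is the lower bound on $\Delta^\top\bar{\bH}\Delta$. Because $\btheta^*,\hat{\btheta}_{t,E}\in\cQ$ and $\cQ$ is convex, every point $\btheta^*+v\Delta$ of the integration segment lies in $\cQ$, so the diameter bound $\text{diam}(\cQ)\le 1$ of \cref{lem:Q_region} applies: for any arm $i$ and any $v\in[0,1]$, $\abs{\bphi(i)^\top(\btheta^*+v\Delta)-{\btheta^*}^\top\bphi(i)}\le 1$. Combining this with the self-concordance property $\abs{\ddot{\ell}(x)}\le\dot{\ell}(x)$, equivalently $\abs{\tfrac{d}{dx}\log\dot{\ell}(x)}\le 1$, yields $\dot{\ell}\big((\btheta^*+v\Delta)^\top\bphi(i)\big)\ge e^{-1}\dot{\ell}({\btheta^*}^\top\bphi(i))$ for every integrand, whence $\bar{\bH}\succeq e^{-1}\bH_t(\btheta^*)$ and $\Delta^\top\bar{\bH}\Delta\ge e^{-1}\norm{\Delta}^2_{\bH_t(\btheta^*)}$. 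Chaining the two bounds gives $e^{-1}\norm{\Delta}^2_{\bH_t(\btheta^*)}\le\gamma_t(\delta)\norm{\Delta}_{\bH_t(\btheta^*)}$, i.e.\ $\norm{\Delta}_{\bH_t(\btheta^*)}\le e\,\gamma_t(\delta)\le 3\gamma_t(\delta)=\nu_t(\delta)$, using $e<3$ and $\nu_t(\delta)=3\gamma_t(\delta)$. This establishes $\btheta^*\in\mathcal{D}_t(\delta)$ for every $t\ge T_0+1$ on the good event.

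I expect the main obstacle to be rigorously justifying the self-concordance comparison $\bar{\bH}\succeq e^{-1}\bH_t(\btheta^*)$, which rests on two ingredients: the convexity of $\cQ$, so the whole integration segment remains inside $\cQ$, and the diameter bound of \cref{lem:Q_region}, which is precisely the output of the burn-in design and the reason the constant in $\nu_t(\delta)$ is inflated from the $(L+\tfrac32)$ of $\gamma_t(\delta)$ to $3(L+\tfrac32)$. A secondary point requiring care is the probabilistic bookkeeping ensuring \cref{lem:est_dist} and \cref{lem:Q_region} hold jointly with the stated confidence; this is what lets the otherwise deterministic argument above go through uniformly over $t\ge T_0+1$ without the nonconvex projection of \Cref{alg:VA_CLogB}.
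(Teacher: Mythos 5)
Your proof is correct and follows essentially the same route as the paper's: both exploit the optimality of the convex-constrained MLE $\hat{\btheta}_{t,E}$ over $\cQ\ni\btheta^*$, use the diameter bound $\text{diam}(\cQ)\le 1$ from \cref{lem:Q_region} together with self-concordance to compare the integrated Hessian along the segment $[\btheta^*,\hat{\btheta}_{t,E}]$ with $\bH_t(\btheta^*)$, and finish with Cauchy--Schwarz plus \cref{lem:est_dist}. The only (immaterial) difference is that you invoke the first-order variational inequality $\inner{\nabla\mathcal{L}_t(\hat{\btheta}_{t,E}),\btheta^*-\hat{\btheta}_{t,E}}\ge 0$ with the mean-value form of $\bg_t$, whereas the paper uses $\mathcal{L}_t(\hat{\btheta}_{t,E})\le\mathcal{L}_t(\btheta^*)$ with a second-order Taylor expansion of the loss, yielding the factor $(2+D)\le 3$ in place of your $e\le 3$; both land on the same radius $\nu_t(\delta)=3\gamma_t(\delta)$.
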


Based on the above confidence region, we can now construct our variance-adaptive exploration bonus $\rho_{t,E}(i)$ as follows. The proof of \cref{lem:eva_bonus} is in \cref{apdx_sec:proof_eva_bonus}.
\begin{lemma}[Variance-adaptive exploration bonus after the burn-in stage]\label{lem:eva_bonus}
    Let $\mathcal{D}_t(\delta)$ be the confidence region with the confidence radius $\nu_t(\delta)$ as defined in \cref{lem:eva_conf}. 
    Let the exploration bonus be
    \begin{align}\label{eq:eva_conf_H}
        \rho_{t,E}(i)\defeq \sqrt{e}\dot{\ell}(\hat{\btheta}_{t,E}^{\top}\bphi(i))\nu_t(\delta)\norm{\bphi(i)}_{\bH_t^{-1}(\hat{\btheta}_{t,E})}+\frac{1}{8} \kappa \nu_t^2 (\delta) \norm{\bphi(i)}^2_{\bV_t^{-1}}.
    \end{align} 
    Under the event $\left\{\forall t\ge T_0+1, \btheta^* \in \mathcal{D}_t(\delta)\right\}$, it holds that, for any $i\in[m], t\ge T_0+1$,
    % \begin{align}
    %     \abs{\ell\left({\btheta^*}^{\top}\bphi(i)\right)-\ell\left(\hat{\btheta}_{t,E}^{\top}\bphi(i)\right)}\le \rho_{t,E}(i),
    % \end{align}
    % and as a result,
    \begin{align}
        \ell\left({\btheta^*}^{\top}\bphi(i)\right)\le \ell\left(\hat{\btheta}_{t,E}^{\top}\bphi(i)\right)+\rho_{t,E}(i)\le \ell\left({\btheta^*}^{\top}\bphi(i)\right)+2\rho_{t,E}(i).
    \end{align}
\end{lemma}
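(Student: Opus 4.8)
The plan is to establish the single two-sided bound $\abs{\ell({\btheta^*}^{\top}\bphi(i)) - \ell(\hat{\btheta}_{t,E}^{\top}\bphi(i))} \le \rho_{t,E}(i)$, since this yields both displayed inequalities at once (the first is $\ell({\btheta^*}^{\top}\bphi(i)) - \ell(\hat{\btheta}_{t,E}^{\top}\bphi(i)) \le \rho_{t,E}(i)$, and the second rearranges to $\ell(\hat{\btheta}_{t,E}^{\top}\bphi(i)) - \ell({\btheta^*}^{\top}\bphi(i)) \le \rho_{t,E}(i)$). Writing $x^{*} = {\btheta^*}^{\top}\bphi(i)$ and $\hat{x} = \hat{\btheta}_{t,E}^{\top}\bphi(i)$, I would start from a second-order Taylor expansion of $\ell$ about $\hat{x}$ with Lagrange remainder: for some $\bar{x}$ between $x^{*}$ and $\hat{x}$,
\[
\abs{\ell(x^{*}) - \ell(\hat{x})} \le \dot{\ell}(\hat{x})\,\abs{x^{*}-\hat{x}} + \tfrac{1}{2}\abs{\ddot{\ell}(\bar{x})}\,(x^{*}-\hat{x})^{2},
\]
and then match the two summands to the two terms of $\rho_{t,E}(i)$.

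For the first-order term, I would apply Cauchy--Schwarz in the $\bH_t({\btheta^*})$-inner product together with the confidence region of \cref{lem:eva_conf}, which under the good event gives $\norm{{\btheta^*}-\hat{\btheta}_{t,E}}_{\bH_t({\btheta^*})} \le \nu_t(\delta)$, to obtain $\abs{x^{*}-\hat{x}} = \abs{({\btheta^*}-\hat{\btheta}_{t,E})^{\top}\bphi(i)} \le \nu_t(\delta)\,\norm{\bphi(i)}_{\bH_t^{-1}({\btheta^*})}$. The bonus, however, is written in $\bH_t^{-1}(\hat{\btheta}_{t,E})$, and this is exactly where the burn-in stage enters. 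Since $\hat{\btheta}_{t,E} \in \cQ$ (it is the constrained minimizer in \cref{line:constrained_opt}), \cref{lem:Q_region} gives the uniform bound $\dot{\ell}(\hat{\btheta}_{t,E}^{\top}\bphi(j)) \le e\,\dot{\ell}({\btheta^*}^{\top}\bphi(j))$ for every arm $j$, so term-by-term (and using $\lambda_t \le e\lambda_t$ for the regularizer) $\bH_t(\hat{\btheta}_{t,E}) \preceq e\,\bH_t({\btheta^*})$, hence $\bH_t^{-1}({\btheta^*}) \preceq e\,\bH_t^{-1}(\hat{\btheta}_{t,E})$ and $\norm{\bphi(i)}_{\bH_t^{-1}({\btheta^*})} \le \sqrt{e}\,\norm{\bphi(i)}_{\bH_t^{-1}(\hat{\btheta}_{t,E})}$. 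Multiplying by $\dot{\ell}(\hat{x})$ recovers exactly $\sqrt{e}\,\dot{\ell}(\hat{\btheta}_{t,E}^{\top}\bphi(i))\,\nu_t(\delta)\,\norm{\bphi(i)}_{\bH_t^{-1}(\hat{\btheta}_{t,E})}$, the first term of $\rho_{t,E}(i)$, and explains why the $\sqrt{e}$ factor appears here but not in \cref{lem:var_ad_bonus}.

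For the second-order term, I would use self-concordance $\abs{\ddot{\ell}(\bar{x})} \le \dot{\ell}(\bar{x}) \le \tfrac{1}{4}$ to get $\tfrac{1}{2}\abs{\ddot{\ell}(\bar{x})}(x^{*}-\hat{x})^{2} \le \tfrac{1}{8}(x^{*}-\hat{x})^{2}$, then reuse the bound $(x^{*}-\hat{x})^{2} \le \nu_t^{2}(\delta)\,\norm{\bphi(i)}^{2}_{\bH_t^{-1}({\btheta^*})}$ and convert the Hessian norm to the covariance norm via \cref{cond:arm_nonlinear}: since $\dot{\ell}({\btheta^*}^{\top}\bphi(j)) \ge 1/\kappa$ and $\bV_t = \sum_{s}\sum_{j\in\tau_s}\bphi(j)\bphi(j)^{\top} + \kappa\lambda_t\bI_d$, one checks $\bH_t({\btheta^*}) \succeq \tfrac{1}{\kappa}\bV_t$, so $\bH_t^{-1}({\btheta^*}) \preceq \kappa\,\bV_t^{-1}$ and $(x^{*}-\hat{x})^{2} \le \kappa\nu_t^{2}(\delta)\,\norm{\bphi(i)}^{2}_{\bV_t^{-1}}$. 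This gives the second term $\tfrac{1}{8}\kappa\nu_t^{2}(\delta)\,\norm{\bphi(i)}^{2}_{\bV_t^{-1}}$, and summing the two estimates yields $\abs{\ell(x^{*})-\ell(\hat{x})} \le \rho_{t,E}(i)$.

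The step I expect to be the main obstacle is the Hessian-comparison $\bH_t(\hat{\btheta}_{t,E}) \preceq e\,\bH_t({\btheta^*})$ used to pass from $\bH_t^{-1}({\btheta^*})$ to $\bH_t^{-1}(\hat{\btheta}_{t,E})$: in \cref{lem:var_ad_bonus} this control is bought by the explicit (nonconvex) projection onto $\cQ_t$, whereas here it must come entirely from the uniform two-sided derivative bound guaranteed by the burn-in stage (\cref{lem:Q_region}). I would verify carefully that $\hat{\btheta}_{t,E}$ indeed lies in $\cQ$ so that \cref{lem:Q_region} applies, and that the constant $e$ is exactly what the $\sqrt{e}$ in the bonus is designed to absorb, so that no further slack is introduced.
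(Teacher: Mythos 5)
Your proposal is correct and follows essentially the same route as the paper's proof: a second-order self-concordant Taylor expansion, Cauchy--Schwarz in the $\bH_t(\btheta^*)$-norm combined with the confidence region of \cref{lem:eva_conf}, the comparison $\bH_t(\hat{\btheta}_{t,E}) \preceq e\,\bH_t(\btheta^*)$ obtained from $\hat{\btheta}_{t,E},\btheta^*\in\cQ$ and $\text{diam}(\cQ)\le 1$ (via the self-concordance bound $\dot{\ell}(x)\le\dot{\ell}(y)e^{|x-y|}$, which is the one small ingredient you implicitly use on top of \cref{lem:Q_region}), and $\bH_t(\btheta^*)\succeq\kappa^{-1}\bV_t$ for the quadratic term. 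The constants work out exactly as you predict, with $\sqrt{e}$ absorbing the Hessian swap.
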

Next we can use the variance-agnostic upper confidence bound $\bar{\mu}_{t,i}\defeq \ell\left(\hat{\btheta}_{t,E}^{\top}\bphi(i)\right)+\rho_{t,E}(i)$ as our optimistic estimation of the true mean $\mu_{t,i}$ in \cref{line:eva_ucb}. Then in \cref{line:eva_oracle}, the learner selects action $S_t$ via the offline oracle with $\bar{\bmu}_t$ as input.
By playing $S_t$, the base arms in $\tau_t$ are triggered, and the learner receives observation set $(X_{t,i})_{i\in \tau_{t}}$ as feedback.

\subsection{Regret Bound with Improved Computational Efficiency}
We now give the regret bound under for CLogB with a time-invariant feature map (CLogB-TI).

\begin{theorem}\label{thm:eva_thm1}
     For a CLogB-TI instance that satisfies monotonicity (Condition~\ref{cond:mono}) and 1-norm TPM smoothness (Condition~\ref{cond:TPM}) with coefficient $B_1$, EVA-CLogUCB (\Cref{alg:EVA_CLogB}) with an $\alpha$-approximation oracle achieves an $\alpha$-approximate regret bounded by 
     $ O\left(B_1 d\sqrt{KT}\log (KT)+ B_1 \kappa K d^2\log ^2(T)\right).$
     For a CLogB-TI instance that satisfies monotonicity (Condition~\ref{cond:mono}) and the TPVM smoothness (Condition~\ref{cond:TPVMm}) with coefficient $(B_v,B_1, \lambda)$, if $\lambda \ge 1$, then EVA-CLogUCB (\Cref{alg:EVA_CLogB}) with an $\alpha$-approximation oracle achieves an $\alpha$-approximate regret bounded by $ O\left(B_v d\sqrt{T}\log (KT)+ B_1 \kappa K d^2\log ^2(T)\right).$
\end{theorem}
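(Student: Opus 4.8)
The plan is to decompose the $T$-round regret into the burn-in stage (rounds $1,\dots,T_0$) and the learning stage (rounds $T_0+1,\dots,T$), bound each separately, and recognize that the learning-stage analysis mirrors the proof ideas of \cref{thm:var_ad_thm1} and \cref{thm:var_ad_thm2}, now powered by the \emph{convex} region $\cQ$ instead of the nonconvex bonus-vanishing region $\cQ_t$. First I would handle the burn-in stage crudely: under \cref{cond:TPM} the optimal expected reward satisfies $r(S;\bmu)\le B_1 K$ for any $S$ (each of the at most $K$ triggerable arms contributes at most $B_1$), so each of the $T_0$ burn-in rounds incurs at most $O(B_1 K)$ regret. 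Since $T_0=O(\kappa d^2\log^2(T/\delta))$ by \cref{lem:Q_region} and $\delta=\nicefrac1T$, the total burn-in regret is $O(B_1\kappa K d^2\log^2 T)$, which matches the claimed second-order term and is precisely where the extra $K$ factor (absent in \cref{thm:var_ad_thm1}) originates.

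For the learning stage I would condition on the high-probability events $\{\btheta^*\in\cQ\}$ from \cref{lem:Q_region} and $\{\forall t\ge T_0+1,\ \btheta^*\in\cD_t(\delta)\}$ from \cref{lem:eva_conf}. Using monotonicity, optimism from \cref{lem:eva_bonus}, and the $\alpha$-approximation property exactly as in the proof idea of \cref{thm:var_ag_thm}, the instantaneous regret is bounded by $\E_t[\sum_{i\in\tau_t}2B_1\rho_{t,E}(i)]$ in the 1-norm TPM case and by the variance-weighted TPVM analog otherwise. The decisive structural input is the diameter bound $\text{diam}(\cQ)\le 1$: since both $\hat{\btheta}_{t,E}$ and $\btheta^*$ lie in $\cQ$, self-concordance gives $e^{-1}\dot{\ell}({\btheta^*}^\top\bphi(i))\le\dot{\ell}(\hat{\btheta}_{t,E}^\top\bphi(i))\le e\,\dot{\ell}({\btheta^*}^\top\bphi(i))$, hence $\bH_t(\hat{\btheta}_{t,E})\succeq e^{-1}\bL_t$ where $\bL_t\defeq\sum_{s=1}^{t-1}\sum_{i\in\tau_s}\tilde{\bphi}(i)\tilde{\bphi}(i)^\top+\lambda_t\bI_d$ with $\tilde{\bphi}(i)\defeq\dot{\ell}^{1/2}({\btheta^*}^\top\bphi(i))\bphi(i)$. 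This plays exactly the role that the nonconvex projection onto $\cQ_t$ played in VA-CLogUCB, guaranteeing that the variance-weighted bonus term shrinks against the steadily growing matrix $\bL_t$.

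I would then close the sums via the elliptical potential lemma (\cref{apdx_lem:elliptical_potential}) together with $\nu_T(\delta)=\tilde{O}(\sqrt{d})$ from \cref{lem:eva_conf}. In the 1-norm TPM case, $\sum_{t>T_0}\sum_{i\in\tau_t}\|\tilde{\bphi}(i)\|_{\bL_t^{-1}}=\tilde{O}(\sqrt{dKT})$ yields the leading term $O(B_1 d\sqrt{KT}\log(KT))$, while the quadratic bonus term $\tfrac18\kappa\nu_t^2\|\bphi(i)\|^2_{\bV_t^{-1}}$ sums to $O(B_1\kappa d^2\log^2 T)$ and is absorbed into the burn-in term. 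In the TPVM case, since $\Var[X_{t,i}]=\dot{\ell}({\btheta^*}^\top\bphi(i))$ and $\btheta^*\in\cQ$ ensures $\dot{\ell}({\btheta^*}^\top\bphi(i))\ge e^{-1}\dot{\ell}(\hat{\btheta}_{t,E}^\top\bphi(i))$, Cauchy--Schwarz together with $\sum_{t>T_0}\sum_{i\in\tau_t}\|\tilde{\bphi}(i)\|^2_{\bL_t^{-1}}=\tilde{O}(d)$ gives the leading term $O(B_v d\sqrt{T}\log(KT))$, exactly paralleling the proof idea of \cref{thm:var_ad_thm2}. Summing the burn-in and learning contributions then produces both claimed bounds.

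The hard part will be justifying that the cheap convex constrained MLE $\hat{\btheta}_{t,E}$ over $\cQ$ inherits the bonus-shrinkage guarantee that VA-CLogUCB obtained only through the NP-hard projection onto the nonconvex set $\cQ_t$. This is precisely what the diameter bound of \cref{lem:Q_region} buys: it converts the per-arm, per-round variance control previously enforced by $\tilde{\btheta}_{t,i}=\arg\min_{\btheta\in\cA_t(\delta)}\dot{\ell}(\bphi_t(i)^\top\btheta)$ into a single uniform $e$-factor comparability of Hessians over all of $\cQ$. The delicate quantitative point is calibrating $T_0$ so that $\text{diam}(\cQ)\le 1$ holds (making the burn-in long enough to pin down the local nonlinearity) while keeping $T_0=O(\kappa d^2\log^2 T)$ (short enough that its $O(B_1\kappa K d^2\log^2 T)$ regret remains a genuine lower-order term); verifying this balance, and that the quadratic bonus term never dominates the leading $\sqrt{T}$ contribution, is where I expect the analysis to be most delicate.
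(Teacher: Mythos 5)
Your proposal matches the paper's proof essentially step for step: the same burn-in/learning-stage decomposition with the burn-in bounded by $r_{\max}T_0 \le B_1 K T_0$ producing the extra $K$ factor, the same conditioning on $\{\btheta^*\in\cQ\}\cap\{\text{diam}(\cQ)\le 1\}$ and $\{\forall t\ge T_0+1,\ \btheta^*\in\cD_t(\delta)\}$, the same use of self-concordance to get the $e$-factor comparability $e^{-1}\bH_t(\btheta^*)\preceq\bH_t(\hat{\btheta}_{t,E})\preceq e\,\bH_t(\btheta^*)$ (your $\bL_t$ is exactly $\bH_t(\btheta^*)$), and the same closing via the elliptical potential lemma in both the TPM and TPVM cases. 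The argument is correct and no further comment is needed.
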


\textbf{Discussion.} 
Looking at \cref{thm:eva_thm1}, the leading regrets are $\tilde{O}(B_1 d\sqrt{KT})$ and $\tilde{O}(B_v d\sqrt{T})$, matching the leading regret of VA-CLogUCB. For the lower order terms, EVA-CLogUCB has an additional $\tilde{O}(K)$ factor compared with VA-CLogUCB.
As for the per-round computation cost, \cref{alg:EVA_CLogB} contains two stages. For the burn-in stage, the main computational cost is computing the MLE in \cref{line:compute_MLE_T0}, which takes $O(dK^2T^2)$ as discussed in \cref{rmk:MLE_cost}. But since it is computed only once in $T_0$, the per-round cost is $O(dK^2 T)$. For the learning stage, it contains three parts: (1) computing the MLE over the ellipsoidal region $\cQ$ still takes $\tilde{O}(dK^2T^2)$ since in each projected gradient descent iteration (and there are $\tilde{O}(KT)$ iterations), it takes $O(dKT)$ to compute the gradient and $\tilde{O}(d^3)$ to project back the updated variable back to $\cQ$ by solving a one-dimensional convex problem [Lemma 13 in \cite{faury2022jointly}], (2) computing the Hessian matrix $\bH_t(\hat{\btheta}_{t,H})$ takes $O(d^2KT)$ and computing UCB for all $m$ base arms takes $O(d^3 + d^2m)$, (3) the $\alpha$-approximation oracle takes $T_{\alpha}$. So the overall per-round cost is $\tilde{O}(dK^2T^2 +T_{\alpha})$, removing the additional $T_{\text{nc}}$ compared to VA-CLogUCB algorithm (\cref{alg:VA_CLogB}).

\begin{proof}[Proof idea]
    The proof consists of two parts: the regret caused by the burn-in stage $T_0$ and the learning stage after $T_0$. For the burn-in stage, the regret is bounded by $\tilde{O}(B_1\kappa d^2 K)$ since $T_0=\tilde{O}(\kappa d^2)$ and the maximum regret incurred in each round is $\alpha r(S_t^*;\bmu_t)\le B_1 K$. For the learning stage after the burn-in stage, we can restrict our learning space to $\cQ$ in \cref{eq:Q_region} where any $\btheta\in \cQ$ has bounded first-order derivative $e^{-1}\dot{\ell}({\btheta^*}^\top \bphi_t(i))\le \dot{\ell}(\btheta^\top \bphi_t(i)) \le  e\dot{\ell}({\btheta^*}^\top \bphi_t(i))$. Since $\hat{\btheta}_{t,E}\in \cQ$, we have both the lower and upper bound $e^{-1}\bH_t({\btheta^*}) \le \bH_t(\hat{\btheta}_{t,E})\le e\bH_t({\btheta^*})$, which can replace the critical lower bound ${\mathbf{H}_t^{-1}(\hat{\btheta}_{t,H})} \gtrsim {\mathbf{H}_t^{-1}(\tilde{\btheta}_{t,i})}$ by ${\mathbf{H}_t^{-1}(\hat{\btheta}_{t,H})} \gtrsim {\mathbf{H}_t^{-1}({\btheta^*})}$, effectively maintaining the shrinkage of the exploration bonus without using the nonconvex projection. Applying this trick to the rest of the proofs of VA-CLogUCB yields the desired regret bounds. The detailed proof is in \cref{apdx_sec:proof_thm_eva}.
\end{proof}

\section{Experiments}
\label{sec:simulations}

In this section, we conduct experiments to evaluate the performance of our proposed algorithms on both synthetic and real-world datasets. We compare our algorithms CLogUCB, VA-CLogUCB, and EVA-CLogUCB with four baselines: VAC$^2$-UCB \cite{liu2023contextual}, the state-of-the-art variance-adaptive linear contextual algorithm for C$^2$AMB; C$^3$UCB~\cite{li2016contextual}, the state-of-the-art contextual combinatorial bandit algorithm that is not variance-adaptive; $\epsilon$-greedy, which selects a random action with fixed probability $\epsilon$ for exploration (set at $\epsilon=0.2$ for all tests), and otherwise chooses the empirically optimal action selection but without the exploratory component; and CUCB~\cite{chen2016combinatorial}, the state-of-the-art non-parametric algorithm for CMAB. All experiments are averaged over five trials and performed on a desktop with an Apple M3 Pro processor and 18 GB of RAM. The code is accessible at the following link: https://github.com/xiangxdai/Combinatorial-Logistic-Bandit.

\begin{figure*}[t]
    \centering
    \begin{subfigure}[t]{0.245\textwidth}
    \includegraphics[width=0.99\textwidth]{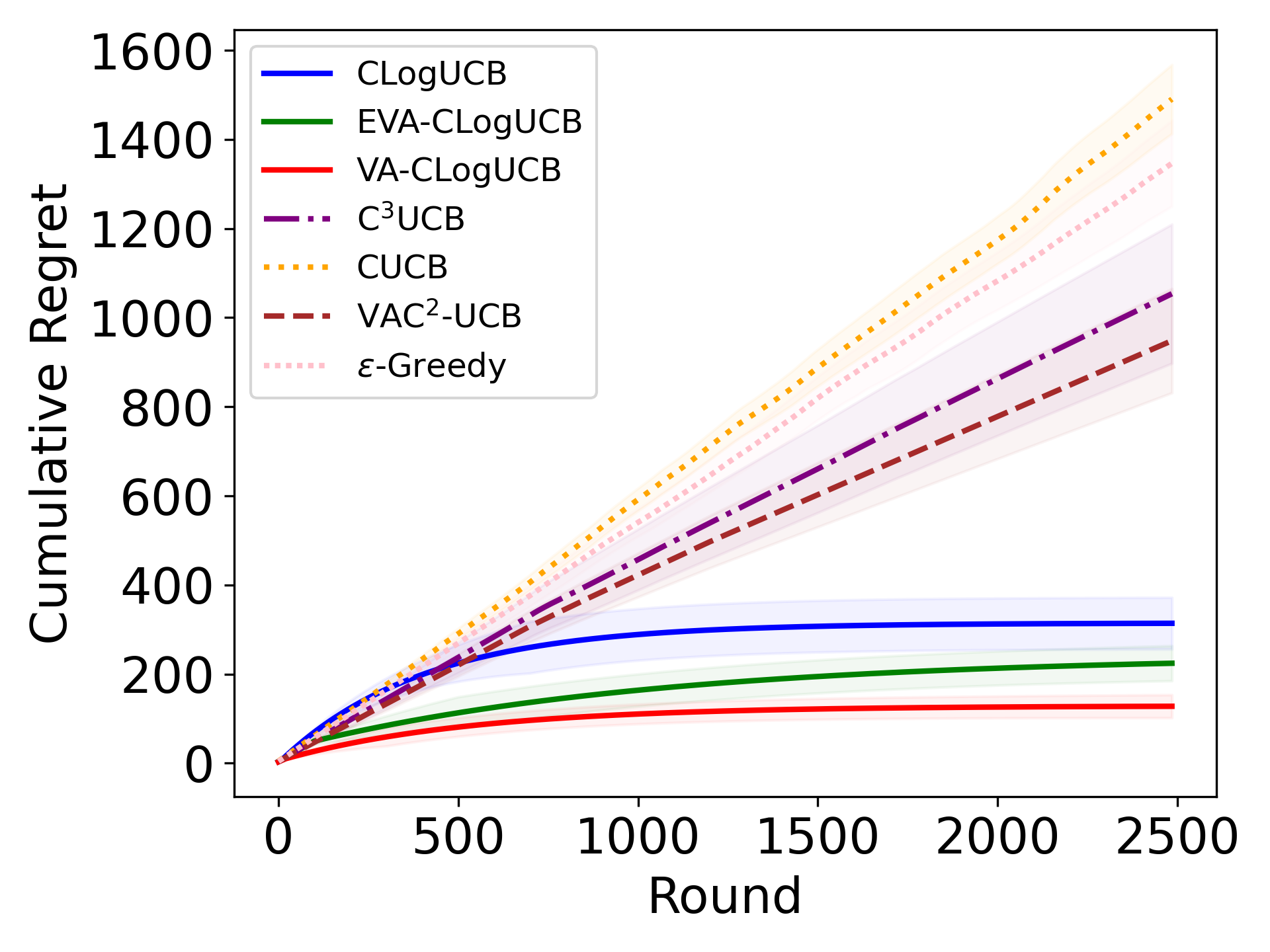}
    \caption{Cumulative Regret on Cascading Bandits}
    \label{fig:regret_cascading1}
    \end{subfigure}
    \begin{subfigure}[t]{0.245\textwidth}
    \includegraphics[width=0.99\textwidth]{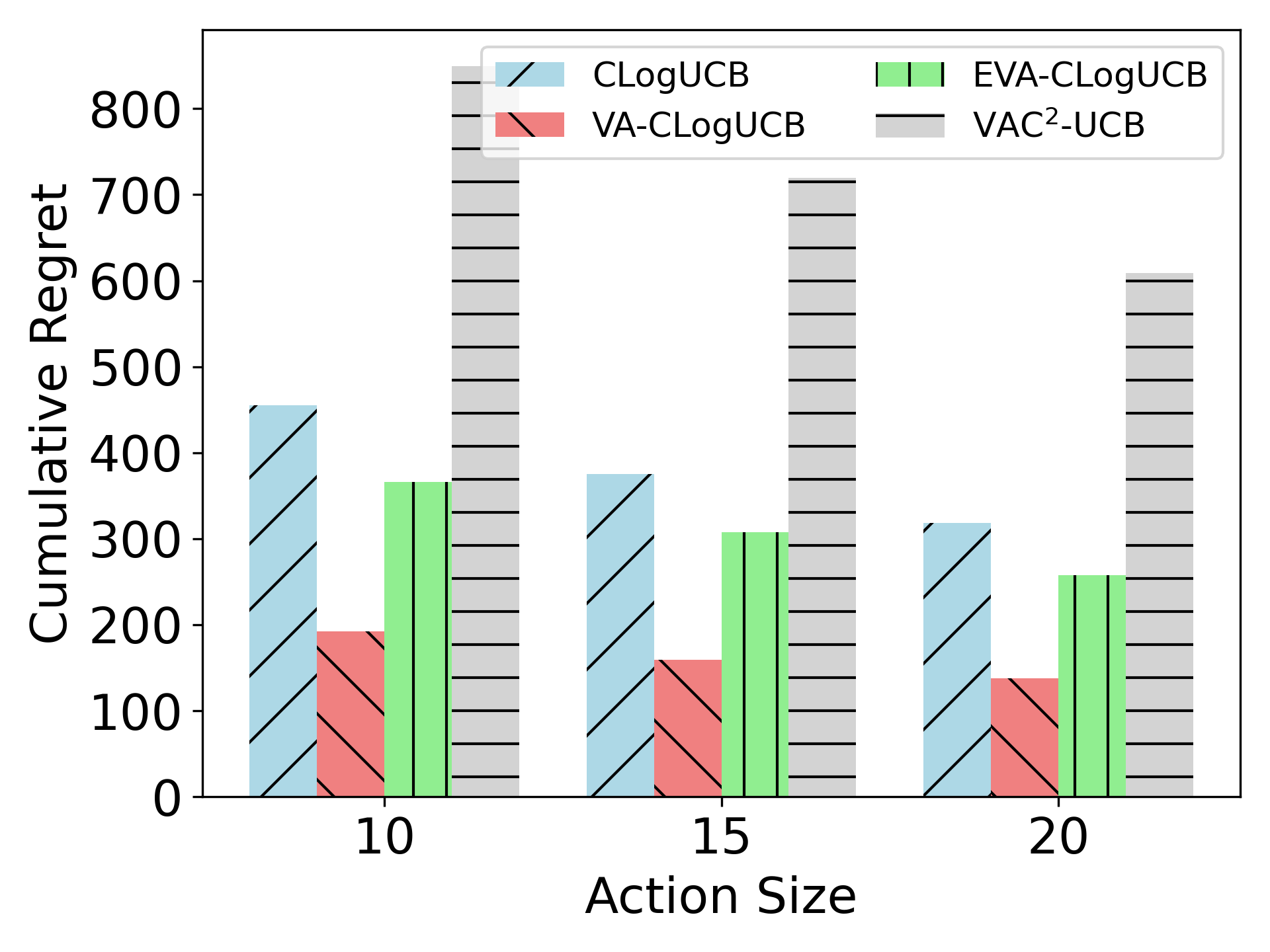}
    \caption{Regret under Varying  $K$ with $T=2000$}
    \label{fig:regret_k}
    \end{subfigure}
    \begin{subfigure}[t]{0.245\textwidth}
    \includegraphics[width=0.99\textwidth]{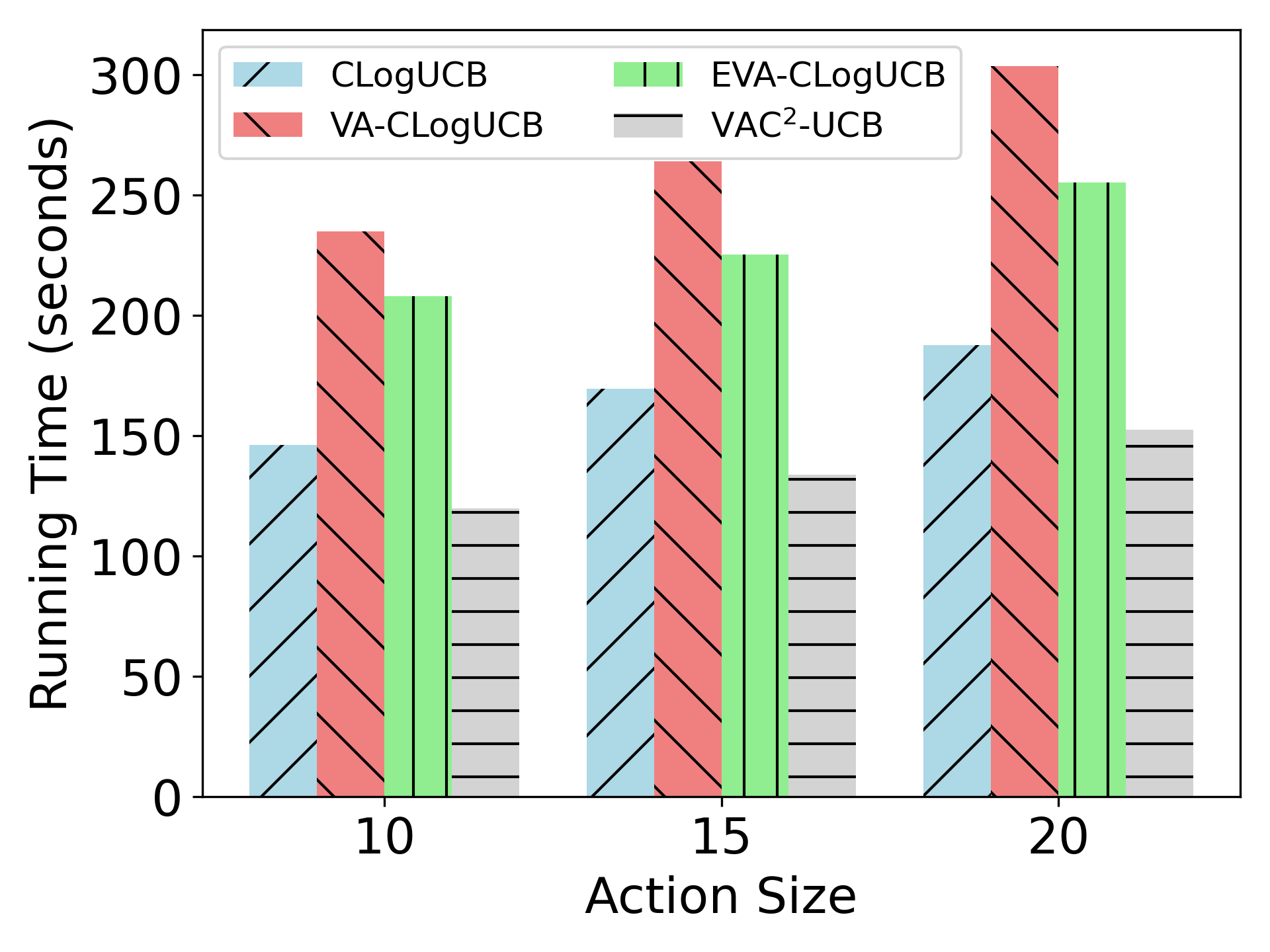}
    \caption{Running Time under Varying $K$ with $T=2000$}
    \label{fig:running_time1}
    \end{subfigure}
    \begin{subfigure}[t]{0.245\textwidth}
    \includegraphics[width=0.99\textwidth]{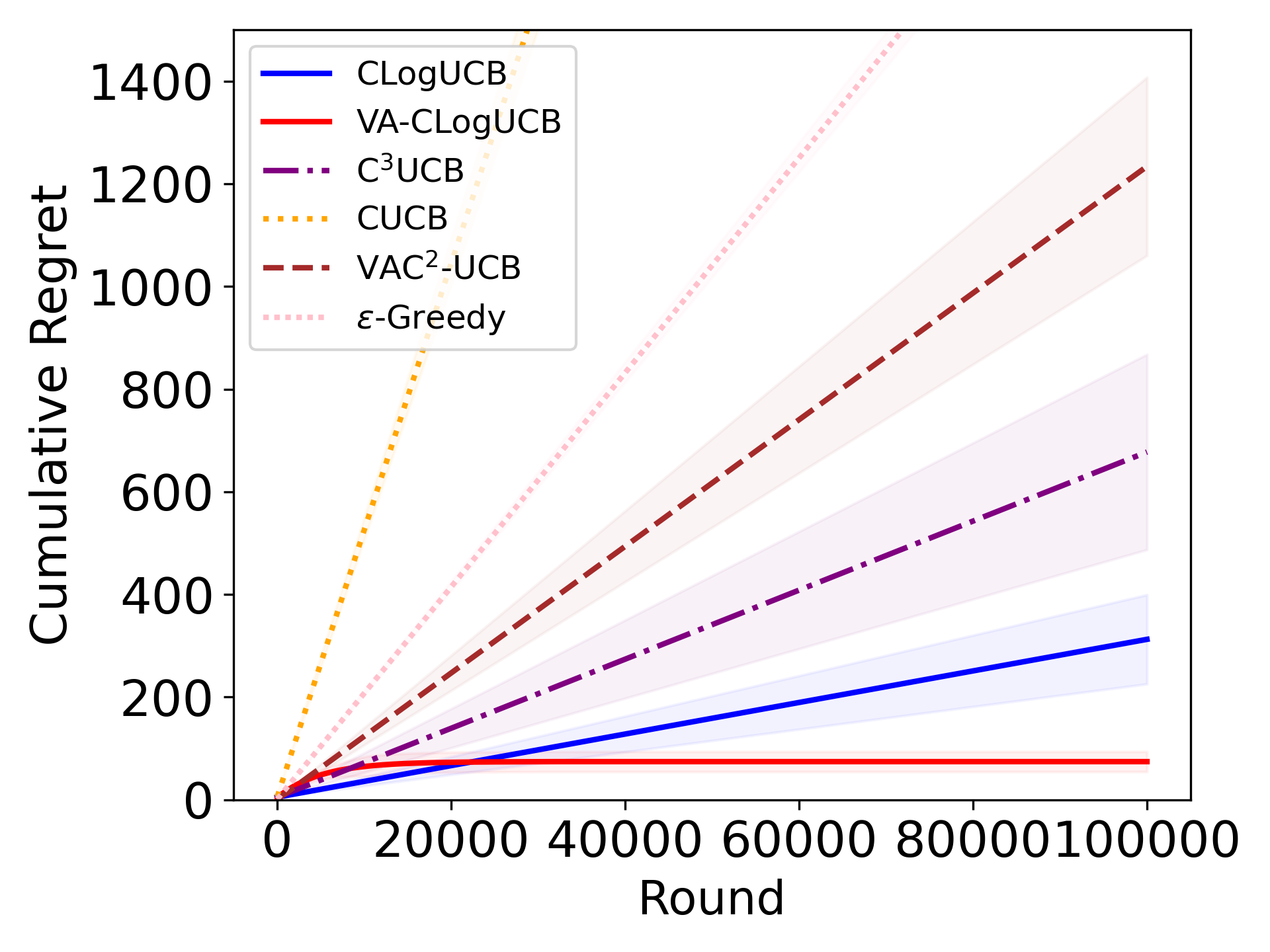}
    \caption{Cumulative Regret on PMC Bandits}
    \label{fig:regret_content}
    \end{subfigure}
  \caption{
(a)-(c) show the results of cascading bandits on the synthetic dataset; (d) shows the results of probabilistic maximum coverage bandits on the real-world dataset (All our algorithms represented by solid lines). }
  \label{fig:reg1}
  \vspace{-0.1in}
\end{figure*}

\subsection{Cascading Bandits on Synthetic Dataset}\label{sec:Cascading}
\rev{We conduct experiments on cascading bandits for the online learning-to-rank application in \cref{sec:app_ranking}, aiming to select \(K=15\) items from a set of \(m=500\) to maximize the reward. An additional experiment with a different instance scale is provided in Appendix \ref{app:extended}.} We set the dimension \(d=10\), and the feature vectors \(\phi(i) \in \mathbb{R}^d\) are time-invariant, with each dimension drawn from the uniform distribution \(U(-1, 1)\). The unknown parameter \(\btheta^*\) is also drawn from the uniform distribution \(U(-1, 1)\). For each item \(i \in [m]\), the purchase probability is \(\mu_{i} = \ell(\btheta^*, \bphi(i))\), where \(\ell\) is the sigmoid function. In each round \(t\), the algorithm selects a ranked list \(S_t = (a_{t,1}, \ldots, a_{t,K}) \subseteq [m]\), and the outcome \(X_{t,i}\) for \(i \in S_t\) is generated from a Bernoulli distribution with mean \(\mu_{i}\). The reward at round \(t\) is 1 if there exists an item \(a_{t,k}\) with index \(k\) whose outcome is \(X_{t,a_{t,k}}=1\), and the learner observes the outcomes for the first \(k\) items of \(S_t\); otherwise, the reward is 0, and the learner observes all \(K\) item outcomes to be \(X_{t,i}=0\) for \(i \in S_t\).

\rev{Fig.~\ref{fig:regret_cascading1} shows the cumulative regrets of the algorithms over \(T=2000\) rounds. All three algorithms proposed in this work outperform the baseline algorithms by at least 78.94\%, 76.68\%, 70.21\%, and 66.89\% less regret than the CUCB, \(\epsilon\)-greedy, C\(^3\)UCB, and VAC\(^2\)-UCB algorithms, respectively, demonstrating our superior performance in a nonlinear environment.} Unlike our algorithms, CUCB and \(\epsilon\)-greedy independently learn each arm's mean, while VAC\(^2\)-UCB and C\(^3\)UCB leverage the correlation between arms but through a mismatched linear model, resulting in worse performance.
\rev{Fig.~\ref{fig:regret_k} and Fig.~\ref{fig:running_time1} compare the regret and total running time of our CLogUCB, VA-CLogUCB, and EVA-CLogUCB algorithms against the lowest-regret benchmark, VAC$^2$-UCB, across different action sizes $K$ under an enlarged setting of $m=600$ to evaluate robustness (a comprehensive comparison with all benchmarks and an ablation study on dimension $d$ are provided in Appendix \ref{app:extended}). Note that with the change of $K$, the optimal reward adjusts accordingly, which explains why the regret of smaller values of $K$ tends to be larger than that for larger $K$. VA-CLogUCB achieves the lowest regret compared to CLogUCB and EVA-CLogUCB. On the other hand, CLogUCB achieves the lowest running time. EVA-CLogUCB, positioned in the middle, strikes a good balance between the regret and the running time. Compared to VAC$^2$-UCB, CLogUCB increases the average running time by 32 seconds but reduces significantly regret by 47.32\%.}

\subsection{ Probabilistic Maximum Coverage Bandits on Real-World Dataset} 

We conduct experiments on probabilistic maximum coverage (PMC) bandits. As described in \cref{sec:app_PMC}, PMC bandits focus on the online content delivery application in CDN (see \cref{fig:cdn}), which dynamically adapts content delivery based on the capabilities and preferences of the users' network \cite{ye2023data, yang2018content,dai2025variance}. Our experiment utilizes data from 10 Microsoft Azure CDN point of presence locations in North America\footnote{https://docs.microsoft.com/en-us/azure/cdn/cdn-pop-locations}, which reflect the geographic distribution of users. We use Quality of Service (QoS) scores as features \cite{dai2024quantifying}. Specifically, for each $(u,v) \in \cE$, the feature is a $d_1=4$ dimensional score vector $\bphi_{t}((u,v))=[L, J, D, B] \in \R^{d_1}$, where $L$, $J$, $D$, and $B$ represent scores determined by packet loss, jitter, packet delay, and bandwidth for each link $(u,v)$ in the network between server $u$ and user $v$.  The network delay and jitter are collected from real network testing results between user $u$'s location and server $v$'s location\footnote{https://wondernetwork.com/pings}. The bandwidth and package loss are sampled from uniform distribution $U(0.9\text{Mbps},3\text{Mbps})$ and $U(0\%,1\%)$ \cite{dong2018pcc,prasad2003bandwidth}. To compute the score $\bphi_t(u,v)$, we normalize delay and jitter by $\max((200-\text{delay})/200,0)$ and $\max((10-\text{jitter})/10,0)$. Bandwidth and packet loss are linearly normalized to the range $(0,1)$ by dividing by their maximum values.  A weight vector $\btheta^*_1 = [0.149,0.151,0.111,0.598]$ is utilized to assign relative importance to these QoS metrics according to \cite{kim2012qoe}. The overall service quality is evaluated using the sigmoid function $\ell({\btheta^*}^{\top}_1 \bphi_{t}((u,v)))$, which aggregates the four QoS scores. As for the feedback, the content owner can observe whether the content is successfully delivered from the selected servers, tracked via $X_{t, (u,v)}$ for each server $u$ in the selected set $S_t$ and user $v$ in the set $V$. The probability of this feedback being observed is 1 for $u \in S_t$ and 0 otherwise.

After establishing the network capabilities, we proceed to model user preferences for the user-targeted CDN.
To analyze user preferences for movie content, following \cite{dai2024online}, we employ the MovieLens-1M dataset, which comprises 1 million ratings by 6,040 users for 3,952 movies.\footnote{https://grouplens.org/datasets/movielens/1m/} Unlike previous studies that applied Singular Value Decomposition (SVD) to explicit feedback data such as user-item ratings \cite{dai2024conversational,li2019improved}, we adopt the logistic SVD methodology in \cite{johnson2014logistic} to process implicit feedback, i.e., user clicks within the MovieLens dataset.
We begin by decomposing the user-item click matrix to derive a 10-dimensional user feature mapping, denoted as $\bphi(v)$. The dataset is split into training and test sets with a 70:30 ratio. A logistic regression model using the logistic function $\ell_2$ achieves 98\% accuracy on the test set. This model estimates the preference distribution of a user \(v\) for a randomly selected movie as $\mu_v = \ell_2({\btheta_2^*}^{\top} \bphi(v))$. In each round $t$, users randomly arrive to be served, with both content (movie) and 10 users across 10 geographic locations sampled uniformly at random in our experiments.  Additionally, Gaussian noise $\mathcal{N}(0, 1)$ is added to simulate real-world variability in user behavior.
In terms of feedback, if a user $v$ successfully receives the content, the content owner can verify consumption via the observed value of $X_{t,v}$ if $\exists v \text{ s.t. } X_{u,v} = 1$, which depends on other random outcomes (i.e., whether the content is successfully received), with the observation probability calculated as $p_v^{\bmu_t,S_t}=1-\prod_{u \in S_t}(1-\mu_{t, (u,v)})$.

Fig. \ref{fig:regret_content} shows the cumulative regrets of different algorithms for $T=100000$ rounds.
Due to the time-varying nature of the feature map under the application of CDN, we have removed EVA-CLogUCB (Algorithm \ref{alg:EVA_CLogB}). Additionally, as analyzed in Table \ref{tab:clogb_res}, the projection operation in VA-CLogUCB incurs a high computational cost (as shown by the running time in \cref{fig:running_time}). Therefore, we eliminate the projection operation (i.e., lines \ref{line:proj_Q} and \ref{line:update} in Algorithm \ref{alg:VA_CLogB}) to investigate whether VA-CLogUCB can still perform well on real-world data even without the time-consuming projection operation. As shown in Fig. \ref{fig:regret_content}, CLogUCB significantly outperforms baseline algorithms, achieving at least 93.99\%, 74.68\%, 53.88\%, and 85.04\% lower regret than CUCB, VAC$^2$-UCB, C$^3$UCB, and $\epsilon$-greedy algorithms, respectively. Notably, even without the projection operation, VA-CLogUCB surpasses CLogUCB with 76.25\% less regret under the CDN application.

% !TeX root = ..\constant-comm-ma2b.tex
\section{Conclusion}
In this paper, we propose a new combinatorial logistic bandit problem framework that can cover various applications satisfying different smoothness conditions. We design a variance-agnostic CLogUCB algorithm with $\tilde{O}(d\sqrt{\kappa K T})$ regret under the TPM condition. Then we devise the variance-adaptive VA-CLogUCB algorithm with improved $\tilde{O}(d\sqrt{T})$ regret under the stronger TPVM condition. Finally, we improve the computational efficiency of VA-CLogUCB while maintaining the regret results. An interesting future direction is to generalize the EVA-CLogUCB to handle time-varying feature maps while maintaining its tight regret bound and computational efficiency. 

\section*{Acknowledgement}
We thank our shepherd Sean Sinclair and the anonymous SIGMETRICS reviewers for their valuable
insight and feedback.

The work of Xutong Liu was partially supported by a fellowship award from the Research Grants Council of the Hong Kong Special Administrative Region, China (CUHK PDFS2324-4S04).
The work of John C.S. Lui was supported in part by the RGC GRF-14202923. The work is supported by the National Science Foundation under awards CNS-2102963, CAREER-2045641, CNS-2106299, CPS-2136199, and CNS-2325956. Xuchuang Wang is the corresponding author.

%%
%% The next two lines define the bibliography style to be used, and
%% the bibliography file.
\newpage
\bibliographystyle{ACM-Reference-Format}
\bibliography{bibliography}

\newpage
\appendix
\clearpage
\onecolumn

\section*{Appendix}

The appendix is organized as follows. 
\begin{itemize}
    % \item \rev{In \cref{apdx_sec:rev_plan}, we discuss the plan for our revision according to the review feedback.}
    \item In \cref{apdx_sec:related_work}, we discuss the extended related works on
    \begin{itemize}
        \item generalized linear bandits,
        \item logistic bandits,
        \item combinatorial MAB with semi-bandit feedback
    \end{itemize}
    \item In \cref{apdx_sec:app}, we provide two additional applications that fit into CLogB framework.
    \begin{itemize}
        \item Dynamic channel allocation
        \item Reliable packet routing
    \end{itemize}
    \item In \cref{apdx_sec:notation}, we recall/define the notations useful for the proofs and introduce some important inequalities.
    % \item In \cref{apdx_sec:notation}, we recall the notations used in the main paper, define new notations that are useful for the proofs in the appendix, and introduce some important inequalities.
    \item In \cref{apdx_sec:confidence_region}, we prove that
    \begin{itemize}
        \item the concentration inequality of the MLE $\hat{\btheta}_t$,
        \item $\cB_t(\delta), \cC_t(\delta), \cD_t(\delta)$ are confidence regions for $\btheta^*$,
        \item the exploration bonuses $\rho_{t,V}, \rho_{t,H}, \rho_{t,E}$ ensure the optimism of the UCB values. 
    \end{itemize}
    \item In \cref{apdx_sec:regret_upper_bounds}, we prove the regret upper bounds for 
    \begin{itemize}
        \item CLogUCB algorithm under the 1-norm TPM smoothness condition,
        \item VA-CLogUCB under both 1-norm TPM and TPVM smoothness conditions,
        \item EVA-CLogUCB under both 1-norm TPM and TPVM smoothness conditions.
    \end{itemize}
    % \item In \cref{apdx_sec:regret_upper_bounds}, we prove the regret upper bounds for CLogUCB algorithm under the 1-norm TPM smoothness condition, and the regret upper bounds for VA-CLogUCB and EVA-CLogUCB algorithms under both 1-norm TPM and TPVM smoothness conditions, respectively.
    \item In \cref{apdx_sec:aux}, we prove auxiliary lemmas that serve as important ingredients for our analysis.
\item \rev{In \cref{app:extended}, we provide the additional experimental results.}
\end{itemize}

\section{Extended Related Works}\label{apdx_sec:related_work}
% \subsection{Related Works}

% \subsection{Related Works}

\textbf{Generalized Linear Bandits.}
Generalized linear bandits (GLB) were first introduced by \citet{filippi2010parametric}. This model generalizes the well-studied linear bandit framework by replacing the linear link function with general exponential family link functions under the generalized linear model. Notable instances include logistic bandits with the sigmoid link function to be introduced shortly after and Poisson bandits \cite{gisselbrecht2015policies,mutny2021no,lee2024unified} with the exponential link function.
GLB has advanced significantly, with the development of various algorithms such as Newton step-based algorithms \cite{zhang2016online,li2017provably,jun2017conversion} and Thompson sampling-based algorithms \cite{abeille2017thompson,dong2019thompson,kveton2020glm,ding2021efficient}. Additionally, GLB has been explored in different contexts, including the Bayesian setting \cite{dong2019thompson}, the non-stationary setting \cite{russac2021glm}, and pure exploration regimes \cite{kazerouni2021glm}.
As mentioned in \cref{sec:connect}, our results under the logistic model can potentially be extended to generalized linear models that satisfy self-concordant and variance properties, which presents an interesting direction for future research.

\textbf{Logistic Bandits.}
As a special case of generalized linear bandits where the link function is a sigmoid function, logistic bandits are crucial for studying the effects of nonlinear parameterization on the exploration-exploitation trade-off in parametric bandits. Logistic bandits were first formally proposed by \citet{faury2020improved}, who introduced two UCB-based algorithms: Logistic-UCB-1 and Logistic-UCB-2. These algorithms achieve regret bounds of \(\tilde{O}(d\sqrt{\kappa T})\) and \(\tilde{O}(d\sqrt{T} + \kappa d^2)\), respectively, based on the self-concordant analysis of logistic regression \cite{bach2010self}.
Following this, research began to focus on optimism in the face of uncertainty (OFU)-based algorithms. \citet{abeille2021instance} proposed OFULog, which jointly searches the action space and the confidence region to yield an optimistic action-parameter pair in each round, achieving a regret bound of \(\tilde{O}(d\sqrt{T/\kappa})\) and matching the lower bound proposed in the same paper. \citet{faury2022jointly} improved the computational complexity of OFULog by introducing OFU-ECOLog, an online Newton step-based algorithm with a per-round computation cost of \(\tilde{O}(d^2 m)\) while maintaining the same \(\tilde{O}(d\sqrt{T/\kappa})\) regret bound. Recently, \citet{lee2024improved} proposed the OFULog+ algorithm, which leverages the regret-to-confidence set conversion method to improve the dependency on the support \(S\).
Our CLogB framework is the first to generalize these works to handle scenarios where multiple arms can be triggered in each round. This setting presents unique challenges that make the direct application of the previous methods infeasible. For example, OFU-based algorithms become intractable due to the combinatorially large action space, and it is difficult to ensure the effectiveness of UCB-based algorithms in the presence of nonlinear reward functions while avoiding expensive nonconvex projections. Addressing these issues is the main focus and the contribution of our current work.

\textbf{Combinatorial Multi-Armed Bandits with semi-bandit feedback.} 
There is a vast body of literature on stochastic combinatorial multi-armed bandits (CMAB) \cite{gai2012combinatorial,cesa2012combinatorial,gyorgy2007line,kveton2015tight,combes2015combinatorial,chen2016combinatorial,wang2017improving,merlis2019batch,saha2019combinatorial,liu2022batch,liu2024combinatorial}. \citet{gai2012combinatorial} was the first to consider stochastic CMAB with semi-bandit feedback. Within this domain, two prominent lines of work are closely related to our study: CMAB with probabilistically triggered arms (CMAB-T) and contextual CMAB (\ccmab).

For CMAB-T, \citet{chen2016combinatorial} introduced the concept of arm triggering processes to address cascading bandits and influence maximization applications. They proposed the CUCB algorithm, which achieves an \(O(B_1\sqrt{mKT\log T}/p_{\min})\) regret bound under the 1-norm TPM smoothness condition with coefficient \(B_1\). Subsequently, \citet{wang2017improving} proposed a stronger 1-norm triggering probability modulated (TPM) \(B_1\) smoothness condition and employed triggering group analysis to eliminate the \(1/p_{\min}\) factor from the previous regret bound. More recently, \citet{liu2022batch} leveraged the variance-adaptive principle to propose the BCUCB-T algorithm, which further reduces the regret's dependency on action-size from \(O(K)\) to \(O(\log K)\) under the new variance and triggering probability modulated (TPVM) condition. Although our smoothness conditions are inspired by these works, they focus on non-parametric settings, and their algorithms do not yield meaningful results for our CLogB setting.

For \ccmab, most studies focus on linear parametric models. \citet{qin2014contextual} first proposed the C$^2$UCB algorithm, which considers reward functions under a 2-norm smoothness condition with coefficient \(B_2\). \citet{takemura2021near} later replaced the 2-norm smoothness condition with a 1-norm smoothness condition, achieving a \(\tilde{O}(B_1d\sqrt{KT})\) regret bound. Recently, \citet{liu2023contextual} studied the \ccmab-T setting, which combines linear \ccmab~with CMAB-T, and proposed the variance-adaptive VAC$^2$-UCB algorithm, achieving a \(\tilde{O}(B_v d\sqrt{T})\) regret bound under the TPVM smoothness condition with coefficient \(B_v\). As noted in \cref{sec:intro}, the linear parametric model falls short in handling binary outcomes and nonlinear environments. Our CLogB framework takes the first step toward addressing large-scale and nonlinear environments with binary outcomes.
Other \ccmab~studies have considered nonlinear parametric models, such as neural network models \cite{hwang2023combinatorial}, Lipschitz continuous models \cite{chen2018contextual,nika2020contextual}, and multinomial logit models \cite{choi2024cascading}. The first two models are very different from ours and are not directly comparable. \citet{choi2024cascading} studies the multinomial logit model~\cite{agrawal2019mnl,oh2021multinomial,lee2024nearly} that generalizes our binary outcomes to \(M+1\) outcomes, reducing to a logistic model when \(M=1\), but its reward function is specific to cascading rewards and employs OFU-based algorithms, which are generally intractable for \ccmab. Our work, in contrast, focuses on general reward functions with more efficient UCB-based algorithms.

\section{Additional Applications}\label{apdx_sec:app}

\subsection{Matching Bandits: Dynamic Channel Allocation in Wireless Networks}\label{sec:app_matching}

The maximum weighted matching problem in bipartite graphs is a fundamental problem in graph theory aiming to identify a matching where the sum of the edge weights is maximized. This problem has significant applications in various fields, such as optimizing channel allocation in wireless networks by considering the probability of successful communication \cite{gai2010learning} and improving task-worker matching in crowdsourcing based on the likelihood of successful task completion \cite{tong2017spatial}.

% The model of maximum weighted matching involves a bipartite graph $ G=(U,V,E) $, where $ U $ and $ V $ represent two disjoint sets of vertices, and $ E $ is the set of edges that connect them. Each edge $ (u,v) \in E $ has an associated weight corresponding to the success probability $ \mu_{u,v} \in [0,1]$. The goal is to find a matching $ S^* \subseteq E $ that maximizes the sum of the success probabilities, ensuring that no two edges share a common vertex. Formally, the objective is to maximize the total expected number of successful pairings between the two sets, represented by $ r(S;\bmu)=\sum_{(u,v)\in S} \mu_{u,v}$.

The matching bandit problem represents the online learning variant of the maximum weighted matching problem. In the context of channel allocation, consider a bipartite graph \( G=(U,V,E) \), where \( U \) represents users, \( V \) represents channels, and each edge \( (u,v) \in E \) (i.e., a base arm) models the data transmission process if user \( u \) is assigned to channel \( v \). In each round \( t \), the learner first obtains the feature map \( \bphi_t \), which translates factors such as user location, user mobility, channel bandwidth, and channel signal-to-noise ratio (SNR) into a feature vector. Specifically, this feature map assigns each possible pair \( (u,v) \in E \) to a feature vector \( \bphi_t(u,v) \in \mathbb{R}^d \).
Based on \( \bphi_t \), the learner assigns each user to a channel, forming a matching \( S_t \subseteq E \) with \( |S_t| = |U| \) (i.e., a super arm) that ensures no collisions (i.e., no two users share the same channel). Each user \( u \) then uses the allocated channel \( v \) to transmit data, with a probability \( \mu_{t,u,v} = \ell({\btheta^*}^{\top} \bphi_t(u,v)) \) that the transmission is successful for some downstream tasks (e.g., environmental sensing and federated learning). By "success," we mean that the transmission process is of high quality, meeting the requirements of downstream tasks. Here, \( \btheta^* \in \mathbb{R}^d \) is the unknown parameter (e.g., importance weights for different factors), and the sigmoid function \( \ell \) models the nonlinear relationship between the success probability and these factors (e.g., channel SNR and channel bandwidth).
For example, the success probability exhibits an S-shape, rapidly increasing around mid-SNR values while changing slowly under poor (low-SNR) and excellent (high-SNR) channel conditions.
The reward function is the total number of successful transmissions, \( r(S_t; \bmu_t) = \sum_{(u,v) \in S_t} \mu_{t,u,v} \), which is a linear function. For feedback, the learner observes the success or failure of each arm \( (u,v) \in S_t \) (i.e., semi-bandit feedback) to learn \( \btheta^* \) and improve future actions. For this application, it fits into the CLogB framework, satisfying the 1-norm TPM smoothness condition with \( B_1=1 \). For the offline oracle, we can use the Hungarian algorithm \cite{kuhn1955hungarian} since the underlying problem is a maximum weighted matching problem, which yields the optimal solution ($\alpha=1$) with $T_{\alpha}=O(|U|^2|V|)$ time complexity.
% \mh{can you justify why sigmoid function better models this app? and if makes sense for the remaining examples.}

\subsection{Conjunctive Cascading Bandit: Reliable Packet Routing in Distributed Networks}\label{sec:app_routing}

Routing is a crucial process in networking that involves determining the optimal paths for data packets to travel from their source to their destination across a network. Broadly speaking, routing is vital in many types of networks, including local area networks (LANs), wide area networks (WANs), and the internet. It plays a particularly significant role in distributed networks, where multiple interconnected nodes work together to exchange information and resources.

In this context, we consider a packet routing scenario, focusing on how to reliably route packets of data through a multi-hop communication network. Such a network is typically represented by a directed graph \( G(V,E) \), where each node \( v \in V \) represents a router and any edge \( (u,v) \in E \) represents the packet transmission process between two end nodes \( u,v \in V \). Our goal is to send a stream of \( T \) packets from a source node to a destination node to maximize the number of packets that are successfully transmitted.

Conjunctive cascading bandit models \cite{kveton2015combinatorial} the online learning to route problem for packet routing. At each round \( t \in [T] \), the learner first observes the contextual information \( \bphi_t \) about the network conditions between any of the two routers \( (u,v) \in E \), such as network congestion and router bandwidth. Then, the learner selects a path \( S_t = (e_{t,1}, ..., e_{t,K}) \subseteq E \) connecting the source to the destination, and the packet \( t \) is sent along this path. 
Depending on the network traffic and the capacity of the routers, each edge \( e = (u,v) \in E \) has a probability \( \mu_{t,e} = \ell({\btheta^*}^{\top} \bphi_t(e)) \) of being a good edge that successfully sends data from end routers \( u \) to \( v \), and a probability \( 1 - \mu_{t,e} \) of being a broken edge, where \( \btheta^* \in \mathbb{R}^d \) is the unknown parameter. Here, the sigmoid function models the nonlinear relationship between the success probability and features such as congestion level.
If the path \( S_t \) does not contain any broken edges, then the packet is transmitted successfully from the source to the destination, obtaining a reward of 1. Otherwise, the packet transmission will stop at the first broken edge, and the learner will get zero reward. As for feedback, if the packet is transmitted successfully along path \( S_t \), the learner observes \( K \) Bernoulli outcomes \( (1, ..., 1) \), indicating all the edges in the path \( S_t \) are good edges. On the other hand, if the packet fails to transmit via this path, the learner observes the feedback of edges before (and including) the first broken edge (suppose its index is \( j_t \)-th), which is of the form \( (1, ..., 1, 0, \text{x}, ..., \text{x}) \), meaning the first \( j_t-1 \) edges are good (denoted as 1), the \( j_t \)-th edge is broken (denoted as 0), and the outcomes of the rest of the edges are unobserved (denoted as x).
The reward function for this application is \( r(S_t; \boldsymbol{\mu}_t) = \prod_{e \in S_t} \mu_{t,e} \). This application fits into the CLogB framework, satisfying the 1-norm TPVM smoothness condition with \( B_v = 1, B_1 = 1, \lambda = 1 \) as in \cite{liu2023contextual}. For the offline oracle, we can define $\mu'_{t,e}\defeq-\log (\mu_{t,e})$ for $e\in E$ and construct a new graph $G'(V,E)$ by changing the edge weight of $G(V,E)$ with $\mu'_{t,e}$ for all $e\in E$. Then we can use Dijkstra's shortest path algorithm on $G'$ to find the optimal path $S_t^*$ ($\alpha=1$) with time complexity $T_{\alpha}=O(|E| + |V|\log |V|)$.

\section{Notations and Preliminaries}\label{apdx_sec:notation}
We collect a list of definitions that will be used throughout the appendix. Recall that the regularized log-loss is defined as: 

\begin{equation}\label{apdx_eq:log-likelihood}
\begin{array}{r}
\mathcal{L}_t(\btheta)\defeq-\sum_{s=1}^{t-1}\sum_{i\in \tau_s}\left[X_{s,i} \log \ell\left(\btheta^{\top}\bphi_s(i) \right)+\left(1-X_{s,i}\right)\right. 
\left.\cdot \log \left(1-\ell\left(\btheta^{\top}\bphi_s(i) \right)\right)\right]+\frac{\lambda_t}{2}\|\btheta\|_2^2.
\end{array}
\end{equation}

Let us define the following notations:

\begin{align}
\alpha\left(\bx, \btheta_1, \btheta_2\right) & :=\int_{v=0}^1 \dot{\ell}\left(v \bx^{\top} \btheta_2+(1-v) \bx^{\top} \btheta_1\right) d v=\alpha\left(\bx, \btheta_2, \btheta_1\right)>0 \label{apdx_eq:alpha}\\
\tilde{\alpha}\left(\bx, \btheta_1, \btheta_2\right) & :=\int_{v=0}^1 (1-v)\dot{\ell}\left(v \bx^{\top} \btheta_2+(1-v) \bx^{\top} \btheta_1\right) d v>0 \label{apdx_eq:tilde_alpha}\\
\mathbf{G}_t\left(\btheta_1, \btheta_2\right) & :=\sum_{s=1}^{t-1}\sum_{i\in\tau_s} \alpha\left(\bphi_s(i), \btheta_1, \btheta_2\right) \bphi_s(i) \bphi_s(i)^{\top}+\lambda_t \mathbf{I}_d \\
\tilde{\mathbf{G}}_t\left(\btheta_1, \btheta_2\right) & :=\sum_{s=1}^{t-1}\sum_{i\in\tau_s} \tilde{\alpha}\left(\bphi_s(i), \btheta_1, \btheta_2\right) \bphi_s(i) \bphi_s(i)^{\top}+\lambda_t \mathbf{I}_d \\
\mathbf{H}_t\left(\btheta\right) & :=\sum_{s=1}^{t-1}\sum_{i\in\tau_s} \dot{\ell}\left(\bphi_s(i)^{\top} \btheta\right) \bphi_s(i) \bphi_s(i)^{\top}+\lambda_t \mathbf{I}_d \\
\bg_t(\btheta)&\defeq\sum_{s=1}^{t-1}\sum_{i\in \tau_s}\ell\left(\btheta^{\top}\bphi_s(i) \right)\bphi_s(i)+\lambda_t\btheta \label{apdx_eq:gt}\\
\mathbf{V}_t & :=\sum_{s=1}^{t-1}\sum_{i\in\tau_s} \bphi_s(i) \bphi_s(i)^{\top}+\kappa \lambda_t \mathbf{I}_d
\end{align}

Note that for all $\bx \in \mathbb{R}^d$ and $\btheta_1, \btheta_2 \in \mathbb{R}^d$, the following equality holds due to the first and the second-order Taylor expansion:
\begin{align}
    \ell\left(\bx^{\boldsymbol{\top}} \btheta_2\right)-\ell\left(\bx^{\boldsymbol{\top}} \btheta_1\right)&=\alpha\left(\bx, \btheta_1, \btheta_2\right) \bx^{\boldsymbol{\top}}\left(\btheta_2-\btheta_1\right)\label{apdx_eq:1_taylor_l}\\
     \ell\left(\bx^{\top} \btheta_2\right)-\ell\left(\bx^{\top} \btheta_1\right)&=\dot{\ell}\left(\bx^{\top} \btheta_1\right) \bx^{\top}\left(\btheta_2-\btheta_1\right)+\left(\bx^{\top}\left(\btheta_2-\btheta_1\right)\right)^2\int_{v=0}^1 (1-v)\ddot{\ell}\left(v \bx^{\top} \btheta_2+(1-v) \bx^{\top} \btheta_1\right) d v\label{apdx_eq:2_taylor_l}
    % \dot{\ell}\left(x^{\top} \btheta_2\right)-\dot{\ell}\left(x^{\top} \btheta_1\right)&=\dot{\ell}\left(x^{\top} \btheta_1\right) x^{\top}\left(\btheta_2-\btheta_1\right)+\tilde{\alpha}\left(x, \btheta_1, \btheta_2\right)\left(x^{\top}\left(\btheta_2-\btheta_1\right)\right)^2\label{apdx_eq:2_taylor_dot_l}.
\end{align}

\cref{apdx_eq:1_taylor_l}, \cref{apdx_eq:gt} and \cref{apdx_eq:alpha} allow us to link $\btheta_2-\btheta_1$ with $g_t\left(\btheta_2\right)-g_t\left(\btheta_1\right)$. Namely, it holds that:
\begin{align}
g_t\left(\btheta_2\right)-g_t\left(\btheta_1\right) & =\sum_{s=1}^{t-1}\sum_{i\in \tau_s} \alpha\left(\bphi_s(i), \btheta_1, \btheta_2\right) \bphi_s(i) \bphi_s(i)^{\top}\left(\btheta_2-\btheta_1\right)+\lambda \btheta_2-\lambda \btheta_1\\
& =\mathbf{G}_t\left(\btheta_1, \btheta_2\right)\left(\btheta_2-\btheta_1\right)
\end{align}

Because $\mathbf{G}_t\left(\btheta_1, \btheta_2\right) \succ \mathbf{0}_{d \times d}$, this yields:
\begin{align}\label{apdx_eq:link_G_g}
    \left\|\btheta_1-\btheta_2\right\|_{\mathbf{G}_t\left(\btheta_1, \btheta_2\right)}=\left\|g_t\left(\btheta_1\right)-g_t\left(\btheta_2\right)\right\|_{\mathbf{G}_t^{-1}\left(\btheta_1, \btheta_2\right)}
\end{align}

Additionally, by the first-order Taylor expansion, we can also link $\btheta_1-\btheta_2$ with the difference in loss functions $\mathcal{L}_t\left(\btheta_2\right)-\mathcal{L}_t\left(\btheta_1\right)$:
\begin{align}\label{apdx_eq:loss_talor}
    \mathcal{L}_t\left(\btheta_2\right)-\mathcal{L}_t\left(\btheta_1\right)=\left.\nabla \mathcal{L}_t\right({\btheta_1}) ^{\top}\left(\btheta_2-\btheta_1\right)+\left(\btheta_2-\btheta_1\right)^{\top} \widetilde{\mathbf{G}}_{\mathbf{t}}\left(\btheta_1, \btheta_2\right)\left(\btheta_2-\btheta_1\right).
\end{align}

We have the following inequality to link $\bH_t(\btheta)$ with $\bV_t$:

\begin{align}
    \bH_t(\btheta)\succeq \kappa^{-1} \bV_t.
\end{align}

By \cref{apdx_lem:scc}, we have the following inequality to link $\bG_t(\btheta_1, \btheta_2), \tilde{\bG}_t(\btheta_1, \btheta_2)$ and $\bH_t(\btheta_1), \bH_t(\btheta_2)$ for any $\btheta_1, \btheta_2 \in \Theta$:
\begin{align}
    \bG_t(\btheta_1, \btheta_2) &\succeq (1+2L)^{-1} \bH_t(\btheta_1)\\
    \bG_t(\btheta_1, \btheta_2) &\succeq (1+2L)^{-1} \bH_t(\btheta_2)\\
\tilde{\bG}_t(\btheta_1, \btheta_2) &\succeq (2+2L)^{-1} \bH_t(\btheta_1)\\
\end{align}

By Lemma 9 of \cite{abeille2021instance}, it holds that for any $x,y\in \R$,
\begin{align}
    \dot{\ell}(x)\le \dot{\ell} (y) \exp(\abs{x-y})
\end{align}

Below we define the following functions that scale with $\tilde{O}(d)$.

\begin{align}
    \lambda_t(\delta)&=d\log \left(\frac{4(1+tK)}{\delta}\right)\\
    \gamma_t(\delta)&=\left(L+\frac{3}{2}\right)\sqrt{\lambda_t(\delta)}
\end{align}

\section{Confidence Region and Exploration Bonus}\label{apdx_sec:confidence_region}

\subsection{Concentration Inequality}\label{apdx_sec:proof_concen_ineq}

\begin{proof}[Proof of \cref{lem:est_dist}]
    
Our result is built upon the following concentration inequality.

\begin{proposition}[Theorem 4 of \cite{abeille2021instance}]\label{apdx_thm:prev_concentration}
Let $\left\{\mathcal{F}_t\right\}_{t=1}^{\infty}$ be a filtration. Let $\left\{\bx_t\right\}_{t=1}^{\infty}$ be a stochastic process such that $\bx_t\in \R^d, \norm{\bx_t}_2\le 1$ and $\bx_t$ is $\mathcal{F}_t$ measurable. Let $\left\{\varepsilon_t\right\}_{t=1}^{\infty}$ be a martingale difference sequence such that $\varepsilon_{t}$ is $\mathcal{F}_{t+1}$ measurable. Furthermore, assume that conditionally on $\mathcal{F}_{t-1}$ we have $\left|\varepsilon_{t+1}\right| \leq 1$ almost surely, and note $\sigma_t^2\defeq\mathbb{E}\left[\varepsilon_{t+1}^2 \mid \mathcal{F}_t\right]$. Let $\lambda>0$ and for any $t \geq 1$ define:
$$
\mathbf{H}_t\defeq\sum_{s=1}^{t} \sigma_s^2 \bx_s \bx_s^T+\lambda_t \mathbf{I}_d, \quad \bS_t\defeq\sum_{s=1}^{t} \varepsilon_{s+1} \bx_s .
$$

Then for any $\delta \in(0,1]$ :
$$
\mathbb{P}\left(\exists t \geq 1,\left\|\bS_t\right\|_{\mathbf{H}_t^{-1}} \geq \frac{\sqrt{\lambda_t}}{2}+\frac{2}{\sqrt{\lambda_t}} \log \left(\frac{\operatorname{det}\left(\mathbf{H}_{\mathbf{t}}\right)^{\frac{1}{2}} \lambda_t^{-\frac{d}{2}}}{\delta}\right)+\frac{2}{\sqrt{\lambda_t}} d \log (2)\right) \leq \delta
$$
\end{proposition}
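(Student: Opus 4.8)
The plan is to prove this variance-adaptive self-normalized bound by the \emph{method of mixtures} applied to an exponential supermartingale, in the restricted-tilting (Bernstein) regime of de la Pe\~na--Klass--Lai. \textbf{Step 1 (scalar Bernstein moment control).} I would first reduce to a one-dimensional exponential-moment estimate. Since $\{\varepsilon_{s+1}\}$ is a martingale difference sequence with $\E[\varepsilon_{s+1}\mid\cF_s]=0$, $|\varepsilon_{s+1}|\le 1$, and $\sigma_s^2=\E[\varepsilon_{s+1}^2\mid\cF_s]$, the elementary inequality $e^z\le 1+z+z^2$ (valid for $|z|\le 1$) gives, for any $\cF_s$-measurable scalar $\eta$ with $|\eta|\le 1$,
\begin{equation}
\E\!\left[\exp(\eta\,\varepsilon_{s+1})\mid\cF_s\right]\le 1+\eta^2\sigma_s^2\le \exp(\eta^2\sigma_s^2).
\end{equation}
The crucial point is that this holds only for $|\eta|\le 1$; this restriction propagates through the whole argument.

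\textbf{Step 2 (exponential supermartingale).} Fix a direction $\boldsymbol{\xi}\in\R^d$ with $\|\boldsymbol{\xi}\|_2\le 1$ and set $\eta_s=\boldsymbol{\xi}^\top\bx_s$, which is $\cF_s$-measurable and obeys $|\eta_s|\le \|\boldsymbol{\xi}\|_2\,\|\bx_s\|_2\le 1$. Writing $\bar{\bH}_t=\sum_{s=1}^{t}\sigma_s^2\,\bx_s\bx_s^\top$ (the unregularized part of $\bH_t$), define
\begin{equation}
M_t(\boldsymbol{\xi})=\exp\!\left(\boldsymbol{\xi}^\top\bS_t-\|\boldsymbol{\xi}\|_{\bar{\bH}_t}^2\right).
\end{equation}
Applying Step~1 to the increment at each $s$ shows $\E[M_t(\boldsymbol{\xi})\mid\cF_{t-1}]\le M_{t-1}(\boldsymbol{\xi})$, so $\{M_t(\boldsymbol{\xi})\}_t$ is a nonnegative supermartingale with $M_0(\boldsymbol{\xi})\le 1$, for every $\boldsymbol{\xi}$ in the unit ball.

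\textbf{Step 3 (localized mixture and Ville's inequality).} To make the bound simultaneous over directions and over all rounds, I would integrate $M_t(\boldsymbol{\xi})$ against a prior $h$ whose mass lies essentially inside the unit ball, e.g.\ a Gaussian $\mathcal{N}(\boldsymbol 0,\lambda_t^{-1}\bI_d)$; this choice is admissible precisely because $\lambda_t=d\log(\cdots)\ge d$ concentrates the prior in a ball of radius $\approx\sqrt{d/\lambda_t}\le 1$. The mixture $\bar M_t=\int M_t(\boldsymbol{\xi})\,h(\boldsymbol{\xi})\,d\boldsymbol{\xi}$ is again a nonnegative supermartingale with $\bar M_0\le 1$, so Ville's maximal inequality yields $\Pr(\exists t\ge 1:\ \bar M_t\ge 1/\delta)\le\delta$, which is exactly the anytime ``$\forall t$'' guarantee. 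On the complementary event, a Laplace/saddle-point lower bound of the Gaussian integral around the maximizer $\boldsymbol{\xi}^\star=\tfrac12\bH_t^{-1}\bS_t$ produces the normalizing factor $\det(\bH_t)^{1/2}\lambda_t^{-d/2}=\det(\bI_d+\lambda_t^{-1}\bar{\bH}_t)^{1/2}$ together with the self-normalized quantity $\|\bS_t\|_{\bH_t^{-1}}$, while the mass discarded by truncating the prior to the unit ball contributes the additive $d\log 2$ covering correction.

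\textbf{Main obstacle.} The crux is the restriction $\|\boldsymbol{\xi}\|_2\le 1$ forced in Step~1: unlike the sub-Gaussian setting one \emph{cannot} tilt by an arbitrarily large $\boldsymbol{\xi}$, so the unconstrained optimizer $\boldsymbol{\xi}^\star$ typically falls outside the admissible ball. This places the estimate in the large-deviation branch of the restricted-range pseudo-maximization: capping the effective tilt at scale $\sim\lambda_t^{-1/2}$ and using the Young/Fenchel split $ab\le \tfrac{r}{2}a^2+\tfrac{1}{2r}b^2$ with $r\sim\lambda_t^{-1/2}$ converts the $\log$-determinant budget into the \emph{linear}-in-log additive form $\tfrac{\sqrt{\lambda_t}}{2}+\tfrac{2}{\sqrt{\lambda_t}}\log(\cdot)+\tfrac{2}{\sqrt{\lambda_t}}d\log 2$, rather than the usual $\sqrt{\log}$ shape. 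Controlling the truncation error of the prior and verifying that this scale-$\lambda_t^{-1/2}$ tradeoff is the optimal feasible one is the step I expect to require the most care.
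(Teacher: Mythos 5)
The paper never proves this proposition: it is imported verbatim as Theorem 4 of \citet{abeille2021instance}, whose proof (in the lineage of Theorem 1 of \citet{faury2020improved}) is precisely the restricted-tilting method of mixtures you outline — so your plan reconstructs the actual argument of the cited source rather than diverging from anything in this paper. Your Steps 1 and 2 are correct and complete: $e^{z}\le 1+z+z^{2}$ on $|z|\le 1$ gives $\E[\exp(\eta\varepsilon_{s+1})\mid\cF_s]\le \exp(\eta^{2}\sigma_s^{2})$ for $|\eta|\le 1$, and $\norm{\boldsymbol{\xi}}_2\le 1$ with $\norm{\bx_s}_2\le 1$ keeps every tilt admissible, so $M_t(\boldsymbol{\xi})$ is a supermartingale on the unit ball. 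In Step 3, though, the Laplace expansion around $\boldsymbol{\xi}^{\star}=\frac{1}{2}\bH_t^{-1}\bS_t$ cannot be the mechanism: if that point were admissible and the full Gaussian integral available (with covariance $(2\lambda_t)^{-1}\bI_d$, not $\lambda_t^{-1}\bI_d$, to reproduce $\det(\bH_t)^{1/2}\lambda_t^{-d/2}$), completing the square yields $\bar M_t=\lambda_t^{d/2}\det(\bH_t)^{-1/2}\exp\bigl(\tfrac{1}{4}\norm{\bS_t}^{2}_{\bH_t^{-1}}\bigr)$ and hence a sub-Gaussian bound of shape $2\sqrt{\log(\cdot)}$, not the stated linear-in-$\log$ form — and $\boldsymbol{\xi}^{\star}$ is generically infeasible anyway. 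Your ``main obstacle'' paragraph contains the correct repair: evaluate at the capped tilt $\boldsymbol{\xi}_0=\frac{\sqrt{\lambda_t}}{2}\,\bH_t^{-1}\bS_t/\norm{\bS_t}_{\bH_t^{-1}}$, feasible because $\bH_t\succeq\lambda_t\bI_d$ forces $\norm{\boldsymbol{\xi}_0}_2\le\frac{1}{2}$, giving $\boldsymbol{\xi}_0^{\top}\bS_t-\norm{\boldsymbol{\xi}_0}^{2}_{\bH_t}\ge\frac{\sqrt{\lambda_t}}{2}\norm{\bS_t}_{\bH_t^{-1}}-\frac{\lambda_t}{4}$, which after integrating the truncated prior near $\boldsymbol{\xi}_0$ produces exactly $\frac{\sqrt{\lambda_t}}{2}\norm{\bS_t}_{\bH_t^{-1}}\le\frac{\lambda_t}{4}+\log\bigl(\det(\bH_t)^{1/2}\lambda_t^{-d/2}/\delta\bigr)+d\log 2$; restructure Step 3 around that computation, not the saddle point.

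The one genuine gap is the time-varying prior. You mix against $\mathcal{N}(\boldsymbol{0},\lambda_t^{-1}\bI_d)$ (truncated), but a mixture $\bar M_t=\int M_t(\boldsymbol{\xi})\,h_t(\boldsymbol{\xi})\,d\boldsymbol{\xi}$ with a $t$-dependent mixing measure is not a supermartingale, so Ville's inequality — your only source of the anytime ``$\forall t$'' guarantee — does not apply as written. The cited theorem is stated and proved for a fixed $\lambda>0$ with a single fixed prior; the ``Let $\lambda>0$ \dots $\lambda_t\bI_d$'' in the paper's restatement is a notational slip inherited from adapting the result. To obtain the version with deterministic increasing $\lambda_t$ that the paper actually invokes in its Lemma 1, you need an extra device: either prove the fixed-$\lambda$ anytime bound and union-bound over the countably many values in $\{\lambda_t\}$ (paying roughly a $\log t$ inside the radius), or construct a stitched/peeled mixture as in \citet{faury2022jointly}. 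Until that step is supplied, your argument establishes the fixed-$\lambda$ statement but not the $\lambda_t$-indexed one as printed.
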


By the optimality condition of the MLE ($\nabla \cL_t(\hat{\btheta}_t)=0$) and  
\cref{eq:grad_MLE}, the following holds:
\begin{align}
    \bg_t(\hat{\btheta}_t)&=\sum_{s=1}^{t-1} \sum_{i\in \tau_s} X_{s,i}\bphi_s(i).
\end{align}

Let $\varepsilon_{s,i}\defeq \left(X_{s,i} - \ell\left({\btheta^*}^{\top}\bphi_s(i) \right)\right)$ and by subtracting $g_t(\btheta^*)=\sum_{s=1}^{t-1} \sum_{i\in \tau_s}  \ell\left({\btheta^*}^{\top}\bphi_s(i) \right)\bphi_s(i) + \lambda_t \btheta^*$, we have:
\begin{align}
    \bg_t\left(\hat{\btheta}_t\right)-\bg_t\left(\btheta^*\right)&=\sum_{s=1}^{t-1} \sum_{i\in \tau_s} \left(X_{s,i} - \ell\left({\btheta^*}^{\top}\bphi_s(i) \right)\right)\bphi_s(i) - \lambda_t \btheta^*\\
    &=\sum_{s=1}^{t-1} \sum_{i\in \tau_s} \varepsilon_{s,i}\bphi_s(i) - \lambda_t \btheta^*.
\end{align}

Let $\bZ_t \defeq \sum_{s=1}^{t-1} \sum_{i\in \tau_s} \varepsilon_{s,i}\bphi_s(i)$, by the direct computation:
\begin{align}
    \norm{\bg_t\left(\hat{\btheta}_t\right)-\bg_t\left(\btheta^*\right)} _{\bH_t^{-1}(\btheta^*)}&=\norm{\bZ_t-\lambda_t \btheta^*} _{\bH_t^{-1}(\btheta^*)}\\
    &\overset{(a)}\le \norm{\bZ_t} _{\bH_t^{-1}(\btheta^*)} + \lambda_t \norm{\btheta^*} _{\bH_t^{-1}(\btheta^*)}\\
    &\overset{(b)} \le \norm{\bZ_t} _{\bH_t^{-1}(\btheta^*)} + \sqrt{\lambda_t} L \label{apdx_eq:ZH}
\end{align}
where inequality (a) is due to the triangle inequality, inequality (b) is due to $\bH_t(\btheta^*) \succeq \lambda_t \bI_d$ by definition and $\norm{\btheta^*}_2\le L$ by \cref{cond:bounded_para}.

To apply \cref{apdx_thm:prev_concentration}, fix $t\in [T]$, we reorder the feature vector $\bphi_s(i)$ and the noise $\varepsilon_{s,i}$ for $(s,i)\in [t-1]\times [m]$ under the lexicographic ordering, i.e., the ordering $\prec_{\text{lex}}$ over $[t-1]\times [m]$ that satisfies $(s_1, i_1)\prec_{\text{lex}}(s_2, i_2)$ if $s_1 < s_2$ or $s_1=s_2, i_1 <i_2$. 

Specifically, for each $j \in [(t-1)m]$, we define
\begin{equation}
\begin{gathered}
\rho(j)=\left\lfloor 1+\frac{j-1}{m}\right\rfloor \in[t-1], \quad \sigma(j)=j-(\rho(j)-1) m \in[m], \\
\tilde{\bphi}_{j}=\bphi_{\rho(j)}(\sigma(j)) \mathbb{I}\left(\sigma(j) \in \tau_{\rho(j)}\right), \quad \tilde{\varepsilon}_{j+1}=\varepsilon_{\rho(j),\sigma(j)}, \quad \tilde{\boldsymbol{H}}_{j}=\sum_{k=1}^j \dot{\ell}\left({\btheta^*}^{\top}\bphi_{\rho(k)}(\sigma(k))\right)\tilde{\bphi}_{k} \tilde{\bphi}_{k}^{\top}+\lambda_t\bI_d.
\end{gathered}
\end{equation}

Let $\cF_j$ denote the $\sigma$-algebra generated by $\{\tilde{\bphi}_k\}_{k=1}^{j} \cup \{\tilde{\varepsilon}_{k+1}\}_{k=1}^{j-1}$.
Recall that conditionally on history $\cH_s$, outcomes $X_{s,i}\sim \text{Bernoulli}\left( \ell\left({\btheta^*}^{\top}\bphi_s(i) \right)\right)$ for $i\in [m]$ are independent Bernoulli random variables. 
We can verify that $\tilde{\bphi}_j$ is $\cF_j$ measurable and each $\tilde{\varepsilon}_{j+1}$ is $[-1,1]$ valued and $\cF_{j+1}$ measurable satisfying:
\begin{align}
 &\E \left[ \tilde{\varepsilon}_{j+1} \mid \cF_j\right] = 0\\
 &\E\left[\tilde{\varepsilon}_{j+1}^2 \mid \cF_j\right] = \Var[X_{\rho(j), \sigma(j)}] = \dot{\ell}\left({\btheta^*}^{\top}\bphi_{\rho(j)}(\sigma(j))\right)
\end{align}

Now for any $t$, we define  
\begin{align}
    \bH_t\defeq\sum_{k=1}^{(t-1)m} \dot{\ell}\left({\btheta^*}^{\top}\bphi_{\rho(k)}(\sigma(k))\right)\tilde{\bphi}_{k} \tilde{\bphi}_{k}^{\top}+\lambda_t\bI_d &= \sum_{s=1}^{t-1}\sum_{i\in\tau_s} \dot{\ell}\left({\btheta^*}^{\top} \bphi_s(i)\right) \bphi_s(i) \bphi_s(i)^{\top}+\lambda_t \mathbf{I}_d\\
    \bS_t\defeq\sum_{j=1}^{(t-1)m} \tilde{\varepsilon}_{j+1}\tilde{\bphi}_j&=\sum_{s=1}^{t-1}\sum_{i\in\tau_s} \varepsilon_{s,i} \bphi_s(i)
\end{align}

Since all conditions of \cref{apdx_thm:prev_concentration} are met and we have with probability at least $1-\delta$:
\begin{align}
    \norm{\bS_t}_{\bH_t^{-1}}&\le \frac{\sqrt{\lambda_t}}{2}+\frac{2}{\sqrt{\lambda_t}} \log \left(\frac{\operatorname{det}\left(\mathbf{H}_{\mathbf{t}}\right)^{\frac{1}{2}} \lambda_t^{-\frac{d}{2}}}{\delta}\right)+\frac{2}{\sqrt{\lambda_t}} d \log (2)
\end{align}

Now, we need to bound $\det(\bH_t)$:
\begin{align}
    \det(\bH_t)&=\det \left(\sum_{s=1}^{t-1}\sum_{i\in\tau_s} \dot{\ell}\left({\btheta^*}^{\top} \bphi_s(i)\right) \bphi_s(i) \bphi_s(i)^{\top}+\lambda_t \mathbf{I}_d\right)\\
    &\le \det \left(\sum_{s=1}^{t-1}\sum_{i\in\tau_s} \bphi_s(i) \bphi_s(i)^{\top}+\lambda_t \mathbf{I}_d\right)\\
    &\overset{(a)}\le (\lambda_t + tK/d)^d
\end{align}
where inequality (a) is due to \cref{apdx_lem:det_and_trace} and the fact that $\norm{\bphi_s(i)}_2\le 1, |\tau_s|\le K$.

Therefore, we have 
\begin{align}\label{apdx_eq:SH}
    \norm{\bS_t}_{\bH_t^{-1}}&\le \frac{\sqrt{\lambda_t}}{2}+\frac{2}{\sqrt{\lambda_t}} \log \left(\frac{\left(\lambda_t+t K / d\right)^{d / 2} \lambda_t^{-d / 2}}{\delta}\right)+\frac{2}{\sqrt{\lambda_t}} d\log 2\\
    &\le \frac{\sqrt{\lambda_t}}{2}+\frac{d}{\sqrt{\lambda_t}} \log \left(\frac{4}{\delta}\left(1+\frac{tK}{\lambda_t d}\right)\right)
\end{align}

Plugging \cref{apdx_eq:SH} into \cref{apdx_eq:ZH} with the fact that $\bZ_t=\bS_t, \bH_t=\bH_t(\btheta^*)$, we have:
\begin{align}
    \left\|\bg_t\left(\hat{\btheta}_t\right)-\bg_t\left(\btheta^*\right)\right\|_{\bH_t^{-1}(\btheta^*)} &\le \left(L+\frac{1}{2}\right)\sqrt{\lambda_t}+\frac{d}{\sqrt{\lambda_t}} \log \left(\frac{4}{\delta}\left(1+\frac{tK}{\lambda_t d}\right)\right)\\
    &\le \left(L+\frac{3}{2}\right)\sqrt{d\log \left(\frac{4(1+tK)}{\delta}\right)}
\end{align}
which concludes the lemma.
\end{proof}

\subsection{Confidence Regions}

\subsubsection{Proof for the Variance-Agnostic Confidence Region in Section \ref{sec:vag_conf_bonus}}\label{apdx_sec:proof_ag_conf}
\begin{proof}[Proof of \cref{lem:var_ag_conf}]

We first build the connection between the $\bH_t(\btheta^*)$ and $\bG_t(\btheta^*,\hat{\btheta}_t)$

\begin{align}
\mathbf{G}_{\mathbf{t}}\left(\btheta^*, \hat{\btheta}_t\right) & =\sum_{s=1}^{t-1}\sum_{i\in\tau_s} \alpha\left(\bphi_s(i), \btheta^*, \hat{\btheta}_t\right) \bphi_s(i) \bphi_s(i)^{\top}+\lambda_t \mathbf{I}_{\mathbf{d}} \\
& \overset{(a)}{\succeq} \sum_{s=1}^{t-1}\sum_{i\in \tau_s}\left(1+\left|\bphi_s(i)^{\top}\left(\btheta^*-\hat{\btheta}_t\right)\right|\right)^{-1} \dot{\ell}\left(\bphi_s(i)^{\top} \btheta^*\right) \bphi_s(i) \bphi_s(i)^{\top}+\lambda_t \mathbf{I}_{\mathbf{d}} \\
& \overset{(b)}{\succeq} \sum_{s=1}^{t-1}\sum_{i\in \tau_s}\left(1+\left\|\bphi_s(i)\right\|_{\mathbf{G}_{\mathbf{t}}^{-1}\left(\btheta^*, \hat{\btheta}_t\right)}\left\|\btheta^*-\hat{\btheta}_t\right\|_{\mathbf{G}_{\mathbf{t}}\left(\btheta^*, \hat{\btheta}_t\right)}\right)^{-1} \dot{\ell}\left(\bphi_s(i)^{\top} \btheta^*\right) \bphi_s(i) \bphi_s(i)^{\top}+\lambda_t \mathbf{I}_{\mathbf{d}} \\
& \succeq\left(1+\lambda_t^{-1 / 2}\left\|\btheta^*-\hat{\btheta}_t\right\|_{\mathbf{G}_{\mathbf{t}}\left(\btheta^*, \hat{\btheta}_t\right)}\right)^{-1} \sum_{s=1}^{t-1}\sum_{i\in \tau_s} \dot{\ell}\left(\bphi_s(i)^{\top} \btheta^*\right) \bphi_s(i) \bphi_s(i)^{\top}+\lambda_t \mathbf{I}_{\mathbf{d}} \\
& \succeq\left(1+\lambda_t^{-1 / 2}\left\|\btheta^*-\hat{\btheta}_t\right\|_{\mathbf{G}_{\mathbf{t}}\left(\btheta^*, \hat{\btheta}_t\right)}\right)^{-1}\left(\sum_{s=1}^{t-1}\sum_{i\in \tau_s} \dot{\ell}\left(\bphi_s(i)^{\top} \btheta^*\right) \bphi_s(i) \bphi_s(i)^{\top}+\lambda_t \mathbf{I}_{\mathbf{d}}\right) \\
& =\left(1+\lambda_t^{-1 / 2}\left\|\btheta^*-\hat{\btheta}_t\right\|_{\mathbf{G}_{\mathbf{t}}\left(\btheta^*, \hat{\btheta}_t\right)}\right)^{-1} \mathbf{H}_{\mathbf{t}}(\btheta^*) \\
& \overset{(c)}{=}\left(1+\lambda_t^{-1 / 2}\left\|g_t(\btheta^*)-g_t\left(\hat{\btheta}_t\right)\right\|_{\mathbf{G}_{\mathbf{t}}^{-1}\left(\btheta^*, \hat{\btheta}_t\right)}\right)^{-1} \mathbf{H}_{\mathbf{t}}(\btheta^*)\label{apdx_eq:link_G_hat_H}
\end{align}
where inequality (a) is due to \cref{apdx_eq:scc1} in \cref{apdx_lem:scc}, inequality (b) is due to Cauchy-Schwarz inequality, and equality (c) is due to \cref{apdx_eq:link_G_g}.

Using this inequality, we have:

\begin{align}
\left\|g_t(\btheta^*)-g_t\left(\hat{\btheta}_t\right)\right\|_{\mathbf{G}_{\mathbf{t}}^{-1}\left(\btheta^*, \hat{\btheta}_t\right)}^2 & \leq\left(1+\lambda_t^{-1 / 2}\left\|g_t(\btheta^*)-g_t\left(\hat{\btheta}_t\right)\right\|_{\mathbf{G}_{\mathbf{t}}^{-1}\left(\btheta^*, \hat{\btheta}_t\right)}\right)\left\|g_t(\btheta^*)-g_t\left(\hat{\btheta}_t\right)\right\|_{\mathbf{H}_{\mathbf{t}}^{-1}(\btheta^*)}^2 \\
& \overset{(a)}{\leq} \lambda_t^{-1 / 2} \gamma_t^2(\delta)\left\|g_t(\btheta^*)-g_t\left(\hat{\btheta}_t\right)\right\|_{\mathbf{G}_{\mathbf{t}}^{-1}\left(\btheta^*, \hat{\btheta}_t\right)}+\gamma_t^2(\delta) 
\end{align}
where inequality (a) is due to \cref{lem:est_dist} which holds with probability at least $1-\delta$.

By \cref{apdx_lem:Poly_ieq}, solving the above quadratic inequality w.r.t.  $\left\|g_t(\btheta^*)-g_t\left(\hat{\btheta}_t\right)\right\|_{\mathbf{G}_{\mathbf{t}}^{-1}\left(\btheta^*, \hat{\btheta}_t\right)}$, we have:

\begin{align}\label{apdx_eq:bound_g_G}
    \left\|g_t(\btheta^*)-g_t\left(\hat{\btheta}_t\right)\right\|_{\mathbf{G}_{\mathbf{t}}^{-1}\left(\btheta^*, \hat{\btheta}_t\right)}\le \frac{\gamma_t^2(\delta)}{\lambda_t^{1/2}} + \gamma_t(\delta)=\left(L^2+4L+\frac{15}{4}\right)\sqrt{\lambda_t}.
\end{align}

Using the above inequality and \cref{apdx_eq:link_G_hat_H}, we have:
\begin{align}
    \bH_t(\btheta^*)\preceq \left(L^2+4L+\frac{19}{4}\right)\mathbf{G}_{\mathbf{t}}\left(\btheta^*, \hat{\btheta}_t\right)
\end{align}

Using the above inequality, we have:
\begin{align}
    \norm{\btheta^*-\hat{\btheta}_t}_{\bH_{t}(\btheta^*)}&\le \sqrt{L^2+4L+\frac{19}{4}}\norm{\btheta^*-\hat{\btheta}_t}_{\mathbf{G}_{\mathbf{t}}\left(\btheta^*, \hat{\btheta}_t\right)}\\
    &\overset{(a)}{=}\sqrt{L^2+4L+\frac{19}{4}}\left\|g_t(\btheta^*)-g_t\left(\hat{\btheta}_t\right)\right\|_{\mathbf{G}_{\mathbf{t}}^{-1}\left(\btheta^*, \hat{\btheta}_t\right)}\\
    &\overset{(b)}{\le} \left(L^2+4L+\frac{19}{4}\right)\sqrt{\lambda_t}
\end{align}
where equality (a) is due to \cref{apdx_eq:link_G_g} and inequality (b) is due to \cref{apdx_eq:bound_g_G}.

Therefore we can derive that:
\begin{align}\label{apdx_eq:vag_conf_bound}
    \norm{\btheta^*-\hat{\btheta}_t}_{\bV_{t}}\overset{(a)}{\le} \sqrt{\kappa}\norm{\btheta^*-\hat{\btheta}_t}_{\bH_{t}^*(\btheta^*)}\le \left(L^2+4L+\frac{19}{4}\right)\sqrt{\kappa\lambda_t}
\end{align}
where inequality (a) is due to $\bH_t(\btheta^*)\succeq \kappa^{-1} \bV_t$.

This concludes the proof of \cref{lem:var_ag_conf}.
\end{proof}

\subsubsection{Proof for the Variance-Adaptive Region in Section \ref{sec:vad_conf_bouns}}\label{apdx_sec:proof_vad_conf}
\begin{proof}[Proof of \cref{lem:var_ad_conf}]
By \cref{apdx_eq:scc1_2} in \cref{apdx_lem:scc}, for any $\btheta_1,\btheta_2 \in \Theta$, we have:
\begin{align}\label{apdx_eq:link_H_G}
    \bG_t(\btheta_1,\btheta_2)&= \sum_{s=1}^{t-1}\sum_{i\in \tau_s} \alpha\left(\bphi_s(i), \btheta_1, \btheta_2\right) \bphi_s(i) \bphi_s(i)^{\top}+\lambda_t \mathbf{I}_{\mathbf{d}} \\
& {\succeq} \sum_{s=1}^{t-1}\sum_{i\in \tau_s}\left(1+2L\right)^{-1} \dot{\ell}\left(\bphi_s(i)^{\top} \btheta_1\right) \bphi_s(i) \bphi_s(i)^{\top}+\lambda_t \mathbf{I}_{\mathbf{d}} \\
&=(1+2L)^{-1} \bH_t(\btheta_1)\\
 \bG_t(\btheta_1,\btheta_2)&= \sum_{s=1}^{t-1}\sum_{i\in \tau_s} \alpha\left(\bphi_s(i), \btheta_1, \btheta_2\right) \bphi_s(i) \bphi_s(i)^{\top}+\lambda_t \mathbf{I}_{\mathbf{d}} \\
& {\succeq} \sum_{s=1}^{t-1}\sum_{i\in \tau_s}\left(1+2L\right)^{-1} \dot{\ell}\left(\bphi_s(i)^{\top} \btheta_2\right) \bphi_s(i) \bphi_s(i)^{\top}+\lambda_t \mathbf{I}_{\mathbf{d}} \\
&=(1+2L)^{-1} \bH_t(\btheta_2)
\end{align}

Using the above inequalities and the fact that $\btheta^*, \hat{\btheta}_{t,H}\in \Theta$, we have 
\begin{align}
    \bH_t(\btheta^*) \preceq (1+2L) \bG_t(\btheta^*, \hat{\btheta} _{t,H}) \\
    \bH_t(\hat{\btheta}_{t,H})  \preceq (1+2L)\bG_t(\btheta^*,\hat{\btheta}_{t,H})
\end{align}

Therefore:
\begin{align}
    &\norm{\btheta^*-\hat{\btheta}_{t,H}}_{\bH_t( \hat{\btheta}_{t,H})}\\
    &\le \sqrt{1+2L}  \norm{\btheta^*-\hat{\btheta}_{t,H}}_{\bG_t(\btheta^*,\hat{\btheta}_{t,H})}\\
    &\overset{(a)}{=}\sqrt{1+2L}  \norm{\bg_t\left(\btheta^*\right)-\bg_t\left(\hat{\btheta}_{t,H}\right)}_{\bG_t^{-1}(\btheta^*,\hat{\btheta}_{t,H})}\\
    &\le\sqrt{1+2L}  \norm{\bg_t\left(\btheta^*\right)-\bg_t\left(\hat{\btheta}_{t}\right)}_{\bG_t^{-1}(\btheta^*,\hat{\btheta}_{t,H})}+\sqrt{1+2L}  \norm{\bg_t\left(\hat{\btheta}_t\right)-\bg_t\left(\hat{\btheta}_{t,H}\right)}_{\bG_t^{-1}(\btheta^*,\hat{\btheta}_{t,H})}\\ 
    &\le (1+2L)  \norm{\bg_t\left(\btheta^*\right)-\bg_t\left(\hat{\btheta}_{t}\right)}_{\bH_t(\btheta^*)}+(1+2L)  \norm{\bg_t\left(\hat{\btheta}_t\right)-\bg_t\left(\hat{\btheta}_{t,H}\right)}_{\bH(\hat{\btheta}_{t,H})}\\
    &\overset{(b)}{\le} 2(1+2L)  \norm{\bg_t\left(\btheta^*\right)-\bg_t\left(\hat{\btheta}_{t}\right)}_{\bH_t(\btheta^*)}\\
    &\overset{(c)}{\le} 2(1+2L)\gamma_t(\delta)
\end{align}
where equality (a) is due to \cref{apdx_eq:link_G_g}, inequality (b) uses the definition of projected MLE $\hat{\btheta}_{t,H} = \arg\min_{\btheta \in \cQ_t}\norm{ g_t(\btheta) - g_t(\hat{\btheta}_t) } _{\bH_t^{-1}(\btheta)}$ and the fact that $\btheta^*\in \cQ_t$ (\cref{eq:Q_region}) under high probability event $\btheta^* \in \cA_s(\delta)$, and inequality (c) is due to \cref{lem:est_dist}.
\end{proof}

\subsubsection{Proof for the Variance-Adaptive Region after the Burn-in Stage in Section \ref{sec:eva_conf_bonus}} \label{apdx_sec:proof_eva_conf}
\begin{proof}[Proof of \cref{lem:Q_region}]
Recall that our feature map $\bphi_t=\bphi$ is time-invariant. In rounds $t=1, ..., T_0$, we define the covariance matrix
\begin{align}
    \bV_t = \sum_{s=1}^{t-1}\bphi(i_s) \bphi(i_s)^{\top} +\kappa\lambda_t \bI_d
\end{align}
where $i_t=\argmax_{i\in[m]} \norm{\bphi(i)}_{\bV_t^{-1}}$ for $t \in [T_0]$ and $\lambda_t=d \log\left( \frac{4(2+t)}{\delta}\right)$ for $t \in [T_0+1]$.

By the similar analysis of \cref{lem:var_ag_conf} where we identify $T=T_0, \tau_s=i_s, K=1$, we have the same bound of \cref{apdx_eq:vag_conf_bound}  with probability at least $1-\delta$,
\begin{align}
     \norm{\btheta^*-\hat{\btheta}_{T_0+1}}_{\bV_{T_0+1}}\le \left(L^2+4L+\frac{19}{4}\right)\sqrt{\kappa\lambda_{T_0+1}}=\left(L^2+4L+\frac{19}{4}\right)\sqrt{\kappa d \log\left( \frac{4(2+T_0)}{\delta}\right)}.
\end{align}

Recall that $\cQ=\left\{\btheta \in \Theta: \norm{\btheta-\hat{\btheta}_{T_0+1}}_{\bV_{T_0+1}} \le \left(L^2+4L+\frac{19}{4}\right)\sqrt{\kappa\lambda_{T_0+1}} \right\}$, we have 
\begin{align}
    \text{diam}(\cQ)&=\max_{i \in [m], \btheta_1, \btheta_2 \in \cQ} \abs{\bphi(i)^{\top} (\btheta_1-\btheta_2)}\\
    &\le\max_{i \in [m], \btheta_1, \btheta_2\in \cQ} \norm{\bphi(i)}_{\bV_{T_0+1}^{-1}} \norm{\btheta_1 -\btheta_2}_{\bV_{T_0+1}}\\
    &\overset{(a)}{\le} \max_{i \in [m], \btheta_1, \btheta_2 \in \cQ} \norm{\bphi(i)}_{\bV_{T_0+1}^{-1}} \left(\norm{\btheta_1-\hat{\btheta}_{T_0+1}}_{\bV_{T_0+1}}+ \norm{\btheta_2-\hat{\btheta}_{T_0+1}}_{\bV_{T_0+1}}\right)\\
    &\overset{(b)}{\le} 2\left(L^2+4L+\frac{19}{4}\right)\sqrt{\kappa\lambda_{T_0+1}} \max_{i\in[m]} \norm{\bphi(i)}_{\bV_{T_0+1}^{-1}}\\
    &\overset{(c)}{\le} 2\left(L^2+4L+\frac{19}{4}\right)\sqrt{\kappa\lambda_{T_0+1}} \frac{\sum_{t=1}^{T_0}\norm{\bphi(i)}_{\bV_{t}^{-1}}}{T_0}\\
    &\overset{(d)}{\le} 2\left(L^2+4L+\frac{19}{4}\right)\frac{\sqrt{\kappa\lambda_{T_0+1}}} {\sqrt{T_0}} \sqrt{\sum_{t=1}^{T_0}\norm{\bphi(i)}^2_{\bV_{t}^{-1}}}\\
    &= 2\left(L^2+4L+\frac{19}{4}\right)\frac{\sqrt{\kappa d \log\left( \frac{4(2+T_0)}{\delta}\right)}} {\sqrt{T_0}} \sqrt{\sum_{t=1}^{T_0}\norm{\bphi(i)}^2_{\bV_{t}^{-1}}}\\
    &\overset{(e)}{\le} 2\left(L^2+4L+\frac{19}{4}\right)\frac{\sqrt{\kappa} 2d \log\left( \frac{4(2+T_0)}{\delta}\right)} {\sqrt{T_0}}\\
    &\overset{}{\le} 2\left(L^2+4L+\frac{19}{4}\right)\frac{\sqrt{\kappa} 2d \log\left( \frac{4(2+T)}{\delta}\right)} {\sqrt{T_0}}\\
    &\overset{(f)}{\le} 1 \label{apdx_eq:D<1}
\end{align}
where inequality (a) is due to the triangle inequality, inequality (b) is due to the definition of $\cQ$, inequality (c) is due to the the fact that $\bV_t \preceq \bV_{T_0+1}$ for $t\in[T_0]$ so that $\norm{\bphi(i_t)}_{\bV_t^{-1}}=\max_{i\in[m]}\norm{\bphi(i)}_{\bV_t^{-1}}\ge \max_{i\in[m]}\norm{\bphi(i)}_{\bV_{T_0+1}^{-1}}$, inequality (d) is due to Cauchy-Schwarz inequality, inequality (e) is due to \cref{apdx_lem:elliptical_potential}, inequality (f) is by $T_0 \defeq \left(4L^2+16L+19\right)^2 \kappa d^2 \log^2\left( \frac{4(2+T)}{\delta}\right)$

\end{proof}

\begin{proof}[Proof of \cref{lem:eva_conf}]

Let $\cQ \in \Theta$ be a region where $\text{diam}(\cQ)\le D$.
Suppose $\btheta^* \in \cQ$.
We first use the second-order Taylor expansion in \cref{apdx_eq:loss_talor} for the loss function $\cL_t$:

\begin{align}
    \mathcal{L}_t\left(\hat{\btheta}_{t,E}\right)-\mathcal{L}_t\left(\btheta^*\right)&=\left.\nabla \mathcal{L}_t\right({\btheta^*}) ^{\top}\left(\hat{\btheta}_{t,E}-\btheta^*\right)+\left(\hat{\btheta}_{t,E}-\btheta^*\right)^{\top} \widetilde{\mathbf{G}}_{\mathbf{t}}\left( \btheta^*,\hat{\btheta}_{t,E}\right)\left(\hat{\btheta}_{t,E}-\btheta^*\right)\\
    &\overset{(a)}{\ge} \left.\nabla \mathcal{L}_t\right({\btheta^*}) ^{\top}\left(\hat{\btheta}_{t,E}-\btheta^*\right)+(2+D)^{-1}\norm{\hat{\btheta}_{t,E}-\btheta^*}_{\bH_t(\btheta^*)}^2\label{apdx_eq:rewrite_1}
\end{align}
where inequality (a) is due to the following inequalities

\begin{align}\label{apdx_eq:link_H_tilde_G}
    \tilde{\bG}_t(\btheta^*,\hat{\btheta}_{t,E})&= \sum_{s=1}^{t-1}\sum_{i\in \tau_s} \tilde{\alpha}\left(\bphi_s(i), \btheta^*, \hat{\btheta}_{t,E}\right) \bphi_s(i) \bphi_s(i)^{\top}+\lambda_t \mathbf{I}_{\mathbf{d}} \\
& \overset{(a)}{\succeq} \sum_{s=1}^{t-1}\sum_{i\in \tau_s}\left(2+D\right)^{-1} \dot{\ell}\left(\bphi_s(i)^{\top} \btheta^*\right) \bphi_s(i) \bphi_s(i)^{\top}+\lambda_t \mathbf{I}_{\mathbf{d}} \\
&=(2+D)^{-1} \bH_t(\btheta^*)
\end{align}
where inequality (a) is due to \cref{apdx_eq:scc2} in \cref{apdx_lem:scc}.

Therefore, we rewrite \cref{apdx_eq:rewrite_1}:
\begin{align}\label{apdx_eq:2+D}
    \norm{\hat{\btheta}_{t,E}-\btheta^*}_{\bH_t(\btheta^*)}^2 \le (2+D)\nabla \mathcal{L}_t({\btheta^*}) ^{\top}\left(\btheta^*-\hat{\btheta}_{t,E}\right) + (2+D)\left(\mathcal{L}_t\left(\hat{\btheta}_{t,E}\right)-\mathcal{L}_t\left(\btheta^*\right)\right)
\end{align}

Since $\hat{\btheta}_{t,E}=\argmin_{\btheta \in \cQ} \mathcal{L}_t(\theta)$ and under the event that $\btheta^* \in \cQ$, we have:

\begin{align}
    \mathcal{L}_t\left(\hat{\btheta}_{t,E}\right)-\mathcal{L}_t\left(\btheta^*\right)\le 0
\end{align}

Combining the above inequality and \cref{apdx_eq:2+D}, we have:
\begin{align}
    \norm{\hat{\btheta}_{t,E}-\btheta^*}_{\bH_t(\btheta^*)}^2 &\le (2+D)\nabla \mathcal{L}_t({\btheta^*}) ^{\top}\left(\btheta^*-\hat{\btheta}_{t,E}\right)\\
    &\le (2+D)\norm{\nabla \mathcal{L}_t({\btheta^*})} _{\bH_t^{-1}(\btheta^*)} \norm{\hat{\btheta}_{t,E}-\btheta^*}_{\bH_t(\btheta^*)}
\end{align}
which can be rewritten as:
\begin{align}
    \norm{\hat{\btheta}_{t,E}-\btheta^*}_{\bH_t(\btheta^*)}&\le (2+D)\norm{\nabla \mathcal{L}_t({\btheta^*})} _{\bH_t^{-1}(\btheta^*)}\\
    &=(2+D)\norm{\bg_t(\btheta^*)-\sum_{s=1}^{t-1}\sum_{i\in \tau_s} X_{s,i} \bphi_s(i)} _{\bH_t^{-1}(\btheta^*)}\\
    &=(2+D)\norm{\bg_t(\btheta^*)-\bg_t(\hat{\btheta}_t)} _{\bH_t^{-1}(\btheta^*)}\\
    &\le (2+D)\gamma_t(\delta)
\end{align}

Now we can conclude the lemma by the fact that $\btheta^* \in \cQ$ and $D \le 1$ due to \cref{apdx_eq:D<1} with probability at least $1-\delta$. 

\end{proof}

\subsection{Optimism and Exploration Bonus}
\subsubsection{Proof for the Variance-Agnostic Exploration Bonus in Section \ref{sec:vag_conf_bonus} }\label{apdx_sec:var_ag_bonus}
\begin{proof}[Proof of \cref{lem:var_ag_bonus}]
Using the first-order Taylor expansion in \cref{apdx_eq:1_taylor_l}, we have:
\begin{align}
    \abs{\ell\left({\btheta^*}^{\top}\bphi_t(i)\right)-\ell\left(\hat{\btheta}_t^{\top}\bphi_t(i)\right)}&=\alpha\left(\bphi_t(i), \hat{\btheta}_t,  \btheta^*\right)\abs{\bphi_t(i)^{\top} \left( \btheta^* - \hat{\btheta}_t\ \right)}\\
    &\overset{(a)}{\le} \frac{1}{4}\abs{\bphi_t(i)^{\top} \left( \btheta^* - \hat{\btheta}_t\ \right)}\\
    &\overset{(b)}{\le} \frac{1}{4}\norm{\bphi_t(i)} _{\bV_t^{-1}} \norm{\btheta^* - \hat{\btheta}_t} _{\bV_t}\\
    &\overset{(c)}{\le} \frac{1}{4}\beta_t(\delta)  \norm{\bphi_t(i)} _{\bV_t^{-1}}
\end{align}
where inequality (a) is due to $\cref{apdx_eq:alpha}$ with the fact that $\dot{\ell}(x)\le 1/4$ for any $x\in \R$, inequality (b) is due to Cauchy-Schwarz inequality, and inequality (c) is due to event $\btheta^* \in \cB_t(\delta)$ in \cref{lem:var_ag_conf}.

This concludes the \cref{lem:var_ag_bonus}.
\end{proof}

\subsubsection{Proof for the Variance-Adaptive Exploration Bonus in Section \ref{sec:vad_conf_bouns} }\label{apdx_sec:proof_vad_bonus}
\begin{proof}[Proof of \cref{lem:var_ad_bonus}]
Using the second-order Taylor expansion in \cref{apdx_eq:2_taylor_l}, we have:
\begin{align}
    &\abs{\ell\left({\btheta^*}^{\top}\bphi_t(i)\right)-\ell\left(\hat{\btheta}_{t,H}^{\top}\bphi_t(i)\right)}\label{apdx_eq:var_ad_bonus_start}\\
    &=\left|\dot{\ell}\left(\bphi_t(i)^{\top}\hat{\btheta}_{t,H}\right) \bphi_t(i)^{\top}\left(\btheta^*-\hat{\btheta}_{t,H}\right)\right.\notag\\
    &+\left.\left(\bphi_t(i)^{\top}\left(\btheta^*-\hat{\btheta}_{t,H}\right)\right)^2\int_{v=0}^1 (1-v)\ddot{\ell}\left(v \bphi_t(i)^{\top} \btheta^*+(1-v) \bphi_t(i)^{\top} \hat{\btheta}_{t,H}\right) d v\right|\\
    &\le \dot{\ell}\left(\bphi_t(i)^{\top}\hat{\btheta}_{t,H}\right) \abs{\bphi_t(i)^{\top}\left(\btheta^*-\hat{\btheta}_{t,H}\right)}\notag\\
    &+\left(\bphi_t(i)^{\top}\left(\btheta^*-\hat{\btheta}_{t,H}\right)\right)^2\int_{v=0}^1 (1-v)\abs{\ddot{\ell}\left(v \bphi_t(i)^{\top} \btheta^*+(1-v) \bphi_t(i)^{\top} \hat{\btheta}_{t,H}\right)} d v\\
    &\overset{(a)}{\le} \dot{\ell}\left(\bphi_t(i)^{\top}\hat{\btheta}_{t,H}\right) \abs{\bphi_t(i)^{\top}\left(\btheta^*-\hat{\btheta}_{t,H}\right)}+\frac{1}{8}\left(\bphi_t(i)^{\top}\left(\btheta^*-\hat{\btheta}_{t,H}\right)\right)^2\label{apdx_eq:var_ad_bonus_mid}\\
    &\overset{(b)}{\le}\dot{\ell}\left(\bphi_t(i)^{\top}\hat{\btheta}_{t,H}\right) \norm{\bphi_t(i)} _{\bH_t^{-1}(\hat{\btheta}_{t,H})}\norm{\btheta^*-\hat{\btheta}_{t,H}} _{\bH_t(\hat{\btheta}_{t,H})}+\frac{1}{8}\norm{\bphi_t(i)} ^2  _{\bH_t^{-1}(\hat{\btheta}_{t,H})}\norm{\btheta^*-\hat{\btheta}_{t,H}} ^2 _{\bH_t(\hat{\btheta}_{t,H})}\\
    &\overset{(c)}\le (2+4L)\gamma_t(\delta)\dot{\ell}\left(\bphi_t(i)^{\top}\hat{\btheta}_{t,H}\right) \norm{\bphi_t(i)} _{\bH_t^{-1}(\hat{\btheta}_{t,H})}+\frac{1}{2} (1+2L)^2 \gamma_t^2(\delta) \norm{\bphi_t(i)} ^2  _{\bH_t^{-1}(\hat{\btheta}_{t,H})}\\
    &\overset{(d)}\le (2+4L)\gamma_t(\delta)\dot{\ell}\left(\bphi_t(i)^{\top}\hat{\btheta}_{t,H}\right) \norm{\bphi_t(i)} _{\bH_t^{-1}(\hat{\btheta}_{t,H})}+\frac{1}{2} (1+2L)^2 \kappa \gamma_t^2(\delta) \norm{\bphi_t(i)} ^2  _{\bV_t^{-1}}
\end{align}
where inequality (a) is due to the self-concordance property that $\abs{\ddot{\ell}(x)}\le \dot{\ell}(x)\le 1/4$ for any $x\in \R$, inequality (b) is due to Cauchy-Schwarz inequality,  inequality (c) is due to the event $\btheta^* \in \cC_t(\delta)$ defined in \cref{lem:var_ad_conf}, and inequality (d) is due to $\cH_{t}(\btheta)\ge \kappa^{-1} \bV_t$
for any $\btheta \in \R^d$.
\end{proof}

This concludes the \cref{lem:var_ad_bonus}.

\subsubsection{Proof for the Variance-Adaptive Exploration Bonus after the Burn-in Stage in Section \ref{sec:eva_conf_bonus} }\label{apdx_sec:proof_eva_bonus}
\begin{proof}[proof of \cref{lem:eva_bonus}]

By \cref{apdx_lem:scc}, we have:

\begin{align}\label{apdx_eq:s_lower_than_hat}
      \dot{\ell}\left( {\btheta^*}^{\top} \bphi_t(i) \right) &\le \exp \left(\abs{\left(  {\btheta^*} - \hat{\btheta}_{t,E}\right)^{\top} \bphi_t(i) } \right) \dot{\ell}\left( {\hat{\btheta}_{t,E}}^{\top} \bphi_t(i) \right)\\
     &\overset{(a)}{\le} e \dot{\ell}\left( {\hat{\btheta}_{t,E}}^{\top} \bphi_t(i) \right)
\end{align}
where inequality (a) is due to $\btheta^*,\hat{\btheta}_{t,E} \in \cQ$ and $\text{diam}(\cQ)\le 1$ which holds with probability at least $1-\delta$ by \cref{lem:Q_region} and the definition of $\hat{\btheta}_{t,E}$ in \cref{line:constrained_opt}.

Similarly, we have

\begin{align}\label{apdx_eq:hat_lower_than_s}
    \dot{\ell}\left( {\hat{\btheta}_{t,E}}^{\top} \bphi_t(i) \right) &\le \exp \left(\abs{\left(  {\btheta^*} - \hat{\btheta}_{t,E}\right)^{\top} \bphi_t(i) } \right) \dot{\ell}\left( {\btheta^*}^{\top} \bphi_t(i) \right)\\
     &\le e \dot{\ell}\left( {\btheta^*}^{\top} \bphi_t(i) \right)
\end{align}

Therefore, by the definition of $\bH_t(\btheta)$, we have: 
\begin{align}\label{apdx_eq:link_H*_H}
    e^{-1}\bH_t(\hat{\btheta}_{t,E}) \le \bH_t({\btheta^*})\le e\bH_t(\hat{\btheta}_{t,E})
\end{align}

Following a similar argument from \cref{apdx_eq:var_ad_bonus_start} to \cref{apdx_eq:var_ad_bonus_mid}, we have:
\begin{align}
    &\abs{\ell\left({\btheta^*}^{\top}\bphi_t(i)\right)-\ell\left(\hat{\btheta}_{t,E}^{\top}\bphi_t(i)\right)}\\
    &\le \dot{\ell}\left(\bphi_t(i)^{\top}\hat{\btheta}_{t,E}\right) \abs{\bphi_t(i)^{\top}\left(\btheta^*-\hat{\btheta}_{t,E}\right)}+\frac{1}{8}\left(\bphi_t(i)^{\top}\left(\btheta^*-\hat{\btheta}_{t,E}\right)\right)^2\\
    &\overset{(a)}{\le}\dot{\ell}\left(\bphi_t(i)^{\top}\hat{\btheta}_{t,E}\right) \norm{\bphi_t(i)} _{\bH_t^{-1}(\btheta^*)}\norm{\btheta^*-\hat{\btheta}_{t,E}} _{\bH_t(\btheta^*)}+\frac{1}{8}\norm{\bphi_t(i)} ^2  _{\bH_t^{-1}(\btheta^*)}\norm{\btheta^*-\hat{\btheta}_{t,E}} ^2 _{\bH_t(\btheta^*)}\\
    & \overset{(b)}{\le}\sqrt{e}\dot{\ell}\left(\bphi_t(i)^{\top}\hat{\btheta}_{t,E}\right) \norm{\bphi_t(i)} _{\bH_t^{-1}(\hat{\btheta}_{t,E})}\norm{\btheta^*-\hat{\btheta}_{t,E}} _{\bH_t(\btheta^*)}+\frac{1}{8}\kappa\norm{\bphi_t(i)} ^2  _{\bV_t^{-1}}\norm{\btheta^*-\hat{\btheta}_{t,E}} ^2 _{\bH_t(\btheta^*)}\\
    &\overset{(c)}{\le} 3\sqrt{e}\gamma_t(\delta)\dot{\ell}\left(\bphi_t(i)^{\top}\hat{\btheta}_{t,E}\right) \norm{\bphi_t(i)} _{\bH_t^{-1}(\hat{\btheta}_{t,E})}+\frac{9}{8}\kappa\gamma_t^2(\delta)\norm{\bphi_t(i)} ^2  _{\bV_t^{-1}}
\end{align}
    where inequality (a) is due to Cauchy-Schwarz inequality, inequality (b) is due to $e\bH_t^{-1}(\hat{\btheta}_{t,E}) \ge \bH_t^{-1}({\btheta^*})$ by \cref{apdx_eq:link_H*_H}, and inequality (c) is due to \cref{lem:eva_conf}.
\end{proof}

\section{Regret upper bound}\label{apdx_sec:regret_upper_bounds}

\subsection{Regret Bound for Alg. \ref{alg:CLogUCB} under 1-norm TPM Smoothness Condition}\label{apdx_sec:regret_ag}

\begin{proof}[Proof of \cref{thm:var_ag_thm}]
Recall that the historical data $\cH_t=(\bphi_s, S_s, \tau_s, (X_{s,i})_{i\in \tau_s})_{s<t} \bigcup \bphi_t$. 
Let $\E_t[\cdot]\defeq\E[\cdot | \cH_t]$.
Let $\bar{\bmu}_t\defeq (\bar{\mu}_{t,1},...,\bar{\mu}_{t,m})$, where $\bar{\mu}_{t,i}$ is the UCB value of \cref{line:clogb_ucb} in \cref{alg:CLogUCB}.

Under the event $\left\{\forall t\ge 1, \btheta^* \in \mathcal{B}_t(\delta)\right\}$, we have:

\begin{align}
\text{Reg}(T)&\overset{(a)}{=}\E\left[\sum_{t \in [T]}\E_t[\alpha\cdot r(S^*_t;\bmu_t)-r(S_t;\bmu_t)]\right]\label{apdx_eq:1-norm_reg_start}\\
&\overset{(b)}{\le}\E\left[\sum_{t \in [T]}\E_t[\alpha\cdot r(S^*_t;\bar{\bmu}_t)-r(S_t;\bmu_t)]\right]\notag\\
&\overset{(c)}{\le} \E\left[\sum_{t \in [T]}\E_t[r(S_t;\bar{\bmu}_t)-r(S_t;\bmu_t)]\right]\notag\\
    &\overset{(d)}{\le}\E\left[\sum_{t \in [T]}\E_t\left[\sum_{i \in [m]}B_1p_i^{\bmu_t,S_t}(\bar{\mu}_{t,i}-\mu_{t,i})\right]\right]\notag\\
    &\overset{(e)}{=} \E\left[\sum_{t \in [T]}\E_t\left[\sum_{i \in \tau_t}B_1(\bar{\mu}_{t,i}-\mu_{t,i})\right]\right]\label{apdx_eq:1-norm_reg_mid}\\
&\overset{(f)}{\le} \E\left[\sum_{t \in [T]}\E_t\left[\sum_{i \in \tau_t}\frac{1}{2}B_1\beta_t(\delta)\norm{\bphi_t(i)}_{\bV_t^{-1}}\right]\right]\notag\\
&\overset{(g)}{=}  \frac{1}{2}B_1\beta_T(\delta)\E\left[\sum_{t \in [T]}\sum_{i \in \tau_t}\norm{\bphi_t(i)}_{\bV_{t}^{-1}}\right]\notag\\
&\overset{(h)}{\le}  \frac{1}{2}B_1\beta_T \E\left[\sqrt{KT\sum_{t \in [T]}\sum_{i \in \tau_t}\norm{\bphi_t(i)}^2_{\bV_{t}^{-1}}}\right] \notag\\
&\overset{(i)}{\le}\frac{1}{2}B_1\beta_T(\delta) \sqrt{2KT \log \left( T + \lbdd \right)}\notag\\
&=\frac{1}{2}B_1\left(L^2+4L+\frac{19}{4}\right) \sqrt{\kappa \lbd} \sqrt{4KT \lbdd}\notag\\
&\le B_1\left(L^2+4L+\frac{19}{4}\right) \sqrt{\kappa KT}  \lbdd.
   \end{align}
 where inequality (a) is due to the regret definition and the tower rule of expectation, 
 inequality (b) is due to monotonicity condition (\cref{cond:mono}) and the fact that $\mu_{t,i}\le\bar{\mu}_{t,i}$ by \Cref{lem:var_ag_bonus}, 
 inequality (c) is due to $\alpha r(S^*;\bar{\bmu}_t)\le\alpha \max_{S\in \cS}r(S;\bar{\bmu}_t)\le r(S_t;\bar{\bmu}_t)$ by the definition of the $\alpha$-approximation oracle, 
 inequality (d) is due to $r(S_t;\bar{\bmu}_t)-r(S_t;\bmu_t)\le\sum_{i \in [m]}B_1p_i^{\bmu_t,S_t}(\bar{\mu}_{t,i}-\mu_{t,i})$ by 1-norm TPM Condition (\cref{cond:TPM}), 
 inequality(e) is due to $\E_t[i\in \tau_t]=p_{i}^{\bmu_t, \bS_t}$ and the fact that $S_t, \bmu_t, \bar{\bmu}_t$ are $\cH_t$ measurable, 
 inequality (f) is due to $\bar{\mu}_{t,i}-\bar{\mu}_{t,i}\le \frac{1}{2}\beta_t(\delta)\norm{\bphi_t(i)}_{\bV_t^{-1}}$ by \cref{lem:var_ag_bonus}, 
 inequality (g) is due to the tower rule of expectation,
 inequality (h) by Cauchy Schwarz inequality, 
 and (i) by the elliptical potential lemma (\Cref{apdx_lem:elliptical_potential}). 

Let $r_{\max}=\max_{t\in[T]} \alpha r(S_t^*;\bmu_t)$. Now we set $\delta=\frac{1}{T}$ and consider the regret caused by the failure of event $\left\{\forall t\ge 1, \btheta^* \in \mathcal{B}_t(\delta)\right\}$, we have:
\begin{align}
    \text{Reg}(T)&\le r_{\max} T \cdot \frac{1}{T} + B_1\left(L^2+4L+\frac{19}{4}\right) \sqrt{\kappa KT}  \lbddt\\
    &= O\left(B_1 d\sqrt{\kappa K T}\log(KT)\right)
\end{align}

\end{proof}

\subsection{Regret Bound for Alg. \ref{alg:VA_CLogB} under 1-Norm TPM Smoothness Condition and TPVM Smoothness Condition}\label{apdx_sec:proof_thm_ad}
\begin{proof}[Proof of \cref{thm:var_ad_thm1}]
Recall that the historical data $\cH_t=(\bphi_s, S_s, \tau_s, (X_{s,i})_{i\in \tau_s})_{s<t} \bigcup \bphi_t$. 
Let $\E_t[\cdot]\defeq\E[\cdot | \cH_t]$.
Let $\bar{\bmu}_t\defeq (\bar{\mu}_{t,1},...,\bar{\mu}_{t,m})$, where $\bar{\mu}_{t,i}$ is the UCB value of \cref{line:va_clogb_ucb} in \cref{alg:VA_CLogB}.

Suppose the event $\{\forall t \geq 1, \btheta^* \in \cA_t(\delta)\} \leq \gamma_t(\delta)\}$ and the event $\{\forall t \geq 1, \btheta^* \in \cC_t(\delta)\}$ hold.

By the first-order Taylor expansion, for any $\btheta_1, \btheta_2 \in \cA_t(\delta)$ and any vector $\bx=\bphi_t(i)$ for all $t\in [T], i\in [m]$, we have:

\begin{align}
    \dot{\ell} \left( \btheta_2^{\top} \bx \right) &=  \dot{\ell} \left( \btheta_1^{\top} \bx\right) + \bx^{\top} (\btheta_2-\btheta_1) \int_{v=0}^1 \ddot{\ell}\left(v \bx^{\top} \btheta_2+(1-v) \bx^{\top} \btheta_1\right) d v\\
    &\le \dot{\ell} \left( \btheta_1^{\top} \bx\right) + \abs{\bx^{\top} (\btheta_2-\btheta_1)} \int_{v=0}^1 \abs{\ddot{\ell}\left(v \bx^{\top} \btheta_2+(1-v) \bx^{\top} \btheta_1\right)} d v\\
    &\overset{(a)}{\le} \dot{\ell} \left( \btheta_1^{\top} \bx\right) + \abs{\bx^{\top} (\btheta_2-\btheta_1)} \int_{v=0}^1 \dot{\ell}\left(v \bx^{\top} \btheta_2+(1-v) \bx^{\top} \btheta_1\right) d v\\
    &\le \dot{\ell} \left( \btheta_1^{\top} \bx\right) + \frac{1}{4}\abs{\bx^{\top} (\btheta_2-\btheta_1)}\\
    &\le \dot{\ell} \left( \btheta_1^{\top} \bx\right) + \frac{1}{4} \norm{\bx}_{\bG^{-1}_t(\btheta_1, \btheta_2)} \norm{\btheta_2-\btheta_1}_{\bG_t(\btheta_1, \btheta_2)}\\
    &\overset{(b)}{=} \dot{\ell} \left( \btheta_1^{\top} \bx\right) + \frac{1}{4} \norm{\bx}_{\bG^{-1}_t(\btheta_1, \btheta_2)} \norm{\bg_t(\btheta_2)-\bg_t(\btheta_1)}_{\bG^{-1}_t(\btheta_1, \btheta_2)}\\
    &\le \dot{\ell} \left( \btheta_1^{\top} \bx\right) + \frac{1}{4} \norm{\bx}_{\bG^{-1}_t(\btheta_1, \btheta_2)} \left(\norm{\bg_t(\btheta_2)-\bg_t(\hat{\btheta}_t)}_{\bG^{-1}_t(\btheta_1, \btheta_2)}+\norm{\bg_t(\btheta_1)-\bg_t(\hat{\btheta}_t)}_{\bG^{-1}_t(\btheta_1, \btheta_2)}\right)\\
    &\overset{(c)}{\le} \dot{\ell} \left( \btheta_1^{\top} \bx\right) + \frac{\sqrt{1+2L}}{4} \norm{\bx}_{\bG^{-1}_t(\btheta_1, \btheta_2)} \left(\norm{\bg_t(\btheta_2)-\bg_t(\hat{\btheta}_t)}_{\bH^{-1}_t(\btheta_2)}+\norm{\bg_t(\btheta_1)-\bg_t(\hat{\btheta}_t)}_{\bH^{-1}_t(\btheta_1)}\right)\\
    &\overset{(d)}{\le} \dot{\ell} \left( \btheta_1^{\top} \bx\right) + \frac{\sqrt{1+2L}}{2}\gamma_t(\delta)\norm{\bx}_{\bG^{-1}_t(\btheta_1, \btheta_2)}\\
    &\overset{(e)}{\le} \dot{\ell} \left( \btheta_1^{\top} \bx\right) + \frac{\sqrt{1+2L}}{2}\sqrt{\kappa}\gamma_t(\delta)\norm{\bx}_{\bV^{-1}_t}
\end{align}
where inequality (a) is due to the self-concordant property $\abs{\ddot{\ell}(x)}\le \dot{\ell}(x)$ for any $x \in \R$,
inequality (b) is due to \cref{apdx_eq:link_G_g}, 
inequality (c) is due to the same derivation of \cref{apdx_eq:link_H_G}, 
inequality (d) is due to the $\btheta_1, \btheta_2 \in \cA_t(\delta)$,
inequality (e) is due to $\bG_t(\btheta_1, \btheta_2)\ge \kappa^{-1} \bV_t$.

Define $\tilde{\btheta}_{t,i}\defeq \arg\min_{\btheta \in \mathcal{A}_t(\delta)}\dot{\ell}(\bphi_t(i)^{\top}\btheta)$. By the above inequality and the fact that $\tilde{\btheta}_{t,i}, \hat{\btheta}_{t,H}\in \cA_{t}(\delta)$, we have: 
\begin{align}\label{apdx_eq:link_dot_l}
    \dot{\ell} \left( \hat{\btheta}_{t,H}^{\top}\bphi_t(i)\right) {\le} \dot{\ell} \left( \tilde{\btheta}_{t,i}^{\top}\bphi_t(i)\right) + \frac{\sqrt{1+2L}}{2}\sqrt{\kappa}\gamma_t(\delta)\norm{\bphi_t(i)}_{\bV^{-1}_t}
\end{align}

% and $\tilde{\mu}_{t,i}\defeq\ell\left(\bphi_t(i)^{\top}\tilde{\btheta}_{t,i}\right)$.

Define $\bL_t\defeq\sum_{s=1}^{t-1}\sum_{i\in\tau_s}\dot{\ell}\left(\bphi_s(i)^{\top}\tilde{\btheta}_{s,i}\right)\bphi_s(i)\bphi_s(i)^{\top}+\lambda_t \mathbf{I}_d$. Since the region $\cQ_t$ in \cref{eq:Q_region} is equivalent to
\begin{align}
    \cQ_t = \left\{\btheta \in \Theta: \dot{\ell}\left(\btheta^{\top}\bphi_s(i)\right) \ge \min_{\btheta' \in \cA_s(\delta)}\dot{\ell}\left({\btheta'}^{\top}\bphi_s(i)\right) \text{ for all } i \in \tau_s, s\in [t]\right\},
\end{align} and using the fact that $\btheta^*, \hat{\btheta}_{t,H} \in \cQ_t$, we have for any $s\in [t], i\in \tau_s$:
\begin{align}
\dot{\ell}\left( {\btheta^*}^{\top} \bphi_{s}(i) \right) &\ge \dot{\ell} \left( \tilde{\btheta}_{s,i} ^{\top}\bphi_{s}(i)\right)\label{apdx_eq:lb_theta_s}\\
    \dot{\ell}\left( \hat{\btheta}_{t,H}^{\top} \bphi_{s}(i) \right) &\ge \dot{\ell} \left( \tilde{\btheta}_{s,i} ^{\top}\bphi_{s}(i)\right)\label{apdx_eq:lb_theta_hat}
\end{align}

Therefore, we have:
\begin{align}\label{apdx_eq:link_H_L}
    \cH_t(\hat{\btheta}_{t,H}) &=\sum_{s=1}^{t-1}\sum_{i\in\tau_s} \dot{\ell}\left(\bphi_s(i)^{\top} \hat{\btheta}_{t,H}\right) \bphi_s(i) \bphi_s(i)^{\top}+\lambda_t \mathbf{I}_d\\
    &\ge \sum_{s=1}^{t-1}\sum_{i\in\tau_s} \dot{\ell}\left(\bphi_s(i)^{\top} \tilde{\btheta}_{s,i}\right) \bphi_s(i) \bphi_s(i)^{\top}+\lambda_t \mathbf{I}_d = \bL_t
\end{align}

Now we use the same derivation from \cref{apdx_eq:1-norm_reg_start} to \cref{apdx_eq:1-norm_reg_mid}:

\begin{align}
    &\text{Reg}(T)\le \E\left[\sum_{t \in [T]}\E_t\left[\sum_{i \in \tau_t}B_1(\bar{\mu}_{t,i}-\mu_{t,i})\right]\right]\overset{(a)}{\le}\E\left[\sum_{t \in [T]}\E_t\left[\sum_{i \in \tau_t}B_1(2\rho_{t,H}(i))\right]\right]\notag\\
    &=\E\left[\sum_{t \in [T]}\E_t\left[\sum_{i \in \tau_t}B_1\left((4+8L) \dot{\ell}\left(\hat{\btheta}_{t,H}^{\top}\bphi_t(i)\right)\|\bphi_t(i)\|_{\mathbf{H}_t^{-1}(\hat{\btheta}_{t,H})} \gamma_t(\delta)+(1+2L)\kappa \gamma_t^2(\delta)\|\bphi_t(i)\|_{\mathbf{V}_t^{-1}}^2\right)\right]\right]\\
    &\overset{(b)}{\le} \E\left[\sum_{t \in [T]}\E_t\left[\sum_{i \in \tau_t}B_1\left((4+8L) \dot{\ell}\left(\tilde{\btheta}_{t,i}^{\top}\bphi_t(i)\right)\|\bphi_t(i)\|_{\mathbf{H}_t^{-1}(\hat{\btheta}_{t,H})} \gamma_t(\delta)\right.\right.\right.\notag\\
    &\quad\quad+\left.\left.\left. 3(1+2L)^2\kappa \gamma_t^2(\delta)\|\bphi_t(i)\|_{\mathbf{V}_t^{-1}}^2\right)\right]\right]\\
     &\overset{(c)}{\le} \E\left[\sum_{t \in [T]}\E_t\left[\sum_{i \in \tau_t}B_1\left((4+8L) \dot{\ell}\left(\tilde{\btheta}_{t,i}^{\top}\bphi_t(i)\right)\|\bphi_t(i)\|_{\mathbf{L}_t^{-1}} \gamma_t(\delta) +3(1+2L)^2\kappa \gamma_t^2(\delta)\|\bphi_t(i)\|_{\mathbf{V}_t^{-1}}^2\right)\right]\right]\\
      &\overset{(d)}{=} \E\left[\sum_{t \in [T]}\sum_{i \in \tau_t}B_1\left((4+8L) \dot{\ell}\left(\tilde{\btheta}_{t,i}^{\top}\bphi_t(i)\right)\|\bphi_t(i)\|_{\mathbf{L}_t^{-1}} \gamma_t(\delta) +3(1+2L)^2\kappa \gamma_t^2(\delta)\|\bphi_t(i)\|_{\mathbf{V}_t^{-1}}^2\right)\right] \label{apdx_eq:not_tight_1} \\
&\overset{}{\le}  (4+8L)B_1 \gamma_T(\delta)\E\left[\sum_{t \in [T]}\sum_{i \in \tau_t} \sqrt{\dot{\ell}\left(\tilde{\btheta}_{t,i}^{\top} \bphi_t(i)\right)} \|\bphi_t(i)\|_{\mathbf{L}_t^{-1}}\right]\notag\\
&+3(1+2L)^2B_1 \kappa\gamma^2_T(\delta)\E\left[\sum_{t \in [T]}\sum_{i \in \tau_t}\|\bphi_t(i)\|^2_{\mathbf{V}_t^{-1}}\right] \label{apdx_eq:not_tight_2} \\
&\overset{(e)}{=}  (4+8L)B_1 \gamma_T(\delta)\E\left[\sum_{t \in [T]}\sum_{i \in \tau_t}\|\tilde{\bphi}_t(i)\|_{\mathbf{L}_t^{-1}}\right]+3(1+2L)^2B_1 \kappa\gamma^2_T(\delta)\E\left[\sum_{t \in [T]}\sum_{i \in \tau_t}\|\bphi_t(i)\|^2_{\mathbf{V}_t^{-1}}\right]\\
&\le  (4+8L)B_1 \gamma_T(\delta)\sqrt{KT}\E\left[\sqrt{\sum_{t \in [T]}\sum_{i \in \tau_t}\|\tilde{\bphi}_t(i)\|^2_{\mathbf{L}_t^{-1}}}\right]+3(1+2L)^2B_1 \kappa\gamma^2_T(\delta)\E\left[\sum_{t \in [T]}\sum_{i \in \tau_t}\|\bphi_t(i)\|^2_{\mathbf{V}_t^{-1}}\right]\\
% &\overset{(h)}{\le}  12B_1\gamma_T(\delta) \E\left[\sqrt{KT\sum_{t \in [T]}\sum_{i \in \tau_t}\norm{\bphi_t(i)}^2_{\bV_{t}^{-1}}}\right] \notag\\
% &\overset{(i)}{\le}O\left(\kappa B_1(\sqrt{2d\log T} +\sqrt{\gamma})\sqrt{2KT\log T}\right)\\
&\overset{(g)}{\le}  (4+8L)(2L+3)B_1 \sqrt{KT} \lbdd \notag\\
&+3(1+2L)^2(2L+3)^2 B_1\kappa  \left(\lbdd \right)^2 \\ 
&= O\left(B_1 d\sqrt{KT}\log (KT)+ B_1 \kappa d^2\log ^2(KT)\right).
\end{align}
where inequality (a) is due to variance-adaptive exploration bonus lemma (\cref{lem:var_ag_bonus}), 
inequality (b) is due to \cref{apdx_eq:link_dot_l},
inequality (c) is due to \cref{apdx_eq:link_H_L},
inequality (d) is due to the tower law of expectation,
inequality (e) is by defining $\tilde{\bphi}_t(i)\defeq\sqrt{\dot{\ell}\left(\tilde{\btheta}_{t,i}^\top\bphi_{t}(i)\right)}\bphi_t(i)$,
inequality (g) is by the elliptical potential lemma (\cref{apdx_lem:elliptical_potential})

Let $r_{\max}=\max_{t\in[T]} \alpha r(S_t^*;\bmu_t)$. Now we set $\delta=\frac{1}{2T}$ and consider the regret caused by the failure of the event $\{\forall t \geq 1, \btheta^* \in \cA_t(\delta)\} \leq \gamma_t(\delta)\}$ and the event $\{\forall t \geq 1, \btheta^* \in \cC_t(\delta)\}$. We have:
\begin{align}
    \text{Reg}(T)&\le r_{\max}T \cdot \frac{2}{2T} +  (4+8L)(2L+3)B_1 \sqrt{KT} \lbdd \notag\\
&+3(1+2L)^2(2L+3)^2 B_1\kappa  \left(\lbdd \right)^2 \\
    &= O\left(B_1 d\sqrt{KT}\log (KT)+ B_1 \kappa d^2\log ^2(KT)\right).
\end{align}

\end{proof}

\begin{proof}[Proof of \cref{thm:var_ad_thm2}]
Suppose the event $\{\forall t \geq 1, \btheta^* \in \cA_t(\delta)\} \leq \gamma_t(\delta)\}$ and the event $\{\forall t \geq 1, \btheta^* \in \cC_t(\delta)\}$ hold, 
then we have:

\begin{align}\label{apdx_eq:TPVM_start}
&\text{Reg}(T)\overset{(a)}{=} \E\left[\sum_{t=1}^T  \alpha r(S_t^*; \bmu_t)- r(S_t; \bmu_t)\right]\\
&\overset{(b)}{\le} \E\left[\sum_{t=1}^T \alpha r(S^*_t; \bar{\bmu}_t)- r(S_t; \bmu_t)\right]\\
&\overset{(c)}{\le} \E\left[\sum_{t=1}^T r(S_t; \bar{\bmu}_t)- r(S_t; \bmu_t)\right]
% &\overset{(d)}{\le} \E\left[\sum_{t=1}^T \underbrace{\abs{r(S_t; \bar{\bmu}_t)- r(S_t; \tilde{\bmu}_t)}}_{\text{(\rom{1})}}+\underbrace{\abs{r(S_t; \bmu_t)- r(S_t; \tilde{\bmu}_t)}}_{\text{(\rom{2})}}\right],\\
% &\le B_v\E\left[\sum_{t=1}^T\left(\sqrt{\sum_{i \in [m]}\frac{(\bar{\mu}_{t,i}-\tilde{\mu}_{t,i})^2}{(1-\tilde{\mu}_{t,i})\tilde{\mu}_{t,i}}}+\sqrt{\frac{(\mu_{t,i}-\tilde{\mu}_{t,i})^2}{(1-\tilde{\mu}_{t,i})\tilde{\mu}_{t,i}}}\right)\right]\\
% &\le  B_v\E\left[\sqrt{T\sum_{t=1}^T\frac{(\bar{\mu}_{t,i}-\tilde{\mu}_{t,i})^2}{(1-\tilde{\mu}_{t,i})\tilde{\mu}_{t,i}}}+\sqrt{T\sum_{t=1}^T\frac{(\mu_{t,i}-\tilde{\mu}_{t,i})^2}{(1-\tilde{\mu}_{t,i})\tilde{\mu}_{t,i}}}\right]\\
% &\le B_v\sqrt{T}\sqrt{\E\left[\sum_{t=1}^T\frac{(\bar{\mu}_{t,i}-\tilde{\mu}_{t,i})^2}{(1-\tilde{\mu}_{t,i})\tilde{\mu}_{t,i}}\right]}+\sqrt{\E\left[\sum_{t=1}^T\frac{(\mu_{t,i}-\tilde{\mu}_{t,i})^2}{(1-\tilde{\mu}_{t,i})\tilde{\mu}_{t,i}}\right]}\\
% &\le O(B_vd\sqrt{T}/\sqrt{p_{\min}})
\end{align}
where inequality (a) is by definition, 
inequality (b) follows from Condition~\ref{cond:mono} and \Cref{lem:var_ad_bonus}, 
inequality (c) is due to $\alpha r(S^*;\bar{\bmu}_t)\le\alpha \max_{S\in \cS}r(S;\bar{\bmu}_t)\le r(S_t;\bar{\bmu}_t)$ by the definition of the $\alpha$-approximation oracle.

By \cref{lem:var_ad_bonus}, we upper bound the value of $\bar{\bmu}_{t}$:
\begin{align}
    \bar{\mu}_{t,i} &\le \mu_{t,i} + 2\rho_{t,H}\\
    &= \mu_{t,i} +(4+8L)\gamma_t(\delta)\dot{\ell}\left(\bphi_t(i)^{\top}\hat{\btheta}_{t,H}\right) \norm{\bphi_t(i)} _{\bH_t^{-1}(\hat{\btheta}_{t,H})}+ (1+2L)^2 \kappa \gamma_t^2(\delta) \norm{\bphi_t(i)} ^2  _{\bV_t^{-1}}\\
    &\overset{(a)}{\le} \mu'_{t,i} \defeq (4+8L)\gamma_t(\delta)\dot{\ell}\left(\bphi_t(i)^{\top}\tilde{\btheta}_{t,i}\right) \norm{\bphi_t(i)} _{\bH_t^{-1}(\hat{\btheta}_{t,H})}+ 3(1+2L)^2 \kappa \gamma_t^2(\delta) \norm{\bphi_t(i)} ^2  _{\bV_t^{-1}} 
\end{align}
where inequality (a) is due to \cref{apdx_eq:link_dot_l} by letting $\btheta_2= \hat{\btheta}_{t,H}, \btheta_1=\tilde{\btheta}_{t,i}$ which all belong to $\cA_t(\delta)$.

Let $\bmu'_t=(\mu'_{t,1},..., \mu'_{t,m})$ and by monotonicity,
\begin{align}
    \text{Reg}(T)&\le \E\left[\sum_{t=1}^T r(S_t; \bar{\bmu}_t)- r(S_t; \bmu_t)\right]\\
    &\le \E\left[\sum_{t=1}^T r(S_t; \bmu'_t)- r(S_t; \bmu_t)\right]\\
    &\overset{(a)}{\le} \E\left[\sum_{t=1}^T B_v \sqrt{\sum_{i \in [m]}(p_i^{\bmu_t,S_t})^{\lambda}\frac{(4+8L)^2\gamma^2_t(\delta)\dot{\ell}\left(\bphi_t(i)^{\top}\tilde{\btheta}_{t,i}\right)^2 \norm{\bphi_t(i)}^2 _{\bH_t^{-1}(\hat{\btheta}_{t,H})}}{(1-\mu_{t,i})\mu_{t,i}}}\right]\notag\\
    &+ \E\left[\sum_{t\in [T]}B_1\sum_{i\in [m]}  p_i^{\bmu_t,S_t} 3(1+2L)^2 \kappa \gamma_t^2(\delta) \norm{\bphi_t(i)} ^2  _{\bV_t^{-1}}\right]\\
    &\overset{(b)}{=} \underbrace{\E\left[\sum_{t=1}^T B_v(4+8L) \gamma_t(\delta) \sqrt{\sum_{i \in [m]}(p_i^{\bmu_t,S_t})^{\lambda} \frac{\dot{\ell}^2\left(\bphi_t(i)^{\top}\tilde{\btheta}_{t,i}\right)}{\dot{\ell}\left(\bphi_t(i)^{\top}\btheta^*\right)} \norm{\bphi_t(i)}^2 _{\bH_t^{-1}(\hat{\btheta}_{t,H})}}\right]}_{\text{term 1}}\notag\\
    &+ \underbrace{\E\left[\sum_{t\in [T]}B_1\sum_{i\in [m]} p_i^{\bmu_t,S_t} 3(1+2L)^2 \kappa \gamma_t^2(\delta) \norm{\bphi_t(i)} ^2  _{\bV_t^{-1}}\right]}_{\text{term 2}}\\
    % &\overset{(c)}{\le} \underbrace{\E\left[\sum_{t=1}^T B_v(4+8L) \gamma_t(\delta) \sqrt{\sum_{i \in [m]}(p_i^{\bmu_t,S_t})^{\lambda} \dot{\ell}\left(\bphi_t(i)^{\top}\tilde{\btheta}_{t,i}\right) \norm{\bphi_t(i)}^2 _{\bH_t^{-1}(\hat{\btheta}_{t,H})}}\right]}_{\text{term 1}}\notag\\
    % &+ \underbrace{\E\left[\sum_{t\in [T]}\sum_{i\in [m]} p_i^{\bmu_t,S_t} 3(1+2L)^2 \kappa \gamma_t^2(\delta) \norm{\bphi_t(i)} ^2  _{\bV_t^{-1}}\right]}_{\text{term 2}}
\end{align}
where inequality (a) is due to \cref{cond:TPVMm} by setting $\zeta_i=(4+8L)\gamma_t(\delta)\dot{\ell}\left(\bphi_t(i)^{\top}\tilde{\btheta}_{t,i}\right) \norm{\bphi_t(i)} _{\bH_t^{-1}(\hat{\btheta}_{t,H})}, \eta_i=3(1+2L)^2 \kappa \gamma_t^2(\delta) \norm{\bphi_t(i)} ^2  _{\bV_t^{-1}}$,
inequality (b) is due to $\dot{\ell}\left(\bphi_t(i)^{\top}\btheta^*\right)=\left(1- \ell\left(\bphi_t(i)^{\top}\btheta^*\right) \right)\ell\left(\bphi_t(i)^{\top}\btheta^*\right)  =(1-\mu_{t,i})\mu_{t,i}$.

For term 1, we have:
\begin{align}
    \text{term 1} &\overset{(a)}{\le}  B_v(4+8L) \gamma_T(\delta)\E\left[\sum_{t=1}^T  \sqrt{\sum_{i \in [m]}p_i^{\bmu_t,S_t} \frac{\dot{\ell}^2\left(\bphi_t(i)^{\top}\tilde{\btheta}_{t,i}\right)}{\dot{\ell}\left(\bphi_t(i)^{\top}\btheta^*\right)}  \norm{\bphi_t(i)}^2 _{\bH_t^{-1}(\hat{\btheta}_{t,H})}}\right]\\
    &\overset{(b)}{\le}  B_v(4+8L) \gamma_T(\delta)\sqrt{T}\E\left[\sqrt{\sum_{t=1}^T  \sum_{i \in [m]} p_i^{\bmu_t,S_t} \frac{\dot{\ell}^2 \left(\bphi_t(i)^{\top}\tilde{\btheta}_{t,i}\right)}{\dot{\ell}\left(\bphi_t(i)^{\top}\btheta^*\right)}  \norm{\bphi_t(i)}^2 _{\bH_t^{-1}(\hat{\btheta}_{t,H})}}\right]\\
    &\overset{(c)}{\le}  B_v(4+8L) \gamma_T(\delta)\sqrt{T}\sqrt{\E\left[\sum_{t=1}^T  \sum_{i \in [m]} p_i^{\bmu_t,S_t}  \frac{\dot{\ell}^2\left(\bphi_t(i)^{\top}\tilde{\btheta}_{t,i}\right)}{\dot{\ell}\left(\bphi_t(i)^{\top}\btheta^*\right)}  \norm{\bphi_t(i)}^2 _{\bH_t^{-1}(\hat{\btheta}_{t,H})}\right]}\\
    &\overset{(d)}{=}  B_v(4+8L) \gamma_T(\delta)\sqrt{T}\sqrt{\E\left[\sum_{t=1}^T  \sum_{i \in \tau_t}\frac{\dot{\ell}^2\left(\bphi_t(i)^{\top}\tilde{\btheta}_{t,i}\right)}{\dot{\ell}\left(\bphi_t(i)^{\top}\btheta^*\right)}  \norm{\bphi_t(i)}^2 _{\bH_t^{-1}(\hat{\btheta}_{t,H})}\right]}
\end{align}
where inequality (a) is due to $\lambda \ge 1$, inequality (b) is due to Cauchy-Schwarz inequality, 
inequality (c) is due to Jensen's inequality,
equality (d) is due to $\E_t[i\in \tau_t]=p_{i}^{\bmu_t, \bS_t}$ and the fact that $S_t, \bH_t, \tilde{\btheta}_{t,i}, \hat{\btheta}_{t,H}$ are $\cH_t$ measurable.

By \cref{apdx_eq:lb_theta_s} and \cref{apdx_eq:link_H_L}, we have ${{\dot{\ell}\left(\bphi_t(i)^{\top} \btheta^*\right)} \ge \dot{\ell} \left(\bphi_t(i)^{\top}\tilde{\btheta}_{t,i}\right)} $ and  $\bH_t(\hat{\btheta}_{t,H})\ge \bL_t$. Therefore:

\begin{align}
    \text{term 1} &\le B_v(4+8L) \gamma_T(\delta)\sqrt{T}\sqrt{\E\left[\sum_{t=1}^T  \sum_{i \in \tau_t}\dot{\ell}\left(\bphi_t(i)^{\top}\tilde{\btheta}_{t,i}\right)  \norm{\bphi_t(i)}^2 _{\bL_t^{-1}}\right]}\\
    &\overset{(a)}{=} B_v(4+8L) \gamma_T(\delta)\sqrt{T}\sqrt{\E\left[\sum_{t=1}^T  \sum_{i \in \tau_t}  \norm{\tilde{\bphi}_t(i)}^2 _{\bL_t}\right]}\\
    &\overset{(b)}{\le} (4+8L)(2L+3) B_v\sqrt{T}\lbdd. 
\end{align}
where inequality (a) is due to the definition of $\tilde{\bphi}_t(i)\defeq\sqrt{\dot{\ell}\left(\tilde{\btheta}_{t,i}^\top\bphi_{t}(i)\right)}\bphi_t(i)$, 
inequality (b) is due to the elliptical potential lemma (\cref{apdx_lem:elliptical_potential}).

\begin{align}
    \text{term 2} &= \E\left[\sum_{t\in [T]}B_1\sum_{i\in \tau_t}  3(1+2L)^2 \kappa \gamma_t^2(\delta) \norm{\bphi_t(i)} ^2  _{\bV_t^{-1}}\right]\\
    &\le  3(1+2L)^2 \kappa \gamma_T^2(\delta)\E\left[\sum_{t\in [T]}\sum_{i\in \tau_t}  \norm{\bphi_t(i)} ^2  _{\bV_t^{-1}}\right]\\
    &\overset{(a)}{\le}  3(1+2L)^2 (2L+3)^2 B_1\kappa \left( \lbdd \right)^2
\end{align}
where inequality (a) is due to the elliptical potential lemma (\cref{apdx_lem:elliptical_potential}).
\end{proof}

Let $r_{\max}=\max_{t\in[T]} \alpha r(S_t^*;\bmu_t)$. Now we set $\delta=\frac{1}{2T}$ and consider the regret caused by the failure of the event $\{\forall t \geq 1, \btheta^* \in \cA_t(\delta)\} \leq \gamma_t(\delta)\}$ and the event $\{\forall t \geq 1, \btheta^* \in \cC_t(\delta)\}$. We have:
\begin{align}
    \text{Reg}(T)&\le r_{\max}T \cdot \frac{2}{2T} + \text{term 1} + \text{term 2}\\
    &\le r_{\max} + (4+8L)(2L+3) B_v\sqrt{T}\lbdd \notag\\
    &+  3(1+2L)^2 (2L+3)^2 B_1\kappa \left( \lbdd \right)^2 \\
    &= O\left( B_v d\sqrt{T} \log(KT) + B_1 \kappa d^2 \log ^2(KT)\right)
\end{align}

\subsection{Regret Bound for Alg. \ref{alg:EVA_CLogB} under 1-Norm TPM Smoothness Condition and TPVM Smoothness Condition}\label{apdx_sec:proof_thm_eva}
\begin{proof}[Proof of \cref{thm:eva_thm1}]

Let $r_{\max}=\max_{t\in[T]} \alpha r(S_t^*;\bmu_t)\le B_1 K$.
In this proof, we first bound the regret in the burn-in stage
\begin{align}
    \text{term 0}&=\sum_{t=1}^{T_0} r_{\max} = r_{\max} \cdot \left(4L^2+16L+19\right)^2 \kappa d^2 \log^2\left( \frac{4(2+T)}{\delta}\right)\\
    &\le B_1 K \left(4L^2+16L+19\right)^2 \kappa d^2 \log^2\left( \frac{4(2+T)}{\delta}\right)\label{apdx_eq:burn_in_reg}
\end{align}

Next, we bound the regret after the burn-in stage for $t= T_0+1,..., T$.

Suppose the event $\{\btheta^* \in \cQ\} \cap \{\text{diam}(\cQ) \le 1\}$ and the event $\{\forall t\ge T_0+1, \btheta^* \in \mathcal{D}_t(\delta)\}$ hold.

\textbf{Under the 1-norm TPM condition:}

We use the same derivation from \cref{apdx_eq:1-norm_reg_start} to \cref{apdx_eq:1-norm_reg_mid}:

\begin{align}
    &\text{Reg}(T)\le \E\left[\sum_{t=T_0+1}^T\E_t\left[\sum_{i \in \tau_t}B_1(\bar{\mu}_{t,i}-\mu_{t,i})\right]\right]\overset{(a)}{\le}\E\left[\sum_{t=T_0+1}^T\E_t\left[\sum_{i \in \tau_t}B_1(2\rho_{t,E}(i))\right]\right]\notag\\
    &=\E\left[\sum_{t=T_0+1}^T\E_t\left[\sum_{i \in \tau_t}B_1\left(6\sqrt{e} \dot{\ell}\left(\hat{\btheta}_{t,E}^{\top}\bphi_t(i)\right)\|\bphi_t(i)\|_{\mathbf{H}_t^{-1}(\hat{\btheta}_{t,E})} \gamma_t(\delta)+\frac{9}{4}\kappa \gamma_t^2(\delta)\|\bphi_t(i)\|_{\mathbf{V}_t^{-1}}^2\right)\right]\right]\\
    &\overset{(b)}{\le} \E\left[\sum_{t=T_0+1}^T\E_t\left[\sum_{i \in \tau_t}B_1\left(6e^2 \dot{\ell}\left({\btheta^*}^{\top}\bphi_t(i)\right)\|\bphi_t(i)\|_{\mathbf{H}_t^{-1}({\btheta^*})} \gamma_t(\delta) + \frac{9}{4}\kappa \gamma_t^2(\delta)\|\bphi_t(i)\|_{\mathbf{V}_t^{-1}}^2\right)\right]\right]\\
    &{\le} \E\left[\sum_{t=T_0+1}^T\E_t\left[\sum_{i \in \tau_t}B_1\left(6e^2 \sqrt{\dot{\ell}\left({\btheta^*}^{\top}\bphi_t(i)\right)}\|\bphi_t(i)\|_{\mathbf{H}_t^{-1}({\btheta^*})} \gamma_t(\delta) + \frac{9}{4}\kappa \gamma_t^2(\delta)\|\bphi_t(i)\|_{\mathbf{V}_t^{-1}}^2\right)\right]\right]\\
    &\overset{(c)}{=} \E\left[\sum_{t=T_0+1}^T\sum_{i \in \tau_t}B_1\left(6e^2 \sqrt{\dot{\ell}\left({\btheta^*}^{\top}\bphi_t(i)\right)}\|\bphi_t(i)\|_{\mathbf{H}_t^{-1}({\btheta^*})} \gamma_t(\delta) + \frac{9}{4}\kappa \gamma_t^2(\delta)\|\bphi_t(i)\|_{\mathbf{V}_t^{-1}}^2\right)\right]\\
&\overset{}{\le}  6e^2 B_1 \gamma_T(\delta) \E\left[ \sum_{t=T_0+1}^T \sum_{i \in \tau_t} \sqrt{\dot{\ell} \left({\btheta^*}^{\top} \bphi_t(i)\right)} \|\bphi_t(i)\|_{\mathbf{H}_t^{-1}({\btheta^*})}\right] + \frac{9}{4} B_1 \kappa \gamma^2_T(\delta)\E\left[\sum_{t=T_0+1}^T \sum_{i \in \tau_t} \|\bphi_t(i)\|^2_{\mathbf{V}_t^{-1}}\right]\\
&\overset{(d)}{=}  6e^2 B_1 \gamma_T(\delta) \E\left[ \sum_{t=T_0+1}^T \sum_{i \in \tau_t}  \|\tilde{\bphi}_t(i)\|_{\mathbf{H}_t^{-1}({\btheta^*})}\right] + \frac{9}{4} B_1 \kappa \gamma^2_T(\delta)\E\left[\sum_{t=T_0+1}^T \sum_{i \in \tau_t} \|\bphi_t(i)\|^2_{\mathbf{V}_t^{-1}}\right]\\
&\le  6e^2 B_1 \gamma_T(\delta)\sqrt{KT}\E\left[\sqrt{\sum_{t=T_0+1}^T\sum_{i \in \tau_t}\|\tilde{\bphi}_t(i)\|^2_{\mathbf{H}_t^{-1}(\btheta^*)}}\right]+\frac{9}{4}B_1 \kappa\gamma^2_T(\delta)\E\left[\sum_{t=T_0+1}^T\sum_{i \in \tau_t}\|\bphi_t(i)\|^2_{\mathbf{V}_t^{-1}}\right]\\
% &\overset{(h)}{\le}  12B_1\gamma_T(\delta) \E\left[\sqrt{KT\sum_{t \in [T]}\sum_{i \in \tau_t}\norm{\bphi_t(i)}^2_{\bV_{t}^{-1}}}\right] \notag\\
% &\overset{(i)}{\le}O\left(\kappa B_1(\sqrt{2d\log T} +\sqrt{\gamma})\sqrt{2KT\log T}\right)\\
&\overset{(e)}{\le}  6e^2 (2L+3)B_1 \sqrt{KT} \lbdd +\frac{9}{4}(2L+3)^2 B_1\kappa  \left(\lbdd \right)^2 \\ 
&= O\left(B_1 d\sqrt{KT}\log (KT)+ B_1 \kappa d^2\log ^2(KT)\right).
\end{align}
where inequality (a) is due to variance-adaptive exploration bonus lemma after burn-in stage (\cref{lem:eva_bonus}), 
inequality (b) is due to \cref{apdx_eq:hat_lower_than_s} and \cref{apdx_eq:link_H*_H},
inequality (c) is due to the tower law of expectation,
equality (d) is by defining $\tilde{\bphi}_t(i)\defeq\sqrt{\dot{\ell}\left({\btheta^*}^\top\bphi_{t}(i)\right)}\bphi_t(i)$,
inequality (e) is by the elliptical potential lemma (\cref{apdx_lem:elliptical_potential})

Recall that $r_{\max}=\max_{t\in[T]} \alpha r(S_t^*;\bmu_t)\le B_1 K$. Now we set $\delta=\frac{1}{2T}$. Consider the regret (term 0) caused by the burn-in stage in \cref{apdx_eq:burn_in_reg} and the failure of the event $\{\btheta^* \in \cQ\} \cap \{\text{diam}(\cQ) \le 1\}$ and the event $\{\forall t\ge T_0+1, \btheta^* \in \mathcal{D}_t(\delta)\}$. We have:
\begin{align}
    \text{Reg}(T)&\le r_{\max}T \cdot \frac{2}{2T} + \text{term 0} + 6e^2 (2L+3)B_1 \sqrt{KT} \lbddt \notag\\
    &+\frac{9}{4}(2L+3)^2 B_1\kappa  \left(\lbddt \right)^2\\
    &\le B_1 K + B_1 K \left(4L^2+16L+19\right)^2 \kappa d^2 \log^2\left( {8(2+T)T}\right) \notag\\
    &+ 6e^2 (2L+3)B_1 \sqrt{KT} \lbddt \notag\\
    &+\frac{9}{4}(2L+3)^2 B_1\kappa  \left(\lbddt \right)^2 \\
    &= O\left( B_v d\sqrt{T} \log(KT) + B_1 \kappa K d^2 \log ^2(KT)\right)\label{apdx_eq:eva_reg_tpm}
\end{align}

\textbf{Under the 1-norm TPVM condition}:

We follow the same derivation of \cref{apdx_eq:TPVM_start}:

\begin{align}
    \text{Reg}(T)&\overset{(c)}{\le} \E\left[\sum_{t=1}^T r(S_t; \bar{\bmu}_t)- r(S_t; \bmu_t)\right]
\end{align}

By \cref{lem:var_ad_bonus}, we upper bound the value of $\bar{\bmu}_{t}$:
\begin{align}
    \bar{\mu}_{t,i} &\le \mu_{t,i} + 2\rho_{t,E}\\
    &= 6\sqrt{e} \dot{\ell}\left(\hat{\btheta}_{t,E}^{\top}\bphi_t(i)\right)\|\bphi_t(i)\|_{\mathbf{H}_t^{-1}(\hat{\btheta}_{t,E})} \gamma_t(\delta)+\frac{9}{4}\kappa \gamma_t^2(\delta)\|\bphi_t(i)\|_{\mathbf{V}_t^{-1}}^2\\
    &\overset{(a)}{\le} \mu'_{t,i} \defeq 6e^2 \dot{\ell}\left({\btheta^*}^{\top}\bphi_t(i)\right)\|\bphi_t(i)\|_{\mathbf{H}_t^{-1}({\btheta^*})} \gamma_t(\delta) + \frac{9}{4}\kappa \gamma_t^2(\delta)\|\bphi_t(i)\|_{\mathbf{V}_t^{-1}}^2 
\end{align}
where inequality (a) is due to  \cref{apdx_eq:hat_lower_than_s} and \cref{apdx_eq:link_H*_H}.

Let $\bmu'_t=(\mu'_{t,1},..., \mu'_{t,m})$ and by monotonicity,
\begin{align}
    \text{Reg}(T)&\le \E\left[\sum_{t=T_0+1}^T r(S_t; \bar{\bmu}_t)- r(S_t; \bmu_t)\right]\\
    &\le \E\left[\sum_{t=T_0+1}^T r(S_t; \bmu'_t)- r(S_t; \bmu_t)\right]\\
    &\overset{(a)}{\le} \E\left[\sum_{t=T_0+1}^T B_v \sqrt{ \sum_{i \in [m]} (p_i^{\bmu_t,S_t})^{\lambda} \frac{(6e^2)^2\gamma^2_t(\delta) \dot{\ell}\left(\bphi_t(i)^{\top}{\btheta^*}\right)^2 \norm{\bphi_t(i)}^2 _{\bH_t^{-1}({\btheta^*})}}{(1-\mu_{t,i})\mu_{t,i}}}\right]\notag\\
    &+ \E\left[\sum_{t=T_0+1}^T B_1\sum_{i\in [m]}  p_i^{\bmu_t,S_t}\frac{9}{4} \kappa \gamma_t^2(\delta) \norm{\bphi_t(i)} ^2  _{\bV_t^{-1}}\right]\\
    &\overset{(b)}{=} \underbrace{\E\left[\sum_{t=T_0+1}^T B_v(6e^2) \gamma_t(\delta) \sqrt{\sum_{i \in [m]}(p_i^{\bmu_t,S_t})^{\lambda} {\dot{\ell}\left(\bphi_t(i)^{\top}\btheta^*\right)} \norm{\bphi_t(i)}^2 _{\bH_t^{-1}(\btheta^*)}}\right]}_{\text{term 1}}\notag\\
    &+ \underbrace{\E\left[\sum_{t=T_0+1}^T B_1\sum_{i\in [m]} p_i^{\bmu_t,S_t}\frac{9}{4} \kappa \gamma_t^2(\delta) \norm{\bphi_t(i)} ^2  _{\bV_t^{-1}}\right]}_{\text{term 2}}\\
    % &\overset{(c)}{\le} \underbrace{\E\left[\sum_{t=T_0+1}^T B_v(6e^2) \gamma_t(\delta) \sqrt{\sum_{i \in [m]}(p_i^{\bmu_t,S_t})^{\lambda} \dot{\ell}\left(\bphi_t(i)^{\top}\tilde{\btheta}_{t,i}\right) \norm{\bphi_t(i)}^2 _{\bH_t^{-1}(\hat{\btheta}_{t,H})}}\right]}_{\text{term 1}}\notag\\
    % &+ \underbrace{\E\left[\sum_{t\in [T]}\sum_{i\in [m]} p_i^{\bmu_t,S_t}\frac{9}{4} \kappa \gamma_t^2(\delta) \norm{\bphi_t(i)} ^2  _{\bV_t^{-1}}\right]}_{\text{term 2}}
\end{align}
where inequality (a) is due to \cref{cond:TPVMm} by setting $\zeta_i=6e^2 \dot{\ell}\left({\btheta^*}^{\top}\bphi_t(i)\right)\|\bphi_t(i)\|_{\mathbf{H}_t^{-1}({\btheta^*})} \gamma_t(\delta), \eta_i=\frac{9}{4}\kappa \gamma_t^2(\delta)\|\bphi_t(i)\|_{\mathbf{V}_t^{-1}}^2 $,
equality (b) is due to $\dot{\ell}\left(\bphi_t(i)^{\top}\btheta^*\right)=\left(1- \ell\left(\bphi_t(i)^{\top}\btheta^*\right) \right)\ell\left(\bphi_t(i)^{\top}\btheta^*\right)  =(1-\mu_{t,i})\mu_{t,i}$.

For term 1, we have:
\begin{align}
    \text{term 1} &\overset{(a)}{\le}  B_v(6e^2) \gamma_T(\delta)\E\left[\sum_{t=T_0+1}^T  \sqrt{\sum_{i \in [m]}p_i^{\bmu_t,S_t} {\dot{\ell}\left(\bphi_t(i)^{\top}\btheta^*\right)}  \norm{\bphi_t(i)}^2 _{\bH_t^{-1}({\btheta^*})}}\right]\\
    &\overset{(b)}{\le}  B_v(6e^2) \gamma_T(\delta)\sqrt{T}\E\left[\sqrt{\sum_{t=T_0+1}^T \sum_{i \in [m]}p_i^{\bmu_t,S_t} {\dot{\ell}\left(\bphi_t(i)^{\top}\btheta^*\right)}  \norm{\bphi_t(i)}^2 _{\bH_t^{-1}({\btheta^*})}}\right]\\
    &\overset{(c)}{\le}  B_v(6e^2) \gamma_T(\delta)\sqrt{T}\sqrt{\E\left[\sum_{t=T_0+1}^T  \sum_{i \in [m]}p_i^{\bmu_t,S_t} {\dot{\ell}\left(\bphi_t(i)^{\top}\btheta^*\right)}  \norm{\bphi_t(i)}^2 _{\bH_t^{-1}({\btheta^*})}\right]}\\
    &\overset{(d)}{=}  B_v(6e^2) \gamma_T(\delta)\sqrt{T}\sqrt{\E\left[\sum_{t=T_0+1}^T  \sum_{i \in \tau_t} {\dot{\ell}\left(\bphi_t(i)^{\top}\btheta^*\right)}  \norm{\bphi_t(i)}^2 _{\bH_t^{-1}({\btheta^*})}\right]}\\
    &\overset{(e)}{=}  B_v(6e^2) \gamma_T(\delta)\sqrt{T}\sqrt{\E\left[\sum_{t=T_0+1}^T  \sum_{i \in \tau_t}  \norm{\tilde{\bphi}_t(i)}^2 _{\bH_t^{-1}({\btheta^*})}\right]}\\
    &\overset{(f)}{\le}  (6e^2)(2L+3) B_v\sqrt{T}\lbdd
\end{align}
where inequality (a) is due to $\lambda \ge 1$, inequality (b) is due to Cauchy-Schwarz inequality, 
inequality (c) is due to Jensen's inequality,
equality (d) is due to $\E_t[i\in \tau_t]=p_{i}^{\bmu_t, \bS_t}$ and the fact that $S_t, \bH_t, \tilde{\btheta}_{t,i}, \hat{\btheta}_{t,H}$ are $\cH_t$ measurable,
equality (e) is by defining $\tilde{\bphi}_t(i)\defeq\sqrt{\dot{\ell}\left({\btheta^*}^\top\bphi_{t}(i)\right)}\bphi_t(i)$.

\begin{align}
    \text{term 2} &= \E\left[\sum_{t=T_0+1}^TB_1\sum_{i\in \tau_t}  \frac{9}{4} \kappa \gamma_t^2(\delta) \norm{\bphi_t(i)} ^2  _{\bV_t^{-1}}\right]\\
    &\le  \frac{9}{4} \kappa \gamma_T^2(\delta)\E\left[\sum_{t=T_0+1}^T\sum_{i\in \tau_t}  \norm{\bphi_t(i)} ^2  _{\bV_t^{-1}}\right]\\
    &\overset{(a)}{\le}  \frac{9}{4} (2L+3)^2 B_1\kappa \left( \lbdd \right)^2
\end{align}
where inequality (a) is due to the elliptical potential lemma (\cref{apdx_lem:elliptical_potential}).

Recall that $r_{\max}=\max_{t\in[T]} \alpha r(S_t^*;\bmu_t)\le B_1 K$. Now we set $\delta=\frac{1}{2T}$. Consider the regret caused by the burn-in stage (term 0) in \cref{apdx_eq:burn_in_reg} and the failure of the event $\{\btheta^* \in \cQ\} \cap \{\text{diam}(\cQ) \le 1\}$ and the event $\{\forall t\ge T_0+1, \btheta^* \in \mathcal{D}_t(\delta)\}$. We have:
\begin{align}
    \text{Reg}(T)&\le r_{\max}T \cdot \frac{2}{2T} + \text{term 0} + \text{term 1} + \text{term 2}\\
    &\le B_1 K + B_1 K \left(4L^2+16L+19\right)^2 \kappa d^2 \log^2\left( \frac{4(2+T)}{\delta}\right) \notag\\
    &+ (6e^2)(2L+3) B_v\sqrt{T}\lbdd \notag\\
    &+  \frac{9}{4} (2L+3)^2 B_1\kappa \left( \lbdd \right)^2 \\
    &= O\left( B_v d\sqrt{T} \log(KT) + B_1 \kappa K d^2 \log ^2(KT)\right)\label{apdx_eq:eva_reg_tpvm}
\end{align}

% Finally, by setting $\delta=\nicefrac{1}{2T}$, we consider the regret bound for the failure of the event $\{\btheta^* \in \cQ\} \cap \{\text{diam}(\cQ) \le 1\}$ and the event $\{\forall t\ge T_0+1, \btheta^* \in \mathcal{D}_t(\delta)\}$, \begin{align}
%     \frac{2}{2T}\sum_{t=1}^T r_{\max}  =r_{\max}\le B_1 K.\label{apdx_eq:const_reg}
% \end{align}

\end{proof}

\section{Auxiliary Lemmas}\label{apdx_sec:aux}

The following lemma is extracted from Lemma 10 in \citet{abbasi2011improved}.
\begin{lemma}[Determinant-Trace Inequality]\label{apdx_lem:det_and_trace}
Let $\bx_1, ..., x_t \in \R^d$ be a sequence with $\norm{\bx_i}_2\le L$, $1\le i \le t$. 
Let $\bV_t=\lambda \bI_d + \sum_{i=1}^{t} \bx_i \bx_i^{\top}$ for some $\lambda>0$.
Then,
\begin{align*}
 \det(\bV_{t}) \le  (\lambda + tL^2/d)^d.
\end{align*}
\end{lemma}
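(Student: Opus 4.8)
The plan is to reduce the determinant bound to a trace bound via the AM–GM inequality applied to the eigenvalues of $\bV_t$. First I would observe that $\bV_t = \lambda \bI_d + \sum_{i=1}^t \bx_i \bx_i^\top$ is symmetric and positive definite (it is $\lambda \bI_d$ plus a sum of rank-one PSD matrices), so it possesses $d$ real, strictly positive eigenvalues $\lambda_1, \dots, \lambda_d$. The determinant is then the product of these eigenvalues, $\det(\bV_t) = \prod_{j=1}^d \lambda_j$, and the trace is their sum, $\trace(\bV_t) = \sum_{j=1}^d \lambda_j$.

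The key step is to invoke the AM–GM inequality on the positive reals $\lambda_1, \dots, \lambda_d$, which yields
\begin{align*}
\det(\bV_t) = \prod_{j=1}^d \lambda_j \le \prts{\frac{1}{d}\sum_{j=1}^d \lambda_j}^d = \prts{\frac{\trace(\bV_t)}{d}}^d.
\end{align*}
This is precisely the product-to-sum conversion that turns an intractable determinant into a quantity I can compute directly.

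The remaining work is routine bookkeeping on the trace. By linearity of the trace and the identity $\trace(\bx_i \bx_i^\top) = \norm{\bx_i}_2^2$, I would compute
\begin{align*}
\trace(\bV_t) = \lambda\,\trace(\bI_d) + \sum_{i=1}^t \trace(\bx_i \bx_i^\top) = \lambda d + \sum_{i=1}^t \norm{\bx_i}_2^2 \le \lambda d + t L^2,
\end{align*}
using the hypothesis $\norm{\bx_i}_2 \le L$. Substituting this into the AM–GM bound gives $\det(\bV_t) \le \prts{(\lambda d + tL^2)/d}^d = \prts{\lambda + tL^2/d}^d$, as claimed.

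I do not expect any substantial obstacle: the entire argument is elementary linear algebra. The only genuinely conceptual step is recognizing that AM–GM on the spectrum supplies exactly the bound relating $\det(\bV_t) = \prod_j \lambda_j$ to $\trace(\bV_t) = \sum_j \lambda_j$; once that is in place, the trace computation is immediate.
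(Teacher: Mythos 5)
Your proof is correct and is exactly the standard argument: the paper does not reprove this lemma but cites Lemma 10 of Abbasi-Yadkori et al. (2011), whose proof is precisely the AM--GM bound $\det(\bV_t)\le(\trace(\bV_t)/d)^d$ followed by the trace computation $\trace(\bV_t)=\lambda d+\sum_{i=1}^t\norm{\bx_i}_2^2\le\lambda d+tL^2$. No gaps.
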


The following lemma is extracted from Proposition 7 in \citet{abeille2021instance}. 
\begin{lemma}[Polynomial Inequality]\label{apdx_lem:Poly_ieq}
    Let $b, c \in \mathbb{R}^{+}$, and $x \in \mathbb{R}$. The following implication holds:
$$
x^2 \leq b x+c \Longrightarrow x \leq b+\sqrt{c}
$$
\end{lemma}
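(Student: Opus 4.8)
The plan is to prove the implication by treating the hypothesis as a quadratic constraint in $x$ and then bounding the relevant root. First I would rewrite the assumption $x^2 \le bx + c$ in the form $x^2 - bx - c \le 0$. Since the leading coefficient is positive, the parabola $p(x) = x^2 - bx - c$ opens upward, so the inequality $p(x) \le 0$ holds exactly for $x$ lying between the two real roots of $p$. Because $c \in \R^+$, the discriminant $b^2 + 4c$ is strictly positive, so the roots are real and distinct; by the quadratic formula the larger root is $(b + \sqrt{b^2 + 4c})/2$. Hence the hypothesis immediately forces the upper bound $x \le (b + \sqrt{b^2 + 4c})/2$.

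The remaining step is to show that this root is itself dominated by the desired bound, i.e. that $(b + \sqrt{b^2 + 4c})/2 \le b + \sqrt{c}$. I would reduce this to the scalar inequality $\sqrt{b^2 + 4c} \le b + 2\sqrt{c}$. Both sides are nonnegative since $b, c \in \R^+$, so I can square without changing the direction: the claim becomes $b^2 + 4c \le b^2 + 4b\sqrt{c} + 4c$, which simplifies to $0 \le 4b\sqrt{c}$, trivially true. Chaining the two bounds gives $x \le (b + \sqrt{b^2 + 4c})/2 \le (b + b + 2\sqrt{c})/2 = b + \sqrt{c}$, completing the argument.

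An equivalent route, should one prefer to avoid referring to roots explicitly, is a short proof by contradiction: assume $x > b + \sqrt{c}$. Then $x > b$, so $x - b > \sqrt{c} > 0$, and multiplying the two positive inequalities $x > b + \sqrt{c}$ and $x - b > \sqrt{c}$ yields $x(x - b) > (b + \sqrt{c})\sqrt{c} = b\sqrt{c} + c$, whence $x^2 - bx - c > b\sqrt{c} > 0$, contradicting $x^2 \le bx + c$.

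There is no genuine obstacle in this lemma; it is a routine algebraic fact. The only points requiring a little care are verifying the sign conditions that license squaring the inequality $\sqrt{b^2 + 4c} \le b + 2\sqrt{c}$ (guaranteed by $b, c \in \R^+$) and correctly identifying that it is the \emph{larger} root of the quadratic that produces the upper bound on $x$. I would present the first (quadratic-formula) approach, as it is the most transparent and directly exhibits why the bound $b + \sqrt{c}$ is the natural, if slightly loosened, consequence of the exact root.
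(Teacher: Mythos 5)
Your proof is correct. The paper does not prove this lemma itself — it simply cites Proposition~7 of \citet{abeille2021instance} — and your quadratic-formula argument (bounding $x$ by the larger root $\tfrac{b+\sqrt{b^2+4c}}{2}$ and then using $\sqrt{b^2+4c}\le b+2\sqrt{c}$) is exactly the standard proof given in that reference, so there is nothing to reconcile.
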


The following lemma is from Lemma 7, Lemma 8, and Lemma 9 in \citet{abeille2021instance}.
\begin{lemma}[Self-concordance control]\label{apdx_lem:scc}
If the link function $\ell$ is twice continuously differentiable and $\abs{\ddot{\ell}}\le \dot{\ell}$, then we have the following list of inequalities: 

Let $\bx \in \R^d$ with $\norm{\bx}_2\le 1$, for any $\btheta_1, \btheta_2 \in \R^d$. Then
\begin{align}\label{apdx_eq:scc1}
    \alpha(\bx,\btheta_1, \btheta_2) &\ge \frac{\dot{\ell}(\bx ^{\top} \btheta)}{1+\abs{\bx^{\top}(\btheta_1-\btheta_2)}}, \quad \text {    for    } \btheta \in \{\btheta_1, \btheta_2\}\\
    &\ge \frac{\dot{\ell}(\bx ^{\top} \btheta)}{1+2L} \quad \text { when } \btheta_1, \btheta_2 \in \Theta. \label{apdx_eq:scc1_2}
\end{align}

Similarly, we have
\begin{align}\label{apdx_eq:scc2}
    \tilde{\alpha}(\bx,\btheta_1, \btheta_2) &\ge \frac{\dot{\ell}(\bx ^{\top} \btheta_1)}{2+\abs{\bx^{\top}(\btheta_1-\btheta_2)}},\\
    &\ge \frac{\dot{\ell}(\bx ^{\top} \btheta_1)}{2+2L} \quad \text { when } \btheta_1, \btheta_2 \in \Theta.
\end{align}

Finally, we have 
\begin{align}\label{apdx_eq:scc3}
    \dot{\ell}( \bx^{\top} \btheta_2) \exp \left( -\abs{\bx^{\top} \left( \btheta_1-\btheta_2 \right) } \right) \le \dot{\ell}( \bx^{\top} \btheta_1) \le \dot{\ell}( \bx^{\top} \btheta_2) \exp \left( \abs{\bx^{\top} \left( \btheta_1-\btheta_2 \right) } \right)
\end{align}
\end{lemma}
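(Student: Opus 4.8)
The plan is to reduce all five inequalities to a single observation: the self-concordance hypothesis $|\ddot{\ell}| \le \dot{\ell}$ is exactly the statement that $z \mapsto \log \dot{\ell}(z)$ is $1$-Lipschitz, since $\frac{d}{dz}\log\dot{\ell}(z) = \ddot{\ell}(z)/\dot{\ell}(z)$ and $\dot{\ell} > 0$. Writing $z_1 = \bx^\top\btheta_1$, $z_2 = \bx^\top\btheta_2$ and $s = |z_1 - z_2| = |\bx^\top(\btheta_1-\btheta_2)|$, this Lipschitz property gives $|\log\dot{\ell}(z_1) - \log\dot{\ell}(z_2)| \le s$, which upon exponentiation is precisely the two-sided bound \eqref{apdx_eq:scc3}. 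I would establish this first, as it is the engine driving the other two bounds.

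For the lower bound on $\alpha$ in \eqref{apdx_eq:scc1}, I would fix the endpoint $\btheta = \btheta_1$; the case $\btheta = \btheta_2$ then follows from the symmetry $\alpha(\bx,\btheta_1,\btheta_2) = \alpha(\bx,\btheta_2,\btheta_1)$ together with $|z_1-z_2|=|z_2-z_1|$. Using the pointwise consequence of the Lipschitz property, $\dot{\ell}(z_1 + v(z_2-z_1)) \ge \dot{\ell}(z_1)e^{-vs}$ for $v\in[0,1]$, I would factor $\dot{\ell}(z_1)$ out of the defining integral and evaluate $\int_0^1 e^{-vs}\,dv = (1-e^{-s})/s$. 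The remaining elementary step is $(1-e^{-s})/s \ge 1/(1+s)$, which rearranges to $1+s \le e^s$ and hence holds for all $s \ge 0$; this yields $\alpha \ge \dot{\ell}(z_1)/(1+s)$, i.e.\ \eqref{apdx_eq:scc1}.

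The bound on $\tilde{\alpha}$ in \eqref{apdx_eq:scc2} follows the same template but carries the extra weight $(1-v)$ inside the integral. After the pointwise substitution I would be left with $\tilde{\alpha} \ge \dot{\ell}(z_1)\int_0^1(1-v)e^{-vs}\,dv = \dot{\ell}(z_1)\,(s-1+e^{-s})/s^2$, the integral evaluated by a routine integration by parts. The crux is then the scalar inequality $(s-1+e^{-s})/s^2 \ge 1/(2+s)$, equivalently $(2+s) \ge (2-s)e^{s}$; I would verify this by setting $g(s) = (2+s)-(2-s)e^s$ and checking $g(0)=0$, $g'(0)=0$, and $g''(s) = s e^s \ge 0$, so that $g$ is nonnegative on $[0,\infty)$. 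I expect this second-order bookkeeping on $\tilde{\alpha}$ to be the main obstacle, since it is the one place where the naive $1+s\le e^s$ trick does not close the gap and a slightly more careful convexity argument is required.

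Finally, the primed inequalities \eqref{apdx_eq:scc1_2} and the $\Theta$-case of \eqref{apdx_eq:scc2} are immediate corollaries: for $\btheta_1,\btheta_2\in\Theta$ the Cauchy--Schwarz inequality together with \cref{cond:bounded_para} gives $s = |\bx^\top(\btheta_1-\btheta_2)| \le \|\bx\|_2\,\|\btheta_1-\btheta_2\|_2 \le 2L$, so replacing $s$ by its upper bound $2L$ in the denominators $1+s$ and $2+s$ yields $\alpha \ge \dot{\ell}(\bx^\top\btheta)/(1+2L)$ and $\tilde{\alpha} \ge \dot{\ell}(\bx^\top\btheta_1)/(2+2L)$, completing the list.
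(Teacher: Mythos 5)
Your proof is correct, and it is essentially the standard argument: the paper does not prove this lemma itself but imports it from \citet{abeille2021instance} (Lemmas 7--9), whose proofs proceed exactly as you do --- self-concordance makes $z \mapsto \log \dot{\ell}(z)$ $1$-Lipschitz, which upon exponentiation is the two-sided bound \eqref{apdx_eq:scc3}; the bounds on $\alpha$ and $\tilde{\alpha}$ then follow by factoring $\dot{\ell}(z_1)$ out of the defining integrals, evaluating $\int_0^1 e^{-vs}\,dv$ and $\int_0^1 (1-v)e^{-vs}\,dv$, and closing with the scalar inequalities $e^s \ge 1+s$ and $(2+s) \ge (2-s)e^s$ (your convexity check $g''(s)=se^s\ge 0$ is the right verification), with Cauchy--Schwarz and \cref{cond:bounded_para} giving the $\Theta$-restricted versions via $s \le 2L$. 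The only pedantic caveat is that $\log\dot{\ell}$ requires $\dot{\ell}>0$, but if $\dot{\ell}$ vanishes at any point then $|\ddot{\ell}|\le\dot{\ell}$ forces $\dot{\ell}\equiv 0$ by a Gr\"onwall argument and every stated inequality holds trivially, so your argument is complete as written.
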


The following lemma is extracted from Lemma A.3 in \cite{li2016contextual}.
\begin{lemma}[Determinant-norm Inequality]\label{apdx_lem:det_and_norm}
    Let $\bx_1, ..., \bx_n \in \R^d$ be a sequence. Then we have 
    \begin{align*}
        \det\left(\bI + \sum_{i=1}^n{\bx_i \bx_i^{\top}}\right)\ge 1+\sum_{i=1}^n\norm{\bx_i}_{2}^2.
    \end{align*}
\end{lemma}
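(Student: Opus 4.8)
The plan is to diagonalize the positive semidefinite matrix $\bM \defeq \sum_{i=1}^n \bx_i \bx_i^{\top}$ and reduce the claim to an elementary scalar inequality. First I would observe that each $\bx_i \bx_i^{\top}$ is symmetric positive semidefinite, hence so is their sum $\bM$; therefore $\bM$ admits an orthonormal eigenbasis with real eigenvalues $\lambda_1, \ldots, \lambda_d \ge 0$. Since $\bI + \bM$ shares this eigenbasis with eigenvalues $1 + \lambda_j$, its determinant factorizes as
\begin{align*}
    \det\prts{\bI + \bM} = \prod_{j=1}^d \prts{1 + \lambda_j}.
\end{align*}

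The key step is the product-to-sum bound $\prod_{j=1}^d (1 + \lambda_j) \ge 1 + \sum_{j=1}^d \lambda_j$. This follows because all $\lambda_j \ge 0$: expanding the product yields $1 + \sum_{j} \lambda_j$ plus a sum of products of distinct $\lambda_j$'s, and every such cross term is nonnegative. (Equivalently, one proves it by a one-line induction on $d$, using $(1+\lambda_{d+1})\prod_{j\le d}(1+\lambda_j) \ge (1+\lambda_{d+1})(1+\sum_{j\le d}\lambda_j) \ge 1 + \sum_{j\le d+1}\lambda_j$.) I would state whichever form is cleaner and cite it as elementary.

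Finally I would identify $\sum_{j=1}^d \lambda_j$ with the quantity appearing on the right-hand side via the trace. Using the cyclic/linearity property of the trace,
\begin{align*}
    \sum_{j=1}^d \lambda_j = \trace\prts{\bM} = \sum_{i=1}^n \trace\prts{\bx_i \bx_i^{\top}} = \sum_{i=1}^n \trace\prts{\bx_i^{\top} \bx_i} = \sum_{i=1}^n \norm{\bx_i}_2^2.
\end{align*}
Chaining the three displays gives $\det(\bI + \bM) \ge 1 + \sum_{i=1}^n \norm{\bx_i}_2^2$, which is exactly the claim.

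There is no genuine obstacle here: the statement is a standard linear-algebra fact, and the only point requiring any care is ensuring the product-to-sum inequality is invoked only after establishing $\lambda_j \ge 0$, which is guaranteed by the positive semidefiniteness of $\bM$. I would keep the write-up to a few lines, emphasizing the PSD diagonalization, the nonnegative-cross-term expansion, and the trace identification.
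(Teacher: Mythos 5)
Your proposal is correct and follows essentially the same route as the paper's proof: both diagonalize $\bI + \sum_i \bx_i\bx_i^{\top}$, apply the product-to-sum bound $\prod_j(1+\lambda_j) \ge 1+\sum_j \lambda_j$ for nonnegative eigenvalues, and identify $\sum_j \lambda_j = \trace\prts{\sum_i \bx_i\bx_i^{\top}} = \sum_i \norm{\bx_i}_2^2$. The only cosmetic difference is that the paper carries out the trace step via $\sum_j \lambda_j = 1-d+\trace\prts{\bI+\sum_i \bx_i\bx_i^{\top}}$, whereas you compute $\trace(\bM)$ directly, which is slightly cleaner.
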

\begin{proof}
Denote the eigenvalues of $I+\sum_{i=1}^n \bx_i \bx_i^{\top}$ by $1+$ $\alpha_1, \ldots, 1+\alpha_d$ with $\alpha_j \geq 0,1 \leq j \leq d$. Then
$$
\begin{aligned}
& \operatorname{det}\left(I+\sum_{i=1}^n \bx_i \bx_i^{\top}\right)=\prod_{j=1}^d\left(1+\alpha_j\right) \\
\geq & 1+\sum_{j=1}^d \alpha_j=1-d+\sum_{i=1}^d\left(1+\alpha_i\right) \\
= & 1-d+\operatorname{tr}\left(I+\sum_{i=1}^n \bx_i \bx_i^{\top}\right)\\
=&1-d+d+\sum_{i=1}^n\left\|\bx_i\right\|_2^2 \\
= & 1+\sum_{i=1}^n\left\|\bx_i\right\|_2^2 .
\end{aligned}
$$
\end{proof}

\begin{lemma}[Elliptical potential with multiple arms and time varying regularization.]\label{apdx_lem:elliptical_potential}
Let $m,K,T>0$ and $\{\tau_t\}_{t=1}^{\infty}$ be a sequence of indices such that $\tau_t \subseteq [m]$ and $|\tau_t|\le K$ for all $t \in  [T]$.
Let $\{\bx_{t,i}\}_{t\in [T], i\in \tau_t}$ be a sequence in $\R^d$ such that $\norm{\bx_{t,i}}_2\le 1$ for all $t\in[T], i\in \tau_t$.
Let $\{\lambda_t\}_{t \in [T]}$ be an increasing sequence in $\R^+$ s.t., $\lambda_1 \ge K$. 
For $t\ge 1$ define $\bV_t=\sum_{s=1}^{t-1}\sum_{i\in \tau_s} \bx_{s,i}\bx_{s,i}^{\top} + \lambda_t \bI_d$. Then for $t\in [T]$, we have:
\begin{align}
    \sum_{s=1}^{t}\sum_{i \in \tau_s}\norm{\bx_{s,i}}_{\bV_s^{-1}}^2 \le 2d\log(\lambda_{t+1}+t)
\end{align}
\end{lemma}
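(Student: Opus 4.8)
The plan is to reduce everything to a telescoping sum of log-determinant increments, the standard elliptical-potential technique, but with two twists dictated by the setup: several arms share the same matrix $\bV_s$ within a round, and the regularizer $\lambda_t$ grows with $t$. First I would treat each round as a single low-rank update. Collect the round-$s$ vectors into the $d\times|\tau_s|$ matrix $\bX_s$ whose columns are $\{\bx_{s,i}\}_{i\in\tau_s}$, so that $\sum_{i\in\tau_s}\norm{\bx_{s,i}}_{\bV_s^{-1}}^2=\trace(\bX_s^\top\bV_s^{-1}\bX_s)$. By Sylvester's identity, $\det(\bI_d+\bV_s^{-1}\bX_s\bX_s^\top)=\det(\bV_s+\bX_s\bX_s^\top)/\det\bV_s$, and if $\mu_1,\dots,\mu_{|\tau_s|}$ denote the eigenvalues of $\bX_s^\top\bV_s^{-1}\bX_s$ then $\trace(\bX_s^\top\bV_s^{-1}\bX_s)=\sum_j\mu_j$ while $\log\det(\bI+\bX_s^\top\bV_s^{-1}\bX_s)=\sum_j\log(1+\mu_j)$.

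The crucial point, and where the hypothesis $\lambda_1\ge K$ enters, is to control these eigenvalues so that the elementary inequality $x\le 2\log(1+x)$ (valid for $x\in[0,1]$) applies termwise. Since $\bX_s\bX_s^\top=\sum_{i\in\tau_s}\bx_{s,i}\bx_{s,i}^\top\preceq K\bI_d$ (each $\norm{\bx_{s,i}}_2\le 1$ and $|\tau_s|\le K$) and $\bV_s\succeq\lambda_s\bI_d\succeq\lambda_1\bI_d\succeq K\bI_d$, every $\mu_j\le K/\lambda_1\le 1$. Hence $\sum_{i\in\tau_s}\norm{\bx_{s,i}}_{\bV_s^{-1}}^2=\sum_j\mu_j\le 2\sum_j\log(1+\mu_j)=2\log(\det(\bV_s+\bX_s\bX_s^\top)/\det\bV_s)$. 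To absorb the growing regularizer I would use monotonicity of $\lambda_t$: because $\lambda_{s+1}\ge\lambda_s$ we have $\bV_s+\bX_s\bX_s^\top\preceq\bV_{s+1}$, so $\det(\bV_s+\bX_s\bX_s^\top)\le\det\bV_{s+1}$ and the per-round bound becomes $2\log(\det\bV_{s+1}/\det\bV_s)$.

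Summing over $s=1,\dots,t$ telescopes to $2\log(\det\bV_{t+1}/\det\bV_1)=2\log(\det\bV_{t+1}/\lambda_1^d)$. Finally I would invoke the determinant--trace inequality (\Cref{apdx_lem:det_and_trace}): since $\bV_{t+1}$ is $\lambda_{t+1}\bI_d$ plus at most $tK$ rank-one terms of norm at most $1$, we get $\det\bV_{t+1}\le(\lambda_{t+1}+tK/d)^d$, and with $\lambda_1\ge K\ge 1$ this yields the claimed $O(d\log(\lambda_{t+1}+t))$ bound. I expect the only real subtlety to be the within-round step: naively processing the $|\tau_s|$ arms one at a time against a running matrix forces the comparison $\bM_{l-1}\preceq 2\bV_s$ and loses an extra factor of two, whereas the single rank-$|\tau_s|$ update above, combined with the eigenvalue bound $\mu_j\le 1$ guaranteed precisely by $\lambda_1\ge K$, keeps the constant at $2$ and makes the telescoping exact.
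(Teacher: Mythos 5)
Your proof is correct and follows essentially the same route as the paper's: a per-round lower bound on the determinant increment, the inequality $x\le 2\log(1+x)$ made applicable by $\lambda_1\ge K$, monotonicity of $\lambda_t$ to telescope $\log\det\bV_s$, and the determinant--trace inequality to finish. The only (cosmetic) difference is in the per-round step: the paper bounds $\det(\bI+\sum_i\bv_i\bv_i^\top)\ge 1+\sum_i\norm{\bv_i}_2^2$ and applies the log inequality once to the aggregated trace, whereas you diagonalize $\bX_s^\top\bV_s^{-1}\bX_s$ and apply it termwise to the eigenvalues — both yield the identical bound with the same constant.
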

\begin{proof}

  \begin{align}
        \det(\bV_{t+1})&\overset{(a)}{=}\det\left(\lambda_{t+1} \bI_d + \sum_{s=1}^{t-1}\sum_{i\in \tau_s} \bx_{s,i}\bx_{s,i}^{\top} + \sum_{i\in \tau_{t}}\bx_{t,i}\bx_{t,i}^{\top}\right)\notag\\
        &\overset{(b)}{\ge}\det\left(\lambda_{t} \bI_d + \sum_{s=1}^{t-1}\sum_{i\in \tau_s} \bx_{s,i}\bx_{s,i}^{\top} + \sum_{i\in \tau_{t}}\bx_{t,i}\bx_{t,i}^{\top}\right) \notag\\
        &=\det\left(\bV_{t} + \sum_{i\in \tau_{t}}\bx_{t,i}\bx_{t,i}^{\top}\right) \notag\\
        &\overset{(c_1)}{=}\det(\bV_t^{1/2}) \det\left(\bI_d + \sum_{i\in \tau_{t}}\bV_{t}^{-1/2}\bx_{t,i} (\bV_{t}^{-1/2}\bx_{t,i})^{\top}\right) \det(\bV_t^{1/2})\notag\\
        &\overset{(c_2)}{=}\det(\bV_{t})\cdot \det\left(\bI_d + \sum_{i\in \tau_{t}}\bV_{t}^{-1/2}\bx_{t,i} (\bV_{t}^{-1/2}\bx_{t,i})^{\top}\right)\notag\\
        &\overset{(d)}{\ge} \det(\bV_{t})\cdot \left(1 + \sum_{i\in \tau_{t}}\norm{\bx_{t,i}}_{\bV_{t}^{-1}}^2\right)\notag\\
        &\overset{(e)}{\ge} \det(\lambda_1 \bI_d) \prod_{s=1}^t \left(1+\sum_{i \in \tau_s}\norm{\bx_{s,i}}_{\bV_s^{-1}}^2\right)\label{apdx_eq:prod_and_det},
    \end{align}
    where inequality (a) is due to the definition of $\bV_{t+1}$,
    inequality (b) is due to $\lambda_{t+1} \ge \lambda_t$ by assumption,
    equality ($c_1$) and ($c_2$) are due to $\det(\bA\bB)=\det(\bA)\det(\bB)$,
    inequality (d) is due to \Cref{apdx_lem:det_and_norm}, 
    inequality (e) follows from repeatedly applying (d).

    Since $\norm{\bx_{s,i}}_{\bV_s^{-1}}^2\le \frac{\norm{\bx_{s,i}}^2}{\lambda_{\min}(\bV_s)}\le 1/\lambda_s\le 1/\lambda_1$, where $\lambda_{\min}(\bV_s)$ is the minimum eigenvalue of $\bV_s$, we have $\sum_{i \in \tau_s}\norm{\bx_{s,i}}_{\bV_s^{-1}}^2\le \frac{K}{\lambda_1}$. Using the fact that $2\log (1+x)\ge x$ for any $x\in[0,1]$, we have:
    \begin{align*}
        &\sum_{s=1}^{t}\frac{1}{\max(1, K/\lambda_1)}\sum_{i \in \tau_s}\norm{\bx_{s,i}}_{\bV_s^{-1}}^2\\
        &{\le} 2\sum_{s=1}^t\log \left(1+\frac{1}{\max(1, K/\lambda_1)}\sum_{i \in \tau_s}\norm{\bx_{s,i}}_{\bV_s^{-1}}^2\right)\\
        &{\le} 2\sum_{s=1}^t\log \left(1+\sum_{i \in \tau_s}\norm{\bx_{s,i}}_{\bV_s^{-1}}^2\right)\\
        &= 2\log \prod_{s=1}^t\left(1+\sum_{i \in \tau_s}\norm{\bx_{s,i}}_{\bV_s^{-1}}^2\right)\\
        &\overset{(a)}{\le} 2\log\left(\frac{\det(\bV_{t+1})}{\det(\lambda_1 \bI_d)}\right)\\
        &\overset{(b)}{\le} 2\log \left(\frac{(\lambda_{t+1} + Kt/d)^d}{\lambda_1^d}\right)=2d\log \left(\frac{\lambda_{t+1}}{\lambda_1}+\frac{Kt}{\lambda_1 d}\right),
    \end{align*}
    where inequality (a) follows from \Cref{apdx_eq:prod_and_det}, inequality (b) follows from \Cref{apdx_lem:det_and_trace}.

    By the assumption that $\lambda_1 \ge K$, we have:
    \begin{align}
        \sum_{s=1}^{t}\sum_{i \in \tau_s}\norm{\bx_{s,i}}_{\bV_s^{-1}}^2\le 2{\max(1, K/\lambda_1)} d\log \left(\frac{\lambda_{t+1}}{\lambda_1}+\frac{Kt}{\lambda_1 d}\right) \le 2d\log(\lambda_{t+1}+t)
    \end{align}
    \end{proof}

\section{Extended Experiments}\label{app:extended}

\begin{figure*}[!th]
    \centering
    \begin{subfigure}[t]{0.45\textwidth}
    \includegraphics[width=0.99\textwidth]{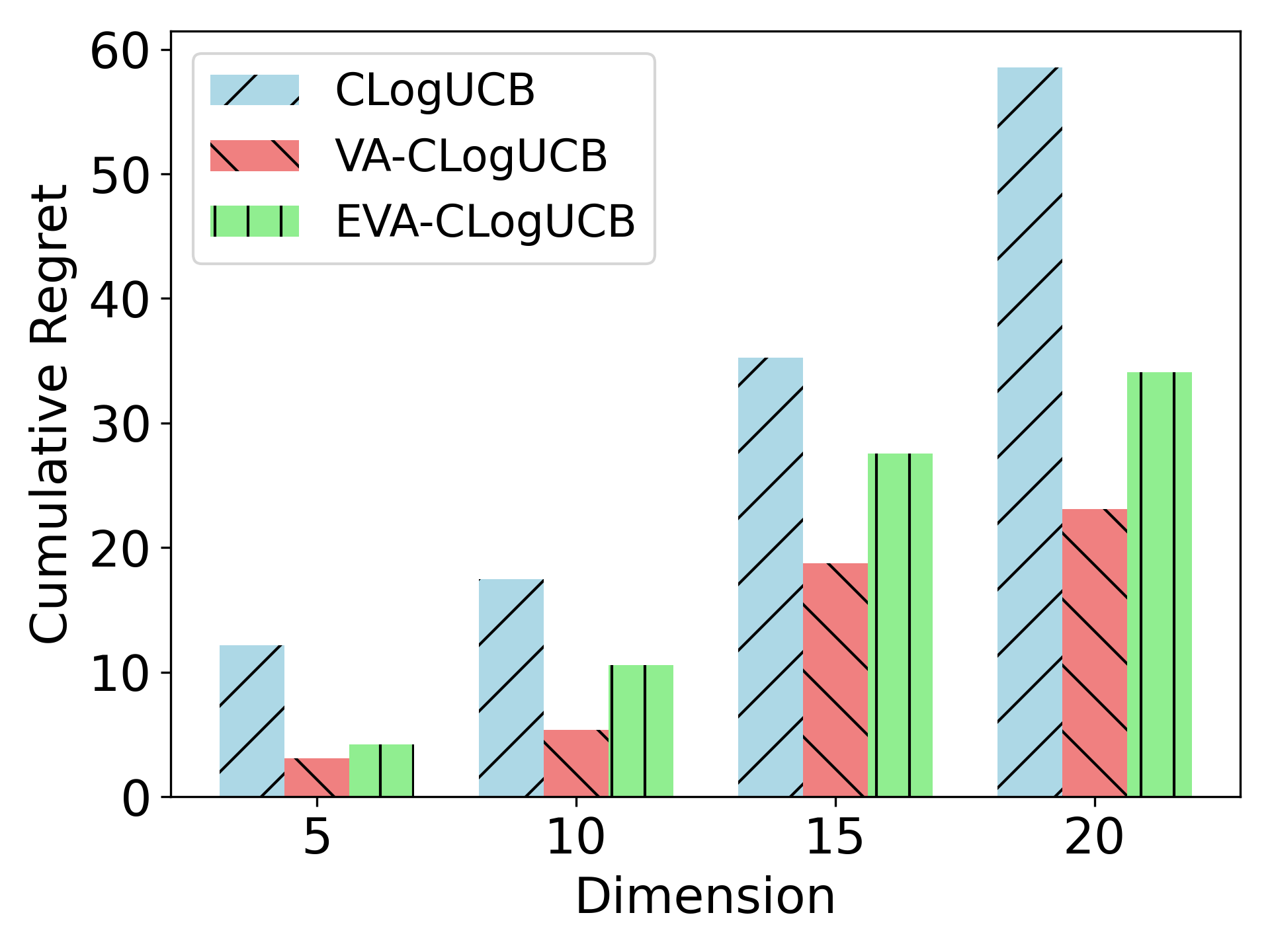}
    \caption{Regret under Varying  $d$ with $T=1000$}
    \label{fig:regret_d}
    \end{subfigure}
    \begin{subfigure}[t]{0.45\textwidth}
    \includegraphics[width=0.99\textwidth]{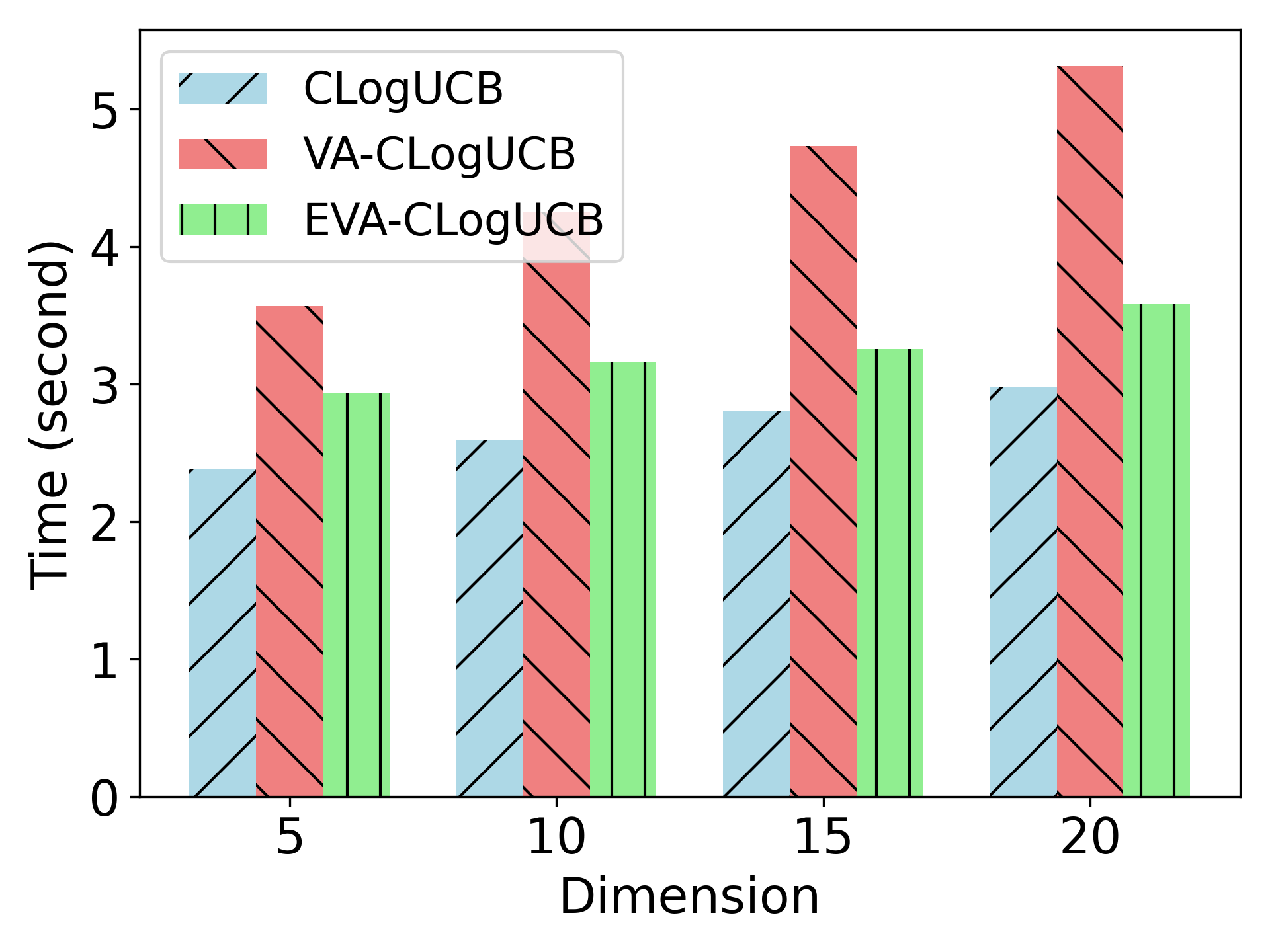}
    \caption{Running Time under Varying $d$ with $T=1000$}
    \label{fig:running_time}
    \end{subfigure}
\caption{Regret vs. running time performance under different dimensions.}
  \label{fig:reg}
  \vspace{-0.1in}
\end{figure*}

\rev{\textbf{Ablation Study on Different Instance Scale.}
Following the settings in Section \ref{sec:Cascading}, we conduct experiments on cascading bandits for the online learning-to-rank application at another different instance scale, where the goal is to select \(K=5\) items from a set of \(m=20\) to maximize the reward.
Fig.~\ref{fig:regret_d} and Fig.~\ref{fig:running_time} compare the regret and total running time after 1000 rounds of our algorithms, i.e., CLogUCB, VA-CLogUCB, and EVA-CLogUCB, across various dimensions \(d\), confirming our theoretical predictions. VA-CLogUCB consistently achieves the lowest regret, reducing it by at least 9.09\% compared to CLogUCB and by 6.93\% compared to EVA-CLogUCB. In terms of running time, CLogUCB is the fastest, completing tasks at least 23.08\% quicker than VA-CLogUCB and 13.84\% faster than EVA-CLogUCB. EVA-CLogUCB offers a balanced performance, positioning itself between the two in both regret and running time. Additionally, as \(d\) increases from 5 to 20, EVA-CLogUCB's running time improvement over VA-CLogUCB grows from 17.78\% to 35.6\%.}

\begin{table}[h]
    \centering
     \caption{Comprehensive comparison of cumulative regret and running time (s) for all different algorithms under $T=2000$  ($K=10, K=15,$ and $K=20$).}
    \label{tab:regret_time_K10_K15_K20}
    \resizebox{1\columnwidth}{!}{
    \begin{tabular}{|c|c|c|c|c|c|c|c|}
        \hline
        \makebox[0.16\textwidth][c]{\diagbox{\tiny{Result}}{\tiny{Algorithm}}} & \scriptsize \underline{\textbf{VA-CLogUCB}} & \scriptsize \underline{\textbf{CLogUCB}} & \scriptsize \underline{\textbf{EVA-CLogUCB}} & \scriptsize \textbf{C$^3$UCB} & \scriptsize \textbf{$\epsilon$-greedy} & \scriptsize \textbf{CUCB} & \scriptsize \textbf{VAC$^2$-UCB} \\ \hline \hline
        \scriptsize \textbf{Regret \tiny{(K=10)}} & 192.1 & 455.3 & 366.1 & 1042.1 & 1418.4 & 1491.0 & 849.3 \\ \hline
        \scriptsize \textbf{Time \tiny{(K=10)}} & 235.1 & 146.3 & 208.0 & 117.5 & 54.6 & 42.3 & 119.9 \\ \hline \hline
        \scriptsize \textbf{Regret \tiny{(K=15)}} & 158.9 & 375.1 & 307.9 & 859.1 & 1080.2 & 1163.2 & 719.3 \\ \hline
        \scriptsize \textbf{Time \tiny{(K=15)}} & 264.1 & 169.7 & 225.3 & 130.1& 67.1  & 55.7 & 133.9 \\ \hline \hline
        \scriptsize \textbf{Regret \tiny{(K=20)}} & 137.9 & 318.4 & 258.0 & 728.4 & 831.2 & 860.3 & 609.1 \\ \hline
        \scriptsize \textbf{Time \tiny{(K=20)}} & 303.6 & 187.8 & 255.3 &148.1  & 83.0 & 72.7 & 152.6 \\ \hline
    \end{tabular}}
\end{table}

\begin{figure*}[!th]
    \centering
    \begin{subfigure}[t]{0.495\textwidth}
    \includegraphics[width=1\textwidth]{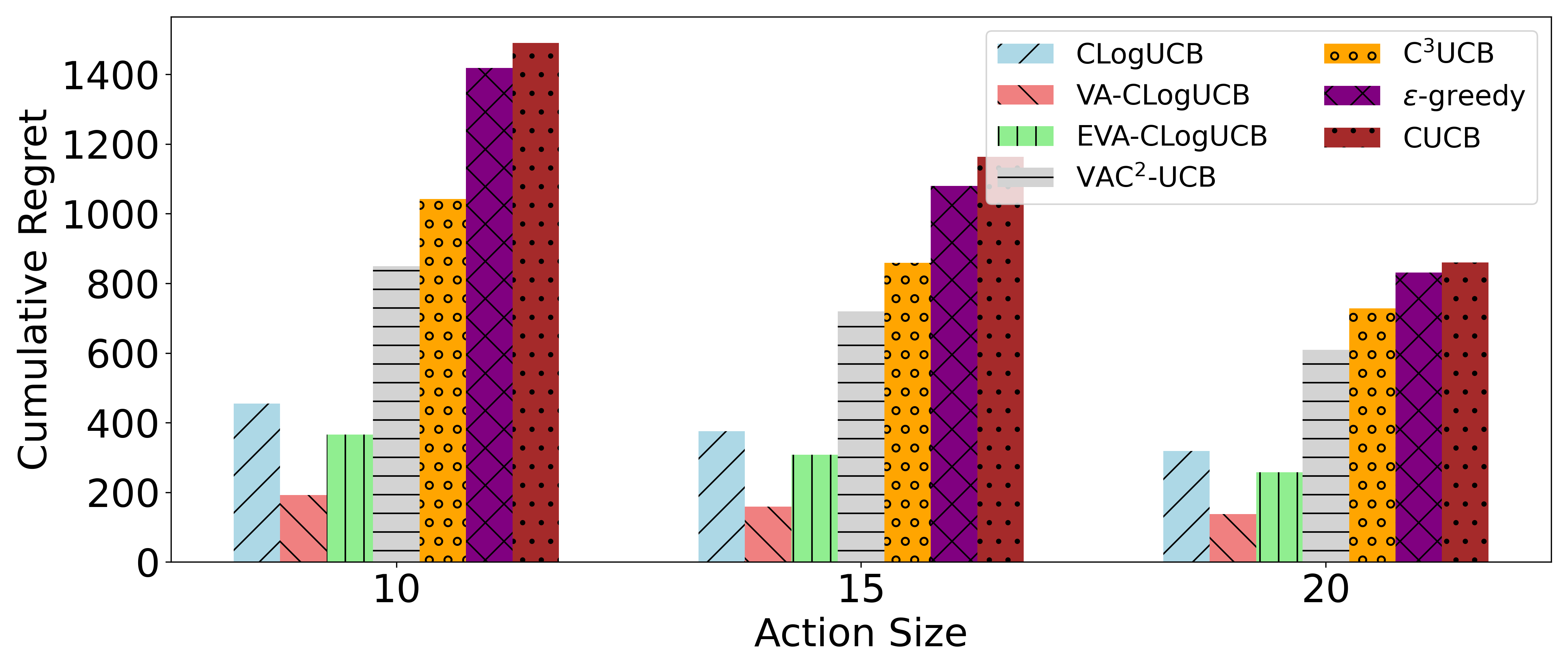}
    \caption{Regret under Varying  $K$ with $T=2000$}
    \label{fig:regret_k_all}
    \end{subfigure}
    \begin{subfigure}[t]{0.495\textwidth}
    \includegraphics[width=1\textwidth]{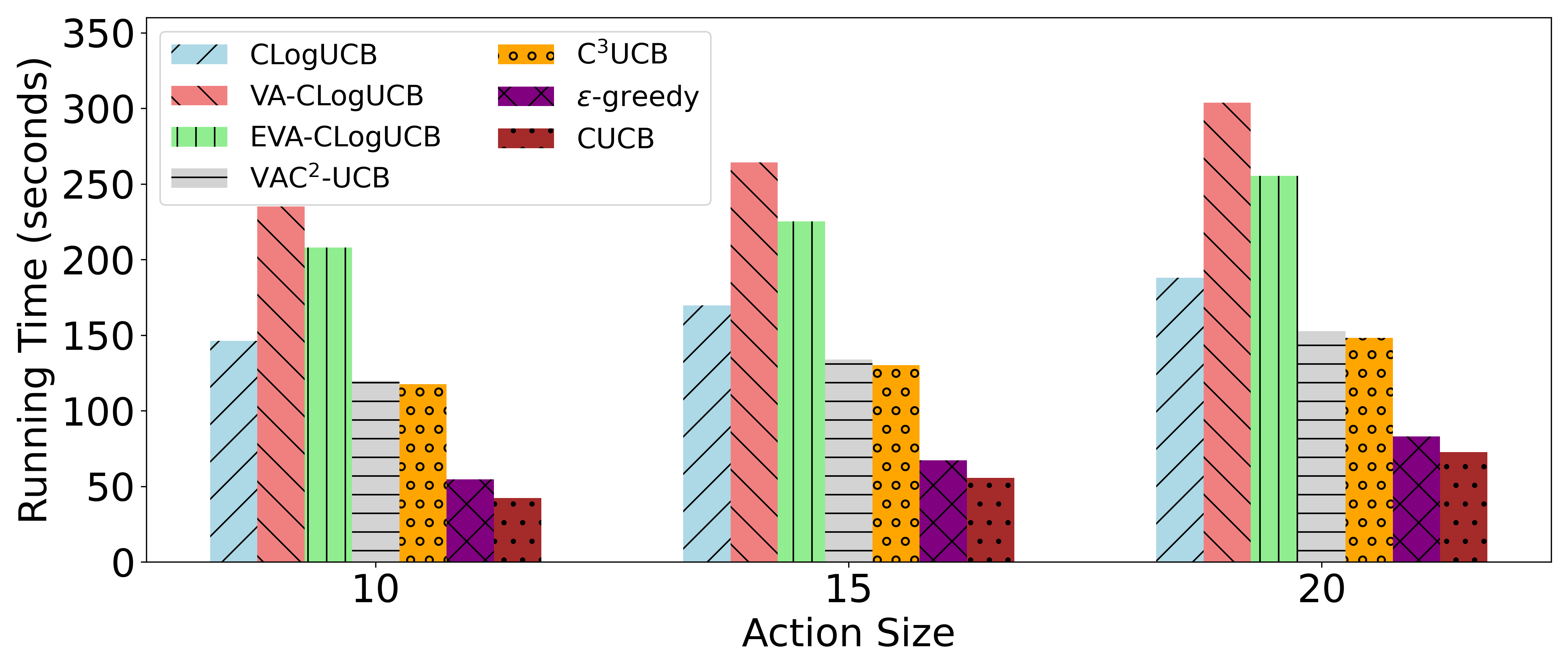}
    \caption{Running Time under Varying $K$ with $T=2000$}
    \label{fig:running_time_k_all}
    \end{subfigure}
\caption{Comprehensive comparison on cumulative regret and running time performance for all algorithms.}
  \label{fig:all_k}
  \vspace{-0.1in}
\end{figure*}

\rev{\textbf{Comprehensive Comparison on Regret and Running Time.}  Table \ref{tab:regret_time_K10_K15_K20} and Fig. \ref{fig:all_k} present the cumulative regret and runing time for all benchmark methods corresponding to the results in Fig. \ref{fig:regret_k}-\ref{fig:running_time1}. The logistic bandits exhibit longer running times because, unlike other methods that iterate through a Gram matrix for easily solving the ridge regression problem, they employ the maximum
likelihood estimation (MLE) and the Newton gradient method with projection operations \cite{filippi2010parametric,zhang2024online}. This approach leads to increased running time, consistent with observations in prior studies on logistic bandits \cite{zhang2024online}. Consistent with Section \ref{sec:simulations}, our proposed algorithms can significantly reduce the cumulative regret for the CLogB problem under different action sizes $K$.}

\begin{figure}[!th]
    \centering
    \includegraphics[width=0.94\textwidth]{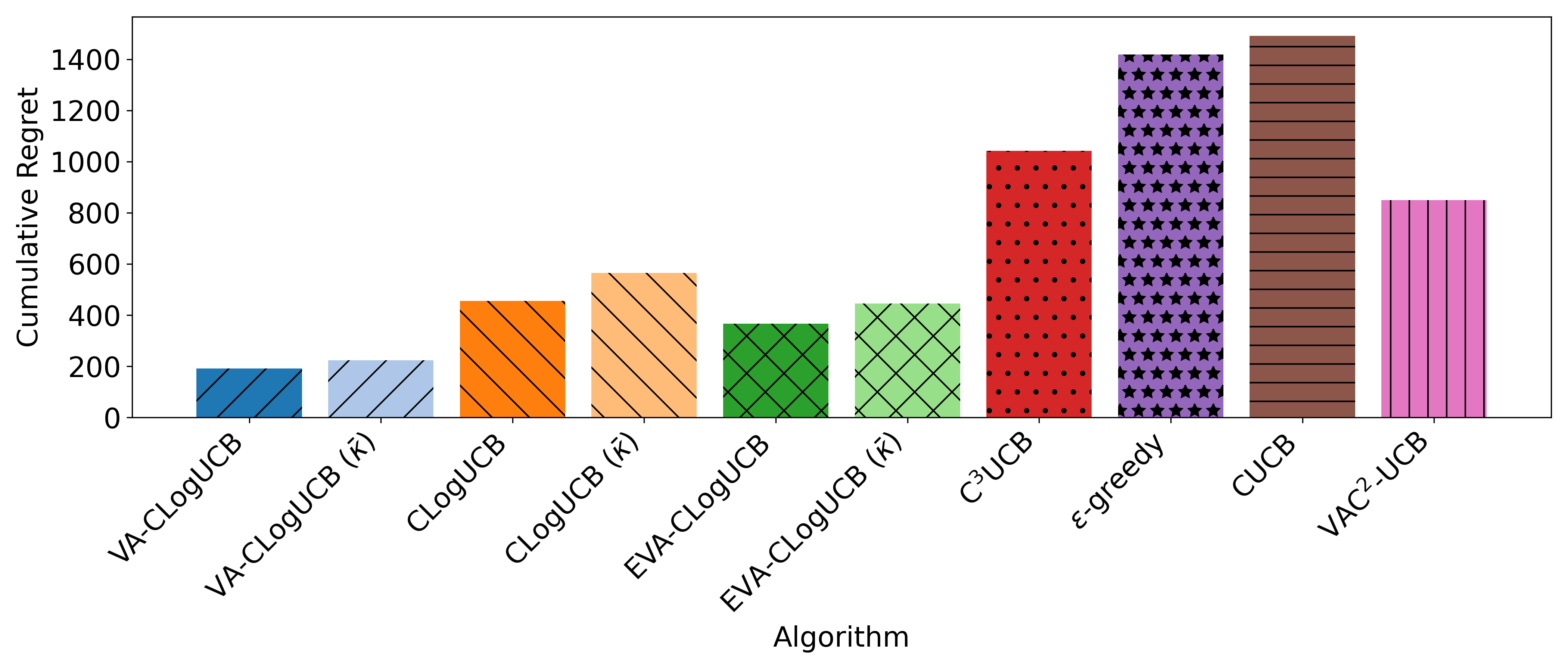}
    \caption{Regret comparison across various algorithms, including those utilizing upper bounds on $\kappa$.}
    \label{fig:kappa}
\end{figure}

\rev{\textbf{Impact of Using Upper Bounds on $\kappa$.}
We conduct experiments using the upper bound $\bar{\kappa} = 4\exp(L)$, where $L$ is an upper bound on the $\ell_2$ norm of the underlying parameter $\btheta^*$, to evaluate the effect of overestimating $\kappa$ on algorithm performance.
Following the setting in Section \ref{sec:Cascading}, the \(l_2\)-norm is defined as:
$
\|\btheta^*\|_2 = \sqrt{(\theta_1^*)^2 + (\theta_2^*)^2 + \ldots + (\theta_d^*)^2}
$, where each component \(\theta_i^*\) is independently drawn from the uniform distribution \(U(-1, 1)\). Therefore, the upper bound for the \(l_2\)-norm of \(\btheta^*\) is
$
\|\btheta^*\|_2 \leq \sqrt{d \cdot 1^2} = \sqrt{d}
$.
% With \(d = 10\), the upper bound for the \(l_2\)-norm of \(\btheta^*\) is \(L=\sqrt{10} \approx 3.16\). 
Fig. \ref{fig:kappa} presents the cumulative regret performance for $K=10$ and $m=600$. Our results indicate that using a conservative estimate, such as $\bar{\kappa} = 4\exp(L)$, leads to increased regret compared to the true $\kappa$. However, the performance degradation is within an acceptable range, and the impact of $\kappa$ follows the trend: CLogUCB > EVA-CLogUCB > VA-CLogUCB, consistent with the findings in Table \ref{tab:clogb_res}.
This demonstrates that our algorithm maintains robustness even with approximate knowledge of the nonlinearity parameter.}
% \begin{align}
%     \cL_t(\btheta^*)\ge \cL_t(\hat{\btheta}_t) + \nabla \cL_t(\hat{\btheta}_t)^{\top} (\btheta^*-\hat{\btheta}_t)+\sum_{s=1}^{t-1}\frac{\dot{\ell}(a_t^{\top}\hat{\btheta}_t)}{2+D} (a_t^{\top}(\btheta^*-\hat{\btheta}_t))^2
% \end{align}

% With high probability, 

% \begin{align}
%     \cL_t(\btheta^*)-\cL_t(\hat{\btheta}_t)\le (1/2+D/4)(1/2+2\log (2\sqrt{t/4+1/\delta}) + D^2(2+D)^{-1}.
% \end{align}

% \input{sections/_appendix_algorithms.tex}
% \input{sections/_appendix_doe_proof.tex}
% \input{sections/_appendix_lf_proof.tex}
% \input{sections/_appendix_privacy_proof}

\end{document}